\newcommand{\Pa}[1]{\textbf{pa}(#1)}
\newcommand{\PaC}[1]{\textbf{pa}^\leftrightarrow(#1)}
\newcommand{\BiD}[1]{\textbf{biD}(#1)}
\newtheorem{assumption}{Assumption}
\newtheorem{claim}{Claim}
\newcommand*{\Break}{\textbf{break}}
\DeclareMathAlphabet\mathbfcal{OMS}{cmsy}{b}{n}
\DeclareMathOperator*{\argmin}{arg\,min}
\begin{document}

% \twocolumn[

\jmlrheading{23}{2022}{??}{??; Revised ??}{??}{??}{Sina Akbari, Jalal Etesami and Negar Kiyavash}
\ShortHeadings{Min-Cost Intervention}{Akbari, Etesami and Kiyavash}
\firstpageno{1}

% It is OKAY to include author information, even for blind
% submissions: the style file will automatically remove it for you
% unless you've provided the [accepted] option to the icml2021
% package.

% List of affiliations: The first argument should be a (short)
% identifier you will use later to specify author affiliations
% Academic affiliations should list Department, University, City, Region, Country
% Industry affiliations should list Company, City, Region, Country

% You can specify symbols, otherwise they are numbered in order.
% Ideally, you should not use this facility. Affiliations will be numbered
% in order of appearance and this is the preferred way.
% \icmlsetsymbol{equal}{*}
\title{Experimental Design for Causal Effect Identification}
\author{\name Sina Akbari \email sina.akbari@epfl.ch \\
       \addr EPFL,\\
       Lausanne, Switzerland
       \AND
       \name Jalal Etestami \email seyed.etesami@epfl.ch \\
       \addr EPFL,\\
       Lausanne, Switzerland
       \AND
       \name Negar Kiyavash \email negar.kiyavash@epfl.ch \\
       \addr EPFL,\\
       Lausanne, Switzerland}

\editor{}

\maketitle
\begin{abstract}
Pearl's \emph{do calculus} is a complete axiomatic approach to learn the \emph{identifiable} causal effects from observational data.
When such an effect is not identifiable, it is necessary to perform a collection of often costly interventions in the system to learn the causal effect.
In this work, we consider the problem of designing the collection of interventions with the minimum cost to identify the desired effect.
First, we  prove that this problem is NP-complete and subsequently propose an algorithm that can either find the optimal solution or a logarithmic-factor approximation of it. This is done by establishing a connection 
between our problem and the minimum hitting set problem. Additionally, we propose several polynomial time heuristic algorithms to tackle the computational complexity of the problem. 
Although these algorithms could potentially stumble on sub-optimal solutions, our simulations show that they achieve small regrets on random graphs.
\end{abstract}

\section{Introduction }
Causal inference plays a key role in many applications such as psychology \citep{foster2010causal}, econometrics \citep{hoover1990logic}, education, social sciences \citep{murnane2010methods,gangl2010causal}, etc.
Causal effect identification, one of the most fundamental topics in causal inference,
 is concerned with estimating the effect of intervening on a set of variables, say $X$ on another set of variables, say $Y$ denoted by $P(Y|do(X))$. The estimation is performed having access to a set of observational and/or interventional distributions under causal assumptions that are usually encoded in the form of a causal graph. 
The causal graph of a system of variables captures the interconnection among the variables and  can be inferred from a combination of observations, experiments, and expert knowledge about the phenomenon under investigation \citep{spirtes2000causation}.
Throughout this work, we assume that the causal graph is given as side information.

Given a causal graph, it is known that in the absence of unobserved (latent) variables, every causal effect is identifiable from mere observational data  \citep{robins1987graphical, spirtes2000causation}.
On the other hand, inferring causal effects from data becomes challenging in the presence of latent variables.
In the setting where only observational data is available, the \emph{do calculus}, introduced by \citet{pearl1995causal}, has been shown to be complete.
That is, it provides a complete set of rules to compute a causal effect (if identifiable) given a causal graph and observational data \citep{huang2012pearl}.
Moreover, polynomial time algorithms exist that can determine the identifiability of a causal effect using do-calculus \citep{shpitser2006identification}.

In recent years, there has been an increase in the effort  to generalize Pearl's do-calculus to the setting in which data from both observational and interventional data are available for identifying a causal effect.
For instance,
\citet{bareinboim2012causal} studied the problem of estimating the causal effect of intervening on a set of variables $X$ on the outcome $Y$ when we experiment on a different set $Z$. 
This problem is known as $z$-identifiability, and \citet{bareinboim2012causal} provide a complete algorithm for computing $P(Y|do(X))$ using information provided by experiments on all subsets of $Z$. 
A slightly more general version of $z$-identifiability is called $g$-identifiability, which considers the problem of identifying $P(Y|do(X))$ from an arbitrary collection of %observational and experimental 
distributions. 
\citet{lee2020general} and \citet{kivva2022revisiting} studied the $g$-identifiability problem and showed that Pearl's do-calculus is also complete in this setting.
All three of the aforementioned works study the identifiability of $P(Y|do(X))$.
There are also various works that consider the more general problem of identifying a conditional causal effect of the form $P(Y|do(X),W)$ from a combination of distributions. 
However, most of these works do not manage to provide complete results a la Pearl's do-calculus. See \citep{tikka2019causal}, for a complete review on causal effect identification.

When a causal effect is not identifiable from observations, it is necessary to perform a collection of interventions to infer the effect of interest.
However, such interventions could be costly, impossible, or unethical to perform.
Therefore, naturally we are interested in the problem of designing a collection of low cost permitted interventions to identify a causal effect. 
This is the focus of our paper.
A closely related work to ours is \citep{kandasamy2019minimum}, in which, the authors considered the problem of finding the minimum number of interventions to identify every possible causal query.
Their approach is based on two limiting assumptions, namely that all interventions have the same cost and that we are allowed to intervene on any variable. 
More importantly, the result in \citep{kandasamy2019minimum} guarantees to render every causal effect identifiable, which makes the solution sub-optimal for a specific query.
In other words, a set of interventions that makes all causal effects identifiable might have higher aggregate cost than a set of interventions designed for identifying a specific causal effect.
% In this work, we relax these limiting assumptions and show that designing an intervention set with minimum cost for identifying a specific causal effect is NP-hard.

Designing minimum-cost interventions has also received attention in the causal discovery literature, under the term \emph{experimental design}. 
In causal discovery, the goal is to infer the causal graph from a dataset \citep{spirtes2000causation,colombo2012learning,akbari2021recursive}. 
It is known that mere observational data cannot fully recover the causal graph, and thus additional interventional data is required to precisely learn the graph. 
\citet{lindgren2018experimental} considered the problem of designing a set with minimum number of interventions to learn a causal graph given the essential graph (assuming no latent variable), and showed that this problem is NP-hard.
\citet{addanki2020efficient} studied a similar problem in the presence of latent variables.
The problem of orienting the maximum number of edges using a fixed number of interventions was studied in \citep{hauser2014two, ghassami2018budgeted, agrawal2019abcd}.
\citet{addanki2021intervention} studied designing interventions for causal discovery when the goal is to learn a portion of the edges in the causal graph instead of all of them. 

In this work, we %relax the aforementioned assumptions of \citep{kandasamy2019minimum}, and 
study the problem of designing the set of minimum cost interventions for identifying a specific causal effect, where intervening on each variable may induce a different cost, and we are not necessarily allowed to intervene on every variable.
Extending the findings of our ICML paper, \cite{akbari2022minimum}, we outline our contributions as follows.
\begin{itemize}[leftmargin=*]
    \item We prove that finding a minimum-cost intervention set for identifying a specific causal effect is NP-complete.
    Further, we show that approximating the solution to this problem within a sub-logarithmic factor of the optimal solution is NP-hard.
    \item We formulate the minimum cost intervention problem in terms of a minimum hitting set problem, and propose an algorithm based on this formulation that can find the optimal solution to the minimum cost intervention problem.
    This algorithm can also be used to approximate the solution up to a logarithmic-factor\footnote{The implementations of all the algorithms proposed in this work can be found at 
    \url{https://github.com/SinaAkbarii/min_cost_intervention/tree/main}.}.
    \item We propose several heuristic algorithms to solve the minimum cost intervention problem in polynomial time, and through empirical evaluations show that they achieve low-regret solutions in randomly generated causal graphs.
    We also provide an upper bound on the regret of these algorithms in the average case when the graph is generated due to Eros-Renyi model.
    \item We analyze several special cases of the minimum cost intervention problem that can be solved in polynomial time, and provide efficient algorithms for these cases.
\end{itemize}

\section{Terminology \& Problem Description}\label{sec:termino}
% \subsection{Terminology}
We briefly introduce the notations used in this paper.
We begin with the definition of \emph{structural causal model} (SCM) \citep{pearl2000models}, which is the framework we use throughout this paper.
An SCM is a tuple $M=(U,V,F,P(U))$, where $U$ is the set of exogenous variables which are not observed but affect the relationship among the variables of the system, $V=\{v_1,...,v_n\}$ is the set of observed endogenous variables where each $v_i\in V$ is a function of a subset of $V\cup U$ denoted by $\Pa{v_i}\cup\mathbf{pa}^{U}(v_i)$, $F=\{f_1,...,f_n\}$ is a set of functions where each $f_i$ determines the value of $v_i=f_i(\Pa{v_i},\mathbf{pa}^{U}(v_i))$, and $P(U)$ is the joint probability distribution over the variables $U$.
An intervention is defined through a mathematical operator $do(X=\hat{X})$, which replaces the functions corresponding to variables $X$ in the model $M$ with a constant function $f=\hat{X}$.
Denoting this model by $M_{\hat{X}}$, the interventional distribution $P(Y\vert do(X=\hat{X}))$ is then given by $P_{M_{\hat{X}}}(Y)$, or $P_{\hat{X}}(Y)$ in short \citep{pearl2012calculus}.
For a subset $S$ of variables $V$, we denote by $Q[S]=P(S\vert do(V\setminus S))$, the interventional distribution of the variables $S$ after intervention on the rest of the variables \citep{tian2002testable}.

The causal graph corresponding to the SCM $M$ is a semi-Markovian graph $\mathcal{G}$ with one vertex for each $v_i\in V$, where there is a directed edge from $v_i$ to $v_j$ if the value of $v_j$ is a function of $v_i$ ($v_i\in\Pa{v_j}$), and there is a bidirected edge between $v_i$ and $v_j$ if the values of $v_i$ and $v_j$ are both functions of a common exogenous variable $u$.
See Figure \ref{fig:semi-markov} for an example.
We use $V$ to denote the set of vertices of $\mathcal{G}$ throughout the paper.
We use the words vertex and variable interchangeably throughout this work, as each vertex represents a variable.

% We use the \emph{structural causal model} framework of \citet{pearl2000models} in this work.
% We denote the corresponding semi-Markovian graph\footnote{Also referred to as acyclic directed mixed graphs (ADMG)s in the literature \citep{evans2014markovian}.} over the observable variables $V$ by $\mathcal{G}$ \citep{pearl2000models, tian2002testable}.
% A semi-Markovian graph has both directed and bidirected edges. 
% A bidirected edge between two nodes implies that those nodes are affected by a hidden confounder. 

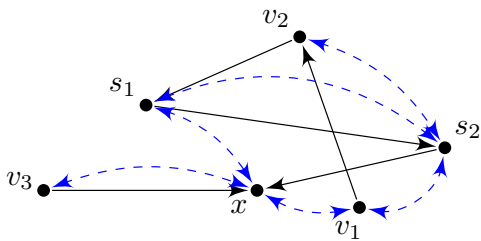
\begin{figure}[t]
    \centering
	\tikzstyle{block} = [circle, inner sep=1.5pt, fill=black]
	\tikzstyle{input} = [coordinate]
	\tikzstyle{output} = [coordinate]
    \resizebox{0.45\textwidth}{!}{
        \begin{tikzpicture}
            \tikzset{edge/.style = {->,> = latex'},line width=1.4pt}
            % vertices
            \node[block](s1) at (-.5,2.5) {};
            \node[block](s2) at (3,2) {};
            \node[block](x) at (0.8,1.5) {};
            \node[block](v3) at (-1.7,1.5) {};
            \node[block](v2) at (1.3,3.3) {};
            %\node[block](v1) at (1.5,3.7) {};
            %\node[block](v3) at (1.5,1){};
            \node[block](v1) at (2,1.3) {};
            
            % names
            \node[] ()[above left=-0.1cm and -0.1cm of s1]{$s_1$};
            \node[] ()[above right=-0.1cm and -0.1cm of s2]{$s_2$};
            \node[] ()[below left=-0.1cm and -0.1cm of x]{$x$};
            \node[] ()[above left=-0.2cm and -0.1cm of v3]{$v_3$};
            \node[] ()[above left=-0.1cm and -0.1cm of v2]{$v_2$};
            %\node[] ()[above left=-0.1cm and -0.1cm of v1]{$v_1$};
            %\node[] ()[above left=-0.1cm and -0.1cm of v3]{$v_3$};
            \node[] ()[below left=-0.05cm and -0.25cm of v1]{$v_1$};
            
            % directed edges
            \draw[edge] (s1) to (s2);
            \draw[edge] (s2) to (x);
            %\draw[edge] (v4) to (v3);
            \draw[edge] (v3) to (x);
            \draw[edge] (v1) to (v2);
            \draw[edge] (v2) to (s1);
            
            % bidirected edges
            \draw[edge, color=blue, dashed, style={<->}, bend left=30] (s1) to (s2);
            \draw[edge, color=blue, dashed, style={<->}, bend left=20] (v1) to (x);
            \draw[edge, color=blue, dashed, style={<->}, bend right=45] (v1) to (s2);
            \draw[edge, color=blue, dashed, style={<->}, bend left=20] (s1) to (x);
            \draw[edge, color=blue, dashed, style={<->}, bend left=20] (v2) to (s2);
            \draw[edge, color=blue, dashed, style={<->}, bend left=20] (v3) to (x);
            %\draw[edge, color=blue, dashed, style={<->}, bend right=35] (v1) to (s1);
        \end{tikzpicture}
    }
    \caption{An example of a semi-Markovian graph. In this example, $\Pa{x}=\{v_3,s_2\}, \BiD{x}=\{v_3,s_1,v_1\}$, and $\PaC{x}=\{v_3\}$.}\label{fig:semi-markov}
    \end{figure}
% Since each vertex of $\mathcal{G}$ represents a random variable, we use the terms vertex and variable interchangeably.
We use small letters for variables, capital letters for sets of variables, and bold letters for collections of subsets of variables (set families), respectively.
We utilise common graph-theoretic terms such as parents of a vertex $x$ (denoted by $\Pa{x}$), as well as children, ancestors, and descendants of a vertex.
We denote by $\BiD{x}$, the set of vertices that have a bidirected edge to $x$.
We also denote by $\PaC{x}=\Pa{x}\cap\BiD{x}$ the set of parents of $x$ that have a bidirected edge to $x$.
For a set $X$, $\Pa{X}$ is defined as $\Pa{x}=\cup_{x\in X}\Pa{x}\setminus X$.
The rest of the aforementioned sets are defined analogously for a set of variables $X$.
For a set of variables $X$, we denote by $\mathcal{G}_{[X]}$ the induced vertex subgraph of $\mathcal{G}$ over the vertices $X$.
The connected components of the edge-induced subgraph of $\mathcal{G}$ over its bidirected edges are called \emph{c-components} (aka \emph{districts}) of $\mathcal{G}$ \citep{tian2002testable}.
For example, the causal graph in Figure \ref{fig:semi-markov} consists of only one c-component. However, its induced subgraph over $\{s_1,x,v_2\}$ consists of two c-components $\{x,s_1\}$ and $\{v_2\}$.

\begin{definition}[Identifiability]\label{def: identification}
We say a causal effect $P(Y\vert do(X))$ is identifiable in $\mathcal{G}$, if it is uniquely computable from $P(V)$, the joint distribution of the observed variables.
More precisely, for any positive models $M_1$ and $M_2$ that are compatible with the causal graph $\mathcal{G}$ and $P_{M_1}(V)=P_{M_2}(V)$ (admit the same joint distribution), $P_{M_1}(Y|do(X)) =P_{M_2}(Y\vert do(X))$.
\end{definition}
Analogously, for a given set of interventional distributions $\mathbf{P}=\{P(Y_1\vert do(X_1)),...,P(Y_k\vert do(X_k))\}$, we say $P(S\vert do(T))$ is identifiable in the causal graph $\mathcal{G}$ from $\mathbf{P}$, if for any positive model $M$ that is compatible with $\mathcal{G}$, $P_{M}(S\vert do(T))$ is uniquely computable from $\mathbf{P}$.
Letting $\mathbf{P}$ be $\mathbf{P}=\{P(V\vert do(\emptyset))\}$, this generalization reduces to Definition \ref{def: identification}.
%We say $P(X\vert do(Y))$ is identifiable from $\mathbf{P}$ in short, when the graph $\mathcal{G}$ is clear from context.

\subsection{Problem Description}
Let $\mathcal{G}$ be a semi-Markovian graph on the vertex set $V$ along with a cost function $\mathbf{C}:V\to \mathbbm{R}^{\geq0}$, where
$\mathbf{C}(x)$ for some $x\in V$ denotes the cost of intervening on variable $x$.
With slight abuse of notation, we denote the cost of intervening on a set of variables $X\subseteq V$ by $\mathbf{C}(X)$.
In this work, we assume that the intervention cost is additive, unless otherwise stated (we shall discuss non-additive cost models in Section \ref{sec:nonlin}.) 
More precisely, we make the following assumption.
\begin{assumption}\label{ass:lincost}
    For a set $X\subseteq V$, the cost of intervening on $X$ is $\mathbf{C}(X):=\sum_{x\in X}\mathbf{C}(x)$,
    and for a collection $\mathbf{X}$ of subsets of $V$, the cost of intervention on $\mathbf{X}$ is 
    $
    \mathbf{C}(\mathbf{X}):=\sum_{X\in\mathbf{X}}\mathbf{C}(X).
    $
\end{assumption}
Moreover, we assume that there is no cost for observing a variable, i.e.,  $C(\emptyset)=0$.
Therefore, when intervening on set $X$, we have access to $Q[V\setminus X]=P(V\setminus X\vert do(X))$ at the cost of $C(X)$.
\begin{remark}
    In this setting, we can model a non-intervenable variable $x$ by assigning the cost $\mathbf{C}(x) = \infty$.
\end{remark}
For a given causal graph $\mathcal{G}$ and disjoint subsets $S,T\subseteq V$, our goal is to find a collection $\textbf{A}=\{A_1,A_2,...,A_m\}$ of subsets of $V$ with minimum $C(\textbf{A})$ such that $P(S|do(T))$ is identifiable in $\mathcal{G}$ given $\{Q[V\setminus A_1],...,Q[V\setminus A_m]\}$. 
More precisely, let $\mathbf{ID}_\mathcal{G}(S,T)$ denote the set of all collections of subsets of $V$, e.g., $\mathbf{A} = \{A_1,A_2,...,A_m\}$, where $A_i\subseteq V, 1\leq i\leq m$, such that $P(S|do(T))$ is identifiable in $\mathcal{G}$ given $\{Q[V\setminus A_1],...,Q[V\setminus A_m]\}$. 
Note that $|\mathbf{ID}_\mathcal{G}(S,T)|\leq2^{2^{|V|}}$.
Thus, the minimum-cost intervention design problem to identify $P(S|do(T))$ can be cast as the following optimization problem,
\begin{equation}\label{eq:optimization}
    \mathbf{A}^*_{S,T}\in\argmin_{\mathbf{A}\in \mathbf{ID}_\mathcal{G}(S,T)}\sum\mathop{}_{\mkern-5mu A\in\mathbf{A}} \mathbf{C}(A).
\end{equation}
We say $\mathbf{A}^*_{S,T}$ is 
%the minimum-cost collection of interventions to identify $Q[S]$ in $\mathcal{G}$, or 
the minimum-cost intervention for identifying $P(S|do(T))$ in $\mathcal{G}$. 
Note that additional constraints or regularization terms can be added to target a specific minimum-cost intervention set within $\mathbf{ID}_\mathcal{G}(S,T)$. 

It has been shown that $P(S\vert do(T))$ is identifiable in $\mathcal{G}$ if and only if $Q[\textbf{Anc}_{\mathcal{G}\setminus T}(S)]$ is identifiable in $\mathcal{G}$, where $\textbf{Anc}_{\mathcal{G}\setminus T}(S)$ are ancestors of $S$ in $\mathcal{G}$ after deleting vertices $T$ \citep{kivva2022revisiting,lee2020general,jaber2019causal,shpitser2006identification}.
That is, $\mathbf{ID}_\mathcal{G}(S,T)=\mathbf{ID}_\mathcal{G}(\textbf{Anc}_{\mathcal{G}\setminus T}(S),V\setminus \textbf{Anc}_{\mathcal{G}\setminus T}(S))$.
In other words, any causal query of the form $P(S\vert do(T))$ can be transformed into a causal query that is in the form of $Q[\cdot]$.
Therefore, in what follows, we focus on the minimum-cost intervention problem for identifying causal queries of the form $Q[S]=P(S\vert do(V\setminus S))$.
%noting that the identifiability of any arbitrary causal query of the form $P(Y\vert do(X))$ is equivalent to the identifiability of the former form.
Throughout the rest of this work, we will assume $T=V\setminus S$ in Equation \eqref{eq:optimization}.
In Section \ref{sec:singlec}, we study the above problem when $\mathcal{G}_{[S]}$ is a single c-component.
In Section \ref{sec:genid}, we generalize our results to an arbitrary subset $S$.
We discuss the problem under non-additve costs in Section \ref{sec:nonlin}.
We evaluate our proposed algorithms in terms of runtime and optimality in Section \ref{sec:experiment}.
%In Section \ref{sec:futwork} we discuss our future directions.

%the problem of designing a minimum cost intervention to identify an interventional distribution $Q[S]$ in $\mathcal{G}$ is formalized as follows. Define \mathbf{ID}$(S)\subseteq2^{2^{V}}$ as the set of collections of subsets of $V$ such as $\mathbf{A} = \{A_1,...,A_m\}$, where $A_i\subseteq V, \forall 1\leq i\leq m$, such that $Q[S]$ is identifiable in $\mathcal{G}$, given the interventional distributions $\{Q[V\setminus A_i]\}$.The minimum-cost interventions to identify $Q[S]$ is then the answer to the following minimization problem:

\section{Single C-component Identification}\label{sec:singlec}
The main challenge in solving the optimization problem in Equation \eqref{eq:optimization} is that the number of elements in $\mathbf{ID}_\mathcal{G}(S,T)$ is possibly super-exponential.
Throughout this section, we assume that $S$ is a subset of variables in $\mathcal{G}$ such that $\mathcal{G}_{[S]}$ is a single c-component, unless stated otherwise. 
Under this assumption, we first show\footnote{All proofs are provided in Appendix \ref{apdx: proofs}.} in Theorem \ref{thm:singleton} that $\mathbf{ID}_\mathcal{G}(S,T)$ in Equation \eqref{eq:optimization} can be replaced with a substantially smaller subset without changing the solution to the problem in \eqref{eq:optimization}. 
Next, we prove in Theorems \ref{thm:NP-hard} and \ref{thm:reduction2} that even after this substitution, the minimum-cost intervention problem remains NP-complete. 
%We begin with the following result.
\begin{restatable}{lemma}{lemintersect} \label{lem:intersect}
    Suppose $S$ is a subset of variables such that $\mathcal{G}_{[S]}$ is a single c-component. 
    Let $\mathbf{A}=\{A_1,A_2,...,A_m\}$ be a collection of subsets of $V$ such that 
    %that do not intersect with $S$, that is, 
    $A_\cup\cap S\!=\!\emptyset$, where 
    $A_\cup:=\cup_{i=1}^mA_i$.
    If $\mathbf{A}\in$ $\mathbf{ID}_\mathcal{G}(S,V\setminus S)$, then the singleton collection $\mathbf{A}_\cup=\{A_\cup\}$ also belongs to $\mathbf{ID}_\mathcal{G}(S,V\setminus S)$.
\end{restatable}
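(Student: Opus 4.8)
The plan is to argue directly from the model-theoretic definition of identifiability, not from any syntactic do-calculus derivation. Write $W_i := V\setminus A_i$ and $W_\cup := V\setminus A_\cup = \cap_{i=1}^m W_i$; the hypothesis $A_\cup\cap S=\emptyset$ gives $S\subseteq W_\cup\subseteq W_i$ for every $i$, and equivalently $A_\cup\subseteq V\setminus S$. What must be shown is that $Q[S]$ is identifiable from the single distribution $Q[W_\cup]=Q[V\setminus A_\cup]$. So I would take two arbitrary positive models $M_1,M_2$ compatible with $\mathcal{G}$ that agree on $Q[W_\cup]$ and prove they must agree on $Q[S]$; unwinding the definition then yields $\{A_\cup\}\in\textbf{ID}_\mathcal{G}(S,V\setminus S)$.

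The device is to surgically alter $M_1$ and $M_2$ only on the vertices of $A_\cup$. Concretely, I would build $\tilde M_1,\tilde M_2$ from $M_1,M_2$ by replacing, for each $x\in A_\cup$, its structural assignment with one and the same full-support exogenous source that ignores both $\Pa{x}$ and every latent incident to $x$, while keeping all mechanisms of $V\setminus A_\cup$ and all latent distributions exactly as in $M_1$ (resp.\ $M_2$). Since a model compatible with $\mathcal{G}$ may always drop dependencies, each $\tilde M_k$ is still compatible with $\mathcal{G}$, and positivity is preserved because the substituted sources have full support. The key point is that this surgery is invisible to the two distributions we care about: in $Q[S]=P(S\mid do(V\setminus S))$ and in $Q[W_\cup]=P(W_\cup\mid do(A_\cup))$ every vertex of $A_\cup$ is intervened upon, so its assignment is overridden and only the (unchanged) mechanisms of the non-intervened vertices and the (unchanged) latents survive. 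Hence $Q[W_\cup]$ still agrees across $\tilde M_1,\tilde M_2$, and $Q_{\tilde M_k}[S]=Q_{M_k}[S]$.

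The crux is to verify that $\tilde M_1$ and $\tilde M_2$ now agree on \emph{every} $Q[W_i]$, since by the hypothesis $\mathbf{A}\in\textbf{ID}_\mathcal{G}(S,V\setminus S)$ this forces agreement on $Q[S]$ and closes the argument. Writing $W_i=(A_\cup\setminus A_i)\cup W_\cup$, I would factor $Q[W_i]=P(A_\cup\setminus A_i\mid do(A_i))\,P(W_\cup\mid A_\cup\setminus A_i, do(A_i))$. In $\tilde M_k$ the vertices of $A_\cup\setminus A_i$ are exogenous, so the first factor is merely their common source law, identical across the two models; and being exogenous they carry no back-door, so by Rule 2 of the do-calculus conditioning on them equals intervening on them, turning the second factor into $P(W_\cup\mid do(A_\cup))=Q[W_\cup]$ evaluated at the matching intervention values, again identical across the two models. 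A product of matching factors matches, so $Q_{\tilde M_1}[W_i]=Q_{\tilde M_2}[W_i]$ for all $i$, and therefore $Q_{\tilde M_1}[S]=Q_{\tilde M_2}[S]$, i.e.\ $Q_{M_1}[S]=Q_{M_2}[S]$.

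I expect the main obstacle to be the two technical points that make the surgery legitimate: retaining \emph{positivity} while severing each $A_\cup$-vertex from its parents and its latent confounders (handled by substituting full-support exogenous sources rather than constants), and rigorously justifying the ``observing equals intervening'' step for $A_\cup\setminus A_i$, namely that a vertex rendered independent of all confounders and parents may be conditioned on exactly as if it were intervened. Everything else is bookkeeping about which mechanisms are overridden under which $do(\cdot)$. I would note in passing that this construction never actually uses $\mathcal{G}_{[S]}$ being a single c-component; that hypothesis is the standing assumption of the section but does not seem to be exercised by the statement itself.
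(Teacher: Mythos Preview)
Your proposal is correct and follows essentially the same construction as the paper: both surgically replace the mechanisms of the vertices in $A_\cup$ by independent full-support sources, then use the resulting factorization $Q[W_i]=Q[W_\cup]\cdot\prod_{x\in A_\cup\setminus A_i}P(x)$ to transfer agreement on $Q[W_\cup]$ to agreement on every $Q[W_i]$, thereby invoking the hypothesis. The paper frames this by contradiction (assume two models agree on $Q[W_\cup]$ but differ on $Q[S]$, then contradict $\mathbf{A}\in\textbf{ID}_\mathcal{G}(S,V\setminus S)$) whereas you phrase it directly, and you are more explicit about positivity and the Rule~2 justification for ``observing equals intervening'' on the severed vertices---but these are presentational rather than substantive differences. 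Your closing remark is also correct: neither argument exercises the single-c-component assumption on $\mathcal{G}_{[S]}$.
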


\begin{remark}\label{rem: cost-singleton}
    The cost of $\mathbf{A}_\cup$ in Lemma \ref{lem:intersect} is at most $C(\mathbf{A})$,
    \[
    \mathbf{C}(\mathbf{A}) = 
    %\sum_{i=1}^m\mathbf{C}(\mathbf{A_i}) =
    \sum_{\mkern-5mu A_i\in\mathbf{A}}\:\: \sum_{\mkern-5mu a\in A_i}\mathbf{C}(a)
    \geq 
    \sum_{\mkern-5mu a\in A_\cup}\mathbf{C}(a)
    = \mathbf{C}(\mathbf{A}_\cup),
    \]
    where the inequality holds because each $a\in A_\cup$ appears exactly once in the right-hand-side summation, whereas it appears at least once on the left hand side.
\end{remark}
% \textcolor{red}{
% A consequence of Lemma \ref{lem:intersect} is that if $\mathcal{G}_{[S]}$ is a single c-component, then the solution $\mathbf{A}_S^*$ to \eqref{eq:optimization} contains at most one super-set of $S$.
% }
Next, we prove that for a given subset $S$ where $\mathcal{G}_{[S]}$ is a   c-component, the collection $\mathbf{A}_{S,V\setminus S}^*$ is singleton, that is, it contains exactly one intervention set.
\begin{restatable}{theorem}{thmsingleton} \label{thm:singleton}
    Suppose $S$ is a subset of variables such that $\mathcal{G}_{[S]}$ is a   c-component. 
    Let $\mathbf{A}=\{A_1, A_2,...,A_m\}$ be a collection of subsets such that $\mathbf{A}\in$ $\mathbf{ID}_\mathcal{G}(S,V\setminus S)$ and $m>1$.
    Then, there exists a subset $\Tilde{A}\subseteq V$ such that $\mathbf{\Tilde{A}}=\{\Tilde{A}\}\in$ $\mathbf{ID}_\mathcal{G}(S,V\setminus S)$ and
    $\mathbf{C}(\mathbf{\Tilde{A}})\leq \mathbf{C}(\mathbf{A})$.
\end{restatable}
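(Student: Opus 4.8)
The plan is to exhibit an explicit candidate, namely $\tilde A := A_\cup \setminus S$ where $A_\cup = \cup_{i=1}^m A_i$, and to verify that $\{\tilde A\}$ is a feasible singleton of cost at most $\mathbf{C}(\mathbf{A})$. The cost bound is essentially free: since the cost is additive and nonnegative, deleting the $S$-vertices and then collapsing overlaps can only decrease it, so $\mathbf{C}(\{\tilde A\}) = \sum_{a\in A_\cup\setminus S}\mathbf{C}(a) \le \sum_{a\in A_\cup}\mathbf{C}(a) \le \mathbf{C}(\mathbf{A})$, where the last inequality is exactly Remark~\ref{rem: cost-singleton}. The entire content of the theorem is therefore the feasibility claim $\{\tilde A\}\in\textbf{ID}_\mathcal{G}(S,V\setminus S)$.

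The choice of $\tilde A$ is forced by the following observation: a singleton $\{\tilde A\}$ can identify $Q[S]$ only if $\tilde A\cap S=\emptyset$. Indeed, if $\tilde A$ contained some $s\in S$, then $Q[V\setminus \tilde A]=P(V\setminus \tilde A\mid do(\tilde A))$ would not even range over $s$ as a random variable, whereas $Q[S]$ is a distribution over all of $S$; one can then build two compatible models that agree on $Q[V\setminus\tilde A]$ but differ in the mechanism of $s$, hence in $Q[S]$. This singles out $A_\cup\setminus S$ as the natural candidate extracted from $\mathbf{A}$.

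I would prove feasibility in two steps, reusing Lemma~\ref{lem:intersect} as a black box. First (the reduction step) I would show that $\mathbf{A}'=\{A_1\setminus S,\dots,A_m\setminus S\}$ still lies in $\textbf{ID}_\mathcal{G}(S,V\setminus S)$, i.e.\ deleting target vertices from every intervention set preserves identifiability of $Q[S]$. Since $A'_\cup=\cup_i(A_i\setminus S)=A_\cup\setminus S$ is now disjoint from $S$, Lemma~\ref{lem:intersect} applies to $\mathbf{A}'$ and yields the singleton $\{A'_\cup\}=\{\tilde A\}\in\textbf{ID}_\mathcal{G}(S,V\setminus S)$, completing the argument (the intermediate cost inequalities $\mathbf{C}(\{\tilde A\})\le\mathbf{C}(\mathbf{A}')\le\mathbf{C}(\mathbf{A})$ being immediate).

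The main obstacle is this reduction step, and it is genuinely the crux: one cannot argue element-wise, because $Q[V\setminus A_i]$ is in general \emph{not} recoverable from the less-intervened distribution $Q[V\setminus(A_i\setminus S)]$ (intervening further on $A_i\cap S\subseteq S$ is itself a confounded, non-identifiable sub-query, precisely because $\mathcal{G}_{[S]}$ is a single c-component). The right approach is a global, structural argument showing that target vertices are \emph{inert} for identifying $Q[S]$. Concretely, I would invoke the hedge characterization of the (non-)identifiability of $Q[S]=P(S\mid do(V\setminus S))$ with $X=V\setminus S$: every obstruction is a hedge whose root set lies in $S$, whose $X$-free sub-forest $F'$ lies in $S$, and whose \emph{removable} part $F\setminus F'$ lies entirely in $V\setminus S$. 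Hence an intervention breaks such an obstruction iff it hits $F\setminus F'$, which never requires a vertex of $S$; the $S$-vertices of the $A_i$ thus neither create nor destroy any obstruction, so $\cup_i(A_i\setminus S)=A_\cup\setminus S$ breaks exactly the obstructions broken by $A_\cup$. Since $\mathbf{A}\in\textbf{ID}_\mathcal{G}(S,V\setminus S)$ means all obstructions are already broken, they remain broken after deleting $S$, giving $\mathbf{A}'\in\textbf{ID}_\mathcal{G}(S,V\setminus S)$. Equivalently, one shows directly that $S$ becomes its own c-component of $\mathcal{G}_{[V\setminus(A_\cup\setminus S)]}$, so that $Q[S]$ is a recoverable Q-factor of the single distribution $Q[V\setminus\tilde A]$.
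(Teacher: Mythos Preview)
Your candidate $\tilde A=A_\cup\setminus S$ and the cost bound are fine, and the two-step plan (reduce to an $S$-disjoint situation, then apply Lemma~\ref{lem:intersect}) has the right shape. The paper does something closely related but not identical: instead of stripping $S$ from every $A_i$, it \emph{discards} outright those $A_i$ with $A_i\cap S\neq\emptyset$ and applies Lemma~\ref{lem:intersect} to the surviving sub-collection $\hat{\mathbf{A}}=\{A_i:A_i\cap S=\emptyset\}$, whose union is in general smaller than yours.

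The genuine gap is your reduction step. You assert that ``$\mathbf{A}\in\textbf{ID}_\mathcal{G}(S,V\setminus S)$ means all obstructions are already broken,'' but the hedge criterion you invoke characterizes identifiability of $Q[S]$ from a \emph{single} interventional distribution, not from a collection. Nothing in your argument explains why membership of the collection $\mathbf{A}$ in $\textbf{ID}_\mathcal{G}(S,V\setminus S)$ forces $A_\cup$---or any single set extracted from $\mathbf{A}$---to hit every hedge formed for $Q[S]$; indeed, that is essentially what Lemma~\ref{lem:intersect} establishes only \emph{under} the hypothesis $A_\cup\cap S=\emptyset$ that you are trying to arrange, so the reasoning is circular. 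The missing bridge is the g-identifiability result for a single c-component (Theorem~1 of \cite{kivva2022revisiting}, equivalently the thicket criterion of \cite{lee2020general}): $Q[S]$ is identifiable from $\{Q[V\setminus A_i]\}_i$ if and only if it is identifiable from some single $Q[V\setminus A_j]$. The paper invokes this explicitly to justify dropping the $S$-intersecting members; your hedge argument needs exactly the same input and does not supply it. Your final ``equivalently'' does not rescue the argument either: $S$ need not be its own c-component in $\mathcal{G}_{[V\setminus\tilde A]}$ for $Q[S]$ to be identifiable there (a vertex with a bidirected edge to $S$ but no directed path to $S$ can survive without forming a hedge), so that claim is strictly stronger than what is true and cannot be proved from the hypotheses.
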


Theorem \ref{thm:singleton} indicates that when $\mathcal{G}_{[S]}$ is a c-component, the minimum-cost intervention problem in Equation \ref{eq:optimization} reduces to the problem of finding a single intervention set $A^*$ such that $Q[S]$ is identifiable from $Q[V\setminus A^*]$.
More formally, the optimization in \eqref{eq:optimization} reduces to the following problem,
\begin{equation}\label{eq:optimization scc}
    A^*_S\in\argmin_{A\in \mathbf{ID_1}(S)}\sum\mathop{}_{\mkern-5mu a\in A} \mathbf{C}(a),
\end{equation}
where $\mathbf{ID_1}(S)%\subseteq \mathbf{ID}_\mathcal{G}(S)
$ is the set of all subsets $A$ of $V$ such that $Q[S]$ is identifiable from $Q[V\!\setminus\!A]$.
Note that $|\mathbf{ID_1}(S)|\leq\! 2^{|V|}$.

For the rest of this section, we discuss the solution to Equation \eqref{eq:optimization scc}.
The following lemma further constrains the worst-case cardinality of $\mathbf{ID_1(S)}$ to at most $2^{|V|-|S|}$ elements, noting that intervening on variables in $S$ does not help in identifying $Q[S]$.  
That is, we need only to consider all subsets $A$ of $V$, such that $Q[S]$ is identifiable from $Q[V\setminus A]$ and $A\cap S=\emptyset$. 
% As we shall shortly see, further improvement is not possible in general.

\begin{restatable}{lemma}{leminsideS}\label{lem: inside S}
    Suppose $S$ is a subset of variables such that $\mathcal{G}_{[S]}$ is a   c-component. 
    If $A\in\mathbf{ID_1}(S)$, then $A\cap S=\emptyset$.
\end{restatable}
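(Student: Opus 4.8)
My plan is to prove the contrapositive: if $A\cap S\neq\emptyset$ then $Q[S]$ is not identifiable from $Q[V\setminus A]$, and hence $A\notin\mathbf{ID}_1(S)$. By the definition of identifiability from a set of interventional distributions, it suffices to exhibit two positive structural causal models $M_1$ and $M_2$, both compatible with $\mathcal{G}$, that agree on the available distribution, $Q_{M_1}[V\setminus A]=Q_{M_2}[V\setminus A]$, yet disagree on the target, $Q_{M_1}[S]\neq Q_{M_2}[S]$. The guiding intuition is that $Q[V\setminus A]$ carries no information about the \emph{natural} mechanism of any variable that is intervened upon, whereas $Q[S]=P(S\mid do(V\setminus S))$ is determined precisely by the mechanisms of the variables in $S$; perturbing the mechanism of a variable lying in $A\cap S$ is therefore invisible to the former but visible to the latter.

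Concretely, I would fix any $s\in A\cap S$ and let $M_1$ be an arbitrary positive model compatible with $\mathcal{G}$, with structural equation $f_s$ for $s$ (reading $\Pa{s}$, the private noise of $s$, and the latent confounders incident to $s$). I would choose the base parametrization so that the induced marginal $P_{M_1}(s\mid do(V\setminus S))$ is non-degenerate for at least one intervention value (e.g. non-uniform in a binary parametrization). I then define $M_2$ to be identical to $M_1$ in every exogenous distribution and every structural equation, \emph{except} the one for $s$, which I replace by a perturbation $\tilde f_s$ that shifts the value of $s$ (for instance $\tilde f_s=1-f_s$ in the binary case). Since $M_2$ alters only the equation for $s$, and $\tilde f_s$ still reads only $\Pa{s}$ and the confounders incident to $s$, the model $M_2$ remains compatible with $\mathcal{G}$ and can be kept positive.

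Two verifications complete the argument. First, $Q_{M_1}[V\setminus A]=Q_{M_2}[V\setminus A]$: because $s\in A$, in the world $do(A)$ the variable $s$ is set externally and its structural equation is never invoked, while every other variable in $V\setminus A$ is governed by an equation and by exogenous and latent distributions that are identical across $M_1$ and $M_2$ (only $f_s$ was changed, so all confounder distributions are untouched). Hence the two models induce the same $P(V\setminus A\mid do(A))$. Second, $Q_{M_1}[S]\neq Q_{M_2}[S]$: here $s\in S$ is not intervened, so under $do(V\setminus S)$ its mechanism is active, and the perturbation changes the marginal $P(s\mid do(V\setminus S))$, which is itself a marginal of $Q[S]$; the non-degeneracy of the base model guarantees this change does not cancel, so the joint $Q[S]$ genuinely differs. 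This witnesses non-identifiability.

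The main obstacle is exactly this second verification: I must pick the base model and the perturbation so that the edit to $f_s$ provably propagates to a real difference in $Q[S]$ rather than being washed out, while the first verification confirms it cannot leak into $Q[V\setminus A]$ (no term there depends on $f_s$, since $s$ is intervened and the confounder distributions are unchanged). The single-c-component hypothesis on $\mathcal{G}_{[S]}$ is inherited as the standing assumption of the section; the core construction does not rely on it essentially, and the non-degeneracy choice above suffices to rule out the only degenerate case, namely that $s$ contributes nothing to the target distribution.
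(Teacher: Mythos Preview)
Your proposal is correct and follows essentially the same approach as the paper: prove the contrapositive by picking $s\in A\cap S$ and constructing two models that differ only in the mechanism of $s$, so that they agree on $Q[V\setminus A]$ (where $s$ is intervened and its equation is irrelevant) but disagree on $Q[S]$ (where $s$'s mechanism is active). The paper simply instantiates this idea with the most concrete possible witnesses---all variables independent binary, uniform in $M_1$ and $(0.4,0.6)$ for $s$ in $M_2$---which sidesteps your non-degeneracy discussion entirely, but the underlying argument is the same.
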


%=======================================================================

\subsection{Hardness}\label{sec: NP-hard}
In this section, we study the complexity of the minimum-cost intervention design problem.
We show that despite the substantial decrease in the search space complexity of the optimization achieved through Theorem \ref{thm:singleton} (from Eq.~\ref{eq:optimization} to Eq.~\ref{eq:optimization scc}), the minimum-cost intervention problem in Equation \eqref{eq:optimization scc} remains NP-hard.
More precisely, we show that there exists a polynomial-time reduction from the Weighted Minimum Vertex Cover (WMVC) problem to the min-cost intervention problem. 
For the sake of completeness, we formally define the WMVC problem.
\begin{definition}[WMVC]
    Given an undirected graph $\mathcal{H}=(V_\mathcal{H},E_\mathcal{H})$ and a weight function $\omega:V_\mathcal{H}\to \mathbbm{R}^{\geq0}$, a vertex cover is a subset $A\subseteq V_\mathcal{H}$ such that $A$ covers all the edges of $\mathcal{H}$, i.e., for any edge $\{x,y\}\in E_\mathcal{H}$, at least one of $x$ or $y$ is a member of $A$.
    The weighted minimum vertex cover problem's objective is to find a set $A^*$ among all vertex covers that minimizes $\sum_{a\in A}\omega(a)$.
\end{definition}
WMVC is known to be NP-hard \citep{karp1972reducibility}.
Even finding an approximation within a factor of 1.36 to this problem is NP-hard \citep{dinur2005hardness}. 
In fact, there is no known polynomial-time algorithm to approximate WMVC problem within a constant factor less than two\footnote{Factor 2 approximation algorithms appear in \citep{garey1979computers,papadimitriou1998combinatorial}.}.
Indeed, WMVC remains NP-hard even for bounded-degree graphs \citep{garey1974some}.
The following theorem shows that all these statements also hold for the min-cost intervention problem.

\begin{restatable}{theorem}{thmNPcomplete} \label{thm:NP-hard}
    WMVC problem is reducible to a minimum-cost intervention problem in polynomial time.
\end{restatable}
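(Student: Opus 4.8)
The plan is to construct, from an arbitrary WMVC instance $(\mathcal{H}, \omega)$, a semi-Markovian graph $\mathcal{G}$, a target set $S$, and a cost function $\mathbf{C}$ such that the min-cost intervention set $A^*_S$ solving Equation \eqref{eq:optimization scc} corresponds exactly to a minimum weight vertex cover of $\mathcal{H}$. The central idea is to encode each \emph{edge} of $\mathcal{H}$ as a structure in $\mathcal{G}$ whose identifiability obstruction can only be removed by intervening on a vertex incident to that edge. Concretely, for each vertex $v \in V_\mathcal{H}$ I would introduce a corresponding intervenable vertex $a_v$ in $\mathcal{G}$ with cost $\mathbf{C}(a_v) = \omega(v)$, and I would design $S$ together with the bidirected/directed structure so that $Q[S]$ fails to be identifiable from $Q[V \setminus A]$ precisely when some edge of $\mathcal{H}$ is left ``uncovered'' by $A$, i.e., when both endpoints $a_x, a_y$ of an edge $\{x,y\}$ lie outside $A$.

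First I would fix the combinatorial target: I want $A \in \mathbf{ID}_1(S)$ if and only if $\{v : a_v \in A\}$ is a vertex cover of $\mathcal{H}$. By Lemma \ref{lem: inside S} any candidate intervention avoids $S$, so all the action happens on the auxiliary vertices outside $S$, and by Theorem \ref{thm:singleton} it suffices to reason about a single intervention set $A$ rather than a collection; this is what lets me match WMVC (a single subset) to the reduced problem \eqref{eq:optimization scc}. The key gadget is a known criterion for identifiability of $Q[S]$ from $Q[V\setminus A]$ in terms of the c-component structure of the induced subgraph $\mathcal{G}_{[V\setminus A]}$ (a hedge/c-component obstruction). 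I would engineer, for each edge $\{x,y\}$, a small subgraph — for instance two vertices of $S$ joined to $a_x$ and $a_y$ through bidirected edges so that a forbidden c-component containing part of $S$ survives in $\mathcal{G}_{[V\setminus A]}$ unless at least one of $a_x, a_y$ is deleted (intervened upon). Deleting $a_x$ or $a_y$ breaks the offending c-component for that edge's gadget, which is exactly the vertex-cover condition.

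The main obstacle is designing the gadget so that the edge-gadgets do not interfere with one another: intervening to fix edge $\{x,y\}$ must not accidentally create or destroy an obstruction associated with a disjoint edge, and the aggregate identifiability of $Q[S]$ must decompose cleanly as ``all edge-obstructions removed.'' I would handle this by making the gadgets share vertices only through the common $a_v$ nodes (whose deletion is the cover action) and keeping the $S$-portion of each gadget private to that edge, so that the global c-component decomposition of $\mathcal{G}_{[V\setminus A]}$ factors over edges. I would then verify that $\mathcal{G}_{[S]}$ is a single c-component (required by the hypotheses of Theorem \ref{thm:singleton} and Lemma \ref{lem: inside S}), perhaps by adding a bidirected path threading all of $S$ at zero extra cost, so that the reduction lands in the regime these lemmas cover.

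Finally I would argue cost-equivalence and polynomiality. Since $\mathbf{C}(A) = \sum_{a_v \in A}\omega(v)$ by the additive cost assumption and the zero-cost $S$-vertices, a minimizer of \eqref{eq:optimization scc} projects to a minimum-weight vertex cover and conversely; the construction uses $O(|V_\mathcal{H}| + |E_\mathcal{H}|)$ vertices and edges and is computable in polynomial time. Combined with the hardness and inapproximability facts cited for WMVC, this transfers NP-hardness, the $1.36$-factor inapproximability, the absence of a sub-$2$ constant-factor approximation, and bounded-degree hardness to the min-cost intervention problem. The delicate point throughout remains the faithful correspondence between the graphical identifiability obstruction and the covering condition, so most of my verification effort would go into proving both directions of the ``$A \in \mathbf{ID}_1(S)$ iff $A$ is a vertex cover'' equivalence via the c-component identifiability criterion.
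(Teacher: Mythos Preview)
Your high-level strategy---build a per-edge gadget so that an uncovered edge $\{x,y\}$ leaves a hedge for $Q[S]$ containing $a_x,a_y$, then read off the minimum vertex cover from $A^*_S$---is exactly the paper's strategy. However, the concrete instantiation you sketch (only the $a_v$'s outside $S$, with $\mathbf{C}(A)=\sum_{a_v\in A}\omega(v)$) cannot yield a reduction from general WMVC. Here is why. You need, for every edge $\{x,y\}$, that $S\cup\{a_x,a_y\}$ is a hedge, while for no single $v$ is $S\cup\{a_v\}$ a hedge (otherwise every $a_v$ is forced into $A$, collapsing the reduction). Since $\mathcal{G}_{[S]}$ is a c-component, $S\cup\{a_v\}$ is a hedge precisely when $a_v$ has both a directed edge into $S$ and a bidirected edge to $S$; avoiding this forces each $a_v$ to be either of ``directed type'' ($a_v\in\Pa{S}\setminus\BiD{S}$) or of ``bidirected type'' ($a_v\in\BiD{S}\setminus\Pa{S}$). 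But then for $S\cup\{a_x,a_y\}$ to be simultaneously a c-component and an ancestral set of $S$, the pair must contain one vertex of each type (the directed one supplies the path to $S$, the bidirected one supplies the confounding link, and they must be joined by $a_x\leftrightarrow a_y$ and $a_y\to a_x$). That forces $\mathcal{H}$ to be bipartite, where WMVC is polynomial via K\H{o}nig's theorem---so no hardness follows.

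The paper's construction sidesteps this by \emph{not} insisting on an exact bijection ``hedge $\Leftrightarrow$ edge.'' It takes $S=\{s\}$ to be a single vertex and, for each edge $\{x,y\}$, adds two auxiliary vertices $u_{xy},w_{xy}$ with the chain $u_{xy}\to x\to y\to w_{xy}\to s$ and bidirected edges from $u_{xy}$ to each of $x,y,w_{xy},s$. The minimal hedge for that edge is then $\{s,x,y,u_{xy},w_{xy}\}$, which can be broken by hitting any of $x,y,u_{xy},w_{xy}$. The crucial device you are missing is to assign the auxiliaries a prohibitively large cost $z>\sum_{v}\omega(v)$: this guarantees the optimum never touches $u_{xy}$ or $w_{xy}$, so the optimum lives inside $V_\mathcal{H}$, where the hedge-hitting condition coincides with vertex covering. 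The paper then verifies both directions via do-calculus (Claims~3 and~4 in the proof). In short, your plan is on the right track, but the gadget must include per-edge auxiliary vertices priced out of the optimum; with only the $a_v$'s you cannot encode non-bipartite instances.
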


Note that since the ID algorithm proposed by \citet{shpitser2006identification} can be employed to verify a solution to our problem in polynomial time, the minimum-cost intervention problem is in NP.
Therefore, we have the following corollary.
\begin{corollary}
    The minimum-cost intervention problem is NP-complete.
\end{corollary}

\begin{restatable}{remark}{remconstantcost}\label{rem: constant cost}
    The unweighted version of WMVC problem (i.e., when the weight function is given by $\omega(\cdot)=1$) can be reduced to a minimum-cost intervention problem with the constant cost function $\mathbf{C}(\cdot)=1$ in polynomial time. 
    Consequently, the NP-completeness does not stem from arbitrary choice of cost functions.
    This claim is formally proved in Appendix \ref{apdx: proofs}.
\end{restatable}

A result more general than Theorem \ref{thm:NP-hard} is provided below, which shows that the minimum-weight hitting set (MWHS) problem can also be reduced to the min-cost intervention problem.
\begin{definition}[MWHS]\label{def:mwhs}
    Let $V=\{v_1,...,v_n\}$ be a set of objects along with a weight function $\omega:V\to\mathbbm{R}^{\geq0}$.
    Given a collection of subsets of $V$ such as $\mathbf{F}=\{F_1,...,F_k\}$, $F_i\subseteq V,\  1\leq i\leq k$, a hitting set for $\mathbf{F}$ is a subset $A\subseteq V$ such that $A$ hits all the sets in $\mathbf{F}$, i.e., for any $1\leq i\leq k$, $A\cap F_i\neq\emptyset$.
    The weighted minimum hitting set problem's objective is to find a set $A^*$ among all hitting sets that minimizes $\sum_{a\in A}\omega(a)$.
\end{definition}
\begin{restatable}{theorem}{thmreductiontwo}\label{thm:reduction2}
 MWHS problem is reducible to minimum-cost intervention problem in polynomial time.   
\end{restatable}
The significance of Theorem \ref{thm:reduction2} compared to Theorem \ref{thm:NP-hard} is that MWHS is not only NP-hard to solve, but also NP-hard to approximate within a factor better than a logarithmic factor \citep{feige1998threshold}.
This results in the following corollary.
\begin{corollary}
    There is no $(1-o(1))\ln{\vert V\vert}$ approximation scheme for the minimum-cost intervention problem, unless NP has quasi-polynomial-time algorithms.
\end{corollary}

% Theorem \ref{thm:NP-hard} states that any algorithm that solves the min-cost intervention problem, can also solve WMVC with a polynomial overhead.
% It immediately follows that min-cost intervention is NP-hard.
% , and it is hard to approximate within a constant factor less than 1.36.
On the other hand, as with any other NP-complete problem, certain instances of the minimum-cost intervention problem can be solved in polynomial-time.
An interesting group of such instances are discussed in Appendix \ref{apdx: special case}.
Despite being restrictive, these special cases might provide useful insights for finding efficient algorithms in more general settings.
Naturally, Theorem \ref{thm:reduction2} implies that the algorithms proposed in this paper can aid to solve some other problems in the NP class.

%=======================================================================

\subsection{Minimum Hitting Set Formulation}\label{sec: hitsetform}
In this section, we propose a formulation of the minimum-cost intervention problem in terms of the minimum-weight hitting set (MWHS) problem.
This formulation will allow us to find algorithms to solve or approximate our problem in later sections.

It is known that special structures, called \emph{hedges} that are formed for $Q[S]$ in $\mathcal{G}$ prevent the identifiability of the causal effect $Q[S]$ \citep{shpitser2006identification}. 
On the other hand, intervening on a vertex of a hedge allows us to eliminate it from the graph. Hence, the problem of identifying $Q[S]$ is equivalent to finding a subset of vertices that hits all the hedges formed for $Q[S]$.
%can be mapped into the sets $F$ in Definition \ref{def:mwhs} to formulate the min-cost intervention as a MWHS problem.
In other words, the minimum-cost intervention problem can be reformulated as a MWHS problem.
For simplicity, here, we use a slightly modified definition of a hedge.
In Appendix \ref{apdx: prelim}, we show that it is equivalent to the original definition in \citep{shpitser2006identification}. 

\begin{restatable}{definition}{defhedge}\text{\normalfont(Hedge)}\label{def: hedge}
    Let $\mathcal{G}$ be a semi-Markovian graph and  $S$ be a subset of its vertices such that $\mathcal{G}_{[S]}$ is a  c-component.
    A subset $F$ is a hedge formed for $Q[S]$ in $\mathcal{G}$ if $S\subsetneq F$, $F$ is the set of ancestors of $S$ in $\mathcal{G}_{[F]}$, and $\mathcal{G}_{[F]}$ is a c-component.
\end{restatable}

As an example, suppose $S=\{s_1,s_2\}$ in the causal graph of Figure \ref{fig:semi-markov}. In this case, $\mathcal{G}_{[S]}$ is a c-component and $\{s_1,s_2,v_1,v_2\}$ and $\{s_1,s_2,v_2\}$ are two hedges formed for $Q[S]$.

Using the result of \citep{shpitser2006identification}, the following Lemma connects the minimum-cost intervention problem to the minimum-weight hitting set problem.

\begin{restatable}{lemma}{lemhitsetform}\label{lem: hitsetform}
Let $\mathcal{G}$ be a semi-Markovian graph with vertex set $V$, along with a cost function $\mathbf{C}:V\to\mathbbm{R}^{\geq0}$.
Let $S$ be a subset of $V$ such that $\mathcal{G}_{[S]}$ is a   c-component.
Suppose the set of all hedges formed for $Q[S]$ in $\mathcal{G}$ is $\{F_1,...,F_m\}$.
Then $A_S^*$ is a solution to Equation \eqref{eq:optimization scc} if and only if it is a solution to the MWHS problem for the sets $\{F_1\setminus S,...,F_m\setminus S\}$, with the weight function $\omega(\cdot):=\mathbf{C}(\cdot)$.
\end{restatable}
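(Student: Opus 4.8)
The plan is to show that the feasible set and the objective of the optimization in \eqref{eq:optimization scc} coincide with those of the MWHS instance built from $\{F_1\setminus S,\dots,F_m\setminus S\}$, so that the two problems have identical minimizers. First I would invoke Lemma \ref{lem: inside S}: every feasible $A\in\mathbf{ID}_1(S)$ satisfies $A\cap S=\emptyset$, and a minimum-weight hitting set for the sets $F_i\setminus S\subseteq V\setminus S$ may likewise be taken inside $V\setminus S$ (a vertex of $S$ hits none of the $F_i\setminus S$ and only adds weight). Restricting attention to subsets $A\subseteq V\setminus S$ is therefore without loss, and for such $A$ the condition $A\cap(F_i\setminus S)\neq\emptyset$ is equivalent to $A\cap F_i\neq\emptyset$. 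Thus it remains to prove, for $A\subseteq V\setminus S$, the equivalence $A\in\mathbf{ID}_1(S)\iff A\cap F_i\neq\emptyset$ for every $i$, after which the identity $\omega=\mathbf{C}$ makes the two objectives literally the same function.

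The heart of the argument is a hedge characterization of interventional identifiability. Writing $C:=V\setminus A\supseteq S$, having access to $Q[V\setminus A]=Q[C]$ is equivalent, for identification purposes, to having the observational distribution of the marginal model whose causal graph is the induced subgraph $\mathcal{G}_{[C]}$; this is the standard $Q[\cdot]$/c-component machinery underlying \cite{shpitser2006identification} and recalled in Appendix \ref{apdx: prelim}. Consequently $Q[S]$ is identifiable from $Q[C]$ if and only if no hedge is formed for $Q[S]$ in $\mathcal{G}_{[C]}$. I would then observe that, because the induced subgraph on any $F\subseteq C$ is the same whether computed in $\mathcal{G}$ or in $\mathcal{G}_{[C]}$ — so that the c-component and ancestor conditions of Definition \ref{def: hedge} are unaffected — a hedge for $Q[S]$ in $\mathcal{G}_{[C]}$ is exactly a hedge $F$ for $Q[S]$ in $\mathcal{G}$ with $F\subseteq C$, i.e.\ with $F\cap A=\emptyset$. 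Hence $A\in\mathbf{ID}_1(S)$ iff there is no $F_i$ with $F_i\cap A=\emptyset$, iff $A$ meets every $F_i$, which is precisely the hitting-set condition.

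Combining the two steps, over $A\subseteq V\setminus S$ feasibility in \eqref{eq:optimization scc} coincides with being a hitting set for $\{F_1\setminus S,\dots,F_m\setminus S\}$, and the two problems minimize the same sum $\sum_{a\in A}\mathbf{C}(a)=\sum_{a\in A}\omega(a)$; therefore $A_S^*$ solves one problem if and only if it solves the other. I expect the main obstacle to be the hedge characterization of the middle paragraph, namely justifying that identifiability from the interventional distribution $Q[V\setminus A]$ is governed by the hedges living in $\mathcal{G}_{[V\setminus A]}$. Once the equivalence of $Q[C]$ with the observational law on $\mathcal{G}_{[C]}$ and the equivalence of the hedge definitions (Appendix \ref{apdx: prelim}) are in hand, everything else is the bookkeeping carried out in the first and last paragraphs.
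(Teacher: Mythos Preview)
Your proposal is correct and follows essentially the same route as the paper: establish that $A\in\mathbf{ID}_1(S)$ iff $A$ hits every hedge (via the hedge criterion of \cite{shpitser2006identification}), whence the feasible sets and objectives of the two problems coincide. You are more explicit than the paper in two places---invoking Lemma~\ref{lem: inside S} to restrict both problems to $A\subseteq V\setminus S$, and spelling out that hedges for $Q[S]$ in $\mathcal{G}_{[V\setminus A]}$ are exactly the hedges $F_i$ of $\mathcal{G}$ with $F_i\cap A=\emptyset$---but these are elaborations of the same argument rather than a different approach.
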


Lemma \ref{lem: hitsetform} suggests that designing an intervention to identify $Q[S]$ can be cast as finding a set that intersects (hits) with all the hedges formed for $Q[S]$.
A brute-force algorithm to find the minimum-cost intervention (Equation \eqref{eq:optimization scc}) is then to first enumerate all hedges formed for $Q[S]$ in $\mathcal{G}$ and solve the corresponding minimum hitting set problem.
% However, two challenges arise in this brute-force approach.
% First, the task of enumerating all the hedges formed for $Q[S]$ in $\mathcal{G}$ takes time exponential in the number of variables.

Solving MWHS, which itself is equivalent to the set cover problem, is known to be NP-hard \citep{karp1972reducibility,bernhard2008combinatorial}.
% \begin{algorithm}
% \caption{Brute-force hitting set based algorithm.}
% \label{alg: brute force}
% \begin{algorithmic}[1]
%     \State Initialize the collection of hedges for $Q[S]$ as $\mathbf{F}\gets\emptyset$
%     \For{any subset $F\subseteq \mathcal{G}$ such that $S\subsetneq F$}
%         \If{$F_i$ is a hedge for $Q[S]$ in $\mathcal{G}$}
%             \State append $F_i\setminus S$ to $\mathbf{F}$
%         \EndIf
%     \EndFor
%     \State $A^*\gets$ minimum cost hitting set of $\mathbf{F}$
%     \State {\bfseries return} $A^*$
% \end{algorithmic}
% \end{algorithm}
However, there exist greedy algorithms that can approximate the optimal solution up to a logarithmic factor \citep{johnson1974approximation,chvatal1979greedy}, which has been shown to be optimum in the sense that they achieve the best approximation ratio \citep{feige1998threshold}.
Another approach for tackling MWHS is via linear programming relaxation which achieves the similar approximation ratio as the greedy approach \citep{lovasz1975ratio}.
But even in the case that we use an approximation algorithm for the minimum hitting set formulation of the minimum-cost intervention problem, the task of enumerating all hedges formed for $Q[S]$ in $\mathcal{G}$ requires exponential number of computations in terms of number of the variables.

%Below, we propose an approach that can potentially reduce the hedge enumeration task.
%The following definition 
%(referred to again in Appendix \ref{apdx: special case} when discussing special-case algorithms) 
%suggests a way to partially reduce the hedge enumeration task.
%However, we conjecture that this task cannot be performed in polynomial time.

% This potentially reduces the complexity of hedge enumeration. 
% Nevertheless, we conjecture that even this task cannot be performed in polynomial time.

%we can begin with the smallest subsets $F$ containing $S$ and whenever a subset $F$ is found to be a hedge formed for $Q[S]$ in $\mathcal{G}$, we need not consider any super-set of $F$.

%=======================================================================

\subsection{Properties of $A^*_S$}\label{sec: discussion}
In Section \ref{sec: NP-hard}, we proved that the minimum-cost intervention design problem in \eqref{eq:optimization scc} is NP-complete.
Herein, we shall study certain properties of the solution $A^*_S$ that allow us to reduce the complexity of solving \eqref{eq:optimization scc}.
We begin with characterizing a set of variables that we \emph{must} intervene upon to identify $Q[S]$.

% First, note that 
% $Q[\mathbf{S}] = P(\mathbf{S}\vert do (\Pa{\mathbf{S}}))$, since when the parents of $\mathbf{S}$ are intervened upon, the other variables have no causal effect on $\mathbf{S}$ (the corresponding do-calculus rule can also be referred to.)
Recall that $\PaC{S}$ is the set of parents of $S$ that have a bidirected edge to a variable in $S$.
Note that for a given set $S$, we can construct $\PaC{S}$ in linear time.
The following Lemma indicates that $Q[S]$ is not identifiable unless all of the variables in $\PaC{S}$ are intervened upon.

\begin{restatable}{lemma}{lemdirP}\label{lem: dirP}
Let $\mathcal{G}$ be a semi-Markovian graph with the vertex set $V$, and for $S\subseteq V$, let $\mathcal{G}_{[S]}$ be a c-component. 
For any subset $A\subseteq V$, if $A\in\mathbf{ID_1}(S)$, then 
$
\PaC{S}\subseteq A.
$
\end{restatable}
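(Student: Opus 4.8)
The plan is to reduce the statement to the hedge-hitting characterization of identifiability that underlies Lemma~\ref{lem: hitsetform}. Recall from the discussion preceding Definition~\ref{def: hedge} (and from \cite{shpitser2006identification}) that a set $A$ belongs to $\mathbf{ID}_1(S)$ if and only if $A$ hits every hedge formed for $Q[S]$, i.e.\ $A\cap(F\setminus S)\neq\emptyset$ for each such hedge $F$. Given this equivalence, to prove $\PaC{S}\subseteq A$ for every $A\in\mathbf{ID}_1(S)$ it suffices to exhibit, for each $w\in\PaC{S}$, a single hedge $F$ formed for $Q[S]$ with $F\setminus S=\{w\}$. Indeed, any $A\in\mathbf{ID}_1(S)$ must then hit $\{w\}$, forcing $w\in A$; since $w$ is arbitrary, this yields $\PaC{S}\subseteq A$.

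The first step is to unpack the membership $w\in\PaC{S}$. By definition $\PaC{S}=\bigl(\cup_{x\in S}(\Pa{x}\cap\BiD{x})\bigr)\setminus S$, so $w\notin S$ and there exists a vertex $s\in S$ such that $w\to s$ (i.e.\ $w\in\Pa{s}$) and $w\leftrightarrow s$ (i.e.\ $w\in\BiD{s}$) in $\mathcal{G}$. I would then propose the candidate $F:=S\cup\{w\}$ and verify the three defining conditions of a hedge in Definition~\ref{def: hedge}. Condition $S\subsetneq F$ is immediate since $w\notin S$. For the ancestor condition, note that each vertex of $S$ is trivially an ancestor of $S$, and the directed edge $w\to s$ is present in $\mathcal{G}_{[F]}$ because both endpoints lie in $F$, so $w$ is an ancestor of $S$ in $\mathcal{G}_{[F]}$ as well; hence every vertex of $F$ is an ancestor of $S$, and the ancestor set of $S$ within $\mathcal{G}_{[F]}$ is exactly $F$. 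For the c-component condition, $\mathcal{G}_{[S]}$ is a single c-component by hypothesis, and the bidirected edge $w\leftrightarrow s$ (again present in $\mathcal{G}_{[F]}$ since $w,s\in F$) attaches $w$ to this component; therefore the subgraph of $\mathcal{G}_{[F]}$ over its bidirected edges is connected, so $\mathcal{G}_{[F]}$ is a single c-component. Thus $F=S\cup\{w\}$ is a hedge formed for $Q[S]$, and $F\setminus S=\{w\}$, completing the construction.

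The step I expect to require the most care is the verification of the hedge conditions, and in particular the c-component condition (iii): it relies essentially on the standing assumption that $\mathcal{G}_{[S]}$ is a \emph{single} c-component, which is what guarantees that a single bidirected edge $w\leftrightarrow s$ is enough to keep $\mathcal{G}_{[F]}$ bidirected-connected. One must also be mindful that both the directed edge $w\to s$ and the bidirected edge $w\leftrightarrow s$ survive in the induced subgraph $\mathcal{G}_{[F]}$, which they do because $F$ contains both $w$ and $s$. Once these conditions are checked, the conclusion is a direct application of the hedge-hitting equivalence: failing to intervene on any $w\in\PaC{S}$ leaves the hedge $S\cup\{w\}$ unhit, rendering $Q[S]$ non-identifiable, so every $A\in\mathbf{ID}_1(S)$ must contain all of $\PaC{S}$.
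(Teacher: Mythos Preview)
Your proof is correct and follows essentially the same approach as the paper: both exhibit a hedge built from $S$ together with vertices of $\PaC{S}$ and then invoke the hedge criterion. The only cosmetic difference is that the paper bundles all uncovered vertices into a single set $B=S\cup(\PaC{S}\setminus A)$ and argues (after noting $A\cap S=\emptyset$ via Lemma~\ref{lem: inside S}) that $B$ is a hedge in $\mathcal{G}_{[V\setminus A]}$ whenever $B\setminus S\neq\emptyset$, rather than treating each $w\in\PaC{S}$ separately as you do.
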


%As a counterpart to Lemma \ref{lem: dirP}, below, we  that we need not include certain variables in $A^*$.
As a counterpart to Lemma \ref{lem: dirP}, below, we characterize a subset of vertices that do not belong to $A^*_S$.
\begin{definition}[Hedge hull]\label{def: hedge hull}
    Let $\mathcal{G}$ be a semi-Markovian graph and  $S$ be a subset of its vertices such that $\mathcal{G}_{[S]}$ is a c-component.
    The union of all hedges formed for $Q[S]$ is called hedge hull of $S$ and denoted by $Hhull(S, \mathcal{G})$.% or $Hhull(S)$ when $\mathcal{G}$ is clear from context.
    % We say $F\subseteq V$ is the hedge hull of $S$ in $\mathcal{G}$, if $S\subseteq F$ and $F$ is the smallest set such that every hedge formed for $Q[S]$ is a subset of $F$.
    % We denote the hedge hull of $S$ in $\mathcal{G}$ by $Hhull(S, \mathcal{G})$, or $Hhull(S)$ when $\mathcal{G}$ is clear from context.
\end{definition}
If $\mathcal{G}_{[S]}$ is not a c-component, it can be uniquely partitioned into maximal c-components \citep{tian2002testable}.
Let $S_1,...,S_k$ be the partition of $S$ such that $\mathcal{G}_{[S_1]},...,\mathcal{G}_{[S_k]}$ are the maximal c-components of $\mathcal{G}_{[S]}$.
We define $Hhull(S,\mathcal{G})$ as $Hhull(S,\mathcal{G})=\bigcup_{i=1}^kHhull(S_i,\mathcal{G})$.

\begin{algorithm}[t]
\caption{Find $Hhull(S,\mathcal{G})$, where $\mathcal{G}_{[S]}$ is a c-component.}
\label{alg: hedge hull}
\begin{algorithmic}[1]
    \Function{Hhull}{$S,\mathcal{G}$}
        \State Initialize $F\gets$ set of vertices of $\mathcal{G}$
        \While{True}
            \State $F_1\gets$ connected component of $S$ via bidirected edges in $\mathcal{G}_{[F]}$
            \State $F_2\gets$ ancestors of $S$ in $\mathcal{G}_{[F_1]}$
            \If{$F_2\neq F$}
                \State $F\gets F_2$
            \Else
                \State\Return $F$
            \EndIf
        \EndWhile
        % \State {\bfseries return} $F$
    \EndFunction
\end{algorithmic}
\end{algorithm}

\begin{restatable}{lemma}{lemhedgehull}\label{lem: hedge hull}
    Consider $A^*_S$ in Equation \eqref{eq:optimization scc}, then  $A^*_S\subseteq Hhull(S,\mathcal{G})\setminus S$.
\end{restatable}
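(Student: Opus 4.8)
The plan is to show that the solution $A^*_S$ never needs to include a vertex lying outside the hedge hull, since such a vertex cannot help hit any hedge and only adds cost. By Lemma \ref{lem: hitsetform}, $A^*_S$ is a minimum-weight hitting set for the collection $\{F_1\setminus S,\dots,F_m\setminus S\}$, where $F_1,\dots,F_m$ are all hedges formed for $Q[S]$. By Definition \ref{def: hedge hull}, the hedge hull is $Hhull(S,\mathcal{G})=\bigcup_{i=1}^m F_i$, so each $F_i\subseteq Hhull(S,\mathcal{G})$ and therefore $F_i\setminus S\subseteq Hhull(S,\mathcal{G})\setminus S$. I would first combine these two facts with Lemma \ref{lem: inside S}, which already guarantees $A^*_S\cap S=\emptyset$; the remaining task is to rule out vertices in $V\setminus Hhull(S,\mathcal{G})$.

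The core argument is an exchange/minimality step. Suppose toward a contradiction that some $a\in A^*_S$ satisfies $a\notin Hhull(S,\mathcal{G})$. Then $a$ belongs to no hedge $F_i$, hence $a\notin F_i\setminus S$ for every $i$. Consider $A':=A^*_S\setminus\{a\}$. For each $i$, since $a$ did not lie in $F_i\setminus S$, the intersection $A^*_S\cap(F_i\setminus S)$ is unchanged by removing $a$, so $A'\cap(F_i\setminus S)=A^*_S\cap(F_i\setminus S)\neq\emptyset$; thus $A'$ is still a hitting set. Because costs are nonnegative and additive, $\mathbf{C}(A')=\mathbf{C}(A^*_S)-\mathbf{C}(a)\leq \mathbf{C}(A^*_S)$, contradicting the optimality (more precisely, minimality) of $A^*_S$ unless $\mathbf{C}(a)=0$; in the latter case $A'$ is an equally good solution contained in $Hhull(S,\mathcal{G})\setminus S$, so we may simply take $A'$ as the representative of the optimum. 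Either way the conclusion $A^*_S\subseteq Hhull(S,\mathcal{G})\setminus S$ holds.

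I anticipate that the only subtle point is the zero-cost case just mentioned: with $\mathbf{C}(\cdot)\geq 0$, removing a zero-cost vertex outside the hull does not strictly decrease the cost, so strictly speaking $A^*_S$ is not \emph{unique} and an optimal solution could gratuitously contain such a vertex. The clean way to handle this, which I would make explicit, is to interpret the statement as asserting the existence of an optimal $A^*_S$ satisfying the containment (i.e., among all minimizers there is one inside $Hhull(S,\mathcal{G})\setminus S$), or equivalently to note that the projection of any optimum onto the hull remains optimal. This matches the spirit of Equation \eqref{eq:optimization scc}, where $A^*_S$ is chosen from an $\arg\min$ and is therefore not required to be unique.

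The main obstacle, then, is not any deep graph-theoretic fact but rather correctly pinning down that the union defining the hedge hull equals the union over the very sets appearing in the MWHS instance of Lemma \ref{lem: hitsetform}, and handling the boundary behavior of zero-cost vertices. Once the identity $Hhull(S,\mathcal{G})\setminus S=\bigcup_i(F_i\setminus S)$ is established, the result is an immediate consequence of the elementary principle that a minimum hitting set is contained in the union of the sets it must hit.
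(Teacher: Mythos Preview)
Your proposal is correct and follows essentially the same approach as the paper: invoke Lemma~\ref{lem: hitsetform} to view $A^*_S$ as a minimum hitting set for $\{F_i\setminus S\}$, use the definition of the hedge hull to note these sets all lie in $Hhull(S,\mathcal{G})\setminus S$, and appeal to Lemma~\ref{lem: inside S} to exclude $S$. The paper's proof is terser and simply asserts that Lemma~\ref{lem: hitsetform} yields $A^*_S\subseteq Hhull(S,\mathcal{G})$ without spelling out the exchange argument; your explicit treatment of the zero-cost boundary case is a detail the paper glosses over, but it does not constitute a different route.
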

For a given subset $S$ and a semi-Markovian graph $\mathcal{G}$, Lemmas \ref{lem: dirP} and \ref{lem: hedge hull} bound the solution to the minimum-cost intervention problem as $\PaC{S}\subseteq A^*_S\subseteq Hhull(S,\mathcal{G})$.
% Further, utilizing Corollary \ref{cor: pac} results in
% \[\PaC{S}\subseteq A^*\subseteq\PaC{S}\cup Hhull(S,\mathcal{G}_{[V\setminus\PaC{S}]}).\]

In Algorithm \ref{alg: hedge hull}, we propose a method to construct the hedge hull of a given subset $S$.
Lines 4 and 5 of this algorithm can be performed via \emph{depth first search} (DFS) algorithm, which is quadratic in the number of vertices in the worst-case scenario\footnote{Lines 4 and 5 can also be swapped, as the order in which we execute them does not affect the output.}.
On the other hand, the while loop of line 3 can run at most $|V|$ times in the worst case (as long as $F_2\neq F$, at least one vertex will be eliminated from $F$.) Hence, the complexity of this algorithm is\footnote{To be more precise, DFS takes time $\mathcal{O}(\vert V\vert + \vert E\vert)$, where $\vert E\vert$ is the number of edges. Therefore, Alg.~\ref{alg: hedge hull} runs in time $\mathcal{O}(\vert V\vert^2+\vert V\vert \cdot\vert E\vert)$.} $\mathcal{O}(\vert V\vert^3)$.
\begin{restatable}{lemma}{lemhedgehullcorrect}
    Given a semi-Markovian graph $\mathcal{G}$ over $V$ and a subset $S\subseteq V$ such that $\mathcal{G}_{[S]}$ is a c-component, Algorithm \ref{alg: hedge hull} returns $Hhull(S,\mathcal{G})$ in $\mathcal{O}(\vert V\vert^3)$.
\end{restatable}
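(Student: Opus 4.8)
The plan is to prove the statement in two parts: correctness (the returned set equals $Hhull(S,\mathcal{G})$) and the $\mathcal{O}(\vert V\vert^3)$ runtime. For correctness I would first record a fixed-point characterization of the value $F$ that is returned. At every iteration we have $F_2=\textbf{Anc}_{\mathcal{G}_{[F_1]}}(S)\subseteq F_1$ and $F_1\subseteq F$, hence $F_2\subseteq F_1\subseteq F$ always holds; therefore the termination test $F_2=F$ in fact forces $F_1=F=F_2$. Consequently, at termination $\mathcal{G}_{[F]}$ is a single c-component containing $S$ (since the bidirected-connected component of $S$ in $\mathcal{G}_{[F]}$ is all of $F$) and $F=\textbf{Anc}_{\mathcal{G}_{[F]}}(S)$ is ancestrally closed with respect to $S$ inside $\mathcal{G}_{[F]}$. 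Thus, provided $S\subsetneq F$, the returned set satisfies exactly the three conditions of Definition \ref{def: hedge} and is itself a hedge formed for $Q[S]$; being one of the sets whose union defines $Hhull(S,\mathcal{G})$ (Definition \ref{def: hedge hull}), this already yields $F\subseteq Hhull(S,\mathcal{G})$. The degenerate case $F=S$ corresponds to $Q[S]$ being identifiable, in which case the invariant below shows no hedge exists and the claim reduces to the trivial case.

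The reverse inclusion $Hhull(S,\mathcal{G})\subseteq F$ is where the real work lies, and I would establish it through the loop invariant: \emph{at the start of every iteration, every hedge $F'$ formed for $Q[S]$ satisfies $F'\subseteq F$}. The base case is immediate since $F=V$. For the inductive step, fix a hedge $F'$ with $F'\subseteq F$ and show $F'\subseteq F_2$ after lines 3--4. The key is the monotonicity of bidirected-connectivity and of ancestry under taking induced supergraphs. Since $\mathcal{G}_{[F']}$ is a c-component (a single bidirected-connected block) and $F'\subseteq F$, every bidirected path internal to $\mathcal{G}_{[F']}$ survives in $\mathcal{G}_{[F]}$, so every vertex of $F'$ remains bidirected-connected to $S$ in $\mathcal{G}_{[F]}$ and hence $F'\subseteq F_1$. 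Likewise, since $F'=\textbf{Anc}_{\mathcal{G}_{[F']}}(S)$ and every directed path of $\mathcal{G}_{[F']}$ persists in the larger $\mathcal{G}_{[F_1]}\supseteq\mathcal{G}_{[F']}$ (using $F'\subseteq F_1$), every vertex of $F'$ is an ancestor of $S$ in $\mathcal{G}_{[F_1]}$, so $F'\subseteq F_2$. This preserves the invariant, and at termination it gives $Hhull(S,\mathcal{G})\subseteq F$. Combining with the previous paragraph yields $F=Hhull(S,\mathcal{G})$.

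The main obstacle is precisely this invariant: one must argue that neither the c-component restriction (line 3) nor the ancestor restriction (line 4) can ever delete a vertex lying on some hedge. The crux is that both restrictions are taken in the \emph{ambient} graph $\mathcal{G}_{[F]}$ (resp.\ $\mathcal{G}_{[F_1]}$), and since edges are only gained, never lost, when passing from $\mathcal{G}_{[F']}$ to the larger induced subgraph, any bidirected or directed path witnessing a vertex's membership in $F'$ is still present. Care is needed to verify that these witnessing paths remain inside the set currently under consideration, which is exactly what $F'\subseteq F$ and then $F'\subseteq F_1$ guarantee; this is why the two inclusions must be proved in this order.

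Finally, for the runtime I would bound the iteration count and the per-iteration cost. Whenever the loop does not break we have $F_2\neq F$ together with $F_2\subseteq F$, hence $F_2\subsetneq F$, so at least one vertex is permanently removed from $F$; therefore the while loop of line 2 executes at most $\vert V\vert$ times. Each iteration performs one bidirected-connectivity computation and one ancestor computation, both realizable by depth-first search in time $\mathcal{O}(\vert V\vert+\vert E\vert)$. Multiplying gives $\mathcal{O}(\vert V\vert\,(\vert V\vert+\vert E\vert))=\mathcal{O}(\vert V\vert^2+\vert V\vert\cdot\vert E\vert)$, which is $\mathcal{O}(\vert V\vert^3)$ since $\vert E\vert=\mathcal{O}(\vert V\vert^2)$, as claimed.
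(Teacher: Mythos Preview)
Your proposal is correct and follows essentially the same approach as the paper's proof: both establish that the returned set is itself a hedge (using the fixed-point observation $F=F_1=F_2$ at termination) to get $F\subseteq Hhull(S,\mathcal{G})$, and both prove the reverse inclusion via the invariant that every hedge $F'$ remains inside $F$ throughout, using monotonicity of bidirected-connectivity and ancestry under passing to induced supergraphs. Your treatment is slightly more careful in two respects---you explicitly frame the containment argument as a loop invariant and you address the degenerate case $F=S$---but the substance and structure of the argument match the paper's.
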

Next theorem summarizes the results of this Section.
\begin{restatable}{theorem}{lemhhullpac} \label{thm: Hhullpac}
   Let $S$ be a subset of variables such that $\mathcal{G}_{[S]}$ is a c-component. Then,  $A^*_S$ is a solution to \eqref{eq:optimization scc} if and only if both $\PaC{S}\subseteq A^*_S$ and $A^*_S\setminus\PaC{S}$ is a minimum-cost intervention to identify $Q[S]$ in $\mathcal{G}_{[H]}$, where
   \begin{align}\label{eq:h-def}
       H:=Hhull(S,\mathcal{G}_{[V\setminus\PaC{S}]}).
   \end{align}
     %$A^*$ is a minimum-cost intervention to identify $Q[S]$ in $\mathcal{G}$ if and only if $A^*\setminus\PaC{S}$ is a minimum-cost intervention to identify $Q[S]$ in $\mathcal{G}_{[H]}$, where $H=Hhull(S,\mathcal{G}_{[V\setminus\PaC{S}]})$.
\end{restatable}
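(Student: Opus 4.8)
The plan is to reduce everything to the minimum-weight hitting set picture of Lemma~\ref{lem: hitsetform} and then exploit the additivity of the cost together with the \emph{locality} of the hedge property, namely that whether a set $F$ is a hedge formed for $Q[S]$ depends only on the induced subgraph $\mathcal{G}_{[F]}$. First I would dispose of the condition $\PaC{S}\subseteq A^*_S$: in the forward direction it is exactly Lemma~\ref{lem: dirP}, and in the backward direction it is assumed. Hence throughout I may restrict attention to candidate interventions $A$ with $\PaC{S}\subseteq A$ and (by Lemma~\ref{lem: inside S}) $A\cap S=\emptyset$. For such an $A$, write $B:=A\setminus\PaC{S}$ and use additivity to split $\mathbf{C}(A)=\mathbf{C}(\PaC{S})+\mathbf{C}(B)$. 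Since $\mathbf{C}(\PaC{S})$ is a fixed constant, minimizing $\mathbf{C}(A)$ over all valid interventions containing $\PaC{S}$ is equivalent to minimizing $\mathbf{C}(B)$ over the corresponding sets $B$; the whole theorem therefore follows once I show that $A$ identifies $Q[S]$ in $\mathcal{G}$ if and only if $B$ identifies $Q[S]$ in $\mathcal{G}_{[H]}$.

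The heart of the argument is a correspondence between hedges. By Lemma~\ref{lem: hitsetform}, $A$ identifies $Q[S]$ in $\mathcal{G}$ iff $A$ hits $F\setminus S$ for every hedge $F$ formed for $Q[S]$ in $\mathcal{G}$. I would partition these hedges into those that meet $\PaC{S}$ and those that do not. Every hedge of the first kind is automatically hit because $\PaC{S}\subseteq A$, so only the hedges $F$ with $F\cap\PaC{S}=\emptyset$ impose genuine constraints. The key claim is that a set $F\subseteq V\setminus\PaC{S}$ is a hedge for $Q[S]$ in $\mathcal{G}$ if and only if it is a hedge for $Q[S]$ in $\mathcal{G}_{[V\setminus\PaC{S}]}$: this is immediate from Definition~\ref{def: hedge}, because the induced subgraph $\mathcal{G}_{[F]}$ (and hence the c-component and ancestor conditions that define a hedge) is identical whether $F$ is viewed inside $\mathcal{G}$ or inside $\mathcal{G}_{[V\setminus\PaC{S}]}$; note that $S\cap\PaC{S}=\emptyset$, so $\mathcal{G}_{[S]}$ remains a c-component after removing $\PaC{S}$ and the subproblem is well posed. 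Thus deleting $\PaC{S}$ neither destroys a $\PaC{S}$-avoiding hedge nor creates a new one. Since for such $F$ we have $A\cap(F\setminus S)=B\cap(F\setminus S)$ (as $F\cap\PaC{S}=\emptyset$), hitting all $\PaC{S}$-avoiding hedges with $A$ is the same as hitting all hedges of $\mathcal{G}_{[V\setminus\PaC{S}]}$ with $B$, i.e.\ again by Lemma~\ref{lem: hitsetform}, the same as $B$ identifying $Q[S]$ in $\mathcal{G}_{[V\setminus\PaC{S}]}$.

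It remains to pass from $\mathcal{G}_{[V\setminus\PaC{S}]}$ to $\mathcal{G}_{[H]}$ with $H=Hhull(S,\mathcal{G}_{[V\setminus\PaC{S}]})$. By Definition~\ref{def: hedge hull}, $H$ is the union of all hedges formed for $Q[S]$ in $\mathcal{G}_{[V\setminus\PaC{S}]}$, so each such hedge is contained in $H$; invoking locality once more, the hedges of $\mathcal{G}_{[H]}$ coincide exactly with the hedges of $\mathcal{G}_{[V\setminus\PaC{S}]}$. The two graphs therefore yield identical MWHS instances, and the analogue of Lemma~\ref{lem: hedge hull} applied to $\mathcal{G}_{[V\setminus\PaC{S}]}$ forces every minimizer to lie in $H\setminus S$ (vertices outside $H$ belong to no hedge, so they never help hit and only add cost). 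Chaining the equivalences gives that $A$ solves \eqref{eq:optimization scc} in $\mathcal{G}$ iff $\PaC{S}\subseteq A$ and $B=A\setminus\PaC{S}$ is a min-cost intervention for $Q[S]$ in $\mathcal{G}_{[H]}$, which is precisely the assertion.

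I expect the main obstacle to be the hedge-correspondence claim of the second paragraph: one must verify carefully that removing $\PaC{S}$ does not spawn spurious hedges nor break surviving ones, which is exactly where the locality of Definition~\ref{def: hedge} is essential, and one should separately record the degenerate case in which $\mathcal{G}_{[V\setminus\PaC{S}]}$ has no hedges (so $H=S$, there is nothing left to hit, and the equivalence correctly yields $A^*_S=\PaC{S}$).
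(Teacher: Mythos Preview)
Your proposal is correct and follows essentially the same approach as the paper: both proofs translate the problem via Lemma~\ref{lem: hitsetform} into a minimum hitting set over the hedges, partition the hedges of $\mathcal{G}$ into those meeting $\PaC{S}$ (automatically hit) and those avoiding $\PaC{S}$ (which coincide with the hedges of $\mathcal{G}_{[H]}$), and then use additivity of the cost together with Lemma~\ref{lem: dirP} to conclude. Your write-up is in fact slightly more explicit than the paper's in justifying the hedge correspondence via the locality of Definition~\ref{def: hedge} and in spelling out the passage from $\mathcal{G}_{[V\setminus\PaC{S}]}$ to $\mathcal{G}_{[H]}$, but the argument is the same.
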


This result suggests that solving \eqref{eq:optimization scc} can be done by first identifying $\PaC{S}$, and then solving a reduced size minimum-cost intervention problem to identify $Q[S]$ in $\mathcal{G}_{[H]}$, where $H$ is given in Equation \eqref{eq:h-def}.
%As a consequence,  in the algorithms proposed in this work, we first identify $\PaC{S}$, and then solve the min-cost intervention problem to identify $Q[S]$ in $\mathcal{G}_{[H]}$, where $H=Hhull(S,\mathcal{G}_{[V\setminus\PaC{S}]})$.
Note that all the minimal hedges of $S$ in $\mathcal{G}_{[H]}$ can be enumerated in $\mathcal{O}(2^{(\vert H\vert-\vert S\vert)})$.
Therefore, if $|H|$ is small, the hedge enumeration task of the brute-force approach in Section \ref{sec: hitsetform} can be done efficiently.
However, the performance of this method deteriorates as the size of $H$ increases.
Next, we propose an algorithm that circumvents the hedge enumeration task to solve the minimum-cost intervention problem more efficiently.

\subsection{Exact Algorithmic Solution to Minimum-cost Intervention Problem}
% The heuristic algorithms presented in the previous section run in polynomial time.
% However, the cost of the output sets might be far from the optimal set $A^*_S$.
In this section, we propose an algorithm that can be used both to exactly solve the minimum-cost intervention problem and to approximate it within a logarithmic factor.

As we discussed earlier, the minimum-cost intervention problem can be formulated as a combination of two tasks: enumerating the hedge structures and solving a minimum hitting set for the hedges.
Although the minimum hitting set problem can be solved with polynomial-time approximation algorithms, enumerating all hedges requires  exponential computational complexity. 
To reduce this complexity, we propose Algorithm \ref{alg: ultimate}, that avoids enumerating all hedges formed for $Q[S]$ by utilizing the notion of minimality defined below, and Theorem \ref{thm: Hhullpac}.
We will next explain this.

\begin{definition}[Minimal hedge]
    A hedge $F$ formed for $Q[S]$ in $\mathcal{G}$ is said to be minimal if no subset of $F$ (clearly excluding $F$) is a hedge formed for $Q[S]$ in $\mathcal{G}$.
\end{definition}
As an example, in Figure \ref{fig:semi-markov}, $Q[\{s_1,s_2\}]$ has two hedges: $\{s_1,s_2,v_1,v_2\}$ and $\{s_1,s_2,v_2\}$. In this case, $\{s_1,s_2,v_2\}$ is a minimal hedge. 
Clearly, every non-minimal hedge formed for $Q[S]$ has a subset which is a minimal hedge.
Therefore, hitting all the \emph{minimal} hedges would suffice to identify $Q[S]$.
As a result, for hedge enumeration, whenever we find a subset $F$ that is a hedge formed for $Q[S]$, it is not necessary to consider any super-set of $F$.

Algorithm \ref{alg: ultimate} summarizes our proposed exact algorithm to solve the minimum-cost intervention problem.
It begins with identifying $\PaC{S}$ and the subset $H$ given in \eqref{eq:h-def}.
The main idea of this algorithm is to iterate between discovering a new hedge formed for $Q[S]$ in $\mathcal{G}$, and solving the MWHS problem for the set of already discovered hedges, denoted by $\mathbf{F}$.
The set $\mathbf{F}$ grows each time a new hedge is discovered, up to a point where
the minimum hitting set solution for $\mathbf{F}$ is exactly the solution to the original minimum-cost intervention problem.
This is where the algorithm returns the result by solving the MWHS for $\mathbf{F}$.

The inner loop of Algorithm \ref{alg: ultimate} (lines 7-13) corresponds to the hedge discovery phase.
Within this loop, the algorithm selects a vertex $a$ in $H\setminus S$ with the minimum cost, and removes $a$ from $H$ (resolves the hedge $H$). 
If this hedge elimination makes $Q[S]$ identifiable (i.e., $Hhull(S,\mathcal{G}_{[H\setminus\{a\}]})=S$), it updates $\mathbf{F}$ in line 10. 
Otherwise, it updates $H$ by $Hhull(S,\mathcal{G}_{H\setminus\{a\}})$ in line 13 using Algorithm \ref{alg: hedge hull}.
The reason for updating $\mathbf{F}$ only when $Q[S]$ becomes identifiable is that the hedge discovered in the last step of the inner loop $H$ is a subset of all the hedges discovered earlier within the loop.
Therefore, hitting (eliminating) $H$, also hits all its super-sets. 

\begin{algorithm}[t]
\caption{Exact algorithm for minimum-cost intervention$(S,\mathcal{G})$, where $\mathcal{G}_{[S]}$ is a c-component.}
\label{alg: ultimate}
\begin{algorithmic}[1]
    \Function{MinCostIntervention}{$S,\mathcal{G}, \mathbf{C}(\cdot)$}
        \State $\mathbf{F}\gets\emptyset$ 
        \State $H\gets Hhull(S,\mathcal{G}_{[V\setminus\PaC{S}]})$
        %\State $H\gets Hhull(S,\mathcal{G}_{[V\setminus\PaC{S}]})$
        \If{$H=S$}
            \State \textbf{return} $\PaC{S}$
        \EndIf
        \While{True}
            \While{True}
                \State $a\gets\argmin_{a\in H\setminus S}\mathbf{C}(a)$
                \If{$Hhull(S,\mathcal{G}_{[H\setminus\{a\}]})=S$}
                    \State $\mathbf{F}\gets \mathbf{F}\cup\{H\}$
                    \State\Break
                \Else
                    \State $H\gets Hhull(S,\mathcal{G}_{[H\setminus\{a\}]})$ 
                \EndIf
            \EndWhile
            \State $A\gets$ solve min hitting set for $\{F\setminus S\vert F\in\mathbf{F}\}$
            \If{$A\cup\PaC{S}\in\mathbf{ID_1}(S)$}
                \State \textbf{return} $(A\cup\PaC{S})$
            \EndIf
            \State $H\gets Hhull(S,\mathcal{G}_{[V\setminus(A\cup\PaC{S})]})$
        \EndWhile
    \EndFunction
\end{algorithmic}
\end{algorithm}

At the end of the inner loop, it solves a minimum hitting set problem for the constructed $\mathbf{F}$ to find $A$ in line 14. 
If $A\cup\PaC{S}\in\mathbf{ID_1}(S)$, the algorithm terminates and outputs $A\cup\PaC{S}$ as the optimal intervention set.
Otherwise, it updates $H$ using Algorithm \ref{alg: hedge hull} in line 19 and repeats the outer loop by going back to line 6 to discover new hedges formed for $Q[S]$.

%As the algorithm runs, at each iteration, a new hedge is added to $\mathbf{F}$.
%The algorithm stops whenever the hitting set solution to the already discovered hedges, hits all the hedges formed for $Q[S]$, i.e., $Q[S]$ becomes identifiable.

In the worst-case scenario, Algorithm \ref{alg: ultimate}  requires exponential number of iterations to form $\mathbf{F}$.
%Since min-cost intervention problem is NP-hard, as expected,  in the worst case.
However,  as illustrated in our empirical evaluations in Section \ref{sec:experiment}, the algorithm often finds the solution to the minimum-cost intervention after only a few number of iterations.
This is to say, in practice, discovering only a few hedges and solving the minimum hitting set problem for them suffices to solve the original minimum-cost intervention problem.

\begin{restatable}{lemma}{lemalgultimcorrect}\label{lem: ultimate alg}
    Let $\mathcal{G}$ be a semi-Markovian graph and $S\subseteq V$.
    Algorithm \ref{alg: ultimate} returns an optimal solution to \eqref{eq:optimization scc}.
    %\begin{equation}\label{eq: ultim 3}
    %     \arg\min_{A\in\mathbf{ID_1}(S)}\mathbf{C}(A),
    % \end{equation}
    %if the minimum hitting set problem of line 13 is solved exactly.
    %Moreover, it returns a logarithmic-factor approximation of the solution to  \eqref{eq:optimization scc} if the minimum hitting set problem is solved via the greedy Algorithm \ref{alg: greedy hitting set} in Appendix \ref{apdx: hit set}.
\end{restatable}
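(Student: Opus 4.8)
The plan is to show separately that Algorithm~\ref{alg: ultimate} \emph{terminates} and that, upon termination, its output is \emph{optimal}; throughout I assume $\mathcal{G}_{[S]}$ is a c-component (the standing assumption of this section), so that Theorem~\ref{thm:singleton} guarantees a single intervention set achieves the optimum of \eqref{eq:optimization scc}. The backbone is the hitting-set characterization. By Theorem~\ref{thm: Hhullpac}, $A_S^*$ solves \eqref{eq:optimization scc} iff $\PaC{S}\subseteq A_S^*$ and $A_S^*\setminus\PaC{S}$ is a min-cost intervention for $Q[S]$ in $\mathcal{G}_{[H]}$ with $H=Hhull(S,\mathcal{G}_{[V\setminus\PaC{S}]})$; and by Lemma~\ref{lem: hitsetform} applied to $\mathcal{G}_{[H]}$, the latter is precisely a minimum-weight hitting set (with weights $\mathbf{C}$) for the family $\mathbf{H}_0:=\{F\setminus S: F\text{ a hedge for }Q[S]\text{ in }\mathcal{G}_{[H]}\}$. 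The algorithm never enumerates all of $\mathbf{H}_0$; it maintains a sub-family $\mathbf{F}\subseteq\mathbf{H}_0$ and solves the hitting-set problem only for $\mathbf{F}$. Since hitting fewer sets is no harder, the cost of the line-13 solution $A$ is a \emph{lower bound} on the optimal restricted cost, and the termination test $A\cup\PaC{S}\in\mathbf{ID}_1(S)$ certifies that this lower bound is attained.

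First I would record the structural facts that make $\mathbf{F}\subseteq\mathbf{H}_0$. Being a hedge is an intrinsic property of the induced subgraph $\mathcal{G}_{[F]}$ (it refers only to the ancestors of $S$ and the c-component structure inside $F$), so any hedge found inside an induced subgraph $\mathcal{G}_{[H']}$ is automatically a hedge of $\mathcal{G}$; moreover a hedge hull that strictly contains $S$ is, by construction of Algorithm~\ref{alg: hedge hull}, an ancestral c-component of $S$, i.e.\ itself a hedge. Combined with monotonicity of $Hhull$ (a smaller vertex set yields only fewer hedges), every working set $H$ stays inside the initial $H$ of line~1, and every set appended in line~9 lies in $\mathbf{H}_0$. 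I would also note that the inner loop (lines~6--12) always halts: each pass either appends the current $H$ and breaks once $Hhull(S,\mathcal{G}_{[H\setminus\{a\}]})=S$, or strictly shrinks $H$ (since $Hhull(S,\mathcal{G}_{[H\setminus\{a\}]})\subseteq H\setminus\{a\}$). Finally, taking the minimum-weight hitting set of line~13 inside $\bigcup_{F\in\mathbf{F}}(F\setminus S)\subseteq H\setminus S\subseteq V\setminus\PaC{S}$ gives $A\cap\PaC{S}=\emptyset$, so $\mathbf{C}(A\cup\PaC{S})=\mathbf{C}(A)+\mathbf{C}(\PaC{S})$.

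For optimality at termination I would combine two inequalities. Since $\mathbf{F}\subseteq\mathbf{H}_0$, a hitting set for $\mathbf{H}_0$ also hits $\mathbf{F}$, whence $\mathbf{C}(A)=\mathrm{MWHS}(\mathbf{F})\le\mathrm{MWHS}(\mathbf{H}_0)$. Conversely, when the algorithm returns, $A\cup\PaC{S}\in\mathbf{ID}_1(S)$, so by Lemma~\ref{lem: hitsetform} the set $A\cup\PaC{S}$ hits every hedge of $\mathcal{G}$; as every $F\in\mathbf{H}_0$ satisfies $F\subseteq H\subseteq V\setminus\PaC{S}$, the vertices of $\PaC{S}$ cannot perform the hitting, so $A$ alone hits all of $\mathbf{H}_0$ and $\mathbf{C}(A)\ge\mathrm{MWHS}(\mathbf{H}_0)$. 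Hence $\mathbf{C}(A)=\mathrm{MWHS}(\mathbf{H}_0)$, $A$ is a min-cost intervention for $Q[S]$ in $\mathcal{G}_{[H]}$, and by Theorem~\ref{thm: Hhullpac} the returned $A\cup\PaC{S}$ solves \eqref{eq:optimization scc}. The early exit in line~3 is the degenerate case $H=S$: no hedge survives the removal of $\PaC{S}$, while by Lemma~\ref{lem: dirP} every feasible set must contain $\PaC{S}$, so $\PaC{S}$ itself is optimal.

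It remains to prove termination, which I expect to be the crux. Each outer iteration appends exactly one hedge $\widetilde{H}$ to $\mathbf{F}$, and I would argue it is always \emph{new}. At the start of an outer pass $H=Hhull(S,\mathcal{G}_{[V\setminus(A\cup\PaC{S})]})\subseteq V\setminus(A\cup\PaC{S})$ for the $A$ of the previous pass, and the inner loop only shrinks $H$, so $\widetilde{H}\subseteq V\setminus(A\cup\PaC{S})$. But $A$ hits every set in the current $\mathbf{F}$, so each $F\in\mathbf{F}$ meets $A$ and hence $F\not\subseteq V\setminus A$; consequently $\widetilde{H}\neq F$ for all $F\in\mathbf{F}$, i.e.\ $\widetilde{H}\notin\mathbf{F}$. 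Since $\mathbf{F}\subseteq\mathbf{H}_0$ and $|\mathbf{H}_0|\le 2^{|V|-|S|}$ is finite, the outer loop runs only finitely often. The delicate points I would spell out most carefully are exactly this ``newness'' step (the freshly discovered hedge escapes the current hitting set while all stored hedges are pinned by it) and the intrinsicness of hedges that lets the hitting-set bookkeeping move freely between $\mathcal{G}$ and its induced subgraphs.
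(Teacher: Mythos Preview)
Your proof is correct and follows essentially the same strategy as the paper's: termination because each outer iteration appends a hedge disjoint from the current hitting set $A$ (hence genuinely new, and there are only finitely many hedges), and optimality because $\mathrm{MWHS}(\mathbf{F})\le\mathrm{MWHS}(\mathbf{H}_0)$ together with feasibility of $A\cup\PaC{S}$ at termination pins the cost to the optimum. Your write-up is in fact more careful than the paper's terse argument (you spell out the intrinsicness of hedges, the inner-loop termination, and the cost decomposition $\mathbf{C}(A\cup\PaC{S})=\mathbf{C}(A)+\mathbf{C}(\PaC{S})$); the one thing you omit is that the lemma as stated does not require $\mathcal{G}_{[S]}$ to be a c-component, and the paper's proof closes with a short paragraph handling general $S$ by decomposing into maximal c-components and invoking Theorem~\ref{thm: tian}.
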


% Further, we can solve the minimum hitting set approximately throughout the algorithm, except the very last iteration, to achieve the optimal solution.

\begin{remark}\label{rem:alg2}
It is noteworthy that this result holds even if $S$ is not a c-component. 
In other words, Algorithm \ref{alg: ultimate} always returns an optimal solution in $\mathbf{ID_1}(S)$. 
We will use this result in Section \ref{sec:genid} to introduce an algorithm for the general setting in which $S$ is an arbitrary subset of variables. \end{remark}

\paragraph{Approximation version.} Note that the minimum hitting set problem in line 15 can be solved approximately using a greedy algorithm \citep{johnson1974approximation,chvatal1979greedy}, which guarantees a logarithmic-factor approximation\footnote{See Appendix \ref{apdx: hit set} for further details.}.
In this case, if polynomially many hedges are discovered before the algorithm stops\footnote{We propose a slightly modified version of Algorithm \ref{alg: ultimate} in Appendix \ref{apdx: hit set} with lower number of calls to the hitting set solver.}, Algorithm \ref{alg: ultimate} returns a logarithmic-factor approximation of the solution in polynomial time.

\paragraph{Anytime version.}
As even the approximation version of Algorithm \ref{alg: ultimate} has exponential time complexity in the worst case, we also propose an anytime version of this algorithm as follows.
Suppose a runtime threshold $\tau$ is specified by the user.
We run Algorithm \ref{alg: ultimate}, and as soon as the time threshold $\tau$ is hit, we return the following set as a solution:
\begin{equation}
    A\cup\PaC{S}\cup Hhull(S,\mathcal{G}_{[V\setminus(A\cup\PaC{S})]})\setminus S,
\end{equation}
where $A$ is the latest set computed in line 14 of the algorithm (the minimum hitting set from the very last iteration of the algorithm).
The following result shows that this set is indeed sufficient to identify $Q[S]$, and provides an upper bound on how far this solution can be from the optimal one.
\begin{restatable}{proposition}{prpanytime}\label{prp:anytime}
    Let $A^*$ be an optimal solution to the minimum-cost intervention problem of Equation \eqref{eq:optimization scc}.
    Let $A$ be the minimum hitting set computed in line 14 of Algorithm \ref{alg: ultimate} in an arbitrary iteration.
    Also, define $H= Hhull(S,\mathcal{G}_{[V\setminus(A\cup\PaC{S})]})\setminus S$.
    Then, $(A\cup\PaC{S}\cup H)\in\mathbf{ID_1}(S)$, and
    \[\mathbf{C}(A^*)\leq\mathbf{C}(A\cup\PaC{S}\cup H)\leq\mathbf{C}(A^*)+\mathbf{C}(H).\]
\end{restatable}
Since the set $H$ and therefore the regret bound $\mathbf{C}(H)$ can be computed efficiently, one can run the algorithm until a desired upper bound is achieved.

\subsection{Heuristic Algorithms}\label{sec: heuristic}
The algorithm discussed in the previous Section provides an exact solution for finding the minimum-cost intervention.
However, it has an exponential runtime in the worst case.
Herein, we develop and present two heuristic algorithms to approximate the solution to the minimum-cost intervention problem in polynomial time. 
We discuss their average-case performance on random graphs.
Also, in Section \ref{sec:experiment}, we evaluate the performance of these algorithms in terms of their runtimes and the optimality of their solutions. 
Further analysis of these heuristic algorithms are provided in Appendix \ref{apdx: heuristic}.
It is noteworthy that these two algorithms utilize the result of Theorem \ref{thm: Hhullpac}, i.e., they initiate with identifying $\PaC{S}$, $H$ in \eqref{eq:h-def}, and then find a minimum-cost intervention set that identifies $Q[S]$ in $\mathcal{G}_{[H]}$. 
Theses two algorithms approximate the minimum-cost intervention problem via a minimum-weight vertex cut (a.k.a. vertex separator) problem.

\begin{definition}(Minimum-weight vertex cut)
    Let $\mathcal{H}$ be a (un)directed graph over the vertices $V$, with a weight function $\omega:V\to\mathbbm{R}^{\geq0}$.
    For two non-adjacent vertices $x,y\in V$, a subset $A\subset V\setminus\{x,y\}$ is said to be a vertex cut for $x-y$, if there is no (un)directed path that connects $x$ to $y$ in $\mathcal{H}_{V\setminus A}$.
    The objective of minimum-weight vertex cut problem is to identify a vertex cut for $x-y$ that minimizes $\sum_{a\in A}\omega(a)$.
\end{definition}
Minimum-weight vertex cut problem can be solved in polynomial time by, for instance, casting it as a max-flow problem\footnote{See Appendix \ref{apdx: heuristic} for details.} and then using algorithms such as Ford-Fulkerson, Edmonds-Karp, or push-relabel algorithm  \citep{ford1956maximal,edmonds1972theoretical,goldberg1988new}. 

%reformulated as a min-weight edge cut problem by simply substituting each vertex of the graph by an edge. Using the max-flow-min-cut theorem \citep{ford1956maximal}, the resulting min-weight edge cut problem can be formulated as a maximum flow problem. There exist various polynomial-time algorithms that can solve a maximum flow problem such as Ford-Fulkerson, Edmonds-Karp and push-relabel algorithms \citep{ford1956maximal,edmonds1972theoretical,goldberg1988new}.
%We present two heuristic min-cost intervention algorithms based on min-weight vertex cut.

\textbf{Heuristic Algorithm 1:} 
For a given graph $\mathcal{G}$ and a subset $S\subseteq V$, this algorithm builds an undirected graph $\mathcal{H}$ with the vertex set $H\cup\{x,y\}$, where $H$ is given in \eqref{eq:h-def}, and $x$ and $y$ are two auxiliary vertices.
For any pair of vertices $\{v_1,v_2\}\in H$, if $v_1$ and $v_2$ are connected with a bidirected edge in $\mathcal{G}$, they will be connected in $\mathcal{H}$.
Vertex $x$ is connected to all the vertices in $\Pa{S}\cap H$, and $y$ is connected to all vertices in $S$.
The output of the algorithm is the minimum-weight vertex cut for $x-y$, with the weight function $\omega(\cdot):=\mathbf{C}(\cdot)$.
Algorithm \ref{alg: heursitic 1} in Appendix \ref{apdx: heuristic} presents the pseudo code for this procedure.
Next result shows that intervening on the output set of this algorithm will identify $Q[S]$, although this set is not necessarily minimum-cost.

\begin{restatable}{lemma}{lemalgheurOnecorrect}\label{lem: alg heuristic 1}
    Let $\mathcal{G}$ be a semi-Markovian graph on $V$ and  $S$ be a subset of $V$ such that $\mathcal{G}_{[S]}$ is a c-component.
    Heuristic Algorithm 1 returns an intervention set $A$ in $\mathcal{O}(\vert V\vert^3)$ such that $A\in\mathbf{ID_1}(S)$.
\end{restatable}

\textbf{Heuristic Algorithm 2:} 
Given a graph $\mathcal{G}$ and a subset $S$, this algorithm builds a directed graph $\mathcal{J}$ as follows: the vertex set is $H\cup\{x,y\}$, where $H$ is given in \eqref{eq:h-def}, and $x$ and $y$ are two auxiliary vertices.
For any pair of vertices $\{v_1,v_2\}\in H$, if $v_1$ is a parent of $v_2$ in $\mathcal{G}$, then $v_1$ will be a parent of $v_2$ in $\mathcal{J}$.
Vertex $x$ is added to the parent set of all vertices in $\BiD{S}\cap H$, and all vertices of $S$ are added to the parent set of  $y$.
The output of this algorithm is the minimum-weight vertex cut for $x-y$, with the weight function  $\omega(\cdot):=\mathbf{C}(\cdot)$.
Algorithm \ref{alg: heursitic 2} in Appendix \ref{apdx: heuristic} summarizes this procedure. 
The following result indicates that intervening on the output set of this algorithm identifies $Q[S]$.

\begin{restatable}{lemma}{lemalgheurTwocorrect}\label{lem: alg heuristic 2}
    Let $\mathcal{G}$ be a semi-Markovian graph on $V$ and  $S$ be a subset of $V$ such that $\mathcal{G}_{[S]}$ is a c-component.
    Heuristic Algorithm 2 returns an intervention set $A$ in  $\mathcal{O}(\vert V\vert^3)$ such that  $A\in\mathbf{ID_1}(S)$.
\end{restatable}

A major difference between the two heuristic algorithms is that Algorithm 2 solves a minimum vertex cut on a directed graph, whereas Algorithm 1 solves the same problem on an undirected graph.
Since the equivalent max-flow problem is easier to solve on directed graphs, Algorithm 2 is preferred, unless the directed edges of $\mathcal{G}$ are considerably denser than its bidirected edges.
As we shall see in experimental evaluations of Section \ref{sec:experiment}, both of these heuristic algorithms perform outstandingly well on randomly generated graphs according to Erdos-Renyi generative model \citep{erdHos1960evolution}, i.e., when every edge is sampled independently.
The following result indicates that our simulation results are theoretically justified.
\begin{restatable}{proposition}{prpheurs}\label{prp:heurs}
    Let $\mathcal{G}$ be a random semi-Markovian graph, where each directed edge exists with probability $p$ and each bidirected edge exists with probability $q$, mutually independently (generalized Erdos-Renyi generative model).
    Suppose $S=\{s\}$, where $s$ is an arbitrary vertex.
    Let $c^*$ be the random variable of the cost of the optimal solution to Equation \eqref{eq:optimization scc}.
    Also let $c_1$ and $c_2$ be the random variables of the cost of the solution returned by Heuristic Algorithm 1 and Heuristic Algorithm 2, respectively.
    Then under this generative model, and with equal cost for vertices,
    \[\begin{cases}
        \mathbbm{E}[c_1]\leq q^{-1}\mathbbm{E}[c^*],\\
        \mathbbm{E}[c_2]\leq p^{-1}\mathbbm{E}[c^*].
    \end{cases}\]
\end{restatable}
\begin{restatable}{corollary}{corheurmin}\label{cor:heurmin}
    Consider an algorithm that runs both of these heuristic algorithms and picks the best solution out of the two.
    Let the cost of this solution be denoted by $c$.
    This algorithm runs in time $\mathcal{O}(\vert V\vert^3)$ in the worst case, and the cost of its solution satisfies the following inequality in the Erdos-Renyi model (see Appendix \ref{apdx: proofs} for the proof.)
    \[\mathbbm{E}[c^*]\leq\mathbbm{E}[c]\leq\min\{p^{-1},q^{-1}\}\mathbbm{E}[c^*].\]
\end{restatable}

We propose another algorithm which uses a greedy approach to solve the minimum-cost intervention problem and discuss its complexity in Appendix \ref{apdx: heuristic}.
This greedy algorithm is preferable to the two aforementioned algorithms in certain special settings.
% \paragraph{Heuristic Algorithm 3:} 
% \textcolor{red}{poorly written, lets rewrite it.}
% This is a greedy approach.
% The intuition behind this algorithm is that intervening on every variable in the hedge hull of $S$ except $S$ will identify $Q[S]$.
% Thus, if $H$ is the subset defined in \eqref{eq:h-def}, then $\{(H\setminus S)\cup\PaC{S}\}\in\mathbf{ID_1}(S)$.
% \textcolor{red}{
% Similarly, if we intervene on a set of variables $A$, then $A\cup Hhull(S,\mathcal{G}_{[V\setminus A]}\setminus S)$ is a trivial solution.
% }
% In our greedy approach, we minimize the cost of the trivial solution at each iteration.
% We choose a vertex iteratively such that the sum of the cost of intervening on this variable and the cost of the remaining hedge hull is minimized, i.e.,
% \textcolor{red}{put the cost equation}
% until we reach a point where $Hhull(S)=S$ and $Q[S]$ is identified.
% Algorithm \ref{alg: heursitic 3} in Appendix \ref{apdx: heuristic} presents the steps of this approach.
% We discuss in Appendix \ref{apdx: heuristic} that this greedy approach yields an intervention set to identify $Q[S]$ in $\mathcal{O}(\vert V\vert^5)$.
Additionally, we propose a polynomial-time post-process in Appendix \ref{apdx: heuristic} to improve the solution returned by our three heuristic algorithms.

%=======================================================================

\section{General Subset Identification}\label{sec:genid}
So far we have discussed the minimum-cost intervention design problem for subset $S$, where the induced subgrah  $\mathcal{G}_{[S]}$ is a c-component.
In this section, we study the general case in which $S$ is an arbitrary subset of variables and show that the minimum-cost intervention design problem for $S$ requires solving a set of instances of the problem for subsets of $S$ such as $S_i$ where $\mathcal{G}_{[S_i]}$ is a single c-component.

The main challenge in the general case is that Theorem \ref{thm:singleton} is no longer valid. 
Thus, the minimum-cost intervention design problem in \eqref{eq:optimization} cannot be reduced to \eqref{eq:optimization scc}. 
As an example, consider Figure \ref{fig: not singlton}.
In this causal graph, the minimum-cost intervention to identify $Q[S]$
%\footnote{i.e., the solution to Equation \eqref{eq:optimization}.} 
for $S:=\{s_1,s_2,s_3\}$, is $\mathbf{A}^*_{S,V\setminus S}=\{\{s_1\},\{s_2\}\}$ with the cost $\mathbf{C}(s_1)+\mathbf{C}(s_2)=2$. 
However, any singleton intervention that can identify $Q[S]$, i.e., $A\in\mathbf{ID_1}(S)$ has a cost of at least 10.
More importantly, the union of the sets in $\mathbf{A}^*_{S,V\setminus S}$, i.e., $\{s_1,s_2\}$ does not belong to $\mathbf{ID_1}(S)$.
In other words, intervening on $\{s_1,s_2\}$ does not identify $Q[S]$
(Lemma \ref{lem: inside S}).

In many applications, it is reasonable to assume that in order to identify $Q[S]=P(S|do(V\setminus S))$, intervening on elements of $S$, i.e., the outcome variables, is not desirable. 
In other words, $\mathbf{C}(s)=\infty$, for all $s\in S$.
Under this assumption, we show that instances similar to Figure \ref{fig: not singlton} cannot occur and results analogous to Theorem \ref{thm:singleton} can be established.

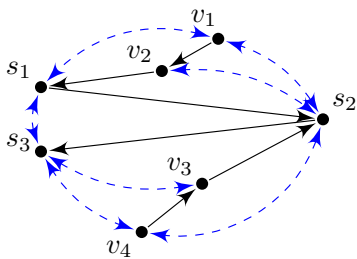
\begin{figure}[t]
    \centering
	\tikzstyle{block} = [circle, inner sep=1.5pt, fill=black]
	\tikzstyle{input} = [coordinate]
	\tikzstyle{output} = [coordinate]
	\resizebox{0.34\textwidth}{!}{
        \begin{tikzpicture}
        \tikzset{edge/.style = {->,> = latex'},line width=1.4pt}
        % vertices
        \node[block](s1) at (-0.5,2.5) {};
        \node[block](s2) at (3,2.1) {};
        \node[block](s3) at (-0.5,1.7) {};
        \node[block](v2) at (1,2.7) {};
        \node[block](v1) at (1.7,3.1) {};
        \node[block](v3) at (1.5,1.3){};
        \node[block](v4) at (0.75,0.7) {};
        
        % names
        \node[] ()[above left=-0.1cm and -0.1cm of s1]{$s_1$};
        \node[] ()[above right=-0.1cm and -0.1cm of s2]{$s_2$};
        \node[] ()[above left=-0.2cm and -0.1cm of s3]{$s_3$};
        \node[] ()[above left=-0.1cm and -0.1cm of v2]{$v_2$};
        \node[] ()[above left=-0.05cm and -0.2cm of v1]{$v_1$};
        \node[] ()[above left=-0.1cm and -0.1cm of v3]{$v_3$};
        \node[] ()[below left=-0.1cm and -0.1cm of v4]{$v_4$};
        
        % directed edges
        \draw[edge] (s1) to (s2);
        \draw[edge] (s2) to (s3);
        \draw[edge] (v4) to (v3);
        \draw[edge] (v3) to (s2);
        \draw[edge] (v1) to (v2);
        \draw[edge] (v2) to (s1);
        
        % bidirected edges
        \draw[edge, color=blue, dashed, style={<->}, bend left=20] (v3) to (s3);
        \draw[edge, color=blue, dashed, style={<->}, bend left=20] (v4) to (s3);
        \draw[edge, color=blue, dashed, style={<->}, bend right=40] (v4) to (s2);
        \draw[edge, color=blue, dashed, style={<->}, bend right=20] (s1) to (s3);
        \draw[edge, color=blue, dashed, style={<->}, bend left=20] (v2) to (s2);
        \draw[edge, color=blue, dashed, style={<->}, bend left=20] (v1) to (s2);
        \draw[edge, color=blue, dashed, style={<->}, bend right=30] (v1) to (s1);
    \end{tikzpicture}
    }
    \caption{An example where the optimal interventions collection is not a singleton, i.e., a solution to \eqref{eq:optimization scc} is not a solution to \eqref{eq:optimization}. In this example, the cost of intervening on each of $\{s_1,s_2,s_3\}$ is 1, whereas the cost of intervening on each of $\{v_1,v_2,v_3,v_4\}$ is 5.}    \label{fig: not singlton}
\end{figure}

\begin{restatable}{theorem}{thmsingletongeneral} \label{thm:singleton general}
    Suppose $S$ is a subset of variables such that $\mathbf{C}(s)=\infty$ for any $s\in S$.
    Let $\mathbf{A}=\{A_1, A_2,...,A_m\}$ be a collection of subsets such that $\mathbf{A}\in$ $\mathbf{ID}_\mathcal{G}(S,V\setminus S)$ and $m>1$.
    Then there exists a singleton intervention $\tilde{A}$ such that $\mathbf{\Tilde{A}}=\{\Tilde{A}\}\in$ $\mathbf{ID}_\mathcal{G}(S,V\setminus S)$ and
    $\mathbf{C}(\mathbf{\Tilde{A}})\leq \mathbf{C}(\mathbf{A})$.
\end{restatable}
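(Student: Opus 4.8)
The plan is to reduce the general-subset statement to the single c-component case already settled in Lemma~\ref{lem:intersect}, by exploiting the c-component factorization of $Q[S]$ across the maximal c-components of $\mathcal{G}_{[S]}$. The infinite-cost assumption on $S$ is what forces the candidate singleton to avoid $S$ and thereby makes Lemma~\ref{lem:intersect} applicable.

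First I would dispose of the degenerate case. If $\mathbf{C}(\mathbf{A})=\infty$, then the singleton $\tilde A:=V\setminus S$ satisfies $Q[V\setminus\tilde A]=Q[S]$, so $\{\tilde A\}\in\textbf{ID}_\mathcal{G}(S,V\setminus S)$ trivially; since only vertices of $S$ carry infinite cost, $\mathbf{C}(\{\tilde A\})=\mathbf{C}(V\setminus S)<\infty\le\mathbf{C}(\mathbf{A})$, and we are done. Hence assume $\mathbf{C}(\mathbf{A})<\infty$. Because $\mathbf{C}(s)=\infty$ for every $s\in S$, finiteness forces $A_i\cap S=\emptyset$ for all $i$, and therefore $A_\cup\cap S=\emptyset$, where $A_\cup:=\cup_{i=1}^m A_i$.

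Next I would decompose. Let $S_1,\dots,S_k$ be the vertex sets of the maximal c-components of $\mathcal{G}_{[S]}$. By the c-component factorization \cite{tian2002testable}, $Q[S]=\prod_{j=1}^k Q[S_j]$, where each factor $Q[S_j]$ is a fixed deterministic functional of $Q[S]$ (and conversely their product recovers $Q[S]$). Consequently, identifiability of $Q[S]$ from the collection $\{Q[V\setminus A_1],\dots,Q[V\setminus A_m]\}$ is equivalent to the simultaneous identifiability of all the $Q[S_j]$ from the same collection; in particular $\mathbf{A}\in\textbf{ID}_\mathcal{G}(S,V\setminus S)$ implies $\mathbf{A}\in\textbf{ID}_\mathcal{G}(S_j,V\setminus S_j)$ for every $j$. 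Now each $\mathcal{G}_{[S_j]}$ is by construction a single c-component and $A_\cup\cap S_j=\emptyset$, so Lemma~\ref{lem:intersect} yields $\{A_\cup\}\in\textbf{ID}_\mathcal{G}(S_j,V\setminus S_j)$, i.e.\ $Q[S_j]$ is identifiable from the \emph{single} distribution $Q[V\setminus A_\cup]$, for each $j$. Recombining through $Q[S]=\prod_j Q[S_j]$ shows $Q[S]$ is identifiable from $Q[V\setminus A_\cup]$ alone, so $\tilde A:=A_\cup$ gives $\{\tilde A\}\in\textbf{ID}_\mathcal{G}(S,V\setminus S)$. Finally, Remark~\ref{rem: cost-singleton} gives $\mathbf{C}(\{A_\cup\})=\mathbf{C}(A_\cup)\le\mathbf{C}(\mathbf{A})$, which completes the argument.

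The main obstacle is the factorization/equivalence step: I must justify that identifiability of $Q[S]$ over an arbitrary \emph{collection} of interventional distributions decomposes across the maximal c-components of $\mathcal{G}_{[S]}$, rather than only in the purely observational setting. I would ground this in the observation that the c-factors $Q[S_j]$ are recoverable from $Q[S]$ and their product recovers $Q[S]$, entirely independently of which distributions happen to be available; hence the decomposition is a property of the target functional itself, and identifiability passes through it in both directions. Once this is in hand, the remaining steps are routine applications of Lemma~\ref{lem:intersect} and Remark~\ref{rem: cost-singleton}.
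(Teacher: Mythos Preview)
Your proof is correct and follows essentially the same route as the paper's: handle the infinite-cost case with $\tilde A=V\setminus S$, otherwise use $A_i\cap S=\emptyset$ to set $\tilde A=A_\cup$, decompose $S$ into its maximal c-components, invoke Tian's factorization (the paper's Theorem~\ref{thm: tian}) to reduce to the single c-component case, and apply Lemma~\ref{lem:intersect} componentwise before recombining. Your extra paragraph justifying why the c-component decomposition of identifiability carries over to arbitrary collections of interventional distributions is a point the paper leaves implicit, so if anything your write-up is slightly more careful there.
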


Theorem \ref{thm:singleton general} implies that the general problem can be solved exactly analogous to the case where $\mathcal{G}_{[S]}$ is a c-component, if intervention on $S$ is not allowed.

%Interestingly, the heuristic algorithms proposed in Section \ref{sec: heuristic} output a valid intervention to identify $Q[S]$, even if $G_{[S]}$ is not a c-component, without any further assumption on the cost function (see Appendix \ref{apdx: heuristic} for further explanation.)
% \begin{lemma}\label{lem: heuristic general S}
% All of the algorithms \ref{alg: heursitic 1}, \ref{alg: heursitic 2} and \ref{alg: heursitic 3} with $hhull(S)$ constructed using Algorithm \ref{alg: hedge hull} output an intervention set which is sufficient to identify $Q[S]$.
% \end{lemma}
% Lemma \ref{lem: heuristic general S} can be utilized to adapt these algorithms to find intervention collections which are not necessarily a single set.
We now turn to proposing an exact solution to the minimum-cost intervention problem in (\ref{eq:optimization}) in the general case.
Let $S_1,...,S_k$ be subsets of $S$ such that $\bigcup_i S_i=S$ and $\mathcal{G}_{[S_1]},...,\mathcal{G}_{[S_k]}$ are the maximal c-components of $\mathcal{G}_{[S]}$.
It is known that $Q[S]$ is identifiable in $\mathcal{G}$ if and only if $Q[S_i]$s are identifiable in $\mathcal{G}$ for all $1\leq i\leq k$ \citep{tian2002testable}.
This observation is formalized below.
% Based on this observation, we show the following result. 
\begin{restatable}{observation}{lemexitpartition}\label{obs:1}
    Let $\mathcal{G}$ be a semi-Markovian graph and $S$ be a subset of its vertices.
    Suppose $\mathbf{A}^*_{S,V\setminus S}$ is a min-cost interventions collection to identify $Q[S]$ in $\mathcal{G}$.
    If $\mathcal{G}_{[S_j]}$ is a maximal c-component of $\mathcal{G}_{[S]}$, then there exists $A\in\mathbf{A}^*_{S,V\setminus S}$ such that $A\in\mathbf{ID_1}(S_j)$ (see Theorem 1 of \citealp{kivva2022revisiting}.)
\end{restatable}

\begin{algorithm}[t]
\caption{Naive general algorithm for minimum-cost intervention, when $\mathcal{G}_{[S]}$ is not necessarily a c-component.}
\label{alg: general heuristic}
\begin{algorithmic}[1]
    \Function{NaiveMinCost}{$S,\mathcal{G},\mathbf{C}(\cdot)$}
        \State $\mathbf{A}^* \gets null$
        \State minCost $\gets\infty$
        \State $\{S_1,...,S_k\}\gets $ maximal c-components of $\mathcal{G}_{[S]}$
        \For{any partition of $\{S_1,...,S_k\}$ as $S^{(1)},...,S^{(t)}$ }
            \State $\mathbf{A}\gets \{\}$
            \State cost $\gets 0$
            \For{$i$ from $1$ to $t$}
                \State $A_i\gets $ min-cost intervention set in $\mathbf{ID_1}(\underline{S}^{(i)})$
                \State $\mathbf{A}\gets\mathbf{A}\cup \{A_i\}$
                \State cost $\gets$ cost $+\mathbf{C}(A_i)$
            \EndFor
            \If{cost $<$ minCost}
                \State $\mathbf{A}^*\gets \mathbf{A}$
                \State minCost $\gets$ cost
            \EndIf
        \EndFor
        \State\Return $\mathbf{A}^*$
    \EndFunction
\end{algorithmic}
\end{algorithm}

Recall that $S_1,\dots,S_k$ are the subsets of $S$ such that $\mathcal{G}_{[S_i]}$ is a maximal c-component for each $1\leq i\leq k$.
Let $S^{(1)},...,S^{(\ell)}$ denote a partitioning of  $\{S_1,...,S_k\}$. 
That is, for each $j$, $S^{(j)}$ is a subset of the set $\{S_1,...,S_k\}$, $S^{(j)}\!\cap \!S^{(i)}\!\!\!=\!\!\emptyset$ for $i\!\!\neq\!\! j$, and $\bigcup_{j=1}^\ell S^{(j)}\!=\!\{S_1,...,S_k\}$. 
Furthermore, we denote the set of all vertices in partition $S^{(j)}$ by $\underline{S}^{(j)}$.
As an example, in Figure \ref{fig: not singlton}, set  $S\!=\!\{\!s_1,s_2,s_3\!\}$ consists of two maximal c-components $\mathcal{G}_{[S_1]}$ and $\mathcal{G}_{[S_2]}$, where $S_1\!\!=\!\!\{s_1,s_3\}$ and $S_2\!\!=\!\!\{s_2\}$. 
There are two different ways to partition $\{S_1, S_2\}$. 
One is $S^{(1)}\!=\!\{S_1,S_2\}$. 
The other is $S^{(1)}\!\!=\!\{S_1\}$ and $S^{(2)}\!\!=\!\{S_2\}$.
For the first partition, we have $\underline{S}^{(1)}\!\!=\!\{s_1,s_2,s_3\}$. 
Similarly, the second partition will result in $\underline{S}^{(1)}\!\!=\!\!\{s_1,s_3\}$ and $\underline{S}^{(2)}\!\!=\!\!\{s_2\}$.
Suppose $\mathbf{A}^*_{S,V\setminus S} = \{A_1,\dots,A_t\}$.
Based on Observation \ref{obs:1}, the set $\{S_1,\dots,S_k\}$ can be partitioned into $t$ groups, denoted by $S^{(1)},\dots,S^{(t)}$, such that each partition $S^{(j)}$ is identified by intervention on $A_j$ for $1\leq j\leq t$.
More precisely, if $S_i\in S^{(j)}$, then $A_j\in\mathbf{ID_1}(S_i)$.
\begin{restatable}{lemma}{lempartition}\label{lem:partition}
    Suppose $\mathbf{A}^*_{S,V\setminus S} = \{A_1,\dots,A_t\}$ is the minimum-cost intervention to identify $Q[S]$.
    Let $S^{(1)},\dots,S^{(t)}$ be the partitioning of maximal c-components of $\mathcal{G}_{[S]}$, such that c-components of each partition $S^{(j)}$ are identified by intervention on $A_j$ for $1\leq j\leq t$.
    Then, 
    \[A_j \in\argmin_{A\in \mathbf{ID_1}(\underline{S}^{(j)})}\mathbf{C}(A), \quad\quad\forall 1\leq j\leq t.\]
\end{restatable}
Lemma \ref{lem:partition} illustrates the fact that if the partitioning $S^{(1)},\dots,S^{(t)}$ were known a priori, then the task of recovering the optimal collection of interventions in $\mathbf{ID}_\mathcal{G}(S, V\setminus S)$ would reduce to $t$ independent instances of finding optimal intervention sets in $\mathbf{ID_1}(\underline{S}^{(j)})$ for each $1\leq j\leq t$.
We already know from Remark \ref{rem:alg2} that these instances can be solved through Algorithm \ref{alg: ultimate}.
However, the optimal partitioning is not known.
Even the number of groups ($t$) is not known in advance.
A naive approach would be to enumerate every possible partitioning of the maximal c-components $\{S_1,\dots,S_k\}$, and form the optimal intervention collection corresponding to each partitioning through successive runs of Algorithm \ref{alg: ultimate}.
This approach is summarized as Algorithm \ref{alg: general heuristic}.
The following result proves the soundness of this algorithm.

\begin{restatable}{proposition}{thmalggeneral}\label{thm: general alg}
    Given a semi-Markovian graph $\mathcal{G}$ and a subset $S$ of its vertices, Algorithm \ref{alg: general heuristic} with Algorithm \ref{alg: ultimate} used as a subroutine in line (9) returns an optimal solution to the min-cost interventions collection to identify $Q[S]$ in $\mathcal{G}$.
\end{restatable}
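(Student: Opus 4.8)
The plan is to prove the claim in two directions: that every interventions collection Algorithm~\ref{alg: general heuristic} assembles is feasible (belongs to $\textbf{ID}_\mathcal{G}(S,V\setminus S)$), and that at least one of the partitions it enumerates yields a collection whose cost equals the optimum. Both directions would rest on the c-component factorization of \cite{tian2002testable}. The key preliminary observation I would record is that, since each block $S^{(j)}$ groups together a subset of the maximal c-components $S_1,\dots,S_k$ of $\mathcal{G}_{[S]}$, and distinct maximal c-components share no bidirected edge, the maximal c-components of $\mathcal{G}_{[\underline{S}^{(j)}]}$ are precisely the $S_i\in S^{(j)}$. Hence $Q[\underline{S}^{(j)}]$ factorizes as $\prod_{S_i\in S^{(j)}}Q[S_i]$, and $Q[\underline{S}^{(j)}]$ is identifiable from a given interventional distribution if and only if each constituent $Q[S_i]$ is.

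For feasibility, I would fix an arbitrary partition $S^{(1)},\dots,S^{(t)}$ and the collection $\mathbf{A}=\{A_1,\dots,A_t\}$ it produces, where $A_j\in\mathbf{ID}_1(\underline{S}^{(j)})$ is the singleton returned in line~6 (sound by Lemma~\ref{lem: ultimate alg}). By the observation above, $Q[V\setminus A_j]$ identifies every $Q[S_i]$ with $S_i\in S^{(j)}$; ranging over $j$, the full collection $\{Q[V\setminus A_1],\dots,Q[V\setminus A_t]\}$ identifies $Q[S_i]$ for all $i$, and then \cite{tian2002testable} gives identifiability of $Q[S]=\prod_i Q[S_i]$. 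Thus $\mathbf{A}\in\textbf{ID}_\mathcal{G}(S,V\setminus S)$, and in particular the algorithm's output is feasible.

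For optimality, I would start from an optimal collection $\mathbf{A}^{*}$. Lemma~\ref{lem: partition} supplies, for each maximal c-component $S_j$, a member $\phi(S_j)\in\mathbf{A}^{*}$ with $\phi(S_j)\in\mathbf{ID}_1(S_j)$; fixing one such choice defines a map $\phi$, and grouping the $S_j$ by the value of $\phi$ induces a partition whose blocks each share a common member $A^{(j)}$ of $\mathbf{A}^{*}$. Since $A^{(j)}$ identifies every $Q[S_i]$ with $S_i$ in the block, the preliminary equivalence yields $A^{(j)}\in\mathbf{ID}_1(\underline{S}^{(j)})$; as Algorithm~\ref{alg: ultimate} returns the \emph{minimum}-cost element of $\mathbf{ID}_1(\underline{S}^{(j)})$ (Lemma~\ref{lem: ultimate alg}), the block's cost satisfies $\mathbf{C}(A_j)\le\mathbf{C}(A^{(j)})$. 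Distinct blocks carry distinct values of $\phi$, so $A^{(1)},\dots,A^{(t)}$ are distinct elements of $\mathbf{A}^{*}$ and $\sum_j\mathbf{C}(A^{(j)})\le\mathbf{C}(\mathbf{A}^{*})$ by non-negativity of the cost. Chaining the two inequalities, the collection built for this partition costs at most $\mathbf{C}(\mathbf{A}^{*})$; since Algorithm~\ref{alg: general heuristic} minimizes over all partitions and its output is feasible, the returned collection is optimal.

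I expect the optimality step to be the main obstacle, and within it the cost accounting. The delicate points are that the partition induced by $\phi$ may simply discard members of $\mathbf{A}^{*}$ that identify no maximal c-component, and that the surviving blocks embed injectively into $\mathbf{A}^{*}$, so replacing each $A^{(j)}$ by the true minimum-cost singleton for $\underline{S}^{(j)}$ can only lower the cost. A secondary point requiring care, shared by both directions, is confirming that identifiability from a \emph{collection} of interventional distributions inherits the c-component factorization of \cite{tian2002testable} in the same way as identifiability from the observational distribution.
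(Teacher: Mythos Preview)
Your proposal is correct and follows essentially the same approach as the paper's own proof: feasibility via Tian's c-component factorization, and optimality by using Lemma~\ref{lem: partition} to associate to an optimal $\mathbf{A}^*$ a partition of the maximal c-components, then invoking Lemma~\ref{lem: ultimate alg} to bound the cost block by block. Your treatment is in fact slightly more careful than the paper's---you make explicit the map $\phi$, the injectivity of blocks into $\mathbf{A}^*$, the possibility that some members of $\mathbf{A}^*$ are discarded, and the step showing $A^{(j)}\in\mathbf{ID}_1(\underline{S}^{(j)})$ from identifiability of each constituent $Q[S_i]$---whereas the paper simply asserts that a suitable partition into $t=\lvert\mathbf{A}^*\rvert$ parts exists and proceeds.
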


The number of ways to partition the maximal c-components grows as $\mathcal{O}(m^{\log\log m})$, where $m=2^k$ is the number of subsets of $\{S_1,\dots,S_k\}$\footnote{To be more precise, the number of partitions (generally referred to as Bell number) has been shown to have a growth rate of $\mathcal{O}(m^{\log(\frac{0.792\log m}{\log\log m})})$ \citep{berend2010improved}.}.
Further, Algorithm \ref{alg: ultimate} has to be run as many times as the number of groups in each partition to find the optimal collection of interventions corresponding to that partitioning; that is, if $\{S_1,\dots,S_k\}$ is partitioned into $\ell$ groups, Algorithm \ref{alg: ultimate} is run $\ell$ times for one such partitioning.
Although the number of c-components and therefore number of different partitionings is small in general, it is crucial to minimize the number of runs of Algorithm \ref{alg: ultimate} as its runtime might be limiting.
In what follows, we introduce two algorithms, including an exact, and an approximation algorithm which require running Algorithm \ref{alg: ultimate} only $m-1$ times.

\subsection{Minimum Set Cover Formulation}
Let $\Gamma=\{S_1,...,S_k\}$ be the subsets of $S$ such that $\mathcal{G}_{[S_j]}$ for $1\leq j\leq k$ is a maximal c-component in $\mathcal{G}_{[S]}$.
Also let $\Gamma_1,\dots,\Gamma_{2^k-1}=\Gamma_{m-1}$ denote the non-empty subsets of $\Gamma$ in an arbitrary order.
Define
\begin{equation}\label{eq:ai*}
    A_i^* = \argmin_{A\in\mathbf{ID_1}(\cup_{S_j\in\Gamma_i} S_j)}\mathbf{C}(A),\quad \forall 1\leq i\leq (m-1),
\end{equation}
that is, $A_i^*$ is the optimal single intervention set to identify $Q[S_j]$ for every $S_j\in\Gamma_i$.
Having access to $A)i^*$s, the minimum-cost intervention problem can be cast as a weighted minimum set cover (WMSC) problem.
\begin{definition}[WMSC]
    Let $\Gamma=\{S_1,...,S_k\}$ be a set of objects along with a weight function $\omega:2^\Gamma\to\mathbbm{R}^{\geq0}$, which assigns a weight to each subset of $\Gamma$.
    A set cover for $\Gamma$ is a subset $\mathbf{B}\subseteq 2^\Gamma$ (a subset of the power set of $\Gamma$) such that $\mathbf{B}$ covers the set $\Gamma$, i.e., $\cup_{A\in\mathbf{B}}A=\Gamma$.
    The weighted minimum set cover problem's objective is to find a set $\mathbf{B}^*$ among all set covers that minimizes $\sum_{A\in \mathbf{B}}\omega(A)$.
\end{definition}
The following Lemma indicates that after computing $A_i^*$s as defined in Equation \eqref{eq:ai*}, the rest of the problem is exactly an instance of WMSC.
\begin{restatable}{lemma}{lemmcfpform}\label{lem:mcfpform}
    Let $A_i^*$ be defined as in Equation \eqref{eq:ai*}.
    Let $\mathbf{B}$ denote the solution to the WMSC problem with $\omega(\Gamma_i)=A_i^*$ for every non-empty subset of $\Gamma$ and $\omega(\emptyset)=1$.
    Then $\mathbf{A}=\{A_i^*\vert\Gamma_i\in\mathbf{B}\}$ is an optimal solution to Equation \eqref{eq:optimization}.
\end{restatable}
Although Lemma \ref{lem:mcfpform} suggests an exact algorithm to solve the minimum-cost intervention problem in the general case, the WMSC problem itself is NP-hard\footnote{The greedy approximation } \citep{karp1972reducibility}, and requires time exponential in $m=2^k-1$ in the worst case.
In what follows, we suggest an approximation algorithm which requires time polynomial in $m$, and guarantees a $k$-factor approximation.

\subsection{Approximation Algorithm Based on Minimum-cost Flow}
In this section, we describe an algorithm to approximate the optimal collection of interventions to identify $Q[S]$ for an arbitrary set $S$, where $\mathcal{G}_{[S]}$ might comprise multiple c-components.
This algorithm, which is based on solving an instance of the minimum-cost flow problem, approximates the solution up to a factor of $k$, where $k$ is the number of c-components in $\mathcal{G}_{[S]}$.
We begin with a formal definition of the minimum-cost flow problem (MCFP).
\begin{definition}[MCFP]
    Let $\mathcal{H}=(V,E)$ be a directed graph with a source vertex $w\in V$ and a sink vertex $z\in V$.
    Given a capacity function $\gamma:E\to\mathbbm{R}^{\geq0}$ and a cost function $\zeta:E\to\mathbbm{R}$, the objective of minimum-cost flow problem is to find the way with the least cost to send a given amount $d^*$ of flow from the source to the sink.
    More precisely, if the flow on the edge $(x,y)$ is denoted by $f(x,y)$, it solves the optimization problem
    \[
        \begin{split}
            &\min_{f(\cdot,\cdot)}\sum_{(x,y)\in E}f(x,y)\zeta(x,y),\\
            &\textbf{s.t.}\\
            &0\leq f(x,y)\leq\gamma(x,y),\quad\forall (x,y)\in E,\\
            &\sum_{x\in V}f(x,v) - \sum_{y\in V}f(v,y) = d(v),\quad\forall v\in V,
        \end{split}
    \]
    where $d(z) =-d(w)= d^*$, and  $d(v)=0$ for $v\neq w,z$.
\end{definition}

\begin{figure}[t]
    \centering
	\tikzstyle{block} = [circle, inner sep=1.5pt, fill=black]
	\tikzstyle{input} = [coordinate]
	\tikzstyle{output} = [coordinate]
	\resizebox{0.65\textwidth}{!}{
        \begin{tikzpicture}
        \tikzset{edge/.style = {->,> = latex'},line width=1.4pt}
        % vertices
        \node[block](Ss1) at (0,7.5) {};
        \node[block](Ss2) at (0,4.5) {};
        \node[block](Ss3) at (0,-1.5) {};
        \node[block](Ss1s2) at (0,6) {};
        \node[block](Ss1s3) at (0,1.5) {};
        \node[block](Ss2s3) at (0,0) {};
        \node[block](Ss1s2s3) at (0,3) {};
        \node[block](auxW) at (-7,3) {};
        \node[block](s1) at (3.5,5) {};
        \node[block](s2) at (3.5,3) {};
        \node[block](s3) at (3.5,1) {};
        \node[block](auxZ) at (6,3) {};
        % names
        \node[] ()[above right=-0.1cm and -0.1cm of s1]{$S_1$};
        \node[] ()[above right=-0.1cm and -0.1cm of s2]{$S_2$};
        \node[] ()[below right=-0.2cm and -0.1cm of s3]{$S_3$};
        \node[] ()[above left=-0.0cm and -0.4cm of Ss1]{$\Gamma_1=\{S_1\}$};
        \node[] ()[above left=-0.0cm and -0.4cm of Ss2]{$\Gamma_3=\{S_2\}$};
        \node[] ()[above left=-0.0cm and -0.4cm of Ss3]{$\Gamma_7=\{S_3\}$};
        \node[] ()[above left=-0.0cm and -0.4cm of Ss1s2]{$\Gamma_2=\{S_1,S_2\}$};
        \node[] ()[above left=-0.0cm and -0.4cm of Ss1s3]{$\Gamma_5=\{S_1,S_3\}$};
        \node[] ()[above left=-0.0cm and -0.4cm of Ss2s3]{$\Gamma_6=\{S_2,S_3\}$};
        \node[] ()[above left=-0.0cm and -0.4cm of Ss1s2s3]{$\Gamma_4=\{S_1,S_2,S_3\}$};
        \node[] ()[above left=-0.2cm and -0.1cm of auxW]{$w$};
        \node[] ()[above right=-0.2cm and -0.1cm of auxZ]{$z$};
        % directed edges
        \path[->]
            (auxW) edge[ bend left=30] node[fill=white, anchor=center, pos=0.5, rotate=35] {\color{black} $1\backslash\mathbf{C}(A_1^*)$} (Ss1);
        \path[->]
            (auxW) edge[bend left=10] node[fill=white, anchor=center, pos=0.5, rotate=14] {\color{black} $1\backslash\mathbf{C}(A_3^*)$} (Ss2);
        \path[->]
            (auxW) edge[ bend right=30] node[fill=white, anchor=center, pos=0.5, rotate=-35] {\color{black} $1\backslash\mathbf{C}(A_7^*)$} (Ss3);
        \path[->]
            (auxW) edge[ bend left=20] node[fill=white, anchor=center, pos=0.5, rotate=27] {\color{black} $2\backslash\frac{\mathbf{C}(A_2^*)}{2}$} (Ss1s2);
        \path[->]
            (auxW) edge[ bend right=10] node[fill=white, anchor=center, pos=0.5, rotate=-14] {\color{black} $2\backslash\frac{\mathbf{C}(A_5^*)}{2}$} (Ss1s3);
        \path[->]
            (auxW) edge[ bend right=20] node[fill=white, anchor=center, pos=0.5, rotate=-27] {\color{black} $2\backslash\frac{\mathbf{C}(A_6^*)}{2}$} (Ss2s3);
        \path[->]
            (auxW) edge[] node[fill=white, anchor=center, pos=0.5] {\color{black} $3\backslash\frac{\mathbf{C}(A_4^*)}{3}$} (Ss1s2s3);
        \draw[edge] (Ss1) to (s1);
        \draw[edge] (Ss2) to (s2);
        \draw[edge] (Ss3) to (s3);
        \draw[edge] (Ss1s2) to (s1);
        \draw[edge] (Ss1s2) to (s2);
        \draw[edge] (Ss1s3) to (s1);
        \draw[edge] (Ss1s3) to (s3);
        \draw[edge] (Ss2s3) to (s2);
        \draw[edge] (Ss2s3) to (s3);
        \draw[edge] (Ss1s2s3) to (s1);
        \draw[edge] (Ss1s2s3) to (s2);
        \draw[edge] (Ss1s2s3) to (s3);
        \draw[edge, bend left=15] (s1) to (auxZ);
        \draw[edge] (s2) to (auxZ);
        \draw[edge, bend right=15] (s3) to (auxZ);
        % bidirected edges
    \end{tikzpicture}
    }
    \caption{Minimum cost flow problem formulation for the general subset $S$, comprising maximal c-components $S=S_1\cup S_2\cup S_3$. All the non-specified edges have capacity of $1$ and cost of $0$, i.e., $\gamma\backslash\zeta=1\backslash0$.}    \label{fig:maxflow}
\end{figure}
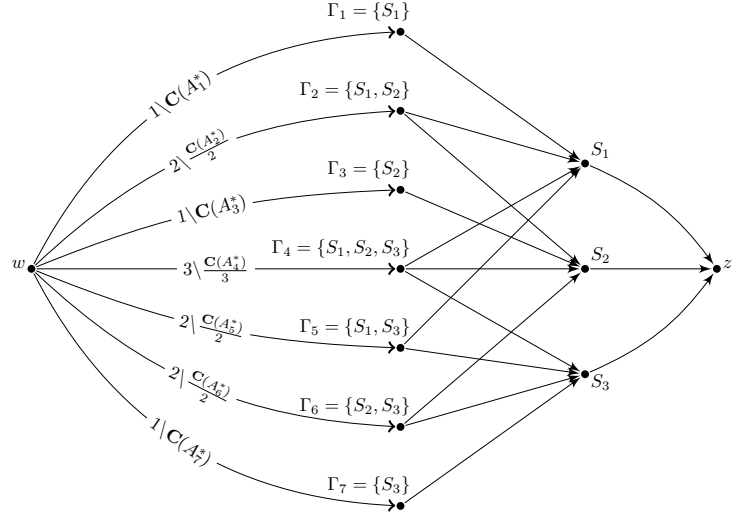

With this definition, we now describe an instance of the minimum-cost flow problem, which is in the core of our approximation algorithm.
We draw a graph where each $S_j$ is represented by a vertex.
Moreover, each $\Gamma_i$ is also represented by a vertex.
We also add two auxiliary vertices $w$ and $z$.
For each subset $\Gamma_i$, we draw a directed edge with capacity $\gamma=1$ and cost $\zeta=0$ from $\Gamma_i$ to $S_j$ if $S_j\in\Gamma_i$.
Analogously, we draw a directed edge with capacity $1$ and cost $0$ from each $S_j$ to $z$.
Finally, for each $\Gamma_i$, we draw a directed edge from $w$ to $\Gamma_i$ with capacity $\vert\Gamma_i\vert$ and cost $\frac{\mathbf{C}(A_i^*)}{\vert\Gamma_i\vert}$.
An example is shown in Figure \ref{fig:maxflow}, where $k=3$.
The pair of numbers on each edge represent the capacity and the cost corresponding to that edge, respectively.
We then solve MCFP on this network, with flow $d^*=k$ from $w$ to $z$.
\begin{remark}
    By definition, all of the capacities are integer values, and the demanded flow $d^*$ is also an integer value.
    Moreover, this problem has a trivial feasible solution where an amount of flow $d^*=k$ is sent from $w$ to the vertex corresponding to the full set $\{S_1,\dots,S_k\}$, and then it is split to each $S_j$ with value of $1$, and then sent from each $S_j$ to $z$.
    As a result, it follows from the integrality theorem that this problem has an integral solution which can be computed in polynomial time \citep{ahuja1988network,kovacs2015minimum}.
\end{remark}
We first find the integral solution ($f^*(\cdot,\cdot)$) to the MCFP described above.
Let the set of $\Gamma_i$s that receive non-zero in-flow in this solution be $\Tilde{\mathbfcal{B}}=\{\Gamma_1,\dots,\Gamma_t\}$ without loss of generality.
We then construct $\mathbfcal{B}^*$ as follows.
\[\mathbfcal{B}^*=\{\Gamma_{i'}=\{S_j\vert 1\leq j\leq k, f^*(\Gamma_i,S_j)>0\}\vert 1\leq i\leq t\}.\]
In words, we replace each subset $\Gamma_i$ with a subset $\Gamma_{i'}\subseteq\Gamma_i$ such that only those $S_j$s are kept in $\Gamma_{i'}$ that $f^*(\Gamma_i,S_j)>0$.
Finally, the solution $\mathbf{A}$ to identify $Q[S]$ is the set of $A_{i'}^*$s that their corresponding $\Gamma_{i'}$ appears in $\mathbfcal{B}^*$.
That is, $\mathbf{A} = \{A^*_{i'}\vert \Gamma_{i'}\in\mathbfcal{B}^*\}$.
This approach is summarized as Algorithm \ref{alg:mcfp}.
We refer to the proof of Proposition \ref{prp:mcfpalg} in Appendix \ref{apdx: proofs} for detailed discussion on correctness and approximation ratio of this algorithm.

\begin{algorithm}[t]
\caption{MCFP-based approximation algorithm for minimum-cost intervention, when $\mathcal{G}_{[S]}$ is not necessarily a c-component.}
\label{alg:mcfp}
\begin{algorithmic}[1]
    \Function{}{}
        % \State $\mathbf{A}^* \gets null$, minCost $\gets\infty$
        \State $\Gamma=\{S_1,...,S_k\}\gets $ maximal c-components of $\mathcal{G}_{[S]}$
        \State $\Gamma_1,\dots,\Gamma_{2^k-1}\gets$ non-empty subsets of $\Gamma$
        \State $\mathcal{H}\gets$ a directed graph with one vertex for each $S_j$, one vertex for each $\Gamma_i$, and two auxiliary vertices $w,z$
        \For{$j$ from $1$ to $k$}
            \State add edge $(S_j,z)$ with capacity $\gamma=1$ and cost $\zeta=0$
        \EndFor
        \For{$i$ from $1$ to $2^k-1$ }
            \State compute $\mathbf{A}_i^*$ based on Eq.~\eqref{eq:ai*}, using Algorithm \ref{alg: ultimate}
            \State add edge $(w,\Gamma_i)$ with capacity $\gamma=\vert\Gamma_i\vert$ and cost $\zeta=\frac{\mathbf{C}(A_i^*)}{\vert\Gamma_i\vert}$
            \State add edge $(\Gamma_i,S_j)$ for each $j$ if $S_j\in\Gamma_i$, with capacity $\gamma=1$ and cost $\zeta=0$
        \EndFor
        \State $f^*\gets$ integral solution for MCFP on $\mathcal{H}$, sending an amount of flow $k$ from $w$ to $z$
        \State $\Tilde{\mathbfcal{B}}\gets\{\Gamma_i\vert f^*(w, \Gamma_i)>0\}$
        \State $\mathbfcal{B}^*\gets\{\Gamma_{i'}=\{S_j\vert 1\leq j\leq k, f^*(\Gamma_i,S_j)>0\}\vert\Gamma_i\in\Tilde{\mathbfcal{B}}\}$
        \State $\mathbf{A}\gets\{A_{i'}^*\vert \Gamma_{i'}\in\mathbfcal{B}^*\}$
        \State\Return $\mathbf{A}$
    \EndFunction
\end{algorithmic}
\end{algorithm}

\begin{restatable}{proposition}{prpmcfpalg}\label{prp:mcfpalg}
    Algorithm \ref{alg:mcfp} returns a solution $\mathbf{A}\in\mathbf{ID}_\mathcal{G}(S,V\setminus S)$, such that $\mathbf{C}(\mathbf{A})\leq k\mathbf{C}(\mathbf{A}_{S,V\setminus S}^*)$, where $k$ and $\mathbf{A}_{S,V\setminus S}^*$ are the number of c-components in $\mathcal{G}_{[S]}$, and an optimal solution to Equation \eqref{eq:optimization}, respectively.
\end{restatable}
It is noteworthy that Algorithm \ref{alg:mcfp} requires calling Algorithm \ref{alg: ultimate} exactly $(m-1)$ times, and then solving MCFP which has an overhead of $\mathcal{O}(m^{2.5})$ operations, as it can be solved as a linear program \citep{vaidya1989speeding}.
Since the number of maximal c-components of $\mathcal{G}_{[S]}$ is in general small compared to the total number of variables in the system, the bottleneck of the computational complexity would be determining $A_i^*$s.

\section{Non-linear Cost Models}\label{sec:nonlin}
The results derived so far rely on the core assumption that the intervention costs are linear; that is, $\mathbf{C}(\mathbf{A})=\sum_{A\in\mathbf{A}}\mathbf{C}(A)=\sum_{A\in\mathbf{A}}\sum_{a\in A}\mathbf{C}(a)$, for every collection of subsets of variables $\mathbf{A}$.
In general, however, the intervention costs can be arbitrarily defined for each subset of variables.
As an example, take for instance a study where the examiner can combine two costly experiments using a similar experimental setup to reduce the experiment cost up to fifty percent; or on the contrary, if these two experiments affect each other, they can prevent the examiner from performing them simultaneously, which can be modeled as an infinite cost for the simultaneous experiment.
In the most general form, instead of defining the cost function $\mathbf{C}$ on the set of variables (as we did in Section \ref{sec:termino}),
it must be defined as $\mathbf{C}:2^V\to \mathbbm{R}$, i.e., for each subset of variables separately\footnote{Note that this general form already makes the size of the problem exponential in $\vert V\vert$.
Therefore, the hardness results do not hold anymore.}.
In an even more general setup, one could relax the additivity of costs for collections of subsets: $\mathbf{C}(\mathbf{A})\neq\sum_{A\in\mathbf{A}}\mathbf{C}(A)$.

In this section, we discuss the minimum-cost intervention problem under nonlinear (non-additive) costs, which is a relaxation of our assumptions.
This is in contrast to what we discuss in Appendix \ref{apdx: special case}, where we show how certain improvements can be achieved if further assumptions are made on the cost function.
We first note that certain results of the previous sections can be extended under assumptions milder than linearity of costs.
For instance, Lemma \ref{lem:intersect} and Remark \ref{rem: cost-singleton} hold under sub-additivity, i.e., if $\mathbf{C}(\cdot)$ is such that $\mathbf{C}(A\cup B)\leq \mathbf{C}(A)+\mathbf{C}(B)$ for any $A,B\subseteq V$.
Applying the definition of sub-additivity to $A_i$s in Remark \ref{rem: cost-singleton} recursively yields the same conclusion.

More interestingly, Theorem \ref{thm:singleton} holds under the milder (and more plausible) assumption of non-decreasing costs over collections of interventions\footnote{This assumption implies the positivity of costs.}:
\begin{assumption}[Non-decreasing costs]\label{ass:nondecreasing}
Performing a set of interventions does not decrease the cost of other interventions.
More precisely, for any collection of subsets of $V$ such as $\mathbf{A}$ and any subset $A'\subseteq V$,
\[\mathbf{C}(\mathbf{A}\cup\{A'\})\geq\mathbf{C}(\mathbf{A}).\]
\end{assumption}
\begin{restatable}{proposition}{prpthmfiveasstwo}\label{prp:thm5ass2}
    Theorem \ref{thm:singleton} holds under Assumption \ref{ass:nondecreasing}, even if Assumption \ref{ass:lincost} is violated.
\end{restatable}
As a result, analogous to what we did in Section \ref{sec:singlec}, the search for minimum-cost intervention can still be restricted from $\mathbf{ID}_\mathcal{G}(S,V\setminus S)$ to $\mathbf{ID_1}(S)$ under Assumption \ref{ass:nondecreasing}.
It is also worth mentioning that the minimum hitting set formulation of Section \ref{sec: hitsetform} is valid regardless of the non-linearity of costs.
As a result, Algorithm \ref{alg: ultimate} is still applicable, with the only difference that the weights of minimum hitting set problem must also be defined for subsets rather than objects. Note that the greedy algorithm to solve the minimum hitting set does not guarantee logarithmic-factor approximation anymore.
To sum up, although presenting the results and algorithms discussed in this paper was more straightforward under Assumption \ref{ass:lincost}, most of these results are applicable in general settings with non-linear costs, under mild assumptions such as Assumption \ref{ass:nondecreasing}.

There are limited settings where Assumption \ref{ass:nondecreasing} can be violated, such as when an intervention has negative cost.
This can for instance happen when a recommender system receives sponsorship profits for recommending certain products.
Under linearity of costs (Assumption \ref{ass:lincost}), one would simply perform every possible intervention with negative cost, and reduce the remaining problem into a smaller instance of the same problem with positive costs.
However, if costs are not linear, we might have to investigate every possible combination of interventions in a brute-force manner.

\section{Evaluations}\label{sec:experiment}
\begin{figure}
    \centering
    \begin{subfigure}[b]{0.48\textwidth}
        \centering
        \includegraphics[width=0.96\textwidth]{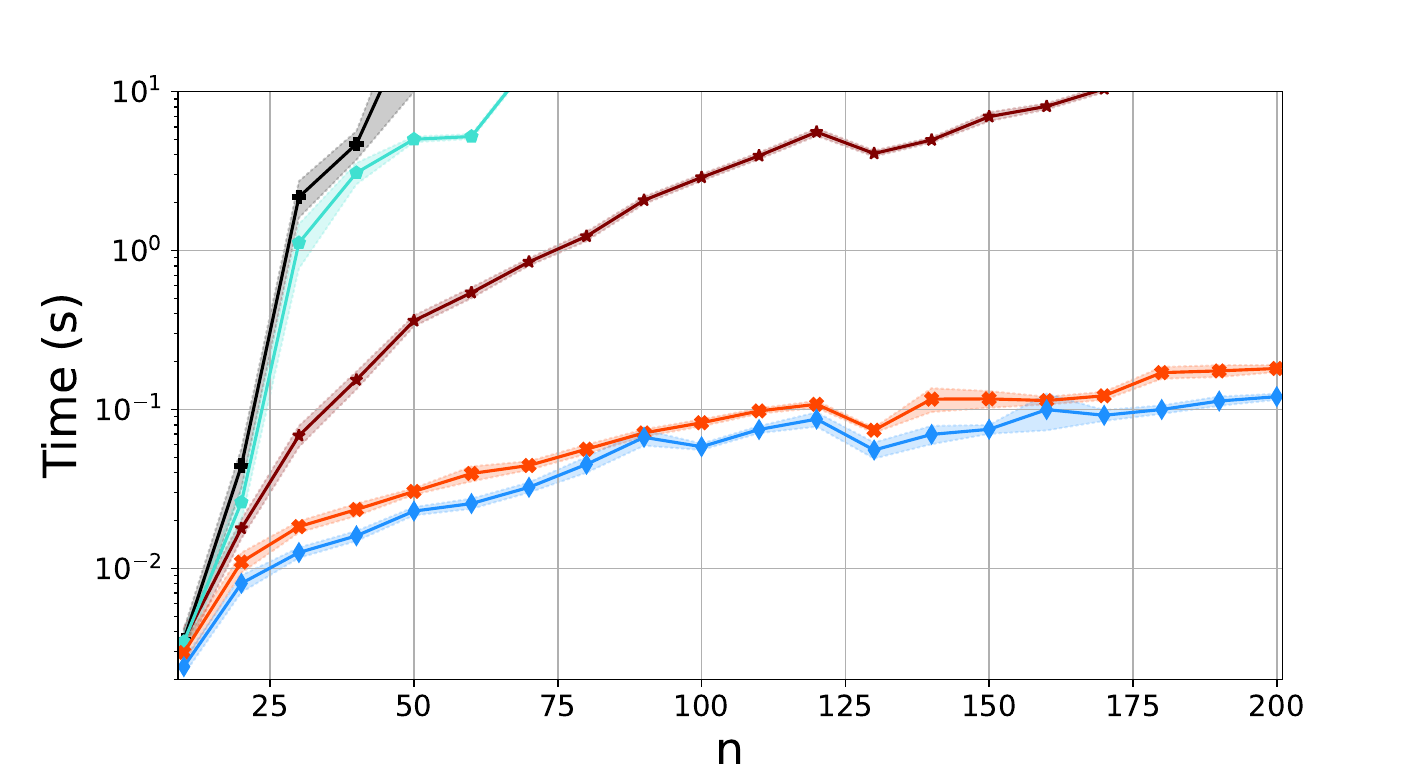}
        % \caption{Runtime}
        % \label{fig:exp runtime}
    \end{subfigure}
    \begin{subfigure}[b]{0.48\textwidth}
        \centering
        \includegraphics[width=0.96\textwidth]{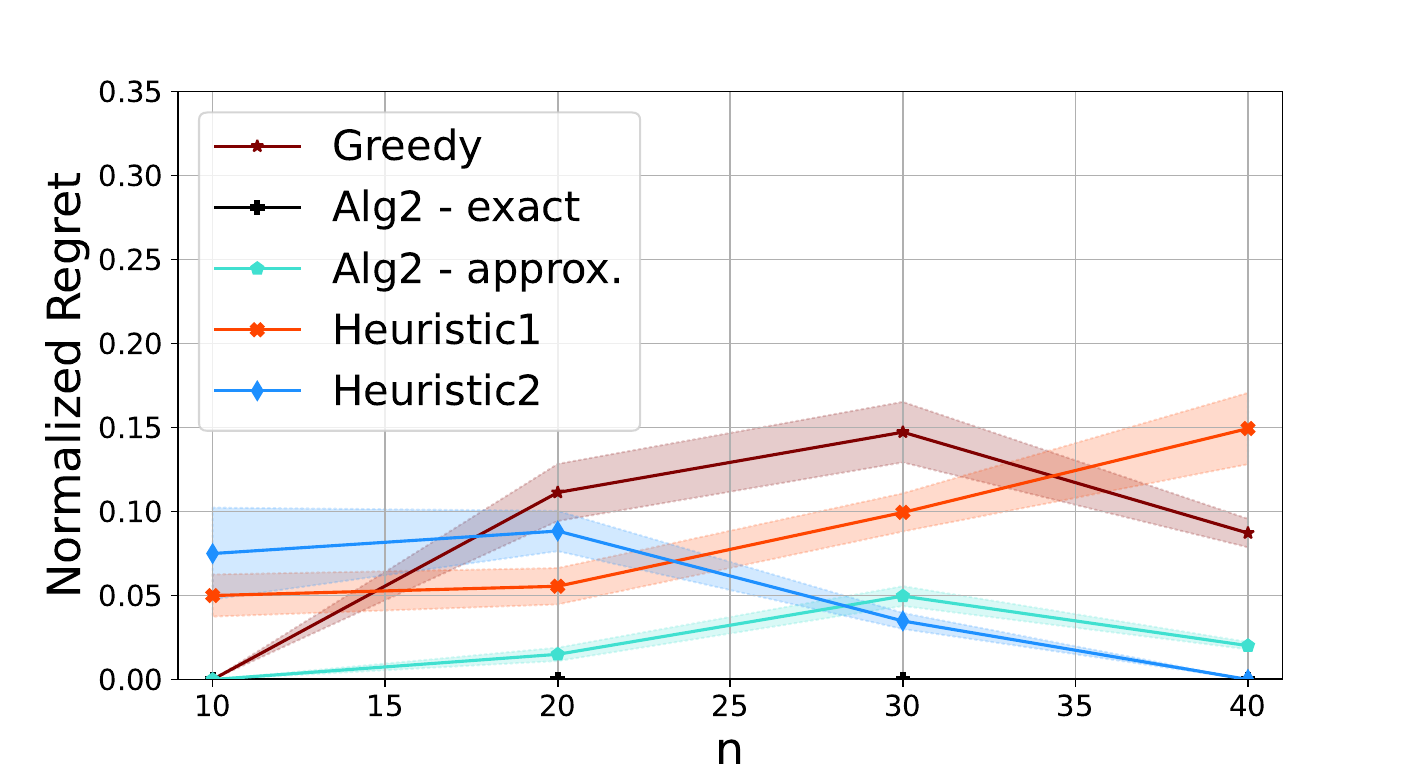}
        % \caption{Regret}
        % \label{fig:exp regret}
    \end{subfigure}
    \caption{Evaluation of our algorithms in terms of runtime (Left) and normalized regret (Right). $n$ represents the number of vertices in the causal graph, Alg2-exact and Alg2-approx. solve the minimum hitting set exactly and approximately, respectively, and the greedy algorithm is presented in Appendix \ref{apdx: heuristic}. Alg2-exact obtains zero regret.}
    \label{fig:exp}
\end{figure}

% In this section, we evaluate our proposed algorithms on two different settings.
% In the first setting, we generate random Erdos-Renyi graphs with parameters in the range ...,
% In the second setup, we generate random graphs as follows.
% We fix a set $S$, and build hedges of size $(\vert S\vert+1, \vert S\vert +2, ...)$ for $S$.
% We compare the runtime of the algorithms, and how far they are from the optimal solution.
% The results of our evaluation show that...
Our evaluation consists of three parts.
First, we evaluate our heuristic algorithms and the approximation version of Algorithm \ref{alg: ultimate} against the exact algorithm, in terms of runtime and normalized regret, when solving the target query $Q[S]$ is such that $\mathcal{G}_{[S]}$ is a c-component.
Throughout this section, the normalized regret of a solution $A$ is defined as $(\mathbf{C}(A)-\mathbf{C}^*)/\mathbf{C}^*$, where $\mathbf{C}^*$ denotes the cost of the optimal minimum-cost solution.
Second, we evaluate our exact approach, Algorithm \ref{alg: ultimate} in terms of the number of hedges it requires to discover for recovering the optimal solution, against the number of total existing hedges formed for $Q[S]$ in $\mathcal{G}$.
This indeed translates to the number of iterations of Algorithm \ref{alg: ultimate}.
Finally, we compare the two algorithms proposed for minimum-cost intervention design when $\mathcal{G}_{[S]}$ comprises multiple c-components, namely Algorithms \ref{alg: general heuristic} and \ref{alg:mcfp}, in terms of runtime.

\subsection{Heuristic and Approximation Algorithms}
For evaluation, we generated causal graphs using the Erdos-Renyi generative model \citep{erdHos1960evolution} as follows.
For a given number of vertices $\vert V\vert =n$, we fixed a causal order over the vertices.
Then, directed edges were sampled with probability $p=0.35$ and bidirected edges were sampled with probability $q=0.25$ between the vertices, mutually independently.
The set $S$ was selected randomly among the last 5\% of the vertices in the causal order such that $\mathcal{G}_{[S]}$ is a c-component.
Intervention costs of vertices were chosen independently at random from $\{1,2,3,4\}$.
% See Appendix \ref{apdx: empirical} for further details of the evaluation setup.

For various $n$, we sampled different causal graphs and different subsets $S$ using the above procedure and ran our algorithms\footnote{\url{https://github.com/SinaAkbarii/min_cost_intervention/tree/main}} on them to find the minimum-cost intervention set for identifying $Q[S]$.  
Our performance measures are runtime and normalized regret.
The results are depicted in Figure \ref{fig:exp}. Each curve and its confidence interval is obtained by averaging over $40$ trials. 
%Figure \ref{fig:exp runtime} demonstrates the runtime of our proposed algorithms in logarithmic scale to visualize the contrast.
%Figure \ref{fig:exp regret} shows the regret of the heuristic algorithms, i.e., the normalized difference between the cost of intervening on their output and the optimal intervention set.
As illustrated in Figure \ref{fig:exp}, our proposed heuristic algorithms achieve negligible regret in most of the cases while their runtime are considerably faster than the exact algorithm, i.e., Algorithm \ref{alg: ultimate}.
It is noteworthy that the regret is not necessarily a monotone function of $n$, and it depends on the structure of the causal graph and intervention costs.
For further evaluations of our algorithms (e.g., their sensitivities with respect to $p$ and $q$), see Appendix \ref{apdx: empirical}.

\subsection{Exact Algorithm}
\begin{figure}
    \centering
        \includegraphics[width=0.6\textwidth]{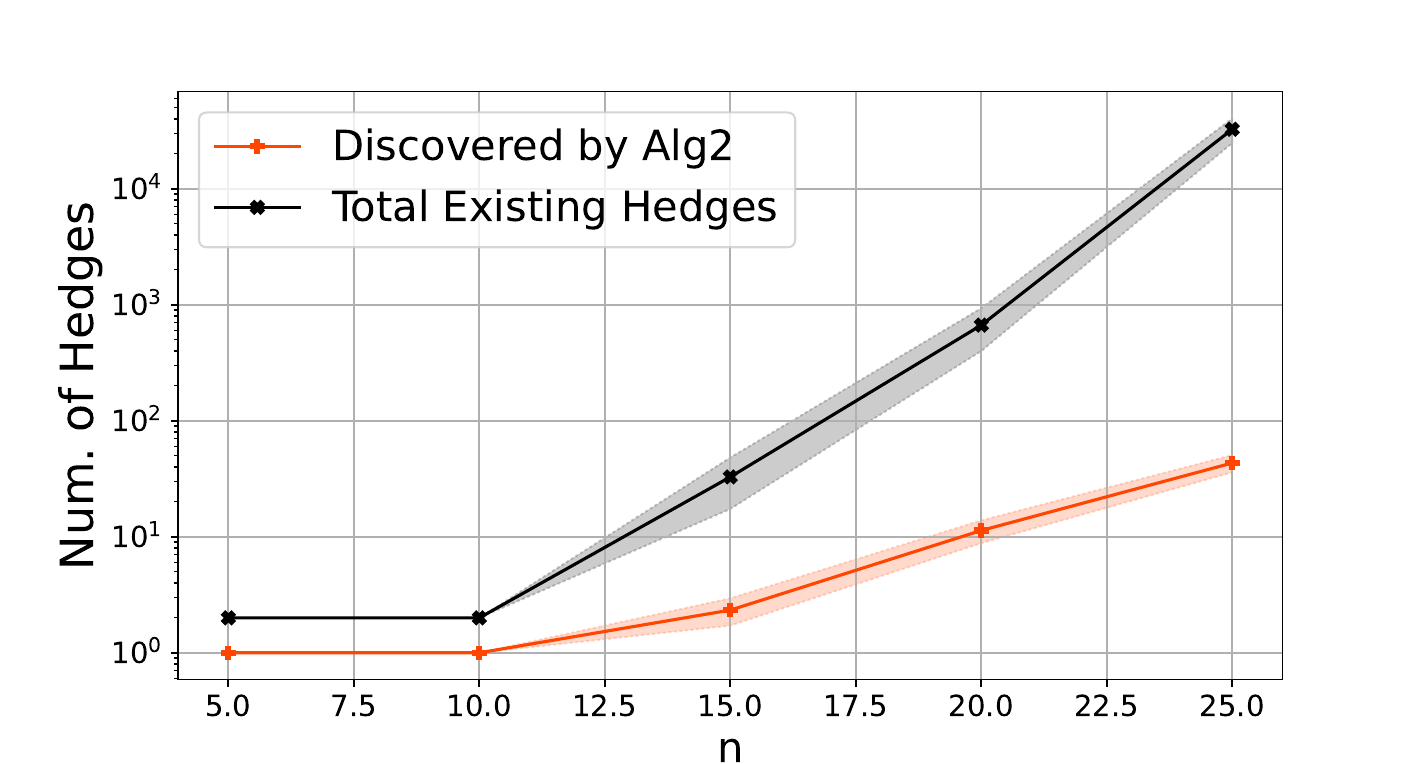}
        \caption{Number of the total hedges formed for $Q[S]$ vs the number of hedges Algorithm \ref{alg: ultimate} discovers until finding the optimal solution.}
    \label{fig:numhedge}
\end{figure}

Recall that Algorithm \ref{alg: ultimate} discovers hedges formed for $Q[S]$ iteratively, one hedge per iteration.
This process continues until \emph{enough} hedges are discovered.
That is, instead of enumerating all of the hedges, Algorithm \ref{alg: ultimate} enumerates only a portion of them, which suffices to find the optimal solution.
Figure \ref{fig:numhedge} demonstrates the number of hedges formed for $Q[S]$ in random graphs of different sizes generated with parameters $p=0.35$ and $q=0.25$ (same setting as above), as opposed to the number of hedges that Algorithm \ref{alg: ultimate} discovered before finding the optimal minimum-cost intervention solution in logarithmic scale.
The number of hedges formed for $Q[S]$ was counted after removing $\PaC{S}$.

\begin{figure}[ht]
    \centering
        \includegraphics[width=0.55\textwidth]{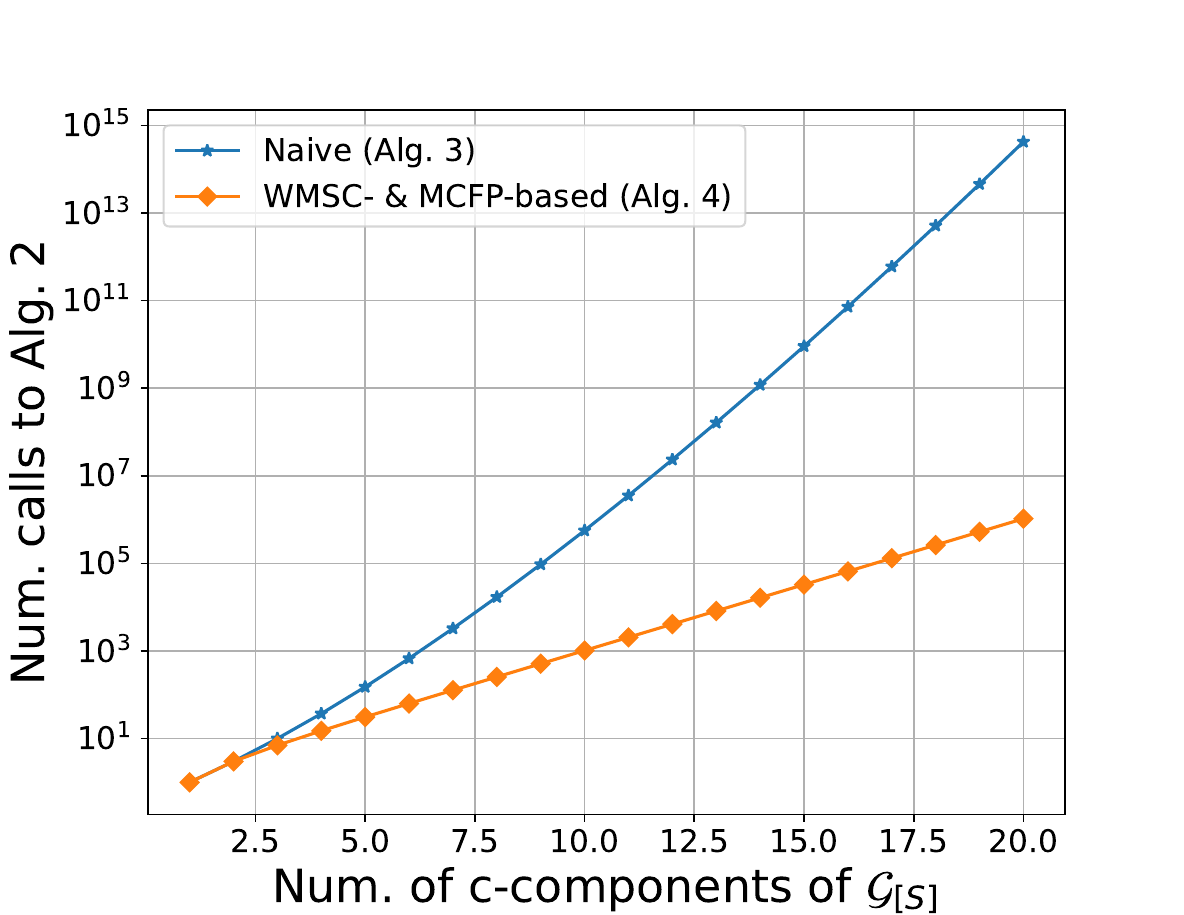}
        \caption{Number of calls to Algorithm \ref{alg: ultimate} as subroutine, vs the number of c-components of $\mathcal{G}_{[S]}$.}
    \label{fig:generalcompare}
\end{figure}
\subsection{General Algorithms}
We discussed two exact algorithms for solving the minimum-cost intervention design problem, namely Algorithm \ref{alg: general heuristic} and the WMSC-based algorithm suggested by Lemma \ref{lem:mcfpform}.
We also proposed an approximation version of the latter as Algorithm \ref{alg:mcfp}.
The time-consuming complex computations in these algorithms boil down to the use of Algorithm \ref{alg: ultimate} as a sub-routine, which is an exponential-time algorithm called exponentially many times.
Herein, we compare their number of calls to Algorithm \ref{alg: ultimate} as a measure of time complexity (line 9 of Algorithm \ref{alg: general heuristic} and line 8 of Algorithm \ref{alg:mcfp}, where the latter shares the same number of calls with the WMSC-based algorithm).
Figure \ref{fig:generalcompare} illustrates the number of calls to Algorithm \ref{alg: ultimate} versus the number of maximal c-components in $\mathcal{G}_{[S]}$ in logarithmic scale.
As can be seen in this figure, the number of calls to Algorithm \ref{alg: ultimate} becomes considerably different as the number of c-components grows.

\section{Concluding Remarks}
We discussed the problem of designing the minimum-cost intervention for causal effect identification in this paper.
We established the NP-completeness of this problem by relating it to two well-known NP-complete problems: minimum vertex cover and minimum hitting set.
When the target query consisted of a single c-component, we proposed a formulation of the minimum-cost intervention problem based on minimum hitting set.
We utilized this formulation to devise an algorithm to solve the minimum-cost intervention problem exactly.
Highlighting the computational complexity of the problem, we also proposed several heuristic algorithms to design an intervention in polynomial time.
For a general query, we reduced the problem to several instances of the single-c-component case.
We proposed an algorithm based on minimum set cover to solve the minimum-cost intervention problem, and a similar algorithm based on minimum-cost flow to approximate it within a factor of the number of c-components.
\section*{Acknowledgements}
This research was in part supported by the Swiss National Science Foundation under NCCR Automation, grant agreement 51NF40\textunderscore180545 and Swiss SNF project 200021\textunderscore204355 /1.
\bibliography{mincost}
% \bibliographystyle{natbib}
%%%%%%%%%%%%%%%%%%%%%%%%%%%%%%%%%%%%%%%%%%%%%%%%%%%%%%%%%%%%%%%%%%%%%%%%%%%%%%%
%%%%%%%%%%%%%%%%%%%%%%%%%%%%%%%%%%%%%%%%%%%%%%%%%%%%%%%%%%%%%%%%%%%%%%%%%%%%%%%
\clearpage

\onecolumn
\appendix

This appendix is organized as follows.
\begin{itemize}
    \item In Section \ref{apdx: prelim}, we discuss our definition of a hedge formed for a causal query, and show it is equivalent to the original definition used in the literature.

    \item In Section \ref{apdx:pearl}, we recite the three rules of Pearl's do calculus for the sake of completeness.
    
    \item In Section \ref{apdx: proofs}, we provide the proofs for all of our results stated in the main text and the appendix.
    It is noteworthy that the results stated later in the appendix are also proved in this section.
    
    \item In Section \ref{apdx: special case}, we discuss various cases where the minimum-cost intervention problem can be solved more efficiently (i.e., we provide polynomial time algorithms) under assumptions on the structure of the causal graph, or the cost function.
    
    \item In Section \ref{apdx: heuristic}, we provide further details of our proposed heuristic algorithms.
    
    \item Section \ref{apdx: hit set} includes further details on the approximation algorithm used to solve the hitting set algorithm, as well as a slight modification to Algorithm \ref{alg: ultimate} to reduce the number of calls to solve the hitting set problem.
    
    \item Section \ref{apdx: empirical} provides further details of the experimental setup of this paper, along with further evaluations of the proposed algorithms in this work.
    
\end{itemize}

\section{Definition of Hedge}\label{apdx: prelim}
We used a definition of hedge (Definition \ref{def: hedge}) throughout the paper which was slightly different from the original definition in \citep{shpitser2006identification}.
We provide a formal proof that these two definitions are equivalent.
Following Pearl's notation, for two sets of variables $X$ and $Y$, the graph $\mathcal{G}_{\overline{X}\underline{Y}}$ is defined as the edge subgraph of $\mathcal{G}$, where the edges going into $X$ and the edges going out of $Y$ are deleted.
The definitions of root set, c-component, c-forest and hedge are all adopted from \citep{shpitser2006identification}.
\begin{definition}[C-component]
    Let $\mathcal{G}$ be a semi-Markovian graph.
    $\mathcal{G}$ is a c-component (confounded-component) if a subset of its bidirected edges form a spanning tree over all vertices of $\mathcal{G}$.
\end{definition}
\begin{remark}
    If $\mathcal{G}$ is not a c-component, it can be uniquely partitioned into maximal c-components \citep{tian2002testable}.
\end{remark}
\begin{definition}[Root set]
    We say $R$ is a root set in $\mathcal{G}$ if for every $r\in R$, the set of descendants of $r$ in $\mathcal{G}$ is empty.
\end{definition}
\begin{definition}[C-forest]
    Let $\mathcal{G}$ be a semi-Markovian graph with the maximal root set $R$.
    $\mathcal{G}$ is a $R$-rooted c-forest if $\mathcal{G}$ is a c-component and all the observed variables have at most one child.
\end{definition}
\begin{definition}[Hedge]\label{def: hedge original}
    Let $X,Y$ be two set of vertices in the semi-Markovian graph $\mathcal{G}$.
    Also let $F,F'$ be two $R$-rooted c-forests such that $F'\subset F$, $F\cap X\neq\emptyset$, $F'\cap X=\emptyset$ and $R$ is a subset of ancestors of $Y$ in $\mathcal{G}_{\overline{X}}$.
    Then $F,F'$ form a hedge for $P(Y\vert do(X))$ in $\mathcal{G}$.
\end{definition}
\begin{theorem}[\citealp{shpitser2006identification}]\label{thm: hedge criterion}
If there exists a hedge formed for $P(Y\vert do(X))$ in $\mathcal{G}$, then $P(Y\vert do(X))$ is not identifiable in $\mathcal{G}$.
\end{theorem}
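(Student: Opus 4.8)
The plan is to establish non-identifiability directly from the definition by exhibiting two models $M_1, M_2$ compatible with $\mathcal{G}$ that agree on the observational distribution $P(V)$ yet disagree on $P(Y\vert do(X))$. Since the query depends on $\mathcal{G}$ only through the ancestral and c-component structure of the hedge, I would first reduce to the induced subgraph $\mathcal{G}_{[F]}$: set every variable outside $F$ to a fixed constant (a deterministic, parentless function) so that it contributes nothing to any distribution, concentrating the entire construction on the hedge. Because $R$ is a subset of the ancestors of $Y$ in $\mathcal{G}_{\overline{X}}$, it suffices to make the post-intervention law of $R$ differ between the two models; a difference in $Q[R]$ then propagates to a difference in $P(Y\vert do(X))$.

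For the construction itself, I would let every observable variable in $F$ be binary and associate to each bidirected edge an independent latent fair coin shared by its two endpoints. Exploiting that $F$ is a c-forest — a c-component whose bidirected edges contain a spanning tree, and in which every observable has at most one child — I would define the structural functions so that each variable is a parity (XOR) of a prescribed set of its incoming latent and directed inputs. The two models $M_1$ and $M_2$ differ only in the functions assigned to the variables of $F\setminus F'$: one model lets these variables respond to their directed parents, the other effectively ignores or complements that dependence. The subforest $F'$, which is itself $R$-rooted and contains no variable of $X$, receives identical functions in both models.

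The argument then splits into two halves: (i) $P_{M_1}(V)=P_{M_2}(V)$, and (ii) $P_{M_1}(Y\vert do(X))\neq P_{M_2}(Y\vert do(X))$. For (i), I would show that in both models the observational law over $F$ is uniform over the configurations consistent with the global parity constraints imposed by the latent spanning tree; the local perturbation on $F\setminus F'$ merely relabels these configurations and hence leaves the joint law invariant. For (ii), I would use that $do(X)$ fixes the variables in $X\cap F\neq\emptyset$ to constants, severing their incoming edges. Since $F'\cap X=\emptyset$, the forest $F'$ and its root set $R$ survive the intervention intact, whereas the parity contributions routed through $F\setminus F'$ are cut differently in the two models; computing the resulting law of $R$ exhibits the disagreement (for instance, one model yields a uniform bit at a root of $R$ while the other forces a constant).

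The main obstacle I anticipate is reconciling these two requirements simultaneously: the perturbation distinguishing $M_1$ from $M_2$ must be completely invisible observationally yet become visible once $X$ is held fixed. This is precisely where the hedge hypotheses do the work — the nesting $F'\subsetneq F$ together with $F'\cap X=\emptyset\neq F\cap X$ guarantees that the difference between the models lives entirely on $F\setminus F'$, which is summed out observationally through the shared confounding but is exposed interventionally because cutting the edges into $X$ breaks the parity chain asymmetrically. Verifying that the chosen parity functions genuinely yield identical observational laws while producing distinct interventional laws on $R$ is the delicate bookkeeping, and I would organize it by inducting along a topological order of the forest, propagating outward from the roots $R$.
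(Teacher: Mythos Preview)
The paper does not prove this theorem; it is stated with a citation to \cite{shpitser2006identification} and used as a black box throughout. Your proposal is essentially the original Shpitser--Pearl construction: two binary-valued models built on the hedge using parity (XOR) structural functions over latent fair coins on the bidirected spanning tree, differing only on $F\setminus F'$, so that the observational law is uniform in both models while the interventional law on the roots $R$ (hence on $Y$) differs once $do(X)$ severs the incoming edges into $F\cap X$. That is the right idea and the right mechanism.

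One point to tighten: you propose to ``set every variable outside $F$ to a fixed constant (a deterministic, parentless function).'' Doing so would make the models incompatible with $\mathcal{G}$, since those variables may have parents in $\mathcal{G}$. The clean way to handle this is the one the paper alludes to in Remark~\ref{rem: hedge edge subgraph}: first show non-identifiability in the edge subgraph $\mathcal{G}_{[F]}$, and then invoke the monotonicity fact that non-identifiability in an edge subgraph implies non-identifiability in any edge supergraph (equivalently, extend each model to all of $V$ by letting variables outside $F$ depend trivially on their $\mathcal{G}$-parents, e.g.\ via constant functions that nonetheless take those parents as formal arguments). With that adjustment your outline matches the standard proof.
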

\begin{remark}\label{rem: hedge edge subgraph}
    As mentioned in Theorem 6 of \citep{shpitser2006identification}, if an edge subgraph of $\mathcal{G}$ contains a hedge formed for $P(Y\vert do(X))$, then $P(Y\vert do(X))$ is not identifiable in $\mathcal{G}$.
    In other words, if $P(Y\vert do(X))$ is not identifiable in $\mathcal{G}$, it is not identifiable in any edge super-graph of $\mathcal{G}$ either.
\end{remark}
For the purposes of this paper where we are predominantly considering interventional distributions of the form $Q[S]=P(S\vert do(V\setminus S))$, %where $\mathcal{G}_{[S]}$ is a c-component, 
we adapt the definition of hedge and the hedge criterion (Theorem \ref{thm: hedge criterion}) as follows.
Let $S$ be a subset of the vertices of $\mathcal{G}$ such that $\mathcal{G}_{[S]}$ is a c-component.
First note that if $F,F'$ form a hedge for $P(Y\vert do(X))$, then $(F\cup Y), (F'\cup Y)$ clearly form a hedge for $P(Y\vert do(X))$ by definition.
Further, if the two $R$-rooted c-forests $F,F'$ form a hedge for $Q[S]$, the set $R$ must be $S$ itself, as $S$ has no other ancestors in $\mathcal{G}_{\overline{V\setminus S}}$ that can be a member of the root set.
Consequently, $F'=R=S$, and $F$ is a subset of $\mathcal{G}$ containing $S$.
Also taking Remark \ref{rem: hedge edge subgraph} into consideration, a hedge formed for $Q[S]$ in $\mathcal{G}$ can be thought of as the following structure, which is the definition used throughout this paper.
\defhedge*
Definitions \ref{def: hedge original} and \ref{def: hedge} coincide when $\mathcal{G}_{[S]}$ is a c-component, and we used Definition \ref{def: hedge} to simplify the text.
\begin{claim}
A hedge w.r.t. Definition \ref{def: hedge} is formed for $Q[S]$ if and only if a hedge w.r.t. Definition \ref{def: hedge original} is formed for $Q[S]$.
\end{claim}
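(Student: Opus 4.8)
The plan is to prove the two implications separately, using the fact that for $Q[S]=P(S\mid do(V\setminus S))$ we have $Y=S$ and $X=V\setminus S$ in Definition \ref{def: hedge original}. With this substitution the constraint $F'\cap X=\emptyset$ forces $F'\subseteq S$, and since the common root set $R$ is contained in $F'$, we get $R\subseteq S$. Throughout I would rely on Remark \ref{rem: hedge edge subgraph} so that it is legitimate to reason about edge subgraphs of $\mathcal{G}$, and on the fact that any nonempty c-forest has at least one sink, so $R\neq\emptyset$.

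For the direction ``original $\Rightarrow$ simplified'', I would start from two $R$-rooted c-forests $F'\subsetneq F$ forming a hedge for $Q[S]$, let $W$ be the vertex set of $F$, and take the candidate simplified hedge to be $\tilde F:=W\cup S$. The three properties of Definition \ref{def: hedge} then follow: (i) $\mathcal{G}_{[\tilde F]}$ is a c-component, because $\mathcal{G}_{[W]}$ and $\mathcal{G}_{[S]}$ are both c-components (the former since the bidirected edges of $F$ already span $W$) and they overlap in the nonempty set $R$, so their bidirected spanning trees glue together; (ii) $S\subsetneq\tilde F$, since $F\cap X\neq\emptyset$ supplies a vertex of $\tilde F$ outside $S$; and (iii) $\tilde F=\textbf{Anc}_{\mathcal{G}_{[\tilde F]}}(S)$, because every vertex of $W$ is an ancestor of the sink set $R\subseteq S$ inside the forest $F$, hence inside $\mathcal{G}_{[\tilde F]}$, while ancestors computed inside an induced subgraph never escape that subgraph.

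For the direction ``simplified $\Rightarrow$ original'', I would exhibit the two c-forests explicitly, both rooted at $R=S$. Take $F'$ to be the edge subgraph on vertex set $S$ obtained by keeping a bidirected spanning tree of $\mathcal{G}_{[S]}$ and deleting all directed edges, so every vertex of $S$ is a sink and $F'$ is an $S$-rooted c-forest. Take $F$ on vertex set $\tilde F$ by extending that spanning tree to a bidirected spanning tree of $\mathcal{G}_{[\tilde F]}$ and, for each $v\in\tilde F\setminus S$, retaining exactly one outgoing edge that strictly decreases the directed distance to $S$; such an edge exists precisely because $\tilde F=\textbf{Anc}_{\mathcal{G}_{[\tilde F]}}(S)$. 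This makes every vertex have at most one child, forces each vertex of $\tilde F\setminus S$ to reach $S$, and leaves $S$ as the maximal root set, so $F$ is an $S$-rooted c-forest with $F'\subsetneq F$. It then remains to check the defining conditions of Definition \ref{def: hedge original}: $R=S\subseteq\textbf{Anc}_{\mathcal{G}_{\overline{V\setminus S}}}(S)$ (trivial), $F\cap X=\tilde F\setminus S\neq\emptyset$, and $F'\cap X=\emptyset$.

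I expect the main obstacle to be the bookkeeping in the second direction, namely verifying that the pruned directed edges really produce a c-forest whose maximal root set is exactly $S$, with no accidental extra sinks and with every non-$S$ vertex genuinely reaching $S$; the distance-decreasing choice of child is exactly what forces this. A subtler point in the first direction is ensuring $R\neq\emptyset$, which is what guarantees that $W$ and $S$ actually overlap and hence that $W\cup S$ remains a single c-component rather than splitting into two.
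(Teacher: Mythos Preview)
Your proof is correct and follows essentially the same approach as the paper: augmenting the vertex set of the original-hedge $F$ by $S$ in one direction, and taking $R=S$ with an explicit c-forest construction in the other. The paper's version is terser and leaves the c-forest extraction implicit (simply asserting that ``the pair $F,F'$ forms a hedge'' and that ``$F'=R=S$'' after the preamble replacement $(F,F')\mapsto(F\cup S,F'\cup S)$), whereas you spell out the bidirected spanning-tree extension and the distance-decreasing choice of outgoing edge; your added care is justified but does not change the underlying argument.
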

\begin{proof}
Let $F$ be a hedge formed for $Q[S]$ w.r.t. Definition \ref{def: hedge}.
Taking $F'=S$, $X=V\setminus S$, $Y=S$, and $R=S$, the pair $F,F'$ forms a hedge for $Q[S]=P_X(Y)$ w.r.t. Definition \ref{def: hedge original}.
Conversely, if the pair $F,F'$ is a hedge by definition \ref{def: hedge original}, as argued above, $F'=R=S$.
In this case, by definition of R-rooted c-forest, $F$ is the set of ancestors of $S$ in $\mathcal{G}_{[F]}$, and $\mathcal{G}_{[F]}$ is a c-component, which means that $F$ forms a hedge for $Q[S]$ w.r.t. Defintion \ref{def: hedge}.
\end{proof}
We also make use of the following theorem along our proofs.
\begin{theorem}[\citealp{tian2002testable}]\label{thm: tian}
Let $\mathcal{G}$ be a semi-Markovian graph, and let $H$ be a subset of the observable vertices.
Let $H_1,...,H_k$ denote the maximal c-components of $\mathcal{G}_{[H]}$.
Then $Q[H]$ is identifiable in $\mathcal{G}$, if and only if $Q[H_1],...,Q[H_k]$ are identifiable in $\mathcal{G}$.
\end{theorem}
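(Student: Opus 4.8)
The plan is to prove both directions of the equivalence through the \emph{c-component factorization} of $Q[H]$, a structural identity that holds in \emph{every} model compatible with $\mathcal{G}$. The central fact I would establish first is the factorization
\begin{equation}\label{eq:q-factor}
    Q[H] = \prod_{i=1}^k Q[H_i],
\end{equation}
where $Q[H_i] = P(H_i \mid do(V \setminus H_i))$. Granting \eqref{eq:q-factor}, the reverse implication is immediate: since \eqref{eq:q-factor} holds in any two compatible models $M_1, M_2$ with $P_{M_1}(V) = P_{M_2}(V)$, identifiability of every $Q[H_i]$ forces $Q_{M_1}[H] = \prod_i Q_{M_1}[H_i] = \prod_i Q_{M_2}[H_i] = Q_{M_2}[H]$, so $Q[H]$ is identifiable. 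The forward implication instead requires showing that each individual factor $Q[H_i]$ can be recovered from the joint $Q[H]$ by a fixed, model-independent operation; this is the step I expect to be the crux.

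To prove \eqref{eq:q-factor}, I would pass to the post-intervention model $M_{V \setminus H}$ obtained by replacing the mechanisms of the variables in $V \setminus H$ with constants. The $do$ operation severs each intervened variable from its latent confounders, so the only bidirected edges acting among the free variables $H$ are exactly the bidirected edges of $\mathcal{G}_{[H]}$; consequently the exogenous variables influencing $H$ partition according to the c-components $H_1, \ldots, H_k$ (any latent touching two free variables would already place them in a common component). Writing $Q[H]$ as the marginalization of $\prod_{v \in H} P(v \mid \Pa{v}, \mathbf{pa}^U(v))$ over the exogenous variables, and using their mutual independence together with this partition, the sum separates into a product of per-component contributions, each of which is precisely $Q[H_i]$.

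For the forward direction I would fix a topological order $v_1 < \cdots < v_n$ of $H$ consistent with $\mathcal{G}$ and, treating $Q[H]$ as a distribution over $H$, form the conditionals $P(v_j \mid v^{(j-1)})$ with $v^{(j-1)} = (v_1, \ldots, v_{j-1})$ by marginalization and conditioning. The key lemma to prove is that
\[
    Q[H_i] = \prod_{\,j:\, v_j \in H_i} P(v_j \mid v^{(j-1)}),
\]
exhibiting each factor as an explicit, model-independent function of $Q[H]$. The main obstacle lies exactly here: one must show that the factorization \eqref{eq:q-factor} endows $Q[H]$ with an ordered Markov property, namely that for $v_j \in H_i$ the conditional $P(v_j \mid v^{(j-1)})$ drawn from $Q[H]$ coincides with the contribution of $v_j$ to $Q[H_i]$ alone, with the dependence on predecessors from other components dropping out. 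This demands carefully reorganizing the product \eqref{eq:q-factor} under the chosen order and verifying the relevant conditional independences (the d-separation statements guaranteeing that, given all earlier variables, $v_j$ is governed only by its own component's factor).

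Once the recovery lemma is in hand, the forward direction closes by the same substitution argument used above: if $Q[H]$ is identifiable, then every conditional $P(v_j \mid v^{(j-1)})$ is determined by $P(V)$, hence so is each product over $v_j \in H_i$, i.e.\ each $Q[H_i]$ is identifiable. Combining this with the reverse direction yields the stated equivalence.
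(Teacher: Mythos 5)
The paper offers no proof of this statement at all---it is imported verbatim from \cite{tian2002testable}---and your proposal reconstructs exactly the Tian--Pearl argument: the c-component factorization $Q[H]=\prod_{i=1}^{k}Q[H_i]$, the immediate reverse direction by substitution, and the recovery of each $Q[H_i]$ from $Q[H]$ as $\prod_{j:\,v_j\in H_i} Q[H^{(j)}]/Q[H^{(j-1)}]$ along a topological order, where $Q[H^{(j)}]=\sum_{H\setminus H^{(j)}}Q[H]$. The ``ordered Markov'' lemma you defer is precisely Tian and Pearl's recovery lemma, and the clean way to discharge it is their ancestral-set marginalization lemma ($\sum_{H\setminus H^{(j)}}Q[H]=Q[H^{(j)}]$, valid because $H^{(j)}$ is an ancestral set in $\mathcal{G}_{[H]}$) combined with the factorization applied recursively to each $\mathcal{G}_{[H^{(j)}]}$, rather than bare d-separation bookkeeping---so your plan is sound and matches the cited source.
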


\section{Pearl's do Calculus}\label{apdx:pearl}
For the sake of completeness, we recite the three rules of Pearl's do calculus \citep{pearl2012calculus}.\\\\
\emph{Rule 1}) Insertion or deletion of observations. If $(Y \perp\!\!\!\perp Z|X,W)_{\mathcal{G}_{\overline{X}}}$, then
\[P(Y|do(X), Z, W) = P (Y|do(X), W).
\]
\emph{Rule 2}) Action and observation exchange. If $(Y \perp\!\!\!\perp Z|X,W)_{\mathcal{G}_{\overline{X}\underline{Z}}}$, then 
\[P(Y|do(X,Z), W) = P (Y|do(X), Z,W).\]
\emph{Rule 3}) Insertion or deletion of actions. If $(Y \perp\!\!\!\perp Z|X,W)_{\mathcal{G}_{\overline{X}\overline{Z(W)}}}$, where $Z(W)$ are vertices in $Z$ that have no descendants in $W$ in $\mathcal{G}_{\overline{X}}$, then
\[P(Y|do(X, Z), W) = P (Y|do(X), W).\]
\section{Proofs} \label{apdx: proofs}
The proofs of the results are presented in the order that they appear in the main text.
The proofs of the statements that appear later in the appendix are also included in this section.
\subsection{Results Appearing in the Main Text}
\lemintersect*
\begin{proof}
Suppose $B_i = V\setminus A_i$ for $1\leq i\leq m$, and define $B_\cap=\cap_{i=1}^kB_i$.
Also suppose that $Q[S]$ is identifiable from the collection $\{Q[B_1],...,Q[B_m]\}$.
We claim that $Q[S]$ is also identifiable from $Q[B_\cap]$.
Suppose not.
Then there exists two structural equation models $M_1$ and $M_2$ on the set of variables $V$ such that $Q^{M_1}[B_\cap]=Q^{M_2}[B_\cap]$, but $Q^{M_1}[S]\neq Q^{M_2}[S]$, where $Q^{M_j}$ is the interventional distribution under model $M_j$.

Now we build two models $M_1'$ and $M_2'$ as follows.
For any $x\in B_\cap$, $x$ has the same equation in $M_j'$ as in $M_j$.
Both in $M_1'$ and $M_2'$, any $x\notin B_\cap$, $x$ is uniformly distributed in $\mathcal{D}(x)$, where $\mathcal{D}(x)$ is the domain of the variable $x$.
Since every $x\notin B_\cap$ is drawn independently of every other variable, for $1\leq i\leq m$ we can write:
\begin{equation}\label{eq: proof lem1-1}\begin{split}
    Q^{M_1'}[B_i] &= Q^{M_1'}[B_\cap]\cdot\Pi_{x\in B_i\setminus B_\cap} Q^{M_1'}[x]\\
    &= Q^{M_1}[B_\cap]\cdot\Pi_{x\in B_i\setminus B_\cap} \frac{1}{\vert\mathcal{D}(x)\vert}\\
    &= Q^{M_2}[B_\cap]\cdot\Pi_{x\in B_i\setminus B_\cap} \frac{1}{\vert\mathcal{D}(x)\vert}\\
    &= Q^{M_2'}[B_\cap]\cdot\Pi_{x\in B_i\setminus B_\cap} Q^{M_2'}[x]\\
    &= Q^{M_2'}[B_i],
\end{split}
\end{equation}
where the second equality follows from the fact that every variable in $B_\cap$ has the same model in $M_1$ and $M_1'$, the third equality follows from the assumption that $Q^{M_1}[B_\cap]=Q^{M_2}[B_\cap]$, and the fourth one is because every variable in $B_\cap$ has the same model in $M_2$ and $M_2'$.

With the same line of reasoning as above,
\begin{equation}\label{eq: proof lem1-2}Q^{M_1'}[S]=Q^{M_1}[S]\neq Q^{M_2}[S]=Q^{M_2'}[S].
\end{equation}
Equations \eqref{eq: proof lem1-1} and \eqref{eq: proof lem1-2} illustrate that $Q[S]$ is unidentifiable from the collection $\{Q[B_1],...,Q[B_m]\}$, which contradicts the assumption of the lemma.
Therefore, $Q[S]$ must be identifiable from $Q[B_\cap]$, or equivalently, $\mathbf{A}=\{A_\cup\}\in\mathbf{ID}_\mathcal{G}(S,V\setminus S)$.
\end{proof}

\thmsingleton*
% \end{customtheorem}
\begin{proof}
Suppose without loss of generality that $A_i\cap S=\emptyset$ for $1 \leq i\leq k$, and $A_i\cap S\neq\emptyset$ for $k<i\leq m$, for some integer $k$.
We first claim that the following collection is in $\mathbf{ID}_\mathcal{G}(S,V\setminus S)$.
\[\mathbf{\hat{A}}=\{A_1,...,A_k\}\in\mathbf{ID}_\mathcal{G}(S,V\setminus S).\]
Suppose this claim does not hold.
Then from theorem 1 of \citep{kivva2022revisiting}, $Q[S]$ is not identifiable from any of $Q[A_i]$s for $1\leq i\leq k$.
Since for $i>k$, we have $S\not\subseteq A_i$, applying Theorem 1 of \citep{kivva2022revisiting} again, $\mathbf{A}\notin\mathbf{ID}_\mathcal{G}(S,V\setminus S)$, which is a contradiction.

Now defining $A_\cup = (\cup_{i=1}^kA_i)$, from Lemma \ref{lem:intersect}, we know that 
\[\mathbf{\Tilde{A}}=\{A_\cup\}\in\mathbf{ID}_\mathcal{G}(S,V\setminus S).\]
It suffices to show that $\mathbf{C}(\mathbf{\Tilde{A}})\leq \mathbf{C}(\mathbf{A})$, which follows from an identical reasoning to Remark \ref{rem: cost-singleton}:
    \[\begin{split}
    \mathbf{C}(\mathbf{\Tilde{A}}) &=
    \mathbf{C}(A_\cup) =
    \sum_{a\in A_\cup}\mathbf{C}(a)
    \leq\sum_{i=1}^k\sum_{a\in A_i}\mathbf{C}(a)\\&
    =\sum_{i=1}^k\mathbf{C}(A_i)
    =\mathbf{C}(\mathbf{\hat{A}})
    \leq
    \mathbf{C}(\mathbf{A}).
    \end{split}
    \]
\end{proof}

\leminsideS*
\begin{proof}
    Suppose $A\cap S$ is nonempty, and $s\in A\cap S$ is an arbitrary variable.
    Define two models $M_1$ and $M_2$ as follows.
    Every variable in $M_1$ is uniformly drawn from $\{0,1\}$.
    Also, every variable in $M_2$ except $s$ is uniformly drawn from $\{0,1\}$, and $s$ is drawn from $\{0,1\}$ with probabilities $0.4$ and $0.6$, respectively.
    Let $Q^{M_i}$ denote the interventional distributions under model $M_i$, for $i\in\{1,2\}$.
    Clearly, $Q^{M_1}[V\setminus A]=Q^{M_2}[V\setminus A]=\frac{1}{2^{\vert V\vert-\vert A\vert}}$, whereas $Q^{M_1}[S_{s=0}]=\frac{1}{2^{\vert S\vert}}\neq \frac{1}{0.4*2^{\vert S\vert-1}}=Q^{M_2}[S_{s=0}]$, which shows that $Q[S]$ is not identifiable in $\mathcal{G}_{[V\setminus A]}$.
    
\end{proof} 

\thmNPcomplete*
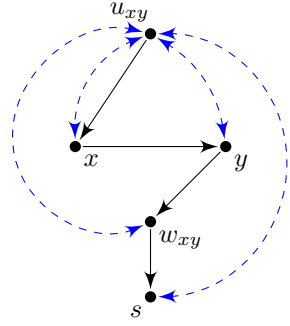
\begin{figure}[t]
    \centering
	\tikzstyle{block} = [circle, inner sep=1.5pt, fill=black]
	\tikzstyle{input} = [coordinate]
	\tikzstyle{output} = [coordinate]
    \begin{tikzpicture}
        \tikzset{edge/.style = {->,> = latex'},line width=1.4pt}
        % vertices
        \node[block](s) at (1.6,0.5) {};
        \node[block](x) at (0.6,2.5) {};
        \node[block](y) at (2.6,2.5) {};
        \node[block](u) at (1.6,4) {};
        \node[block](v) at (1.6,1.5) {};
        
        % dummy nodes
        \node[] (dm1) [below right=0.1cm and 0.6cm of y]{};
        \node[] (dm2) [below left=-0.1cm and 0.6cm of x]{};
        
        % names
        \node[] ()[below left=-0.1cm and -0.1cm of s]{$s$};
        \node[] ()[below right=-0.1cm and -0.1cm of x]{$x$};
        \node[] ()[below right=-0.1cm and -0.1cm of y]{$y$};
        \node[] ()[above left=-0.05cm and -0.2cm of u]{$u_{xy}$};
        \node[] ()[below right=-0.1cm and -0.1cm of v]{$w_{xy}$};
        
        % directed edges
        \draw[edge] (u) to (x);
        \draw[edge] (x) to (y);
        \draw[edge] (y) to (v);
        \draw[edge] (v) to (s);
        
        % bidirected edges
        \draw[edge, color=blue, dashed, style={<->}, bend left=45] (u) to (dm1) to (s);
        \draw[edge, color=blue, dashed, style={<->}, bend right=35] (u) to (x);
        \draw[edge, color=blue, dashed, style={<->}, bend left=25] (u) to (y);
        \draw[edge, color=blue, dashed, style={<->}, bend right=55] (u) to (dm2) to (v);
    \end{tikzpicture}
    \caption{Reduction from the weighted vertex cover problem to the minimum-cost intervention problem.
    Each edge $\{x,y\}$ in the undirected graph $\mathcal{H}$ is represented by a hedge structure in the semi-Markovian graph $\mathcal{G}$. %The undirected blue hashed edges represent the existence of a common unobserved confounder.
    }
    \label{fig: np reduction}
\end{figure}
\begin{proof}
Suppose an undirected graph $\mathcal{H}=(V_\mathcal{H},E_\mathcal{H})$ along with a weight function $\omega:V_\mathcal{H}\to\mathbbm{R}^{\geq0}$ is given.
We construct a semi-Markovian graph $\mathcal{G}$ along with a cost function $\mathbf{C}$ and prove that the min vertex cover problem in $\mathcal{H}$ is equivalent to the min-cost intervention problem in $\mathcal{G}$ for some set $S$.
The construction is as follows.

We first begin with defining the vertex set of $\mathcal{G}$.
For any vertex $x\in V_\mathcal{H}$, add a vertex $x$ in $\mathcal{G}$.
For any edge $\{x,y\}\in E_\mathcal{H}$, add two vertices $u_{xy}$ and $w_{xy}$ in $\mathcal{G}$.
We will denote the set of all such vertices by $U$ and $W$, respectively.
Finally, add a vertex $s$.
The number of vertices of $\mathcal{G}$ (denoted by $V=V_\mathcal{H}\cup U\cup W\cup\{s\}$) is therefore equal to $(\vert V_\mathcal{H}\vert+2\vert E_\mathcal{H}\vert+1)$.
Assume a random ordering $\sigma$ on the vertices of $\mathcal{H}$.
Now take an edge $\{x,y\}\in E_\mathcal{H}$, and assume without loss of generality that $x$ precedes $y$ in $\sigma$.
Add the directed edges $u_{xy}\to x, x\to y, y\to w_{xy},$ and $w_{xy}\to s$.
Also draw a bidirected edge between $u_{xy}$ and all of the vertices $\{x,y,w_{xy},s\}$.
Graph $\mathcal{G}$ has therefore $4\vert E_\mathcal{H}\vert$ directed and $4\vert E_\mathcal{H}\vert$ bidirected edges ($4$ edges for each edge in $\mathcal{H}$).
Figure \ref{fig: np reduction} demonstrates the structure corresponding to the the edge $\{x,y\}$ constructed in $\mathcal{G}$.
Finally, the cost function $\mathbf{C}$ is defined as follows.
For $x\in V_\mathcal{H}$, $\mathbf{C}(x) = \omega(x)$.
for every other vertex $y\in U\cup W\cup\{s\}$, $\mathbf{C}(y)=z$, where
\[z = \vert V_\mathcal{H}\vert\cdot\max_{x\in V_\mathcal{H}}\omega(x) + 1.\]
First note that constructing the graph $\mathcal{G}$ and the cost function $\mathbf{C}$ given $\mathcal{H}$ and $\omega$ can be done in polynomial time, as it only needs a sweep over the vertices and the edges of $\mathcal{H}$, which can be performed in time $\mathcal{O}(V_\mathcal{H}+E_\mathcal{H})$.
To complete the proof of the theorem, we will show that a subset $A\subseteq V_\mathcal{H}$ is a weighted minimal vertex cover for $\mathcal{H}$ if and only if $A$ is a minimum-cost intervention to identify $Q[\{s\}]$ in $\mathcal{G}$.
We begin with the following claims.

\emph{Claim 1:} $\{V_\mathcal{H}\}\in\mathbf{ID}_\mathcal{G}(\{s\})$.
To see this, we simply provide the identification formula.
\[\begin{split}
    Q[\{s\}] &= P(s\vert do(V_\mathcal{H}, U, W))\\
    &= P(s\vert W, do(V_\mathcal{H},U))\:\:\:\:\:\text{(do calculus rule 2)}\\
    &= P(s\vert W, do(V_\mathcal{H})).\:\:\:\:\:\:\:\:\:\:\text{(do calculus rule 3)}
\end{split}\]
As seen in the expression above, $Q[\{s\}]$ can be identified by intervening on (fixing) only the variables $V_\mathcal{H}$.

\emph{Claim 2:} if $A$ is a minimum-cost intervention to identify $Q[\{s\}]$, then $A\subseteq V_\mathcal{H}$.
First note that from claim 1, we know that the cost of the min-cost intervention is at most $\mathbf{C}(V_\mathcal{H})\leq \vert V_\mathcal{H}\vert\cdot \max_{x\in V_\mathcal{H}}\mathbf{C}(x)\leq (z-1)$.
Since the cost of every variable in $V\setminus V_\mathcal{H}$ is equal to $z$, the min-cost intervention clearly does not include any such variable.

\emph{Claim 3:} if $A$ is a min-cost intervention to identify $Q[\{s\}]$ in $\mathcal{G}$, then $A$ is a vertex cover for $\mathcal{H}$.
Take an arbitrary edge $\{x,y\}\in E_\mathcal{H}$.
To prove this claim, it suffices to show that either $x\in A$ or $y\in A$.
The structure $\mathcal{G}_{[x,y,u_{xy},w_{xy},s]}$ (as depicted in Figure \ref{fig: np reduction}) is a hedge formed for $Q[\{s\}]$ in $\mathcal{G}$.
Since $\{A\}\in\mathbf{ID}_\mathcal{G}(S,V\setminus S)$, at least one of the variables $\{x,y,u_{xy},w_{xy}\}$ is included in $A$, as otherwise the aforementioned hedge precludes the identification of $Q[\{s\}]$.
However, from claim 2 we know that $A\subseteq V_\mathcal{H}$, and therefore at least one of $x,y$ is in $A$, which completes the proof of the claim.

\emph{claim 4:} if $A$ is a vertex cover for $\mathcal{H}$, then $\{A\}\in \mathbf{ID}_\mathcal{G}(\{s\},V\setminus\{s\})$ in $\mathcal{G}$.
Again an identification formula based on the do-calculus rules can be derived.
We first begin with the third rule of do calculus to derive $Q[\{s\}] = P(s\vert do(V_\mathcal{H}, U, W)) = P(s\vert do(A, U, W)).$
This is based on the fact that $s\perp\!\!\!\perp(V\setminus A)\vert A,U,W$ in the graph where incoming edges to $U,W,V\setminus A$ are deleted.
Now similar to the arguments of claim 1,
\begin{equation}\label{eq: claim 4}
\begin{split}
    Q[\{s\}] &= P(s\vert do(V_\mathcal{H}, U, W)) = P(s\vert do(A, U, W))
    \\&=P(s\vert W,do(A, U))\:\:\:\:\:\text{(do calculus rule 2)}
    \\&=P(s\vert W,do(A)).\:\:\:\:\:\:\:\:\:\:\text{(do calculus rule 3)}
\end{split}
\end{equation}
In the last equality, we used the fact that $A$ is a vertex cover for $\mathcal{H}$, and therefore in every structure like the one shown in Figure \ref{fig: np reduction}, at least one of the vertices $x$ or $y$ is included in $A$, i.e., non of the vertices in $U$ have a direct path to $s$ in the graph where the incoming edges to $A$ are deleted.
Equation \ref{eq: claim 4} proves claim 4, as intervention on $A$ suffices to identify $Q[\{s\}]$.

Now suppose $A$ is a minimum vertex cover for $\mathcal{H}$.
From claim 4, $\{A\}\in\mathbf{ID}_\mathcal{G}(\{s\},V\setminus\{s\})$.
We claim that this intervention is a minimum-cost intervention to identify $Q[\{s\}]$.
Suppose not.
Then there exists a min-cost intervention $\hat{A}$ and $\mathbf{C}(\hat{A})<\mathbf{C}(A)$.
From claim 3, $\hat{A}$ is a vertex cover for $\mathcal{H}$.
By definition of $\mathbf{C}(\cdot)$, $\sum_{a\in\hat{A}}\omega(a)=\mathbf{C}(\hat{A})<\mathbf{C}(A)=\sum_{a\in A}\omega(a)$, which contradicts the assumption that $A$ is the min vertex cover.

Conversely, suppose $A$ is the min-cost intervention to identify $Q[\{s\}]$ in $\mathcal{G}$.
From claim 3, $A$ is also a vertex cover for $\mathcal{H}$.
We claim that this vertex cover is a minimum vertex cover.
Suppose not.
Then there exists a minimum vertex cover $\hat{A}$ for $\mathcal{H}$ and $\sum_{a\in\hat{A}}\omega(a)<\sum_{a\in A}\omega(a)$.
From claim 4, $\{\hat{A}\}\in\mathbf{ID}_\mathcal{G}(\{s\},V\setminus\{s\})$.
By definition of $\mathbf{C}(\cdot)$, $\mathbf{C}(\hat{A})=\sum_{a\in\hat{A}}\omega(a)<\sum_{a\in A}\omega(a)=\mathbf{C}(A)$, which contradicts the assumption that $A$ is the min-cost intervention.
\end{proof}

\remconstantcost*
\begin{proof}
The proof is analogous to the proof of Theorem \ref{thm:NP-hard} with slight modifications as follows.
The graph $\mathcal{G}$ is constructed exactly in the same manner, but the cost function is forced to the constant $\mathbf{C}(\cdot)=1$.
With the exact same arguments of the proof of Theorem \ref{thm:NP-hard}, $A$ is a minimum vertex cover for $\mathcal{H}$ only if it is a min-cost intervention to identify $Q[\{s\}]$ in $\mathcal{G}$.
The other direction does not necessarily hold, as claim 2 of that proof does not hold anymore.
However, we show how a min-cost intervention solution $A$ can be turned into a minimum vertex cover for $\mathcal{H}$.
Suppose $A$ is a min-cost intervention to identify $Q[\{s\}]$ in $\mathcal{G}$.
Substitute any vertex $u_{xy}\in A\cap U$ or $w_{xy}\in A\cap W$ with one of the vertices $x,y$ arbitrarily, to form the set $\hat{A}$.
First note that $\mathbf{C}(\hat{A})\leq\mathbf{C}(A)$, since the cost of all variables are the same.
Also $\hat{A}\subseteq V_\mathcal{H}$.
We claim that $\hat{A}$ is a vertex cover for $\mathcal{H}$.
Take an arbitrary edge $\{x,y\}\in E_\mathcal{H}$.
It suffices to show that at least one of $x,y$ is included in $\hat{A}$, and follows from the fact that at least one of the variables $x,y,u_{xy}, w_{xy}$ must appear in $A$, since otherwise $\{x,y,u_{xy},w_{xy},s\}$ is a hedge formed for $Q[\{s\}]$ in $G$ after intervention on $A$, which contradicts the fact that $\{A\}\in \mathbf{ID}_\mathcal{G}(\{s\},V\setminus\{s\})$.

Note that $\hat{A}$ is also a minimum vertex cover for $\mathcal{H}$, since otherwise any vertex cover with smaller weight would also be an intervention with smaller cost than $\mathcal{A}$ to identify $Q[\{s\}]$, which is a contradiction.
\end{proof}

\thmreductiontwo*
\begin{proof}
    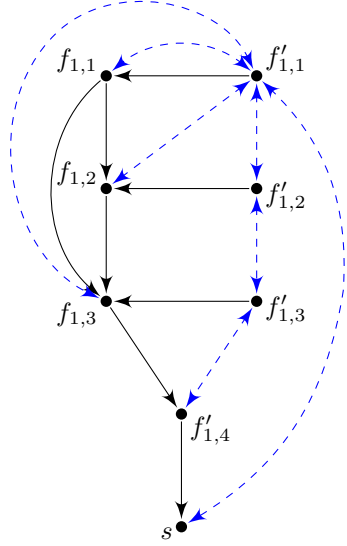
\begin{figure}[t]
        \centering
    	\tikzstyle{block} = [circle, inner sep=1.5pt, fill=black]
    	\tikzstyle{input} = [coordinate]
    	\tikzstyle{output} = [coordinate]
        \begin{tikzpicture}
            \tikzset{edge/.style = {->,> = latex'},line width=1.4pt}
            % vertices
            \node[block](f11) at (0,6) {};
            \node[block](f12) at (0,4.5) {};
            \node[block](f13) at (0,3) {};
            \node[block](f_11) at (2,6) {};
            \node[block](f_12) at (2,4.5) {};
            \node[block](f_13) at (2,3) {};
            \node[block](f_14) at (1,1.5) {};
            \node[block](s) at (1,0){};
            
            % dummy nodes
            \node[] (dm1) [above left=0.2cm and 0.6cm of f11]{};
            \node[] (dm2) [right= 0.9cm of f_13]{};
            
            % names
            \node[] ()[below left=-0.2cm and -0.1cm of s]{$s$};
            \node[] ()[below right=-0.2cm and -0.1cm of f_14]{$f'_{1,4}$};
            \node[] ()[below right=-0.3cm and -0.1cm of f_13]{$f'_{1,3}$};
            \node[] ()[below right=-0.3cm and -0.1cm of f_12]{$f'_{1,2}$};
            \node[] ()[above right=-0.2cm and -0.1cm of f_11]{$f'_{1,1}$};
            \node[] ()[above left=-0.2cm and -0.1cm of f11]{$f_{1,1}$};
            \node[] ()[above left=-0.2cm and -0.1cm of f12]{$f_{1,2}$};
            \node[] ()[below left=-0.2cm and -0.1cm of f13]{$f_{1,3}$};
            % \node[] ()[below right=-0.1cm and -0.1cm of x]{$x$};
            % \node[] ()[below right=-0.1cm and -0.1cm of y]{$y$};
            % \node[] ()[above left=-0.05cm and -0.2cm of u]{$u_{xy}$};
            % \node[] ()[below right=-0.1cm and -0.1cm of v]{$w_{xy}$};
            
            % directed edges
            \draw[edge] (f_11) to (f11);
            \draw[edge] (f_12) to (f12);
            \draw[edge] (f_13) to (f13);
            \draw[edge] (f13) to (f_14);
            \draw[edge] (f_14) to (s);
            \draw[edge] (f11) to (f12);
            \draw[edge, bend right=50] (f11) to (f13);
            \draw[edge] (f12) to (f13);
            
            % bidirected edges
            \draw[edge, color=blue, dashed, style={<->}] (f_11) to  (f_12);
            \draw[edge, color=blue, dashed, style={<->}] (f_12) to  (f_13);
            \draw[edge, color=blue, dashed, style={<->}] (f_13) to  (f_14);
            \draw[edge, color=blue, dashed, style={<->}, bend left=25] (f_11) to (dm2) to (s);
            \draw[edge, color=blue, dashed, style={<->}, bend right=40] (f_11) to  (f11);
            \draw[edge, color=blue, dashed, style={<->}] (f_11) to  (f12);
            \draw[edge, color=blue, dashed, style={<->}, bend right=55] (f_11) to (dm1) to (f13);
            % \draw[edge, color=blue, dashed, style={<->}, bend left=45] (u) to (dm1) to (s);
            % \draw[edge, color=blue, dashed, style={<->}, bend right=35] (u) to (x);
            % \draw[edge, color=blue, dashed, style={<->}, bend left=25] (u) to (y);
            % \draw[edge, color=blue, dashed, style={<->}, bend right=55] (u) to (dm2) to (v);
        \end{tikzpicture}
        \caption{Reduction from the minimum weighted hitting set problem to the minimum-cost intervention problem.
        Each set $F_i=\{f_{i,1},\dots,f_{i,m}\}$ is represented by a hedge structure in the semi-Markovian graph $\mathcal{G}$. %The undirected blue hashed edges represent the existence of a common unobserved confounder.
        }
        \label{fig:mwhsreductionproof}
    \end{figure}
    Suppose we have a universe $V=\{v_1,v_2,\dots,v_n\}$, a set of subsets of $V$ denoted as $F_1,...,F_k$, and a weight function $\omega:V\to \mathbbm{R}^{\geq0}$.
    We consider the problem of finding the subset of $V$ with the minimum aggregate weight such as $A$ such that for any $1\leq i\leq k$, $A\cap F_i\neq\emptyset$.
    We propose a polynomial time reduction from this problem to an instance of the minimum-cost intervention problem.
    To this end, we construct a semi-Markovian graph $\mathcal{G}$ as follows.

    For each member of $V$ such as $v_i$, we add a vertex representing $v_i$ in $\mathcal{G}$.
    We fix an arbitrary ordering between the members of $V$, which without loss of generality we assume is $v_1\prec v_2\prec \dots\prec v_n$.
    We draw directed edges between each pair of nodes in $\{v_1,\dots,v_n\}$ to build a complete graph over the nodes representing $V$, with respect to the ordering.
    We add an auxiliary node $s$, for which we will consider the hedges formed in $\mathcal{G}$.
    Then for each subset $F_i=\{f_{i,1}\prec ...\prec f_{i,m}\}\subseteq\{v_1\prec\dots\prec v_n\}$,
    \begin{itemize}
        \item we add $(m+1)$ new nodes to $\mathcal{G}$, denoted by $f_{i,j}'$ for $1\leq j\leq (m+1)$.
        \item We draw a directed edge from $f_{i,j}'$ to $f_{i,j}$ for $1\leq j\leq m$.
        \item We draw a directed edge from $f_{i,m}$ to $f_{i,m+1}'$, and a directed edge from $f_{i,m+1}'$ to $s$.
        \item For $1\leq j\leq m$, we draw a bidirected edge between $f_{i,j}'$ and $f_{i,j+1}'$.
        \item Finally, we draw $m$ bidirected edges from $f_{i,1}'$ to all nodes in $F_i$, as well as one bidirected edge to $s$.
    \end{itemize}
    We assign $\mathbf{C}(v)=\omega(v)$ for any $v\in V$, and $\mathbf{C}(w)=\infty$ for the rest of (auxiliary) vertices.
    An example for $F_1=\{f_{1,1}\prec f_{1,2}\prec f_{1,3}\}$ is visualized in Figure \ref{fig:mwhsreductionproof}.
    First note that building the graph $\mathcal{G}$ requires only traversing $V$ and each set $F_i$ once, and performing at most $\mathcal{O}(\vert F_i\vert)$ operations for each of the subsets.
    As a result, the reduction is polynomial time (the size of the resulting graph is $1+\vert V\vert+\sum_{i=1}^k (\vert F_i\vert+1)$.)
    It suffices now to prove that the minimum-cost intervention to identify $Q[\{s\}]$ in $\mathcal{G}$ is exactly the minimum hitting set for $\{F_1,\dots,F_k\}$.
    This is proved through the following claims.

    \textbf{Claim 1.} For each $F_i=\{f_{i,1},\dots,f_{i,m}\}$, the set $X_i=F_i\cup\{f'_{i,j}\vert 1\leq j\leq (m+1)\}\cup\{s\}$ forms a hedge for $Q[\{s\}]$.
    The reason to this is straightforward.
    Since a complete graph is formed on the vertices corresponding to $V$, every vertex in $F_i$ is a parent of $f_{i,m}$, which itself has a directed path to $s$ through $f_{i,m+1}'$.
    Every $f_{i,j}$ for $1\leq j\leq m$ is also a parent of a vertex in $F_i$.
    As a result, every vertex in $X_i$ is an ancestor of $s$ in $\mathcal{G}_{[X_i]}$.
    On the other hand, every vertex in $F_i$ along with $s$ is connected to $f_{i,1}'$ by a bidirected edge, and by construction, every vertex $f_{i,j}'$ has a bidirected path to $f_{i,1}'$, which means $\mathcal{G}_{[X_i]}$ is a c-component.

    \textbf{Claim 2.} Any finite-cost intervention set $A$ that suffices to identify $Q[S]$ is a valid hitting set for $\{F_1,\dots, F_k\}$.
    That is, if we solve the minimum-cost intervention problem for $Q[\{s\}]$, the solution will be a valid hitting set for all the sets $F_i$ for $1\leq i\leq k$.
    To see this, note that $A$ has non-empty intersection with every hedge formed for $Q[\{s\}]$ in $\mathcal{G}$ such as $X_i$.
    However, every vertex in $X_i$ except those in $F_i$ have infinite cost.
    Therefore, $A\cap F_i\neq\emptyset$ for every $1\leq i\leq m$.
    
    \textbf{Claim 3.}
    If $A$ is a valid hitting set for $F_i$s, there is no hedge formed for $Q[\{s\}]$ in $\mathcal{G}$ after intervention on $A$.
    That is, any valid hitting set for $F_i$s is an intervention set which suffices to identify $Q[\{s\}]$ in $\mathcal{G}$.
    To prove this, it suffices to show that if an arbitrary hedge $X$ is formed for $Q[\{s\}]$, then there exists an index $1\leq i\leq k$ such that no variable in $F_i$ is intervened upon.
    Since by construction the only parents of $s$ are the vertices $f'_{j,m+1}$ corresponding to some $F_j$, $X$ includes one of these vertices (a hedge cannot be formed without including a parent of $s$.)
    Suppose without loss of generality that $F_1=\{f_{1,1},\dots,f_{1,m}\}$ and that $f_{1,m+1}'\in X$.
    Since $f_{1,m+1}'$ has a bidirected edge only to $f_{1,m}'$, $f_{1,m}'\in X$.
    Applying the same argument recursively, we have that $\{f_{1,1}',\dots,f_{1,m}'\}\subseteq X$.
    However, note that each vertex $f_{1,j}'$ has only one child by construction of $\mathcal{G}$, which is $f_{1,j}$.
    As a result, $f_{1,j}$ is not intervened upon for $1\leq j\leq m$ (as otherwise the hedge $X$ would be resolved as at least one of $f_{1,j}'$s would not be an ancestor of $s$ in $\mathcal{G}_{[X]}$.)
    This completes the proof of Claim 3, as non of vertices in $F_1$ are intervened upon, that is, $A\cap F_1=\emptyset$.

    Combining the three claims above, solving the minimum hitting set for $F_i$s is equivalent to solving the minimum cost intervention set for $Q[\{s\}]$.
\end{proof}
\lemhitsetform*
\begin{proof}
First note that if $A$ is an intervention set that makes $Q[S]$ identifiable, it hits all the hedges formed for $Q[S]$ in $\mathcal{G}$, as otherwise from hedge criterion (Theorem \ref{thm: hedge criterion}) $Q[S]$ would not be identifiable.
Conversely, if $A$ hits all the hedges formed for $Q[S]$ in $\mathcal{G}$, intervening on $A$ makes $Q[S]$ identifiable.
As a result, a set $A$ is an intervention to identify $Q[S]$ in $\mathcal{G}$ if and only if it is a hitting set for $\{F_1\setminus S,...,F_m\setminus S\}$.
Since the set of interventions and the hitting sets coincide, a minimum intervention set is a minimum hitting set and vice-versa.
\end{proof}

\lemdirP*
\begin{proof}
First, from Lemma \ref{lem: inside S} we know that $S\cap A=\emptyset$.
Now define $B=S\cup(\PaC{S}\setminus A)$.
If $B\setminus S\neq \emptyset$, then by definition, $B$ is a hedge formed for $Q[S]$ in $\mathcal{G}_{[V\setminus A]}$, and from Theorem \ref{thm: hedge criterion}, $Q[S]$ is not identifiable in $\mathcal{G}_{[V\setminus A]}$, which is a contradiction.
As a result, $B\setminus S=\emptyset$, or equivalently, $\PaC{S}\setminus A = \emptyset$.
\end{proof}

\lemhedgehull*
\begin{proof}
% The result follows from Lemma \ref{lem: hitsetform}.
By definition of hedge hull, all the hedges formed for $Q[S]$ are a subset of $Hhull(S)$.
From Lemma \ref{lem: hitsetform}, $A^*\subseteq Hhull(S)$.
The result now follows from Lemma \ref{lem: inside S}, which states that $A^*\cap S = \emptyset$.
\end{proof}

\lemhedgehullcorrect*
\begin{proof}
First note that $F_2$ is always a subset of $F$ throughout the algorithm.
Therefore, every time that $F_2\neq F$, at least one vertex is excluded from $F$ to form $F_2$.
Hence, the while loop is performed at most $\vert V\vert$ times, and the algorithm terminates.
Inside every loop, two depth-first searches are executed, one to find the connected component of $S$ in the edge-induced subgraph of $\mathcal{G}_{[F]}$ over its bidirected edges, and the other to find the ancestors of $S$ in $\mathcal{G}_{[F]}$.
DFS is quadratic-time in the worst case, i.e., each iteration runs in time $2\vert F\vert^2\leq2\vert V\vert^2$ in the worst case.
Therefore, the algorithm ends in time $\mathcal{O}(\vert V\vert^3)$.

Let $\Tilde{F}$ be the output of Algorithm \ref{alg: hedge hull}.
Since in the last iteration $\Tilde{F}=F_1=F_2$, $\Tilde{F}$ is the set of ancestors of $S$ in $\mathcal{G}_{[\Tilde{F}]}$, and also $\mathcal{G}_{[\Tilde{F}]}$ is a c-component.
By definition, $\Tilde{F}$ is a hedge formed for $Q[S]$ in $\mathcal{G}$, and therefore $\Tilde{F}\subseteq Hhull(S,\mathcal{G})$.
Now suppose $F'$ is a hedge formed for $Q[S]$ in $\mathcal{G}$.
It suffices to show that $F'\subseteq \Tilde{F}$.
At the beginning of the algorithm, $F=V$, that is, $F'$ is included in $F$.
At each iteration, since every vertex in $F'$ has a bidirected path to $S$ through only the vertices in $F'$, it also has a bidirected path to $S$ in every subgraph of $\mathcal{G}$ which includes $F'$.
As a result, when constructing the connected component of $S$ in line 3 of Algorithm \ref{alg: hedge hull}, $F'\subseteq F_1$.
Further, by definition of hedge, every vertex in $F'$ has a directed path to $S$ that goes through only vertices of $F'$.
By the same argument, every vertex in $F'$ is included in $F_2$ in line 4.
Therefore, $F'\subseteq \Tilde{F}$.
\end{proof}

\lemhhullpac*
\begin{proof}
First note that the set of hedges formed for $Q[S]$ in $\mathcal{G}$ can be partitioned into hedges that intersect with $\PaC{S}$, and the hedges that do not intersect with $\PaC{S}$, denoted by $\mathbf{F_1}$ and $\mathbf{F_2}$, respectively.
The set of hedges formed for $Q[S]$ in $\mathcal{G}_{[H']}$ is then $\mathbf{F_2}$.
Using Lemma \ref{lem: hitsetform}, the lemma is equivalent to the claim that $A^*$ is a minimum hitting set for the hedges $\mathbf{F_1}\cup\mathbf{F_2}$ if and only if $A^*\setminus\PaC{S}$ is a minimum hitting set for hedges $\mathbf{F_2}$.

Suppose $A^*$ is a min-cost intervention to identify $Q[S]$ in $\mathcal{G}$.
From Lemma \ref{lem: hitsetform}, $A^*$ hits all the hedges formed for $Q[S]$ in $\mathcal{G}_{H'}$, i.e., $\mathbf{F_2}$.
However, since none of these hedges intersect with $\PaC{S}$, $A^*\setminus\PaC{S}$ hits all of these hedges.
We claim that $A^*\setminus\PaC{S}$ is the minimum hitting set for the hedges $\mathbf{F_2}$.
Suppose there exists another set $\Tilde{A}$ such that $\Tilde{A}$ hits all the hedges $\mathbf{F_2}$, and $\mathbf{C}(\Tilde{A})<\mathbf{C}(A^*\setminus\PaC{S})$.
Since all the hedges $\mathbf{F_1}$ intersect with $\PaC{S}$, $\Tilde{A}\cup\PaC{S}$ hits all the hedges formed for $Q[S]$ in $\mathcal{G}$, and
\[\begin{split}
    \mathbf{C}(\Tilde{A}\cup\PaC{S})&\leq\mathbf{C}(\Tilde{A})+\mathbf{C}(\PaC{S})
    \\&<\mathbf{C}(A^*\setminus\PaC{S})+\mathbf{C}(\PaC{S})\\&=\mathbf{C}(A^*),
\end{split}
\]
which contradicts the fact that $A^*$ is the minimum-cost intervention to identify $Q[S]$ in $\mathcal{G}$.

Conversely, let $A^*\setminus\PaC{S}$ be a minimum hitting set for hedges $\mathbf{F_2}$.
If $A^*$ is not the min-cost intervention to identify $Q[S]$ in $\mathcal{G}$, then there exists $\Tilde{A}$ such that $\mathbf{C}(\Tilde{A})<\mathbf{C}(A^*)$ and $\Tilde{A}$ hits the hedges $\mathbf{F_1}\cup\mathbf{F_2}$.
From Lemma \ref{lem: dirP}, $\PaC{S}\subseteq\Tilde{A}$.
Since hedges $\mathbf{F_2}$ do not intersect with $\PaC{S}$, $\Tilde{A}\setminus \PaC{S}$ hits all the hedges $\mathbf{F_2}$, and
\[\begin{split}
    \mathbf{C}(\Tilde{A}\setminus\PaC{S})&=\mathbf{C}(\Tilde{A})-\mathbf{C}(\PaC{S})
    \\&<\mathbf{C}(A^*)-\mathbf{C}(\PaC{S})\\&=\mathbf{C}(A^*\setminus\PaC{S}),
\end{split}
\]
which contradicts the fact that $A^*\setminus\PaC{S}$ is the minimum-cost intervention to identify $Q[S]$ in $\mathcal{G}_{[H']}$.
\end{proof}

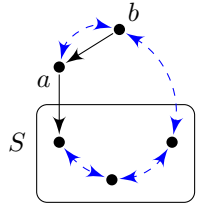
\begin{figure}[t]
    \centering
	\tikzstyle{block} = [circle, inner sep=1.5pt, fill=black]
	\tikzstyle{input} = [coordinate]
	\tikzstyle{output} = [coordinate]
    \begin{tikzpicture}
        \tikzset{edge/.style = {->,> = latex'},line width=1.4pt}
        % vertices
        \node[block](s1) at (0.7,.7){};
        \node[block](s2) at (2.2,.7){};
        \node[block](s3) at (1.4,0.2){};
        \node[block](a) at (0.7,1.7){};
        \node[block](b) at (1.5,2.2){};
        \draw[rounded corners] ([xshift=-.3cm, yshift=-0.8cm]s1) rectangle (2.5cm,1.2cm);
        
        % % names
        \node[] ()[below left=-0.1cm and -0.1cm of a]{$a$};
        \node[] ()[above right=-0.1cm and -0.1cm of b]{$b$};
        \node[] ()[left=000.2cm of s1]{$S$};
        
        % % directed edges
        \draw[edge] (a) to (s1);
        \draw[edge] (b) to (a);
        
        % % bidirected edges
        \draw[edge, color=blue, dashed, style={<->}, bend left=45] (a) to (b);
        \draw[edge, color=blue, dashed, style={<->}, bend left=35] (b) to (s2);
        \draw[edge, color=blue, dashed, style={<->}, bend right=15] (s1) to (s3);
        \draw[edge, color=blue, dashed, style={<->}, bend left=25] (s2) to (s3);
    \end{tikzpicture}
    \caption{Hedges of size 2 must be in the form depicted above. Exactly one vertex $a$ is a member of $\Pa{S}\setminus \BiD{S}$, and the other vertex $b$ is a member of $\BiD{S}\setminus\Pa{S}$.
    }
    \label{fig: hedge size2}
\end{figure}

\lemalgultimcorrect*
\begin{proof}
First let $\mathcal{G}_{[S]}$ be a c-component.
Note that every time that the set $A$ is constructed in line 14 and intervening on $A\cup\PaC{S}$ does not suffice to identify $Q[S]$ in $\mathcal{G}$, the algorithm continues on the subgraph of $\mathcal{G}$ over $Hhull(S,\mathcal{G}_{V\setminus(A\cup\PaC{S})})$.
As a result, the newly discovered hedges do not intersect with $A$, i.e., Algorithm \ref{alg: ultimate} never discovers redundant hedges.
Since every hedge is a subset of the graph and the number of such subsets is finite, the algorithm halts within finite time.
At each iteration (while loop of lines 7-13), a new hedge is added to the set of hedges formed for $Q[S]$ and adds it to the set $\mathbf{F}$.
From the minimum hitting set formulation, it is clear that any min-cost intervention for identifying $Q[S]$ in $\mathcal{G}$ must intersect with all the hedges in $\mathbf{F}$.
As a result, the output of Algorithm \ref{alg: ultimate} ($A$) is always a subset of a min-cost intervention.
Further, by construction $A\in\mathbf{ID_1}(S)$.
As a result, $A$ is a min-cost intervention for identifying $Q[S]$ in $\mathcal{G}$.

Now suppose $\mathcal{G}_{[S]}$ is not a c-component.
$\mathcal{G}_{[S]}$ can be uniquely partitioned into its maximal c-components $\mathcal{G}_{[S_1]},...,.\mathcal{G}_{[S_k]}$.
The arguments above hold for any of the maximal c-components $\mathcal{G}_{[S_i]}$.
The result follows from the fact that $Q[S]$ is identifiable in $\mathcal{G}$ if and only if all of its maximal c-components are identifiable in $\mathcal{G}$.
\end{proof}

\prpanytime*
\begin{proof}
    Let $A$ be the minimum hitting set computed in line 14 of Algorithm \ref{alg: ultimate} in $i$-th iteration.
    Let $\mathbf{F}$ and $\mathbf{F'}$ denote the set of all hedges formed for $Q[S]$ in $\mathcal{G}_{[V\setminus\PaC{S}]}$, and the set of hedges discovered up to $i$-th iteration.
    Clearly, $A$ \emph{hits} all the hedges in $\mathbf{F'}$.
    Moreover, by definition of hedge hull, any hedge formed for $Q[S]$ which is hit neither by $A$ nor by $\PaC{S}$, is a subset of $H$.
    As a result, any hedge formed for $Q[S]$ in $\mathcal{G}$ is hit by at least one of the variables in $A$, $\PaC{S}$, or $H$.
    Therefore, $A\cup\PaC{S}\cup H\in\mathbf{ID_1}(S)$.

    For the second part of the claim, note that the first inequality is trivial since $A^*$ is the optimal set in $\mathbf{ID_1}(S)$.
    As for the second inequality, recall that $A^*$ is the minimum hitting set for $\mathbf{F}$ along with $\PaC{S}$, whereas $A$ is the minimum hitting set for $\mathbf{F'}$.
    Since $\mathbf{F'}\subseteq\mathbf{F}$, any hitting set for $\mathbf{F'}$ is also a hitting set for $\mathbf{F'}$.
    This is to say, $\mathbf{A^*}\setminus\PaC{S}$ is a hitting set for $\mathbf{F'}$, whereas $A$ is the minimum one.
    As a result, $\mathbf{C}(A)\leq\mathbf{C(A^*\setminus\PaC{S})}$.
    From linearity of the cost function,
    \[
        \begin{split}
            \mathbf{C}(A\cup\PaC{S}\cup H)&=\mathbf{C}(A)+\mathbf{C}(\PaC{S})+\mathbf{C}(H)\\&\leq \mathbf{C}(A^*\setminus\PaC{S}))+\mathbf{C}(\PaC{S})+\mathbf{C}(H)=\mathbf{C}(A^*)+\mathbf{C}(H)
        \end{split}
    \]
\end{proof}

\lemalgheurOnecorrect*
\begin{proof}
\emph{Correctness.} Let the output of the algorithm be $A$.
We will utilize the minimum hitting set formulation to show the correctness of the algorithm.
Let $F$ be a hedge formed for $Q[S]$ in $\mathcal{G}$.
It suffices to show that $F\cap A\neq\emptyset$.
If $F\cap \PaC{S}\neq\emptyset$, then the claim holds since $\PaC{S}\subseteq A$.
Otherwise, $F$ is a hedge formed for $Q[S]$ in $\mathcal{G}_{[V\setminus \PaC{S}]}$, i.e., $F\subseteq H$, where $H$ is given by Equation \eqref{eq:h-def}.
Now let $a$ be an arbitrary vertex in $ (F\setminus S)\cap\Pa{S}$.
Such a vertex exists by definition of hedge.
Further, $\mathcal{G}_{[F]}$ is a c-component, i.e., there exists a path from $a$ to $S$ through bidirected edges in $F$.
As a result, in the undirected graph $\mathcal{H}$ built in heuristic Algorithm 1, there exists a path from $x$ to $y$ that passes only through vertices in $F$.
Any solution to minimum vertex cut for $x-y$ must include at least one vertex of $F$.
Therefore, $F\cap A\neq\emptyset$.

\emph{Runtime.} Heuristic Algorithm 1 begins with constructing the set $\PaC{S}$, and the set $H$ given by Equation \eqref{eq:h-def}, which are performed in time $\mathcal{O}(\vert V\vert)$, and $\mathcal{O}(\vert V\vert^3)$ in the worst case.
Constructing the graph $\mathcal{H}$ requires iterating over the bidirected edges of $\mathcal{G}_{[H]}$, which can be done in time $\mathcal{O}(\vert H\vert^2)$ in the worst case.
The reduction from minimum vertex cut to minimum edge cut discussed in Appendix \ref{apdx: heuristic} is linear-time.
The final step of the algorithm is to solve a minimum edge cut, which can be done in time $\mathcal{O}(\vert H\vert^3)$ using the push-relabel algorithm to solve the equivalent maximum flow problem \citep{goldberg1988new}.
Noting that $\vert H\vert\leq\vert V\vert$, the runtime of the algorithm is $\mathcal{O}(\vert V\vert^3)$.

\end{proof}

\lemalgheurTwocorrect*
\begin{proof}
\emph{Correctness.} Let the output of the algorithm be $A$.
We will utilize the hitting set formulation to show the correctness of the algorithm.
Let $F$ be a hedge formed for $Q[S]$ in $\mathcal{G}$.
It suffices to show that $F\cap A\neq\emptyset$.
If $F\cap \PaC{S}\neq\emptyset$, then the claim holds since $\PaC{S}\subseteq A$.
Otherwise, $F$ is a hedge formed for $Q[S]$ in $\mathcal{G}_{[V\setminus \PaC{S}]}$, i.e., $F\subseteq H$, where $H$ is given by Equation \eqref{eq:h-def}.
Now let $b$ be an arbitrary vertex in $ (F\setminus S)\cap\BiD{S}$.
Such a vertex exists by definition of hedge.
Further, $F$ are the ancestors of $S$ in $\mathcal{G}_{[F]}$, i.e., there exists a directed path from $b$ to $S$ through directed edges in $\mathcal{G}_{[F]}$.
As a result, in the directed graph $\mathcal{J}$ built in heuristic Algorithm 2, there exists a path from $x$ to $y$ that passes only through vertices in $F$.
Any solution to minimum vertex cut for $x-y$ must include at least one vertex of $F$.
Therefore, $F\cap A\neq\emptyset$.

\emph{Runtime.} Heuristic Algorithm 2 begins with constructing the set $\PaC{S}$, and the set $H$ given by Equation \eqref{eq:h-def}, which are performed in time $\mathcal{O}(\vert V\vert)$, and $\mathcal{O}(\vert V\vert^3)$ in the worst case.
Constructing the graph $\mathcal{J}$ requires iterating over the directed edges of $\mathcal{G}_{[H]}$, which can be done in time $\mathcal{O}(\vert H\vert^2)$ in the worst case.
The reduction from minimum vertex cut to minimum edge cut discussed in Appendix \ref{apdx: heuristic} is linear-time.
The final step of the algorithm is to solve a minimum edge cut, which can be done in time $\mathcal{O}(\vert H\vert^3)$ using the push-relabel algorithm to solve the equivalent maximum flow problem \citep{goldberg1988new}.
Noting that $\vert H\vert\leq\vert V\vert$, the runtime of the algorithm is $\mathcal{O}(\vert V\vert^3)$.
\end{proof}

\prpheurs*
\begin{proof}
    Since the costs of intervention are assumed to be equal for every vertex, both sides of the inequalities can be normalized with the constant cost.
    Therefore, without loss of generality, suppose that the cost of intervention on each vertex is $1$.
    That is, the cost of intervention on a set $X$ is $\vert X\vert$.
    Let $N$ be the set of vertices of $\mathcal{G}$ that precede $s$ in the causal order.
    Each of these vertices s in $\BiD{S}$ with probability $q$, and is in $\Pa{S}$ with probability $p$.
    Since these two events are independent, any such vertex appears in $\PaC{S}$ with probability $pq$.
    Let $\mathbbm{1}_{\PaC{S}}(v)$ be the indicator function corresponding to the membership of a vertex $v\in N$ in $\PaC{S}$.
    Then,
    \[
    \mathbbm{E}[\vert\PaC{S}\vert]=\mathbbm{E}[\sum_{v\in N}\mathbbm{1}_{\PaC{S}}(v)]=\sum_{v\in N}Pr(v\in\PaC{S}) = \sum_{v\in V}pq=\vert N\vert pq.
    \]
    From Lemma \ref{lem: dirP}, $\PaC{S}\subseteq A^*$, where $A^*$ is the optimal solution to minimum-cost intervention (and $c^*=\mathbf{C}(A^*))$.
    Therefore, $\vert\PaC{S}\vert\leq\vert A^*\vert$, and consequently, 
    \begin{equation}\label{eq:prp281}
        \mathbbm{E}[c^*]=\mathbbm{E}[\mathbf{C}(A^*)]=\mathbbm{E}[\vert A^*\vert]\geq \mathbbm{E}[\vert\PaC{S}\vert]=\vert N\vert pq.
    \end{equation}
    Now consider the minimum-weight vertex cut problem that Heuristic algorithm 1 solves.
    The vertex $x$ is only connected to $\Pa{S}\cap H$, and therefore $\Pa{S}\cap H$ is a trivial cut.
    As a result, 
    \begin{equation}\label{eq:prp282}
        \mathbbm{E}[c_1]\leq \mathbbm{E}[\vert\Pa{S}\cap H\vert]\leq\mathbbm{E}[\vert\Pa{S}\vert]=\vert N\vert p.
    \end{equation}
    By a similar argument, we can write
    \begin{equation}\label{eq:prp283}
        \mathbbm{E}[c_2]\leq \mathbbm{E}[\vert\BiD{S}\cap H\vert]\leq\mathbbm{E}[\vert\BiD{S}\cap N\vert]=\vert N\vert q.
    \end{equation}
    The result follows from Equations \eqref{eq:prp281}, \eqref{eq:prp282} and \eqref{eq:prp283}.
\end{proof}

\corheurmin*
\begin{proof}
    Since both heuristic algorithms run in time $\mathcal{O}(\vert V\vert^3)$, running both of them also requires time $\mathcal{O}(\vert V\vert^3)$.
    Also, since we are choosing the better solution among the two, if the costs of the solutions returned by Heuristic Alg. 1 and Heuristic Alg. 2 are denoted by $c_1$ and $c_2$, respectively, then $c=\min\{c_1,c_2\}$.
    Therefore, $\mathbbm{E}[c]\leq\min\{\mathbbm{E}[c_1],\mathbbm{E}[c_2]\}$.
    Plugging in the inequalities from Proposition \ref{prp:heurs},
    \[
    \mathbbm{E}[c]\leq\min\{\mathbbm{E}[c_2],\mathbbm{E}[c_1]\}\leq\min\{p^{-1}\mathbbm{E}[c^*],q^{-1}\mathbbm{E}[c^*]\}=\min\{p^{-1},q^{-1}\}\mathbbm{E}[c^*],
    \]
    which proves the second inequality.
    The first inequality is trivial since $c^*$ is the optimal cost.
\end{proof}

\thmsingletongeneral*
\begin{proof}
First note that if there exists $1\leq i\leq m$ such that $A_i\cap S\neq\emptyset$, then $\mathbf{C}(\mathbf{A})=\infty$.
In this case, $\Tilde{A}=V\setminus S$ satisfies the desired property.
Otherwise, we can assume that $S\cap(\cup_{i=1}^mA_i)=\emptyset$.
We claim that $\Tilde{A}=A_\cup=\cup_{i=1}^mA_i$ is the desired intervention set.
To prove this claim, first note that
\[\begin{split}
    \mathbf{C}(\{\Tilde{A}\}) &=
    \mathbf{C}(A_\cup) =
    \sum_{a\in A_\cup}\mathbf{C}(a)
    \leq\sum_{i=1}^m\sum_{a\in A_i}\mathbf{C}(a)\\&
    =\sum_{i=1}^m\mathbf{C}(A_i)
    =
    \mathbf{C}(\mathbf{A}).
\end{split}
\]
Therefore, it suffices to show that $\{\Tilde{A}\}\in\mathbf{ID}_\mathcal{G}(S,V\setminus S)$.
Let $S_1, ..., S_k$ denote the maximal c-components of $\mathcal{G}_{[S]}$.
From lemma 2 of \citep{tian2002testable} (restated here as Theorem \ref{thm: tian}), $Q[S]$ is identifiable in $\mathcal{G}_{V\setminus\Tilde{A}}$, if and only if $Q[S_1],...,Q[S_k]$ are identifiable in $\mathcal{G}_{V\setminus\Tilde{A}}$.
Therefore, it suffices to show that $\{\Tilde{A}\}\in\mathbf{ID}_\mathcal{G}(S_i,V\setminus S_i)$, for all $1\leq i\leq k$.
This result follows from Lemma \ref{lem:intersect}, since $\mathcal{G}_{[S_i]}$ is a c-component, $\{A_1,...,A_m\}\in\mathbf{ID}_\mathcal{G}(S_i,V\setminus S_i)$ (from Theorem \ref{thm: tian}), and $A_\cup\cap S_i=\emptyset$.
\end{proof}

\lempartition*
\begin{proof}
    By definition of $S^{(j)}$s, for each $1\leq j\leq t$, we have $A_j\in\mathbf{ID_1}(\underline{S}^{(j)})$.
    Suppose that the claim does not hold.
    That is, there exists an index $j$ such that $A_j \notin\argmin_{A\in \mathbf{ID_1}(\underline{S}^{(j)})}\mathbf{C}(A)$.
    From a new set family (collection of subsets) $\mathbf{A}$ by replacing the set $A_j$ in $\mathbf{A}_{S,V\setminus S}^*$ with $A_j'$, where
    \[A_j' \in\argmin_{A\in \mathbf{ID_1}(\underline{S}^{(j)})}\mathbf{C}(A).\]
    Clearly, the c-components in $S^{(j)}$ are identified from intervention on $A_j'$, and the rest of the c-components (in all other partitions) are identified from the sets in $\mathbf{A}\setminus\{A_j\}$.
    As a result,
    \begin{equation}\label{eq:lem311}
        \mathbf{A}\in\mathbf{ID}_{\mathcal{G}}(S,V\setminus S).
    \end{equation}
    Since $A_j,A_j'\in\mathbf{ID_1}(S)$ and $A_j'$ has the optimal cost, $\mathbf{C}(A_j')<\mathbf{C}(A_j)$ (equality cannot happen since $A_j$ is assumed not to be optimal.)
    Since the rest of the sets in $\mathbf{A}$ and $\mathbf{A}^*_{S,V\setminus S}$ are the same, and since the costs are additive,
    \begin{equation}\label{eq:lem312}
        \mathbf{C}(\mathbf{A})=\mathbf{C}(\mathbf{A}\setminus\{A_j'\}) + \mathbf{C}(A_j') <\mathbf{C}(\mathbf{A}^*_{S,V\setminus S}\setminus \{A_j\}) + \mathbf{C}(A_j) = \mathbf{C}(\mathbf{A}^*_{S,V\setminus S}).
    \end{equation}
    Equations \eqref{eq:lem311} and \eqref{eq:lem312} contradict the optimality of $\mathbf{A}^*_{S,V\setminus S}$, which completes the proof.
\end{proof}

\thmalggeneral*
\begin{proof}
Let $\mathbf{A}$ be the output of Algorithm \ref{alg: general heuristic}.
We first claim that $\mathbf{A}\in\mathbf{ID}_\mathcal{G}(S,V\setminus S)$.
For any maximal c-component of $\mathcal{G}_{[S]}$ such as $\mathcal{G}_{[S_i]}$, there exists at least one set $A_j\in\mathbf{A}$ such that $A_j\in\mathbf{ID_1}(S)$.
Therefore, $\mathbf{A}\in\mathbf{ID}_\mathcal{G}(S_i,V\setminus S_i)$.
Since $Q[S_i]$ is identifiable from interventions on $\mathbf{A}$ for any c-component of $\mathcal{G}_{[S]}$, by the result form \citep{tian2002testable}, $Q[S]$ is also identifiable, i.e., $\mathbf{A}\in\mathbf{ID}_\mathcal{G}(S,V\setminus S)$.

Now suppose $\mathbf{A}^*_S=\{A^*_1,...,A^*_t\}$ is a min-cost intervention collection to identify $Q[S]$ in $\mathcal{G}$.
It suffices to show that $\mathbf{C}(\mathbf{A})\leq\mathbf{C}(\mathbf{A}^*_S)$.
Consider an arbitrary partitioning of the c-components of $S$ to $t$ parts such as $S^{(1)},...S^{(t)}$, such that $A^*_i\in\mathbf{ID_1}(\underline{S}^{(i)})$ for $1\leq i\leq t$.
Note that such a partitioning is possible due to Observation \ref{obs:1}.
Algorithm \ref{alg: general heuristic} considers this partitioning in one of its iterations, and due to optimality of Algorithm \ref{alg: ultimate} (see Lemma \ref{lem: ultimate alg}), it constructs an intervention collection $\Tilde{\mathbf{A}}=\{\Tilde{A}_1,...,\Tilde{A}_t\}$ where $\Tilde{A}_i$ is the optimal intervention set in $\mathbf{ID_1}(\underline{S}^{(i)})$ for $1\leq i\leq t$.
As a result,
\[\mathbf{C}(\Tilde{\mathbf{A}})=\sum_{i=1}^t\mathbf{C}(\Tilde{A}_i)\leq\sum_{i=1}^t\mathbf{C}(A^*_i)=\mathbf{C}(\mathbf{A}^*_S).\]
Since Algorithm \ref{alg: general heuristic} outputs the minimum-cost collection among all constructed intervention collections, clearly $\mathbf{C}(\mathbf{A})\leq\mathbf{C}(\Tilde{\mathbf{A}})\leq\mathbf{C}(\mathbf{A}^*_S)$.
\end{proof}

\lemmcfpform*
\begin{proof}
    It suffices to show that $\mathbf{A}\in\mathbf{ID}_\mathcal{G}(S,V\setminus S)$, and $\mathbf{C}(\mathbf{A})\leq \mathbf{C}(\mathbf{A}_{S,V\setminus S}^*)$, where $\mathbf{A}_{S,V\setminus S}^*$ is any solution to Equation \eqref{eq:optimization}.
    Since $\mathbf{B}$ is a set cover for $\Gamma$, for every $1\leq j\leq k$, there exists $\Gamma_i\in\mathbf{B}$ such that $S_j\in\Gamma_i$, or equivalently, $A_i^*\in\mathbf{A}$ such that $A_i^*\in\mathbf{ID_1}(S_j)$.
    As a result, for any $1\leq j\leq k$, $Q[S_j]$ is identifiable from $\mathbf{A}$, and therefore $\mathbf{A}\in\mathbf{ID}_\mathcal{G}(S,V\setminus S)$ \citep{tian2002testable}.
    To show optimality, let $\mathbf{A}_{S,V\setminus S}^*$ be an arbitrary solution to Equation \eqref{eq:optimization}.
    From Lemma \ref{lem:partition}, there exists a partitioning of the c-components such as $S^{(1)},\dots,S^{(t)}$ such that $\mathbf{A}_{S,V\setminus S}^*=\{A_1,\dots,A_t\}$, where 
    \[A_j \in\argmin_{A\in \mathbf{ID_1}(\underline{S}^{(j)})}\mathbf{C}(A), \quad\quad\forall 1\leq j\leq t.\]
    The partitions $S^{(j)}$ are disjoint subsets of the set of c-components $\Gamma=\{S_1,\dots,S_k\}$.
    Without loss of generality, assume $\Gamma_j=S^{(j)}$ for $1\leq j\leq t$.
    Then the set $\{\Gamma_1,\dots,\Gamma_t\}$ is a set cover for $\Gamma$.
    Since $\mathbf{B}$ is the minimum-weight set cover for $\Gamma$, 
    \[\mathbf{C}(\mathbf{A})=\sum_{A\in\mathbf{B}}\omega(A)\leq\sum_{1\leq i\leq t}\omega(\Gamma_i)=\sum_{1\leq i\leq t}\mathbf{C}(A_i)=\mathbf{C}(\mathbf{A}_{S,V\setminus S}^*).\]
    
\end{proof}

\prpmcfpalg*
\begin{proof}
    Let $\{S_1,\dots,S_k\}$ be subsets of $S$ such that $\mathcal{G}_{[S_i]}$ for $1\leq i\leq k$ are the maximal c-components in $\mathcal{G}_{[S]}$.
    The proof consists of two parts.
    First, we show that the set family $\mathbf{A}$ returned by Algorithm \ref{alg:mcfp} is a member of $\mathbf{ID}_\mathcal{G}(S,V\setminus S)$ (identifiability).
    Then, we prove that it has a cost at most $k$ times the cost of the minimum cost among all members of $\mathbf{ID}_\mathcal{G}(S,V\setminus S)$ (approximation ratio).

    \textbf{Proof of identifiability.}
        Let $f^*(\cdot,\cdot)$ be the optimal integral solution to the MCFP in line 11 of Algorithm \ref{alg:mcfp}.
        The vertex $z$ receives a flow of $d^*=k$, and it has exactly $k$ incoming edges with capacity $1$ in the flow network $\mathcal{H}$.
        As a result, $f^*(S_j,z)=1$ for every $1\leq j\leq k$.
        By conservation of flow, and from integrality of the solution, we know that for every $j$, there exists exactly one vertex $\Gamma_i$ such that $f^*(\Gamma_i,S_j)=1$.
        Therefore, for each $1\leq j\leq k$, there exists exactly one set $\Gamma_i$ in the set family $\Tilde{\mathbfcal{B}}$ constructed in line 12 such that $f^*(\Gamma_i,S_j)=1>0$.
        But by construction of the flow network, an edge exists between $\Gamma_i$ and $S_j$ if and only if $S_j\in\Gamma_i$.
        As a result, for each $1\leq j\leq k$, there exists exactly one set $\Gamma_i$ in $\Tilde{\mathbfcal{B}}$ such that $S_j\in\Gamma_i$, and $f^*(\Gamma_i,S_j)>0$.
        Therefore, by replacing $\Gamma_i$s with $\Gamma_{i'}$s in line 13, we know that $S_j$ will appear in the corresponding $\Gamma_{i'}$,.
        That is, for each $1\leq j\leq k$, there exists exactly one set $\Gamma_{i'}\in\mathbfcal{B}^*$ such that $S_j\in\Gamma_{i'}$.
        By definition of $A_i^*$s, the intervention set $A_{i'}$ corresponding to $\Gamma_{i'}$ (which is included in $\mathbf{A}^*$) suffices to identify $Q[S_j]$ in $\mathcal{G}$.
        The identifiability result then follows from the fact that $Q[S]$ is identifiable in $\mathcal{G}$ from $\mathbf{A}^*$ if and only if $Q[S_j]$ is identifiable from $\mathbf{A}$ for any $1\leq j\leq k$ \citep{tian2002testable}.
        
    \textbf{Proof of approximation ratio.}
        We first claim that the cost of the optimal flow $f^*()$ is at least as large as a fraction $\frac{1}{k}$ of the cost of intervention collection $\mathbf{A}$ returned by the algorithm.
        To see this, note that the cost of a flow is implied by the amount of flow going through edges between the source $w$ and the vertices $\Gamma_i$, since all other edges have $0$ flow cost.
        As a result, the cost of the optimal flow $f^*$ is
        \begin{equation}\label{eq:flowproof}\sum_{\Gamma_i\in\Tilde{\mathbfcal{B}}}f^*(w,\Gamma_i)\frac{\mathbf{C}(A_i^*)}{\vert\Gamma_i\vert}=\sum_{\Gamma_i\in\Tilde{\mathbfcal{B}}}\frac{\vert\Gamma_{i'}\vert}{\vert\Gamma_i\vert}\mathbf{C}(A_i^*).
        \end{equation}
        We explain why the equality above holds.
        Replacing $\Gamma_i$ with the corresponding $\Gamma_{i'}$ is in a way such that the number of elements in $\Gamma_{i'}$ is equal to the flow into $\Gamma_i$ (which is equal to out-flow of $\Gamma_i$, which in turn is the number of $S_j$s such that $f^*(\Gamma_i,S_j)>0$.)
        As a result, $f^*(w,\Gamma_i)=\vert\Gamma_{i'}\vert$.
        Now note that $\vert\Gamma_i\vert\leq k$, since $\Gamma_i\subseteq\Gamma$.
        Moreover, $\vert\Gamma_{i'}\geq 1$, since $\Gamma_i$ has positive flow and therefore at least one $S_j$ receives positive flow from $\Gamma_i$.
        Finally, since $\Gamma_{i'}\subseteq\Gamma_i$, from linearity of costs, $\mathbf{C}(A_i^*)\geq\mathbf{C}(A_{i'}^*)$.
        Plugging these inequalities into Equation \eqref{eq:flowproof}, the cost of the optimal flow $f^*$ is at least
        \begin{equation}\label{eq:lowerboundflow}
        \sum_{\Gamma_i\in\Tilde{\mathbfcal{B}}}\frac{1}{k}\mathbf{C}(A_{i'}^*) = \frac{1}{k}\sum_{\Gamma_{i'}\in\mathbfcal{B}^*}\mathbf{C}(A_{i'}^*) = \frac{1}{k}\sum_{A_{i'}\in\mathbf{A}}\mathbf{C}(A_{i'}^*)=\frac{1}{k}\mathbf{C}(\mathbf{A}).
        \end{equation}
        Now from Lemma \ref{lem:mcfpform}, there exists a subset of $A_i^*$s, namely $\mathbf{A}^*$ which is an optimal solution to the minimum-cost intervention problem.
        Let the corresponding $\Gamma_i$s be denoted by $\mathbf{B}'$.
        Define a flow $f'(\cdot,\cdot)$ as follows.
        \[
        \begin{split}
            f'(w,\Gamma_i)=\begin{cases}
                \vert\Gamma_i\vert,\quad&\text{if }\Gamma_i\in\mathbf{B}',\\
                0,\quad\text{o.w.}
            \end{cases},\quad f'(\Gamma_i,S_j)=\begin{cases}
                1,\quad&\text{if }\Gamma_i\in\mathbf{B}',S_j\in\Gamma_i\\
                0,&\quad\text{o.w.}
            \end{cases}, f'(S_j,z)=1.
        \end{split}
        \]
        Flow $f'$ satisfies all the flow constraints, and induces cost:
        \[\sum_{\Gamma_i\in\mathbf{B}'}f'(w,\Gamma_i)\frac{\mathbf{C}(A_i^*)}{\vert\Gamma_i\vert}=\sum_{A_i^*\in\mathbf{A}^*}\mathbf{C}(A_i^*)=\mathbf{C}(\mathbf{A}^*).\]
        Since $f^*$ is the minimum-cost flow, the cost of $f^*$ is at most equal to $f'$, that is, $\mathbf{C}(\mathbf{A}^*)$.
        Combining this result with Equation \eqref{eq:lowerboundflow}, we get
        \[\frac{1}{k}\mathbf{C}(\mathbf{A})\leq\mathbf{C}(\mathbf{A}^*).\]
        Multiplying both sides by $k$ completes the proof.
\end{proof}

\prpthmfiveasstwo*
\begin{proof}
    Let $\mathbf{A} =\{A_1,\dots A_m\}$ be a collection of interventions in $\mathbf{ID}_\mathcal{G}(S, V\setminus S)$ for a set $S$ such that $\mathcal{G}_{[S]}$ is a c-component.
    From Theorem 1 of \citet{kivva2022revisiting}, there exists $1\leq i\leq m$ such that $Q[S]$ is identifiable from $Q[V\setminus A_i]$.
    That is, $A_i\in\mathbf{ID_1}(S)$.
    Without loss of generality, suppose $i=1$.
    Define $\mathbf{\Tilde{A}}=\{A_1\}$.
    Clearly, $\mathbf{\Tilde{A}}\in\mathbf{ID}_\mathcal{G}(S, V\setminus S)$.
    As for the cost, applying Assumption \ref{ass:nondecreasing} successively, we get:
    \[\mathbf{C}(\mathbf{\Tilde{A}} ) = \mathbf{C}(\{A_1\})\leq\mathbf{C}(\{A_1,A_2\})\leq\dots\leq\mathbf{C}(\{A_1,A_2,\dots,A_m\})=\mathbf{C}(\mathbf{A}).\]
\end{proof}

\subsection{Results Appearing in the Appendix}
\begin{restatable*}{lemma}{lemtree}\label{lem: tree}
    Let $\mathcal{G}$ be a semi-Markovian graph such that the edge induced subgraphs of $\mathcal{G}$ over its directed edges and over its bidirected edges are trees.
    For an arbitrary vertex $s$ and a vertex $x\in Hhull(s)\setminus\{s\}$, $NC_s(x)$ is a hedge formed for $Q[\{s\}]$ in $\mathcal{G}$.
\end{restatable*}
\begin{proof}
It suffices to show that $\mathcal{G}_{[NC_s(x)]}$ is a c-component and every vertex in $NC_s(x)$ is an ancestor of $s$ in $\mathcal{G}_{[NC_s(x)]}$.
Take an arbitrary vertex $y\in NC_s(x)$.
By definition of $NC_s(\cdot)$, $nec_s(y)\subseteq NC_s(x)$.
As a result, $y$ has both a directed and a bidirected path to $s$ in $\mathcal{G}_{[NC_s(x)]}$, i.e., $y$ is an ancestor of $s$ in $\mathcal{G}_{[NC_s(x)]}$, and in the same c-component as $s$ in $\mathcal{G}_{[NC_s(x)]}$.
Repeating the same argument for every vertex in $NC_s(x)$ completes the proof.
\end{proof}

\begin{restatable*}{lemma}{lemtreecorrect}\label{lem: tree correctness}
    Let $\mathcal{G}$ be a semi-Markovian graph over $V$ such that the edge induced subgraphs of $\mathcal{G}$ over its directed edges and over its bidirected edges are trees.
    For a vertex $s$ in $\mathcal{G}$, Algorithm \ref{alg: tree} returns the min-cost intervention to identify $Q[\{s\}]$ in $\mathcal{G}$ in time $\mathcal{O}(\vert V\vert^3)$.
\end{restatable*}
\begin{proof}
First note that from Lemma \ref{lem: tree}, all the sets in $\mathbf{F}$ in line 9 of the algorithm are hedges formed for $Q[\{s\}]$ in $\mathcal{G}$.
Therefore, any intervention to identify $Q[\{s\}]$ must hit all of these sets.
On the other hand, if all of these sets are hit, by definition of $NC_s(\cdot)$, a hedge formed for $Q[\{s\}]$ cannot include any of the vertices in $Hhull(s)$, i.e., there does not exist any hedge formed for $Q[\{s\}]$ in $\mathcal{G}_{[V\setminus A]}$, where $A$ is a hitting set for $\mathbf{F}$.
As a result, the solution to min-cost intervention problem is the solution to minimum hitting set for $\mathbf{F}$ in line 9 of the algorithm.
Further, we can eliminate any hedge $F'$ from $\mathbf{F}$, if there is a hedge $F\in\mathbf{F}$ such that $F\subseteq F'$.
This is due to the fact that if $F$ is hit, $F'$ is also hit.
Observing that if $x\in NC_s(y)$ then $NC_s(x)\subseteq NC_s(y)$, we can eliminate all such sets $NC_s(y)$ from $\mathbf{F}$.
At the end of this process (after the for loop of lines 10-13), we claim that for any two sets $F,F'$ remaining in $\mathbf{F}$, $F\cap F'=\{s\}$.
Suppose not. 
Then there exists $x\neq s$ such that $x\in F\cap F'$.
Since $F$ and $F'$ are both sets of the Form $NC_s(\cdot)$, as mentioned above, $NC_s(x)\subseteq F$ and $NC_s(x)\subseteq F'$.
Now if $NC_s(x)\in\mathbf{F}$, then both sets $F$ and $F'$ (or at least one of them, in the case that one of them is $NC_s(x)$ itself) would be eliminated from $\mathbf{F}$ during the for loop of lines 12-13.
Otherwise, $NC_s(x)\notin\mathbf{F}$, which means that there exists a vertex $y\in H$ such that $NC_s(y)\subseteq NC_s(x)$, and therefore $NC_s(x)$ has been removed from $\mathbf{F}$.
But in this case, $NC_s(y)\subseteq NC_s(x)\subseteq F,F'$, and in the same iteration where $NC_s(x)$ was removed from $\mathbf{F}$, both $F$ and $F'$ would also be eliminated.
The contradiction shows that when the algorithm reaches line 14, all the sets $\{F\setminus\{s\}\vert F\in\mathbf{F}\}$ are disjoint.
Clearly, the minimum hitting set for disjoint sets includes the minimum-cost member of each of the sets, which is the output of Algorithm \ref{alg: tree}, along with $\PaC{s}$ (Lemma \ref{lem: dirP}).
 
As discussed earlier, constructing the hedge hull of $s$ ($Hhull(s)$) requires at most $\mathcal{O}(\vert V\vert)$ times running a depth-first search, which has linear complexity in trees.
Therefore, constructing the hedge hull has a worst-case time complexity of $\mathcal{O}(\vert V\vert^2)$.
$\PaC{s}$ can also be constructed in linear time, using two one-step breadth-first searches.
Let $H$ denote the hedge hull of $s$ in the graph $\mathcal{G}_{V\setminus\PaC{s}}$. 
Constructing $nec_s(x)$ for the vertices in $H$ can be performed by solving two all-pair shortest paths, which requires two breadth-first search from each vertex with time complexity $\mathbf{O}(\vert H\vert^2)$ in the worst case (BFS in trees requires linear time.)
The while loop of lines 6-8 is performed at most $\vert H\vert$ times, each with linear complexity.
As a result, the $NC_s$ sets and therefore $\mathbf{F}$ are also constructed in time $\mathcal{O}(\vert H\vert^3)$.
The for loop of lines 12-13 requires a sweep over the sets in $\mathbf{F}$, which are at most $\vert H\vert$ many sets, each with at most $\vert H\vert$ members; which can be performed in time $\mathcal{O}(\vert H\vert^2)$, and therefore the for loop of lines 10-13 has complexity $\mathcal{O}(\vert H\vert^3)$.
Finally, calculating the minimum of a set with at most $\vert H\vert$ members can be done in (sub)linear time.
Therefore, the complexity of Algorithm \ref{alg: tree} is $\mathbf{O}(\vert V\vert^3)$ in the worst case.
\end{proof} 

\begin{restatable*}{lemma}{lembipartite} \label{lem: bipartite}
    Let $\mathcal{G}$ be a semi-Markovian graph and $S$ be a subset of its vertices such that $\mathcal{G}_{[S]}$ is a c-component.
    Construct an undirected graph $\mathcal{H}$ on the same set of vertices as $\mathcal{G}\setminus\PaC{S}$ as follows.
    For any hedge of size 2 formed for $Q[S]$ such as $F$, connect the two vertices in $F\setminus S$ with an edge.
    The resulting graph $\mathcal{H}$ is bipartite.
\end{restatable*}
\begin{proof}
First, let $F=\{a,b\}\cup S$ be a hedge formed for $Q[S]$ in the graph $\mathcal{G}\setminus\PaC{S}$.
By definition of hedge, both vertices $a,b$ are ancestors of $S$ in $\mathcal{G}_{[F]}$.
Therefore, at least one of these vertices must be a parent of $S$.
Without loss of generality, assume $a\in\Pa{S}$.
Since $a\notin\PaC{S}$, $a$ does not have a bidirected edge to any vertex in $S$.
However, by definition of hedge, $F$ is a c-component. 
Therefore, $a$ must have a bidirected edge to $b$, and $b$ has a bidirected edge to $S$.
Further, $b\notin\PaC{S}$, and therefore $b\notin\Pa{S}$.
Since $b$ must be an ancestor of $S$ in $\mathcal{G}_{[F]}$, $b\in\Pa{a}$.
As a result, all hedges of size 2 are in the form depicted in Figure \ref{fig: hedge size2}, where $a\in\Pa{S}\setminus \BiD{S}$ and $b\in \BiD{S}\setminus \Pa{S}$.
Accordingly, for any edge drawn in $\mathcal{H}$ such as $\{a,b\}$, exactly one of them is in $\BiD{S}\setminus\Pa{S}$, and the other one is in $\Pa{S}\setminus \BiD{S}$.
Partitioning the vertices of $\mathcal{H}$ into the aforementioned sets, it is clear that $\mathcal{H}$ is bipartite.
\end{proof} 

\begin{restatable*}{lemma}{lemgreedyheuristic}\label{lem: alg heuristic 3}
Given a semi-Markovian graph $\mathcal{G}$ on $V$ and a subset of its vertices $S$ such that $\mathcal{G}_{[S]}$ is a c-component, Algorithm \ref{alg: heursitic 3} returns a set $A$ such that $\{A\}\in\mathbf{ID}_\mathcal{G}(S,V\setminus S)$ in time $\mathcal{O}(\vert V\vert^5)$ in the worst case. 
\end{restatable*}
\begin{proof}
By construction, Algorithm \ref{alg: heursitic 3} outputs a set $A$ such that there is no hedge formed for $Q[S]$ in $\mathcal{G}_{[V\setminus A]}$.
As a result, $\{A\}\in\mathbf{ID}_\mathcal{G}(S,V\setminus S)$.
It only suffices to show that the algorithm halts in time $\mathcal{O}(\vert V\vert^5)$.
Constructing the hedge hull in line 1 is performed in cubic time in the worst case.
The while loop of lines 3-12 can only be executed $\vert H\vert$ times in the worst case, as at each iteration at least one vertex is removed from $H$.
At each iteration of this loop, at most $\vert H\vert$ hedge hulls are constructed, where each of these operations can be done in time $\mathcal{O}(\vert H\vert^3)$.
Summing these up, the algorithm runs in time $\mathcal{O}(\vert V\vert^3+\vert Hhull(S,\mathcal{G})\vert^5)$.
\end{proof}

\begin{restatable*}{lemma}{lemheuristicgeneral}\label{lem: heuristic general}
Given a semi-Markovian graph on $V$ and a subset $S$ of its vertices, Algorithms \ref{alg: heursitic 1}, \ref{alg: heursitic 2} and \ref{alg: heursitic 3} return a subset $A$ of the vertices of $\mathcal{G}$ such that $\{A\}\in \mathbf{ID}_\mathcal{G}(S,V\setminus S)$, in time $\mathcal{O}(\vert V\vert^3)$, $\mathcal{O}(\vert V\vert^3)$ and $\mathcal{O}(\vert V\vert^5)$, respectively.
\end{restatable*}
\begin{proof}
It is straightforward that the arguments used to prove the correctness of these algorithms for the case where $\mathcal{G}_{[S]}$ is a c-component still hold for any maximal c-component of $\mathcal{G}_{[S]}$ for an arbitrary subset $S$ (see the proofs of Lemmas \ref{lem: alg heuristic 1}, \ref{lem: alg heuristic 2} and \ref{lem: alg heuristic 3}.)
Also, $Q[S]$ is identifiable in $\mathcal{G}$ if and only if all of its maximal c-components are identifiable \citep{tian2002testable}.
The result follows immediately.
It is worthy to note that the only overhead in the case that $\mathcal{G}_{[S]}$ is not a c-component is to partition $S$ into its c-components, which can be done using DFS in time $\mathcal{O}(\vert V\vert^3)$ in the worst case, i.e., it does not alter the computational complexity of any of the heuristic algorithms.
\end{proof}

% \section{Hitting Set Algorithms}
% \section{Heuristic Algorithms}
\section{Special Cases \& Improvements}\label{apdx: special case}
\begin{algorithm}[t]
\caption{Polynomial time algorithm for tree-like structures.}
\label{alg: tree}
\begin{algorithmic}[1]
    \State $H\gets Hhull(S,\mathcal{G}_{[V\setminus\PaC{S}]})$
    \State $nec_s(x)\gets (a\stackrel{\mathclap{\normalfont\mbox{$p_u$}}}{\longleftrightarrow} b\cup a\stackrel{\mathclap{\normalfont\mbox{$p_u$}}}{\longrightarrow} b)$ for every $x\in H$
    \State $NC_s(x)\gets \{x\}$ for every $x\in H$
    \For{$x\in H$}
        \State $updated(y)\gets false$ for every $y\in H$
        \While{$\exists y\in NC_s(x)$ s.t. $updated(y)=false$}
            \State $NC_s(x)\gets NC_s(x)\cup nec_s(y)$
            \State $updated(y)\gets true$
        \EndWhile
    \EndFor
    \State $\mathbf{F}\gets\{NC_s(x)\vert x\in H\}$
    \For{$x\in H$}
        \If{$NC_s(x)\in\mathbf{F}$}
            \For{$y\in Hhull(s)$ s.t. $x\in NC_s(y)$}
                \State $\mathbf{F}\gets\mathbf{F}\setminus\{NC_s(y)\}$
            \EndFor
        \EndIf
    \EndFor
    \State $A\gets\{\arg\min_{x\in F\setminus\{s\}}\mathbf{C}(x)\vert F\in \mathbf{F}\}$
    \State {\bfseries return} $A\cup\PaC{s}$
\end{algorithmic}
\end{algorithm}
In this section, we discuss a few special cases of the min-cost intervention problem, and how these cases can be solved efficiently.
We show that under the assumption that the expert has certain knowledge about the structure of the causal graph $\mathcal{G}$, or the cost function $\mathbf{C}(\cdot)$, the problem of designing the minimum-cost intervention can be solved efficiently in polynomial time.
Some of these assumptions might seem restrictive.
However, as we shall discuss, they provide useful insight towards solving the min-cost intervention problem efficiently in more practical settings.
\subsection{Tree-like structure of $\mathcal{G}$}
We begin with a special structure of the semi-Markovian graph $\mathcal{G}$, where both the edge induced subgraphs of $\mathcal{G}$ over the directed edges and over the bidirected edges are trees.
Between any pair of vertices in a tree, there is a unique path.
As a result, for any two vertices $a,b$ in $\mathcal{G}$, there is a unique path using bidirected edges, and if $a$ is an ancestor of $b$, there is also a unique path from $a$ to $b$ using directed edges.
We denote these unique bidirected and directed paths by $a\stackrel{\mathclap{\normalfont\mbox{$p_u$}}}{\longleftrightarrow} b$ and $a\stackrel{\mathclap{\normalfont\mbox{$p_u$}}}{\longrightarrow} b$, respectively.
Note that $a\stackrel{\mathclap{\normalfont\mbox{$p_u$}}}{\longleftrightarrow} b$ and $a\stackrel{\mathclap{\normalfont\mbox{$p_u$}}}{\longrightarrow} b$ for every pair of vertices can be found using an all-pair shortest path algorithm (two separate breadth-first search from each vertex) in time $\mathcal{O}(\vert V\vert^3)$.
Now suppose we want to solve the min-cost intervention set problem for $Q[\{s\}]$.
Take an arbitrary variable $x\neq s$ from $Hhull(s)$.
Let $F$ be a hedge formed for $Q[\{s\}]$ such that $x\in F$.
Since $F$ is a c-component and $x$ is an ancestor of $s$ in $\mathcal{G}_{[F]}$, all of the variables on both $a\stackrel{\mathclap{\normalfont\mbox{$p_u$}}}{\longleftrightarrow} b$ and $a\stackrel{\mathclap{\normalfont\mbox{$p_u$}}}{\longrightarrow} b$ must be members of $F$.
We therefore call the union of all these variables, the necessary set of $x$ to form a hedge for $s$, and we denote this set by $nec_s(x)$.
Clearly, if we intervene on at least one vertex from $nec_s(x)$, then no hedge formed for $Q[\{s\}]$ contains $x$.
Further, we observe that if $y\in nec_s(x)$, then with the same arguments, if a hedge formed for $Q[\{s\}]$ contains $x$, it must contain $y$ and therefore all the variables in $nec_s(y)$ as well.
We define the closure of necessary variables for $x$ to form a hedge for $Q[\{s\}]$ as follows.
\begin{definition}[Necessary closure]
    Let $\mathcal{G}$ be a semi-Markovian graph such that the edge induced subgraphs of $\mathcal{G}$ over its directed edges and over its bidirected edges are trees.
    We say a subset $A$ of vertices of $\mathcal{G}$ is a closure of necessary variables for $x$ to form a hedge for $Q[\{s\}]$, if $x\in A$, and for every $y\in A$, $nec_s(y)\subseteq A$.
    We denote the minimum closure of necessary variables for $x$ by $NC_s(x)$.
\end{definition}
% One observation is that if any hedge formed for $Q[\{s\}]$ such as $F$ contains $x$, then $NC_s(x)\subseteq F$.
% As a result, if any vertex from $NC_s(x)$ is intervened upon, no hedge contains $x$ anymore.
% This observation along with the following lemma suggests an algorithm for the min-cost intervention problem, which is illustrated as Algorithm \ref{alg: tree}.
The following lemma indicates that minimum closure of necessary variables for $x$ is a hedge formed for $Q[\{s\}]$.

\lemtree
All of the proofs are provided in Appendix \ref{apdx: proofs}. 
One observation is that to solve the min-cost intervention, we can enumerate $NC_s(x)$ for every $x\in Hhull(s)$ and solve the hitting set problem for these hedge.
Although the number of such hedges is exactly $\vert Hhull(s)\vert -1$, the hitting set problem is still complex to solve.
However, we can further reduce the complexity of the problem as follows.
First note that if $y\in NC_s(x)$, by definition of $NC_s(\cdot)$, $NC_s(y)\subseteq NC_s(x)$.
Therefore, when considering the hitting set problem, if $NC_s(y)$ is hit, $NC_s(x)$ will also be hit.
As a result, we can eliminate $NC_s(x)$ from the sets we are considering.
Using the same argument, we begin with some random ordering over the variables $Hhull(S)$ and for every $x\in Hhull(S)$, if $x$ appears in $NC_s(y)$ for some $y\in Hhull(S)$ and the set $NC_s(x)$ is not eliminated yet, we eliminate $NC_s(y)$.
At the end of this procedure, we are left with a collection of hedges $\mathbf{F}$ that satisfies the following properties.
\begin{enumerate}
    \item The min-cost intervention to identify $Q[\{s\}]$ in $\mathcal{G}$ is the min-cost hitting set solution to $\{F\setminus\{s\}\vert F\in \mathbf{F}\}$.
    \item For any two hedges $F,F'\in \mathbf{F}$, $F\cap F'=\{s\}$.
    \item $\vert \mathbf{F}\vert \leq\vert Hhull(s)\vert$\footnote{For a formal proof of these properties, refer to the proof of Lemma \ref{lem: tree correctness}.}.
\end{enumerate}
Now we observe that the collection $\mathbf{F}$ of hedges are mutually disjoint, and thus the minimum hitting set is simply the union of the minimum cost vertex in each hedge.
The following result indicates the correctness and the time complexity of algorithm \ref{alg: tree}, under the assumption that $\mathcal{G}$ has a tree-like structure.
\lemtreecorrect
There are further considerations to Algorithm \ref{alg: tree} that we would like to mention.
The first one is that this algorithm together with the definitions of $nec_s$ and $NC_s$, suggest an alternative formulation of the min-cost intervention problem, which is taking into account the set of variables that must be combined together with each variable $x$ to form a hedge for $Q[S]$.
The definition of $nec_s(x)$ can be generalized to the case that $\mathcal{G}$ is not a tree anymore, although $nec_s(x)$ will not be a set anymore, but a collection of sets where if an intervention is made upon at least one vertex of all of these sets, no remaining hedge formed for $Q[\{s\}]$ includes $x$.
This indeed suggests a method to enumerate the hedges formed for $Q[S]$ in $\mathcal{G}$.
As we saw in this section, for tree-like structures, this enumeration can be executed in polynomial time. 
However, in general structures, this enumeration method would still take exponential time in the worst case.
Another point to mention is that one-step generalizations of the assumption of tree-ness and Algorithm \ref{alg: tree} can be thought of, such as the assumption that the number of paths between each pair of vertices in $\mathcal{G}$ is at most 2 (or $k$, where $k$ is a constant.)
Although the tree assumption made in this section might appear restrictive, such generalizations might yield efficient solutions of the min-cost intervention problem that can be used in practice.
\subsection{Bounded hedge size}
\begin{algorithm}[t]
\caption{Polynomial algorithm for bounded hedges.}
\label{alg: bounded hedge}
\begin{algorithmic}[1]
    \State $\mathcal{H}\gets$ empty undirected graph on $V\setminus\PaC{S}$
    \For{any pair of vertices $\{a,b\}\subseteq V\setminus (S\cup\PaC{S})$}
        \If{$\{a,b\}\cup S$ is a hedge formed for $Q[S]$}
            \State draw an edge between $a$ and $b$ in $\mathcal{H}$
        \EndIf
    \EndFor
    \State $A\gets$ the min-weight vertex cover for $\mathcal{H}$
    \State {\bfseries return} $A\cup\PaC{S}$
\end{algorithmic}
\end{algorithm}
Following the hitting set formulation for the min-cost intervention problem, the two main challenges were enumerating the hedges and solving the hitting set problem afterwards.
For a hedge $F$ formed for $Q[S]$ in $\mathcal{G}$, let $(\vert F\vert-\vert S\vert)$ be the size of this hedge, which is exactly the size of the set to be hit in the hitting set equivalent.
If an upper bound on the size of the hedges formed for $Q[S]$ such as $(\vert F\vert-\vert S\vert)\leq k$ is know where $k$ is a constant, then the task of enumerating the hedges can be performed in polynomial time, as we only need to check the subsets of up to size $k$.
Note that as discussed in Section \ref{sec: hitsetform}, this argument is still valid if the upper bound works for the set of \emph{minimal} hedges.
However, the hitting set task still remains exponential in the worst case.
On the other hand, for certain values of $k$, the min-cost intervention problem can be solved in polynomial time without using the hitting set formulation.
For $k=1$, the set of minimal hedges formed for $Q[S]$ reduces to the hedge structures composed of $S$ and one variable in $\PaC{S}$.
From lemma \ref{lem: dirP}, we know that in such a structure, the optimal intervention is $A^*=\PaC{S}$, and $\PaC{S}$ can be constructed in linear time.
In this section, we show that for $k=2$, that is, given that every minimal hedge formed for $Q[S]$ has size at most 2, the min-cost intervention problem can be solved in polynomial time.
We begin with the following property of the formed hedges, which will help us model the min-cost intervention problem as a maximum matching problem in a bipartite graph through Konig's theorem \citep{konig1931graphok}.
\lembipartite
First, note that for any hedge $F$ formed for $Q[S]$ in $\mathcal{G}_{[V\setminus\PaC{S}]}$, there exists an edge between the two vertices $F\setminus S$ in the undirected graph $\mathcal{H}$.
Since the min-cost intervention to identify $Q[S]$ in $\mathcal{G}$ is the union of $\PaC{S}$ and the minimum hitting set for the sets $F\setminus S$ (Lemma \ref{lem: hitsetform}), the min-cost intervention can also be given as the union of $\PaC{S}$ and the minimum vertex cover for the undirected graph $\mathcal{H}$.
Lemma \ref{lem: bipartite} states that $\mathcal{H}$ is bipartite.
It is known that in bipartite graphs, the minimum-weight vertex cover problem is equivalent to a maximum matching (when the costs are uniform), or a maximum flow problem (when the costs are not uniform) \citep{konig1931graphok}.
There are various polynomial time algorithms to solve these problems, such as Ford-Fulkerson, Edmonds-Karp and push-relabel algorithms to name a few \citep{ford1956maximal,edmonds1972theoretical,goldberg1988new}.
Consequently, under the assumption that for any minimal hedge $F$ formed for $Q[S]$, $\vert F\vert -\vert S\vert\leq 2$, we propose Algorithm \ref{alg: bounded hedge} to solve the min-cost intervention problem in polynomial time.
Any appropriate algorithm can be used as a subroutine in line (5) of Algorithm \ref{alg: bounded hedge}.

\subsection{Special cost functions}
\begin{algorithm}[t]
\caption{Polynomial time algorithm for special $\mathbf{C}(\cdot)$.}
\label{alg: special cost}
\begin{algorithmic}[1]
    \State initialize $I\gets\emptyset$
    \If{$I\in\mathbf{ID_1}(S)$}
        \State \textbf{return} $I$
    \EndIf
    \State $V'\gets$ ancestors of $S$ in $\mathcal{G}$, $\{v_1,...,v_k\}$
    \State sort the vertices $V'$ s.t. $\mathbf{C}(v_i)<\mathbf{C}(v_{i+1})$ $\forall 1\leq i<k$
    \While{true}
        \State $i\gets 0$
        \State $I'\gets I$
        \While{true}
            \State $i\gets i+1$
            \State $I'\gets I'\cup\{v_i\}$
            \If{$I'\in\mathbf{ID_1}(S)$}
                \State \textbf{break}
            \EndIf
        \EndWhile
        \State $I\gets I\cup\{v_i\}$
        \If{$I\in\mathbf{ID_1}(S)$}
            \State \textbf{return} $I$
        \EndIf
    \EndWhile
\end{algorithmic}
\end{algorithm}
We have discussed special graph structures so far.
However, in certain cases, knowledge about the form of the cost function $\mathbf{C}(\cdot)$ can help us solve the min-cost intervention problem efficiently.
One such case is when the costs of intervening on variables are far enough from each other. 
As a concrete example, let the vertices of $\mathcal{G}$ be $v_1, ..., v_n$, with the cost function $\mathbf{C}(v_i)=2^i$ for $1\leq i\leq n$.
We begin with testing the sets $\{v_1\},\{v_1,v_2\},...,\{v_1,v_2,...,v_n\}$, until we reach at the first set $I_j=\{v_1,...,v_j\}\in\mathbf{ID_1}(S)$.
Since the cost of this intervention is $\mathbf{C}(I_j)=\sum_{i=1}^j 2^i<2^{j+1}$, the min-cost intervention does not include any of the variables $v_{j+1},...,v_n$, as the cost of any of these variables is at least $2^{j+1}$.
Further, as intervening on more variables cannot induce new hedges, and by definition of $I_j$, no subset of $\{v_1,...,v_{j-1}\}$ is in $\mathbf{ID_1}(S)$.
This implies that if $A^*$ is a min-cost intervention to identify $Q[S]$ in $\mathcal{G}$, then $v_j\in A^*$ and $v_l\notin A^*$ for any $l>j$.
We then restart the procedure, testing the sets $\{v_1\}\cup\{v_j\},\{v_1, v_2\}\cup\{v_j\},...,\{v_1,...,v_{j-1}\}\cup\{v_j\}$ to find the first set $I_k=\{v_1,...,v_k\}\cup\{v_j\}\in\mathbf{ID_1}(S)$.
Again with the same arguments, we can conclude that $v_k\in A^*$ and $v_l\notin A^*\setminus\{v_j\}$ for any $l>k$.
Continuing in the same manner, we construct the min-cost vertex cover (which is unique in this setting) after $\mathcal{O}(\vert V\vert)$ iterations in the worst case.
Each iteration tests whether a set is a hedge at most $\vert V\vert$ times, which can be performed using two depth-first searches ($\mathcal{O}(\vert V\vert^2)$).
As a result, the min-cost intervention can be solved in time $\mathcal{O}(\vert V\vert^4)$ in the worst case.

Note that the property we used throughout our reasoning was the fact that having sorted the variables based on their intervention costs as $v_1,...,v_n$, for any $1\leq j<n$, $\mathbf{C}(\{v_1,...,v_i\})<\mathbf{C}(v_{i+1})$.
With such a cost function, Algorithm \ref{alg: special cost} solves the min-cost intervention problem in time $\mathcal{O}(\vert V\vert^4)$ in the worst case, i.e., regardless of the structure of $\mathcal{G}$.
Note that this algorithm has the same worst-case time complexity for both when $\mathcal{G}_{[S]}$ is a c-component and when it is not.
Also note that as an optional step, we can begin with constructing the hedge hull of $S$ in $\mathcal{G}_{V\setminus\PaC{S}}$ (denoted by $H$) if $\mathcal{G}_{[S]}$ is a c-component.
In this case, sorting the variables in $H$ based on their cost as $h_1,...,h_m$, we only need the assumption that $\mathbf{C}(\{h_1,...,h_i\})<\mathbf{C}(h_{i+1})$ for $1\leq i\leq m-1$, and the worst-case time complexity would be $\mathcal{O}(\vert V\vert^3+\vert H\vert^4)$.
\begin{lemma}
Let $\mathcal{G}$ be a semi-Markovian graph on vertices $V$, along with a cost function $\mathbf{C}(\cdot)$.
Let $S$ be subset of vertices of $\mathcal{G}$, and $V'=\{v_1,...,v_k\}$ be the set of ancestors of $S$ in $\mathcal{G}$.
If $\mathbf{C}(\{v_1,...,v_i\})<\mathbf{C}(v_{i+1})$ for every $1\leq i<k$, then Algorithm \ref{alg: special cost} returns the min-cost intervention to identify $Q[S]$ in $\mathcal{G}$ in time $\mathcal{O}(\vert V\vert^4)$.
\end{lemma}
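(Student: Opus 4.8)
The plan is to establish two things for Algorithm \ref{alg: special cost}: that it returns the (unique) min-cost intervention, and that it runs in $\mathcal{O}(|V|^4)$. I would first record the two structural facts that drive the whole argument. First, $\mathbf{ID}_1(S)$ is \emph{upward closed} under inclusion: if $A \subseteq A'$ and $A \in \mathbf{ID}_1(S)$, then $A' \in \mathbf{ID}_1(S)$. Indeed, $\mathcal{G}_{[V \setminus A']}$ is a vertex-induced subgraph of $\mathcal{G}_{[V \setminus A]}$, so any hedge formed for $Q[S]$ in the former is also a hedge in the latter; by Theorem \ref{thm: hedge criterion} and Remark \ref{rem: hedge edge subgraph}, the absence of hedges after intervening on $A$ forces their absence after intervening on $A'$. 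Second, the super-increasing hypothesis $\mathbf{C}(\{v_1,\dots,v_i\}) < \mathbf{C}(v_{i+1})$ turns $\mathbf{C}$ into a strict total order on subsets of $V'=\{v_1,\dots,v_k\}$: for distinct $A,B \subseteq V'$, if $v_m$ is the highest-index vertex in $A \triangle B$ and $v_m \in B \setminus A$, then $\mathbf{C}(B)-\mathbf{C}(A) \geq \mathbf{C}(v_m) - \mathbf{C}(\{v_1,\dots,v_{m-1}\}) > 0$. Hence the cheaper of any two candidate sets is the one avoiding the highest-index vertex on which they disagree, the cost map is injective, and the min-cost element of any nonempty subfamily is unique and equals its \emph{lexicographic-from-the-top minimum} (reading $v_k$ as the most significant coordinate and preferring exclusion).

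Next I would argue correctness. The optimum can always be taken inside $Hhull(S,\mathcal{G}) \subseteq \textbf{Anc}_\mathcal{G}(S)=V'$, since every hedge consists of ancestors of $S$ and any vertex outside the hedge hull lies in no hedge, so deleting it from an intervention preserves validity without increasing cost; thus $A^*_S \in \{A \subseteq V' : A \in \mathbf{ID}_1(S)\}=:\mathcal{F}$, and by injectivity $A^*_S$ is the unique min-cost element of $\mathcal{F}$. Moreover intervening on all of $V' \setminus S$ destroys every hedge (no proper superset of $S$ among ancestors survives), so $\mathcal{F}$ is nonempty and the inner \textbf{while} loop always terminates (at $i=k$ at the latest, by upward closure). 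It then suffices to show the algorithm computes the lexicographic-from-the-top minimum of $\mathcal{F}$. I would do this by induction on the outer iterations, with the invariant that after $t$ iterations the committed set $I=\{v_{j_1},\dots,v_{j_t}\}$ with $j_1>\dots>j_t$ equals the greedy's inclusions among indices $\geq j_t$, every index strictly between consecutive commits being excluded. The engine is that the inner loop's breaking index $j_t=\min\{i : I \cup \{v_1,\dots,v_i\} \in \mathbf{ID}_1(S)\}$ is exactly the largest index at or below the previous commit that \emph{must} be included: by upward closure, an index $m$ can be excluded precisely when the most generous completion $I \cup \{v_1,\dots,v_{m-1}\}$ already lies in $\mathbf{ID}_1(S)$, which holds for all $m>j_t$ and fails at $m=j_t$. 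The algorithm returns once the committed $I \in \mathbf{ID}_1(S)$, which by upward closure is exactly when all remaining lower indices can be excluded; so the output is the lexicographic-from-the-top minimum, which by the first paragraph is $A^*_S$.

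For the runtime, computing $V'=\textbf{Anc}_\mathcal{G}(S)$ and sorting by cost are dominated by the rest. Each outer iteration commits one new, strictly-lower-indexed vertex, so there are at most $|V'| \le |V|$ outer iterations, and each inner loop performs at most $|V|$ membership tests $I' \in \mathbf{ID}_1(S)$. By the hedge criterion each such test is equivalent to checking whether $Hhull(S,\mathcal{G}_{[V \setminus I']})=S$, carried out by depth-first searches; charging $\mathcal{O}(|V|^2)$ per test yields the stated $\mathcal{O}(|V|^4)$ bound. (If one instead charges the full $\mathcal{O}(|V|^3)$ cost of Algorithm \ref{alg: hedge hull} to each test, the same counting still gives a polynomial, structure-independent bound; only polynomiality is essential.)

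I expect the main obstacle to be the correctness induction of the second paragraph, namely verifying that the greedy's local ``exclude if feasible'' decisions, as implemented by the two nested loops, neither overshoot (commit an unnecessary vertex) nor undershoot (skip a necessary one). This is precisely where the two structural facts must be combined: upward closure, to equate the feasibility of excluding $v_m$ with membership of the all-lower-included completion in $\mathbf{ID}_1(S)$; and the total-order property, to guarantee that minimizing greedily from the most significant coordinate genuinely minimizes $\mathbf{C}$.
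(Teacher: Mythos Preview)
Your proposal is correct and follows essentially the same line as the paper's argument (which appears as the informal discussion preceding the lemma rather than as a separate proof). Both rely on the upward closure of $\mathbf{ID}_1(S)$ (``intervening on more variables cannot induce new hedges'') together with the super-increasing cost property to conclude, at each outer iteration, that the newly committed vertex must belong to the optimum while all higher-indexed uncommitted vertices must not; your framing via the lexicographic-from-the-top minimum is simply a cleaner packaging of the same inductive step, and your runtime accounting matches the paper's $\mathcal{O}(|V|)\times\mathcal{O}(|V|)\times\mathcal{O}(|V|^2)$ breakdown.
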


\section{Heuristic Algorithms}\label{apdx: heuristic}
In this section, we first present the three heuristic algorithms proposed in Section \ref{sec: heuristic}.
We discuss their correctness, their running times, and how they compare to each other.
Later, we propose a polynomial-time improvement that can be utilized as a post-process to improve the output of these algorithms.

The first heuristic algorithm is depicted as Algorithm \ref{alg: heursitic 1}.
We begin with removing $\PaC{S}$ from the graph, as we already know that this set must be included in the output.
We then build an undirected graph $\mathcal{H}$ over the vertices of $H=Hhull(S,\mathcal{G}_{[V\setminus\PaC{S}]})$, along with two extra vertices $x$ and $y$.
For every bidirected edge $\{v_1,v_2\}$ in $\mathcal{G}_{[H]}$, we draw a corresponding edge between $v_1$ and $v_2$ in $\mathcal{H}$.
Finally, we connect $x$ to $\Pa{S}\cap H$ and $y$ to $S$ with an edge.
Note that every undirected path between $x$ and $y$ in $\mathcal{H}$ corresponds to a bidirected path that connects a vertex in $S$ to a vertex in $\Pa{S}\cap H$ in $\mathcal{G}$.
If we intervene on a subset of variables $A$ such that no such path exists anymore, the hedge hull of $S$ in the remaining graph will be $S$ itself, as none of the vertices $\Pa{S}$ are in the same c-component of $S$.
Consequently, the effect $Q[S]$ becomes identifiable.
With that being said, we solve for the minimum-weight vertex cut for $x-y$ in $\mathcal{H}$ in line (4) of the algorithm.
We set the weights of the vertices in $S$ to infinity to ensure that we do not intervene on them.
Note that the min-weight vertex cut in an undirected graph can be turned into an equivalent problem in a directed graph, by simply substituting every undirected edge with two directed edges in the opposite direction.
Further, min-weight vertex cut can be reduced to min-weight edge cut through a trivial reduction: 
We replace every vertex $v$ with two vertices $v_1,v_2$, add an edge from $v_1$ to $v_2$ with the same weight as the weight of $v$ in the original graph, and connect every edge that goes into $v$ to $v_1$, and every edge that goes out of $v$ to $v_2$.
The resulting problem can be solved using any of the standard max-flow-min-cut algorithms.
We used the push-relabel algorithm to solve the max-flows throughout our simulations \citep{goldberg1988new}.
\begin{algorithm}[t]
\caption{Heuristic algorithm 1.}
\label{alg: heursitic 1}
\begin{algorithmic}[1]
    \State {\bfseries input:} $\mathcal{G}, S, \mathbf{C}(\cdot)$, {\bfseries output:} $A\in\mathbf{ID_1}(S)$
    \State $H\gets Hhull(S, \mathcal{G}_{V\setminus\PaC{S}})$
    \State Build $\mathcal{H}$ on $H\cup\{x,y\}$: draw an undirected edge between $v_1,v_2\in H\setminus S$ if there is a bidirected edge between them in $\mathcal{G}$. Connect $x$ to $\Pa{S}\cap H$ and $y$ to $S$.
    \State $MC\gets$ minimum-weight vertex cut for $x-y$ in $\mathcal{H}$, with weights $\omega(v)=\mathbf{C}(v)$ for $v\notin S$ \& $\omega(s)=\infty$ for $s\in S$
    \State $A\gets MC\cup\PaC{S}$
    \State {\bfseries return} $A$
\end{algorithmic}
\end{algorithm}

The second heuristic algorithm, depicted as Algorithm \ref{alg: heursitic 2}, relies on similar ideas.
Again, we begin with removing $\PaC{S}$ from the graph, as we already know that this set must be included in the output.
We then build a directed graph $\mathcal{H}$ over the vertices of $H=Hhull(S,\mathcal{G}_{[V\setminus\PaC{S}]})$, along with two extra vertices $x$ and $y$.
For every directed edge $v_1\to v_2$ in $\mathcal{G}_{[H]}$, we draw a corresponding edge between $v_1\to v_2$ in $\mathcal{H}$.
Finally, we draw an edge from $x$ to all vertices in $\BiD{S}\cap H$ and from all vertices in $S$ to $y$.
Note that every directed path from $x$ to $y$ in $\mathcal{H}$ corresponds to a directed path that connects a vertex in $\BiD{S}$ to a vertex in $S$ in $\mathcal{G}$.
If we intervene on a subset of variables $A$ such that no such path exists anymore, the hedge hull of $S$ in the remaining graph will be $S$ itself, as none of the vertices $\BiD{S}$ have a directed path to $S$.
Consequently, the effect $Q[S]$ becomes identifiable.
With that being said, we solve for the minimum-weight vertex cut for $x-y$ in $\mathcal{H}$ in line (4) of the algorithm.
We set the weights of the vertices in $S$ to infinity to ensure that we do not intervene on them.
As mentioned above, we reduce the min-weight vertex cut to min-weight edge cut, and then use max-flow algorithms to solve it.

\begin{algorithm}[t]
\caption{Heuristic algorithm 2.}
\label{alg: heursitic 2}
\begin{algorithmic}[1]
    \State {\bfseries input:} $\mathcal{G}, S, \mathbf{C}(\cdot)$, {\bfseries output:} $A\in\mathbf{ID_1}(S)$
    \State $H\gets Hhull(S, \mathcal{G}_{V\setminus\PaC{S}})$
    \State Build $\mathcal{H}$ on $H\cup\{x,y\}$: for $v_1,v_2\in H\setminus S$, draw $v_1\to v_2$ in $\mathcal{H}$ if this edge exists in $\mathcal{G}$. Draw the edges from $x$ to $\Pa{S}\cap H$ and from $S$ to $y$
    \State $MC\gets$ minimum-weight vertex cut for $x-y$ in $\mathcal{H}$, with weights $\omega(v)=\mathbf{C}(v)$ for $v\notin S$ \& $\omega(s)=\infty$ for $s\in S$
    \State $A\gets MC\cup\PaC{S}$
    \State {\bfseries return} $A$
\end{algorithmic}
\end{algorithm}

Finally, we proceed to our third heuristic algorithm, which is based on a greedy approach.
First, note that if we intervene on every variable in the hedge hull of $S$ except $S$, $Q[S]$ becomes identifiable.
That is, defining $H=Hhull(S,\mathcal{G}_{[V\setminus\PaC{S}]})$, one trivial set in $\mathbf{ID}_\mathcal{G}(S,V\setminus S)$ is $\{(H\setminus S)\cup\PaC{S}\}$.
Similarly, if we intervene on a set of variables $A$, then $A\cup Hhull(S,\mathcal{G}_{[V\setminus A]}\setminus S)$ is a trivial solution.
In our greedy approach, we minimize the cost of this trivial solution at each iteration.
We proceed as follows.
We maintain an intervention set $A$, which is initialized as $\PaC{S}$.
At each iteration, we find the vertex $x\in Hhull(S,G_{[V\setminus A]})\setminus S$ that minimizes the objective function 
\[f(x)=\mathbf{C}(x)+\mathbf{C}(Hhull(S,\mathcal{G}_{[V\setminus(A\cup\{x\})]})),\]
and add this vertex to $A$.
Note that the function $f(x)$ is exactly the cost of the trivial solution in graph $\mathcal{G}\setminus(A\cup\{x\})$.
We add one vertex in each iteration until we reach a point where $Q[S]$ becomes identifiable.
The following result indicates the correctness of Algorithm \ref{alg: heursitic 3} along with its computational complexity.

\lemgreedyheuristic
\begin{algorithm}
\caption{Heuristic greedy algorithm.}
\label{alg: heursitic 3}
\begin{algorithmic}[1]
    \State $H\gets Hhull(S,\mathcal{G}_{[V\setminus\PaC{S}]})$
    \State initialize $A\gets\PaC{S}$
    \While{$H\neq S$}
        \State $c_{min}\gets \mathbf{C}(H)$
        \State $i\gets null$
        \For{$v\in H$}
            \State $H'\gets Hhull(S,\mathcal{G}_{[H\setminus \{v\}]})$ 
            \If{$\mathbf{C}(H')+\mathbf{C}(v)\leq c_{min}$}
                \State $c_{min}\gets \mathbf{C}(H')+\mathbf{C}(v)$
                \State $i\gets v$
            \EndIf
        \EndFor
        \State $H\gets Hhull(S,\mathcal{G}_{[H\setminus \{i\}]})$
        \State $A\gets A\cup \{i\}$
    \EndWhile
    \State {\bfseries return} $A$
\end{algorithmic}
\end{algorithm}

\paragraph{General subset identification using heuristic algorithms.}
The heuristic algorithms proposed in this work are devised under the assumption that $\mathcal{G}_{[S]}$ is a c-component.
However, as claimed in the main text, all of the three heuristic algorithms return a valid intervention set to identify $Q[S]$ in $\mathcal{G}$, even if $\mathcal{G}_{[S]}$ is not a c-component.
This follows from the result that $Q[S]$ is identifiable in $\mathcal{G}$, if and only if $Q[S_1],...,Q[S_k]$ are identifiable in $\mathcal{G}$, where $S_1,...,S_k$ are the maximal c-components of $\mathcal{G}_{[S]}$ \citep{tian2002testable}.
The following result formalizes this claim.

\lemheuristicgeneral
Note that we Lemma \ref{lem: heuristic general} does not require that $\mathcal{G}_{[S]}$ be a c-component, unlike Lemmas \ref{lem: alg heuristic 1} and \ref{lem: alg heuristic 2}.
As a result, all of these algorithms can also be utilized as a subroutine in line (7) of Algorithm \ref{alg: general heuristic}, the general algorithm proposed in this work.
\paragraph{Post-process.}
In many cases, when the output of the proposed heuristic algorithms is not optimal, it is a super-set of the optimal intervention.
As a result, we propose greedily deleting such extra variables from the intervention set $A$ while $Q[S]$ remains identifiable.
That is, assuming $A$ is the output of one of the Algorithms \ref{alg: heursitic 1},\ref{alg: heursitic 2},\ref{alg: heursitic 3}, we start with the vertex $a\in A$ with the highest cost, and while there exists $a\in A\setminus\PaC{S}$ such that $\{A\setminus\{a\}\}\in\mathbf{ID}_\mathcal{G}(S,V\setminus S)$, we remove $a$ from $A$.
Testing whether a set is in $\mathbf{ID}_\mathcal{G}(S,V\setminus S)$ requires time $\mathcal{O}(\vert V\vert^3)$ in the worst case.
As a result, the proposed post-process does not alter the worst-case complexity of the algorithms.
\paragraph{Discussion.}
The proposed algorithms have no theoretical guarantee of how well they can approximate the solution to the min-cost intervention problem.
However, their performances as well as their runtimes are dependent on the structure of the graph $\mathcal{G}_{[H]}$, where $H=Hhull(S,\mathcal{G}_{[V\setminus\PaC{S}]})$.
For instance, if the edge-induced subgraph of $\mathcal{G}_{[H]}$ on its bidirected edges is much more dense than the edge-induced subgraph of $\mathcal{G}_{[H]}$ on its directed edges, Algorithm \ref{alg: heursitic 1} will need to solve a more complex min-weight vertex cover problem compared to Algorithm \ref{alg: heursitic 2}.
It will also add potentially many extra vertices that are not needed in the intervention set.
Since $\mathcal{G}_{[H]}$ is constructed as a pre-process of all three algorithms, we propose choosing the heuristic algorithm after constructing $\mathcal{G}_{[H]}$ as follows.
% For subsets $S$ where $\mathcal{G}_{[S]}$ comprises more than one c-component, Algorithms \ref{alg: heursitic 3} is preferred over the other two.
% For single c-components, 
Algorithm \ref{alg: heursitic 2} is preferred over the other two, as it solves a min vertex cut in a directed graph rather than an undirected graph. 
However, if the graph $\mathcal{G}_{[H]}$ is dense on its directed edges, we choose Algorithm \ref{alg: heursitic 1}.
In certain cases, as shown by our empirical evaluation, the greedy approach achieves lower regret despite the higher time complexity.

\section{Hitting Set \& Algorithm \ref{alg: ultimate}}\label{apdx: hit set}
\subsection{Greedy approach for minimum hitting set}
In this section, we present the greedy weighted minimum hitting set algorithm mentioned in the main text \citep{johnson1974approximation}.
This greedy approach is depicted in Algorithm \ref{alg: greedy hitting set}.
Let $V$, $\mathbf{F}$, and $\omega(\cdot)$ be the universe of objects, the collection of sets for which we want to find a hitting set, and the weight function respectively.
For an object $v\in V$, we denote by $N(v)$ the number of sets $F\in\mathbf{F}$ such that $v\in F$, that is, the number of sets $v$ hits.
We begin with an empty hitting set $A$.
At each iteration, we choose the variable $v\in V$ that maximizes $\frac{N(v)}{\omega(v)}$, and add it to $A$.
We then remove all the sets $F$ that include $v$ from $\mathbf{F}$.
The algorithm runs until $\mathbf{F}$ becomes empty.
The resulting set $A$ is a hitting set for $\mathbf{F}$.
It has been shown that this greedy algorithm achieves a logarithmic-factor approximation of the optimal hitting set in the worst case \citep{johnson1974approximation,chvatal1979greedy}.
Note that using certain data structures, we can avoid recalculating $N(v)$ at each iteration in line (4).

\begin{algorithm}
\caption{Greedy weighted minimum hitting set algorithm.}
\label{alg: greedy hitting set}
\begin{algorithmic}[1]
    \State {\bfseries input:} universe $V$, collection of sets $\mathbf{F}$, weights $\omega(v)$ for $v\in V$, {\bfseries output:} a hitting set for $\mathbf{F}$
    \While{$\mathbf{F}\neq\emptyset$}
        \ForAll{$v\in V$}
            \State $N(v)\gets \vert\{F\in\mathbf{F}\vert v\in F\}\vert$
        \EndFor
        \State $v\gets\arg\min_{v\in V}\frac{N(v)}{\omega(v)}$
        \State $A\gets A\cup\{v\}$
        \State $\mathbf{F}\gets\mathbf{F}\setminus\{F\in\mathbf{F}\vert v\in F\}$
    \EndWhile
    \State {\bfseries return} $A$
\end{algorithmic}
\end{algorithm}
\subsection{On Algorithm \ref{alg: ultimate}}
In this section, we provide a slight modification of Algorithm \ref{alg: ultimate}.
One caveat to Algorithm \ref{alg: ultimate} is that it might call numerous times as a subroutine, a solution to the minimum hitting set problem (line (13)).
Although we propose using the greedy approach mentioned above as the subroutine, we also provide a modification, depicted as Algorithm \ref{alg: ultimate 2}, which reduces the number of calls to this subroutine as follows.
At the end of each iteration (inner loop, that is, lines (7-13)), instead of solving the minimum hitting set problem, we simply add the vertex $a$ found in the last step to a set of interventions $A$.
We postpone the call to minimum hitting set to when $A$ grows large enough so that $\{A\}\in\mathbf{ID}_\mathcal{G}(S,V\setminus S)$.
Through this modification, we discover more hedges and add them to $\mathbf{F}$ before calling for the solution of the minimum hitting set problem.
Therefore, this modification reduces the number of calls to the subroutine of solving the min hitting set in certain cases.

\begin{algorithm}[t]
\caption{Modified algorithm to reduce the calls to minimum hitting set.}
\label{alg: ultimate 2}
\begin{algorithmic}[1]
    \State $\mathbf{F}\gets\emptyset, A\gets\emptyset$
    \State $H\gets Hhull(S,\mathcal{G}_{[V\setminus\PaC{S}]})$
    \If{$H=S$}
        \State \textbf{return} $\PaC{S}$
    \EndIf
    \While{True}
        \While{True}
            \While{True}
                \State $a\gets\arg\min_{a\in H\setminus S}\mathbf{C}(a)$
                \If{$Hhull(S,\mathcal{G}_{[H\setminus\{a\}]})=S$}
                    \State $\mathbf{F}\gets \mathbf{F}\cup\{H\}$
                    \State \textbf{break}
                \Else
                    \State $H\gets Hhull(S,\mathcal{G}_{[H\setminus\{a\}]})$
                \EndIf
            \EndWhile
            \State $A\gets A\cup\{a\}$
            \If{$\{A\cup\PaC{S}\}\in\mathbf{ID}_\mathcal{G}(S,V\setminus S)$}
                \State {\bfseries break}
            \EndIf
            \State $H\gets Hhull(S,\mathcal{G}_{[V\setminus (A\cup\PaC{S})]})$
        \EndWhile
        \State $A\gets$ min hitting set for $\{F\setminus S\vert F\in\mathbf{F}\}$
        \If{$\{A\cup\PaC{S}\}\in\mathbf{ID}_\mathcal{G}(S,V\setminus S)$}
            \State {\bfseries return} $A\cup\PaC{S}$
        \EndIf
    \EndWhile
\end{algorithmic}
\end{algorithm}

\section{Further Empirical Evaluation}\label{apdx: empirical}
In this section, we provide further details of the experimental setup of the paper.
We also provide complementary evaluations of our proposed algorithms.

\paragraph{Setup.} We have evaluated our algorithms in two different settings.
In Appendix \ref{apdx:benchmark}, we evaluate our algorithms on a set of well-known graphs, which are the benchmark causal graphs in the causality literature.
These graphs are obtained under the assumption of no latent variables.
However, often the observed variables of a system are confounded by a hidden variable.
We added a common confounder for each pair of variables in these graphs with probability $q$.
We then ran our algorithms to find the min-cost intervention for identifying $Q[S]$, where $S$ is the last vertex in the causal order.
We assumed that the cost of intervening on each variable is uniformly sampled from $\{1,2,3,4\}$.

In the second setting considered throughout our evaluations, we generated random graphs based on Erdos-Renyi generative model.
The directed and bidirected edges of the graph in this model are sampled mutually independently, with probabilities $p$ and $q$ respectively.
We then assigned a random cost of intervening to each variable, sampled from the uniform distribution over $\{1,2,3,4\}$.
Set $S$ in these set of evaluations is randomly chosen among the last 5\% vertices of the graph, such that $\mathcal{G}_{[S]}$ is a c-component.
Appendix \ref{apdx:random} provides empirical results of our algorithms on the randomly generated graphs.
%We also provide an evaluation of our general algorithm (Algorithm \ref{alg: general heuristic}) in Figure \ref{fig:general}.
Finally, an evaluation of the hedge enumeration task of Algorithm \ref{alg: ultimate} is given in Figure \ref{fig:numhedge}.

\subsection{Benchmark Structures} \label{apdx:benchmark}

In this section, we evaluate our algorithms on graphs corresponding to real-world problems, namely the Barley \citep{kristensen1997decision}, Water \citep{jensen1989expert} and Mehra \citep{vitolo2018modeling} structures \footnote{See https://www.bnlearn.com/bnrepository/ for details.}.
These structures are formed as causal DAGs under the assumption of no hidden confounder.
However, often hidden variables confound observed variables.
In our experiments, we randomly added a latent confounder for every pair of variables with probability $q\in\{0.05, 0.15,0.25,0.35\}$, and evaluated the performance of our algorithms.
The intervention costs are assigned uniformly at random from $\{1,2,3,4\}$, and the set $S$ is chosen to be the last vertex in the causal ordering.
The results are depicted in Figure \ref{fig:realworld}.

\begin{figure}
    \centering
    \begin{subfigure}[b]{0.99\textwidth}
        \centering
        \includegraphics[width=0.43\textwidth]{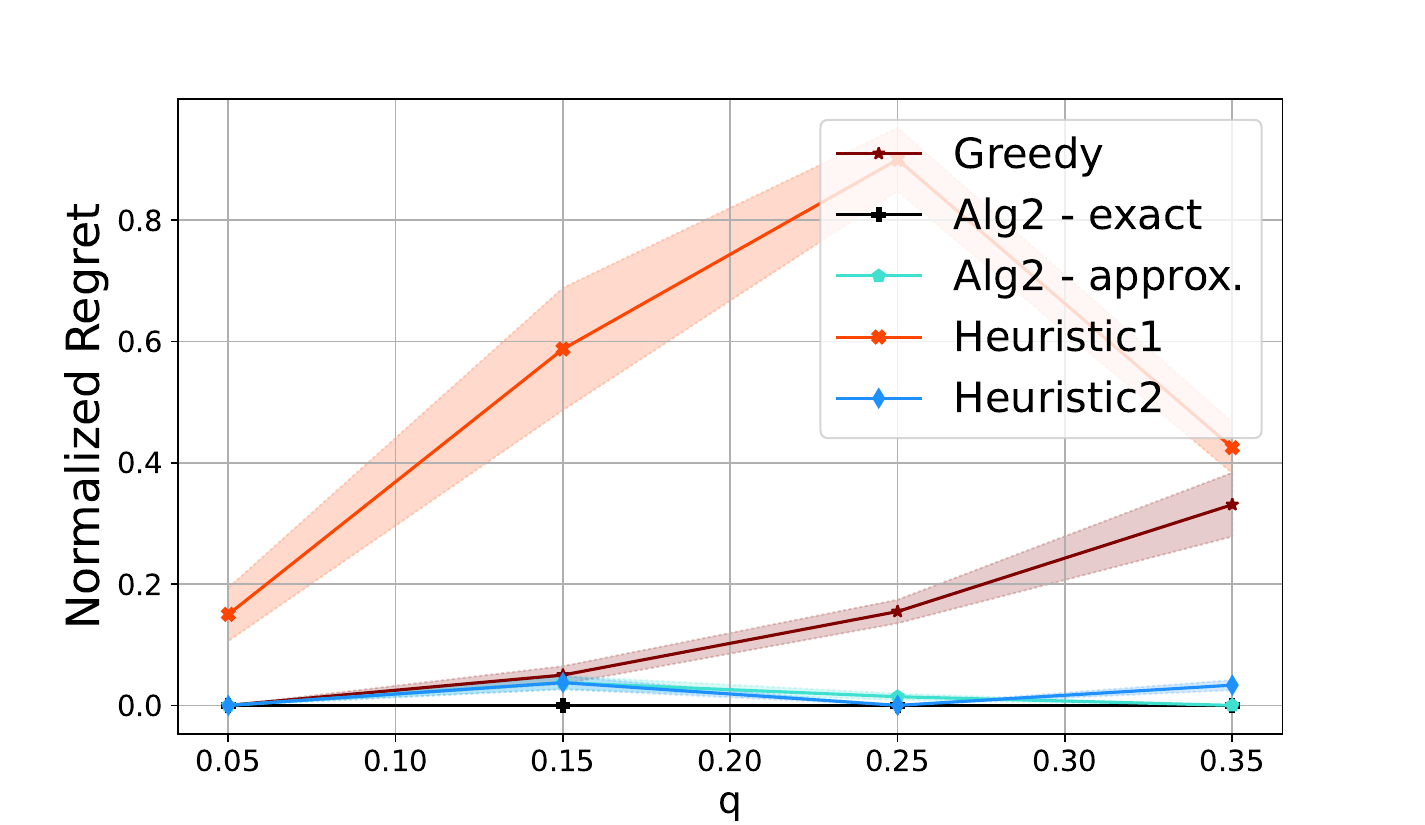}
        \includegraphics[width=0.43\textwidth]{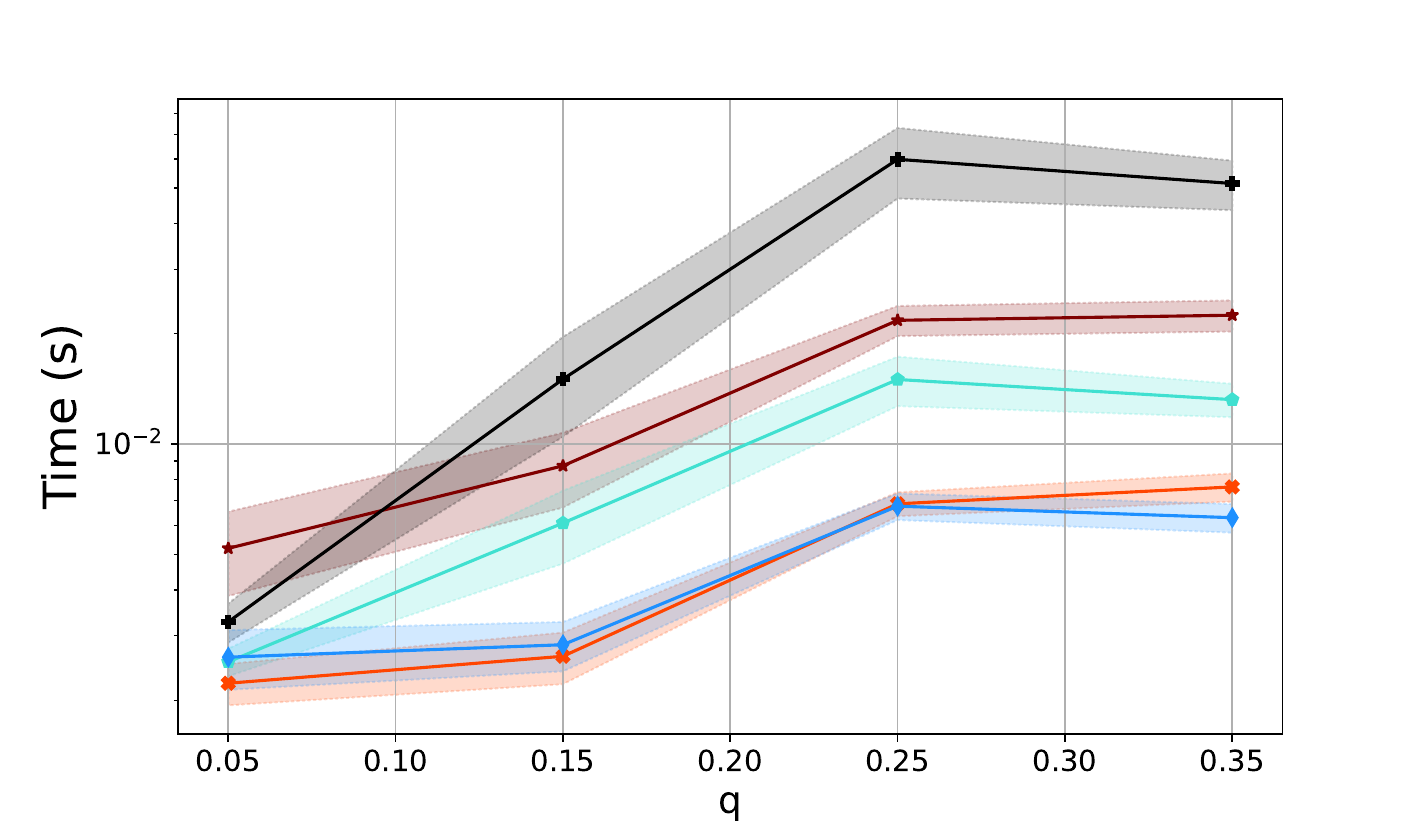}
        \caption{Barley structure}
        % \label{fig:exp runtime}
    \end{subfigure}
    \begin{subfigure}[b]{0.99\textwidth}
        \centering
        \includegraphics[width=0.43\textwidth]{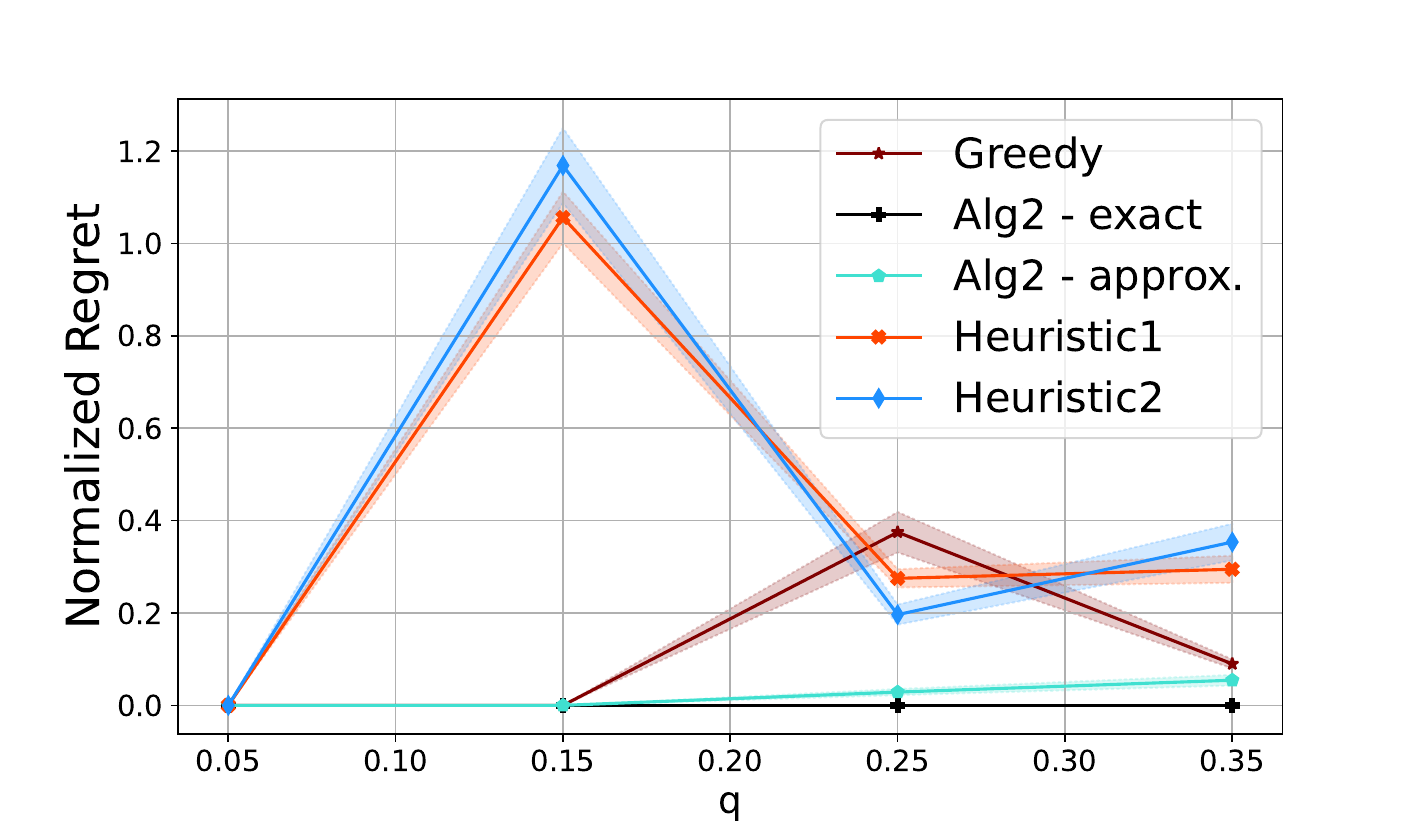}
        \includegraphics[width=0.43\textwidth]{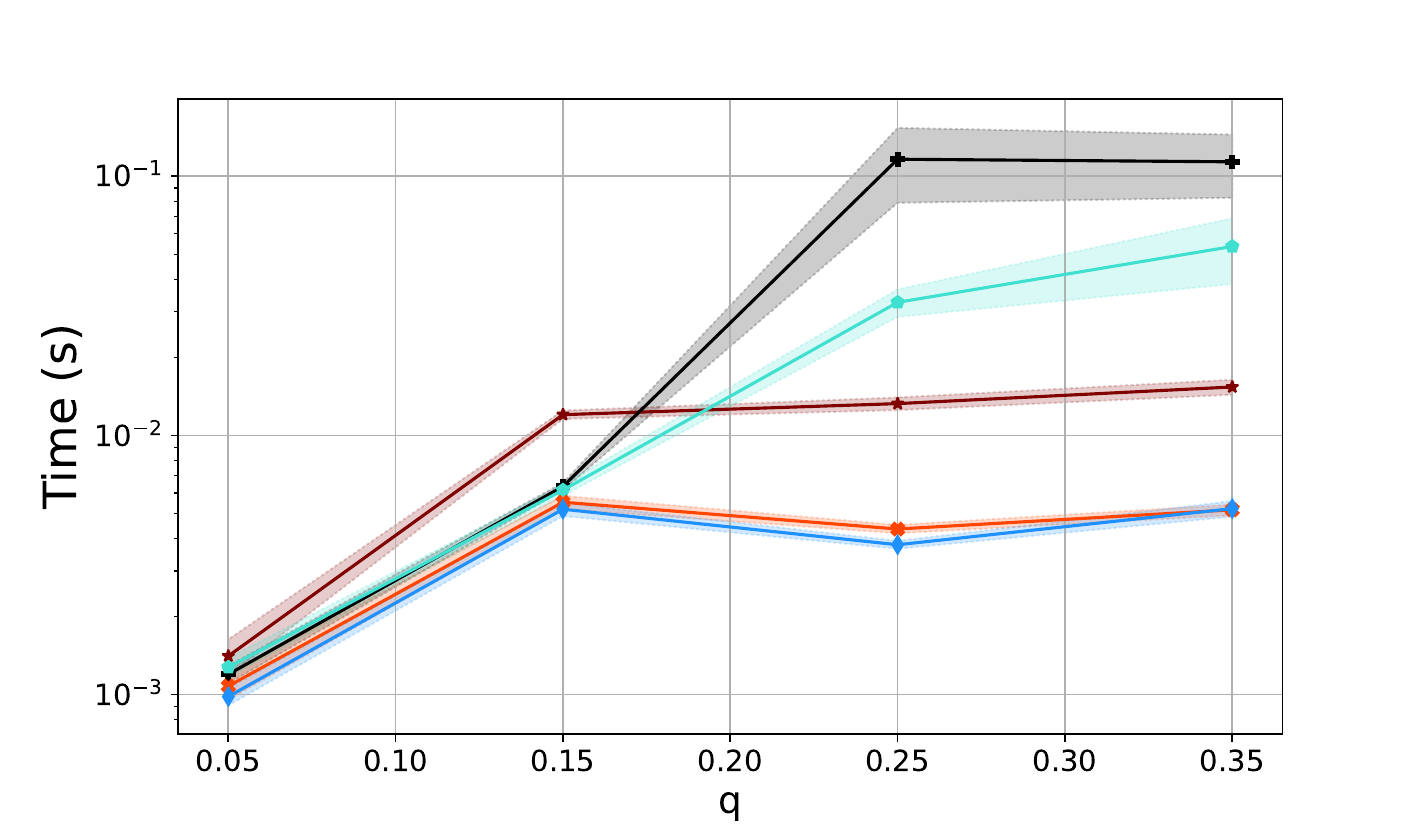}
        \caption{Water structure}
        % \label{fig:exp runtime}
    \end{subfigure}
    \begin{subfigure}[b]{0.99\textwidth}
        \centering
        \includegraphics[width=0.43\textwidth]{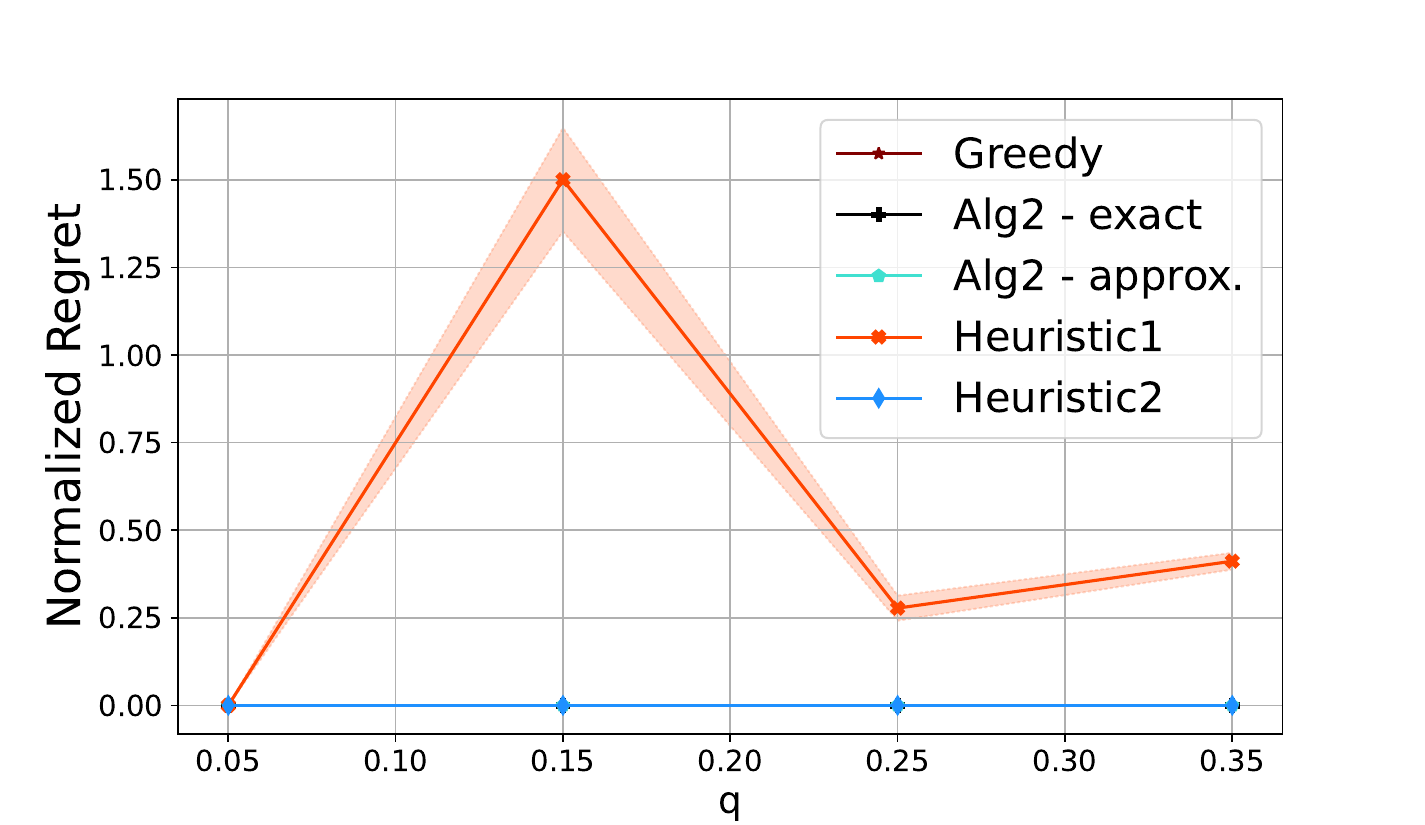}
        \includegraphics[width=0.43\textwidth]{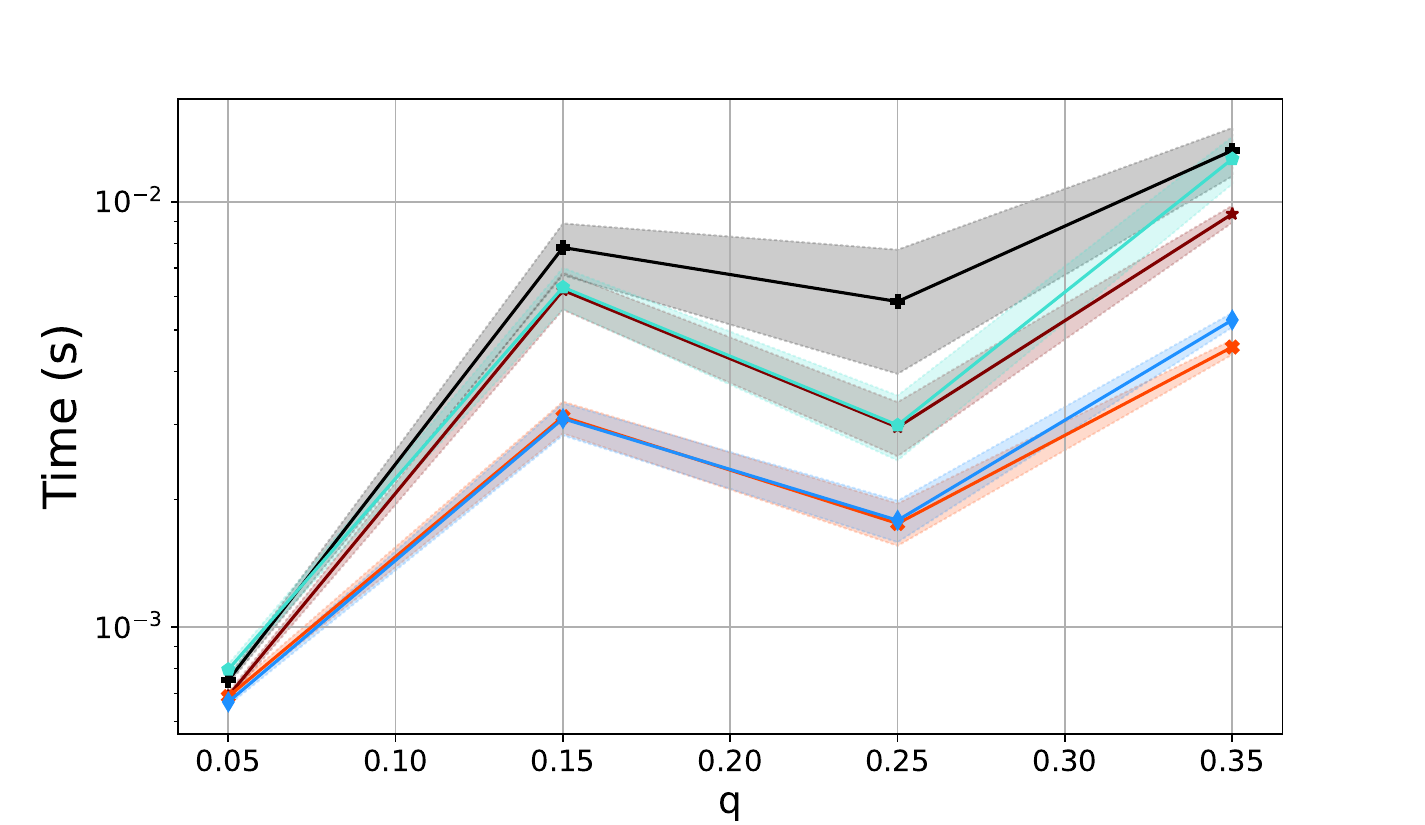}
        \caption{Mehra structure}
    \end{subfigure}
    
      \caption{The performance of the proposed algorithms on three real-world structures.}
    \label{fig:realworld}
\end{figure}

\subsection{Randomly Generated Graphs}\label{apdx:random}
Figure \ref{fig:fixedpq} illustrates the runtime and the normalized regret (as defined in Section \ref{sec:experiment}) of our algorithms on randomly generated graphs, with different values of $p$ and $q$ over random graphs of size $n=10$ to $n=200$.
Figure \ref{fig:fixednp} shows the effect of the density of the bidirected edges on the performance of the algorithms.
Random graphs of size $n=30$ are generated with different values of $p$.
Figure \ref{fig:fixednq} shows the effect of the density of the directed edges on the performance of the algorithms.
Random graphs of size $n=30$ are generated with different values of the parameter $q$.
An important observation in all of these figures is that the normalized regret is not necessarily a monotone function of the graph size.
This measure depends on the structure of the graph, size and location of the desired set $S$, and the random cost assignments.
Another observation is that the runtime of the algorithms is not a monotone funciton of the graph density.
This is due to the fact that the denser the graph becomes, the larger the set $\PaC{S}$ grows.
As a result, the set $H$ defined in Equation \ref{eq:h-def} becomes smaller and after a certain threshold, the problem becomes even simpler for denser graphs.

%Figure \ref{fig:general} illustrates the runtime and the regret of the general algorithm (Algorithm \ref{alg: general heuristic}) on random graphs generated with parameters $p=0.35$ and $q=0.25$, where the subroutine in line 6 of the algorithm is chosen to be Algorithm \ref{alg: ultimate}, or one of the proposed heuristic algorithms. The set $S$ is chosen as the last 5\% vertices of the causal graph, and therefore it can consist of one or more c-components.

\begin{figure}
    \centering

    \begin{subfigure}[b]{0.99\textwidth}
        \centering
        \includegraphics[width=0.43\textwidth]{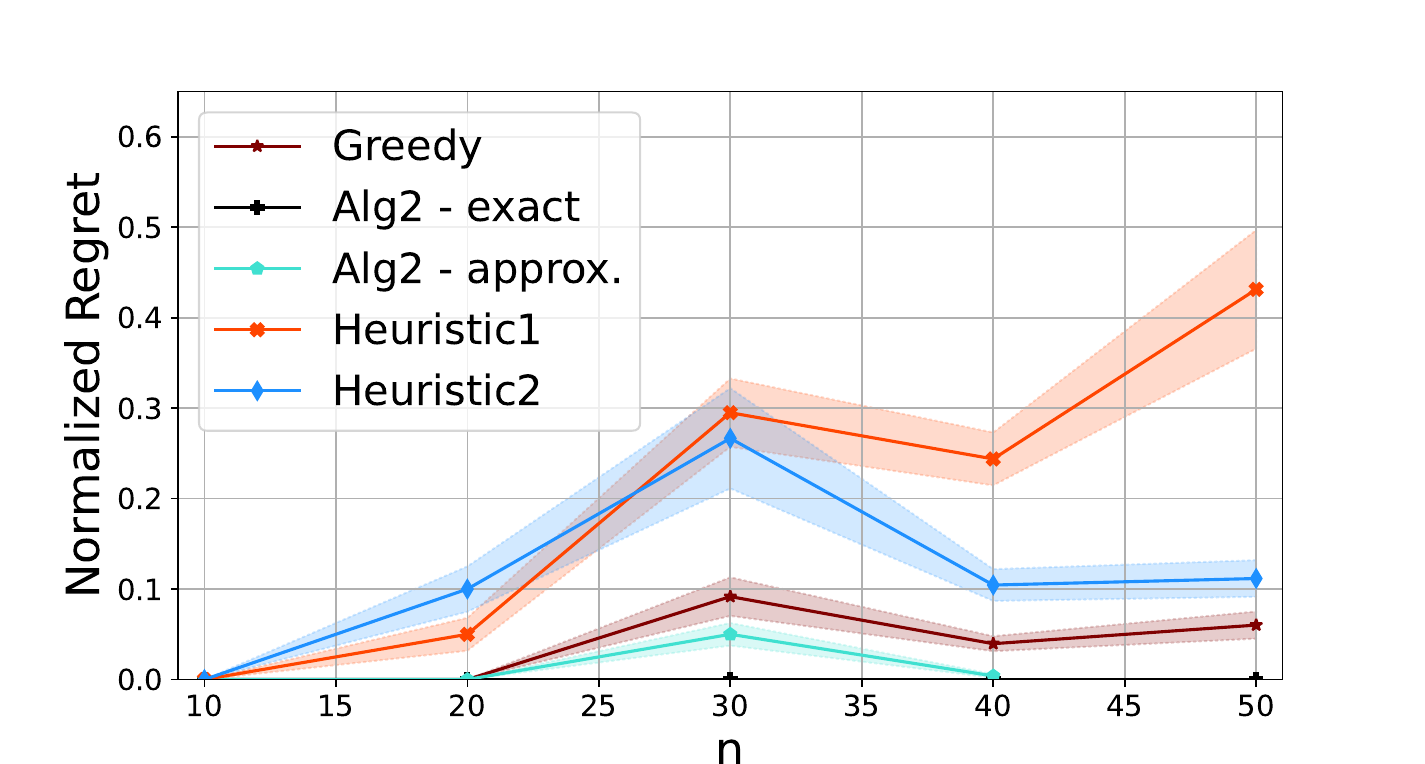}
        \includegraphics[width=0.43\textwidth]{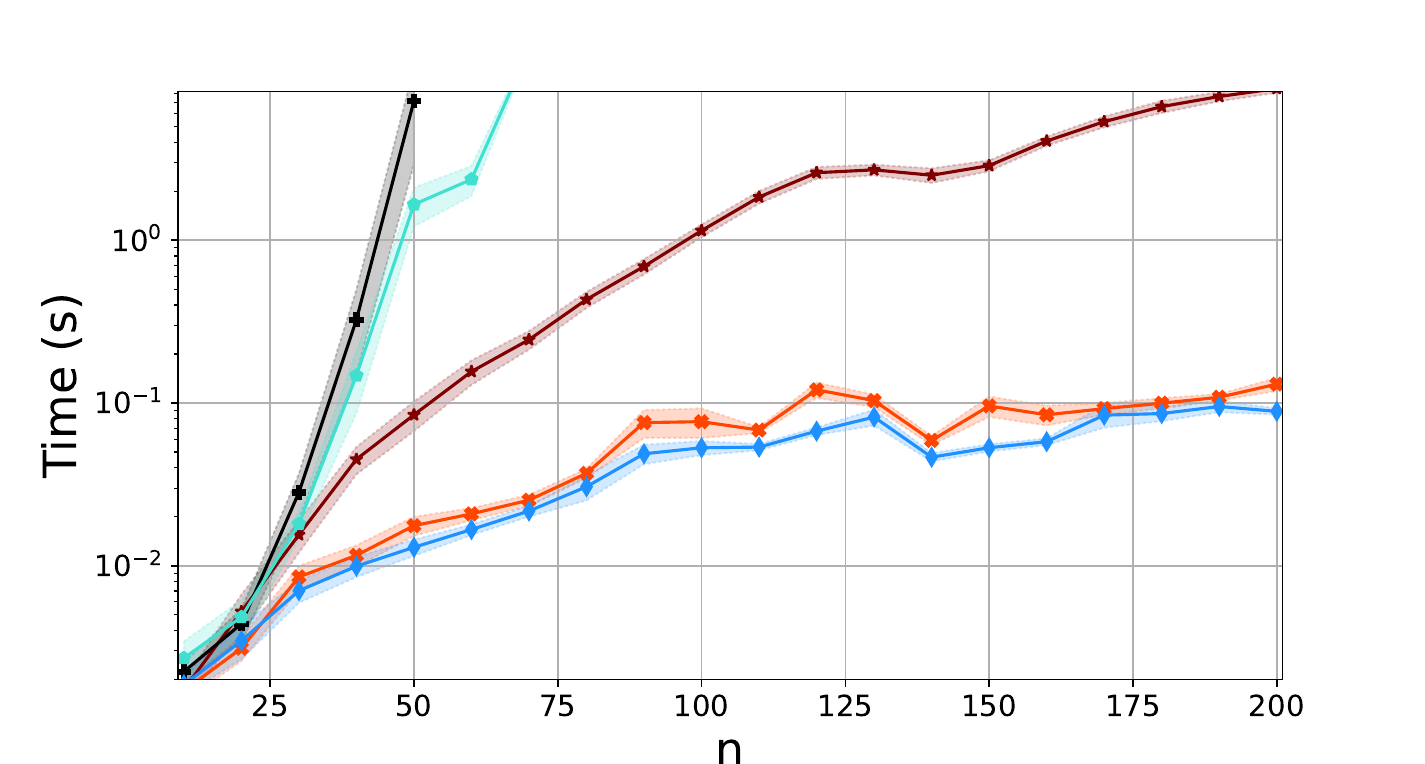}
        \caption{$p=0.15,q=0.15$}
        % \label{fig:exp runtime}
    \end{subfigure}
    \begin{subfigure}[b]{0.99\textwidth}
        \centering
        \includegraphics[width=0.43\textwidth]{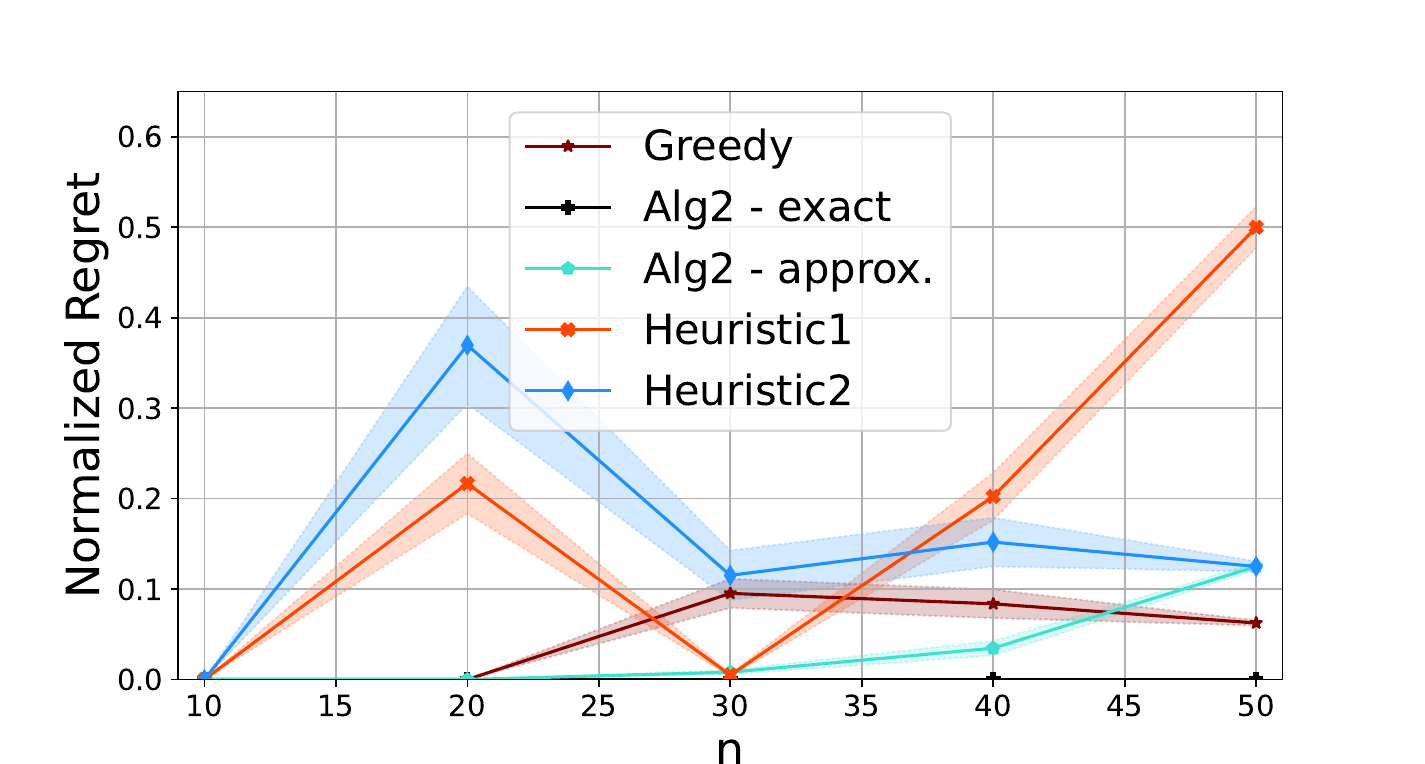}
        \includegraphics[width=0.43\textwidth]{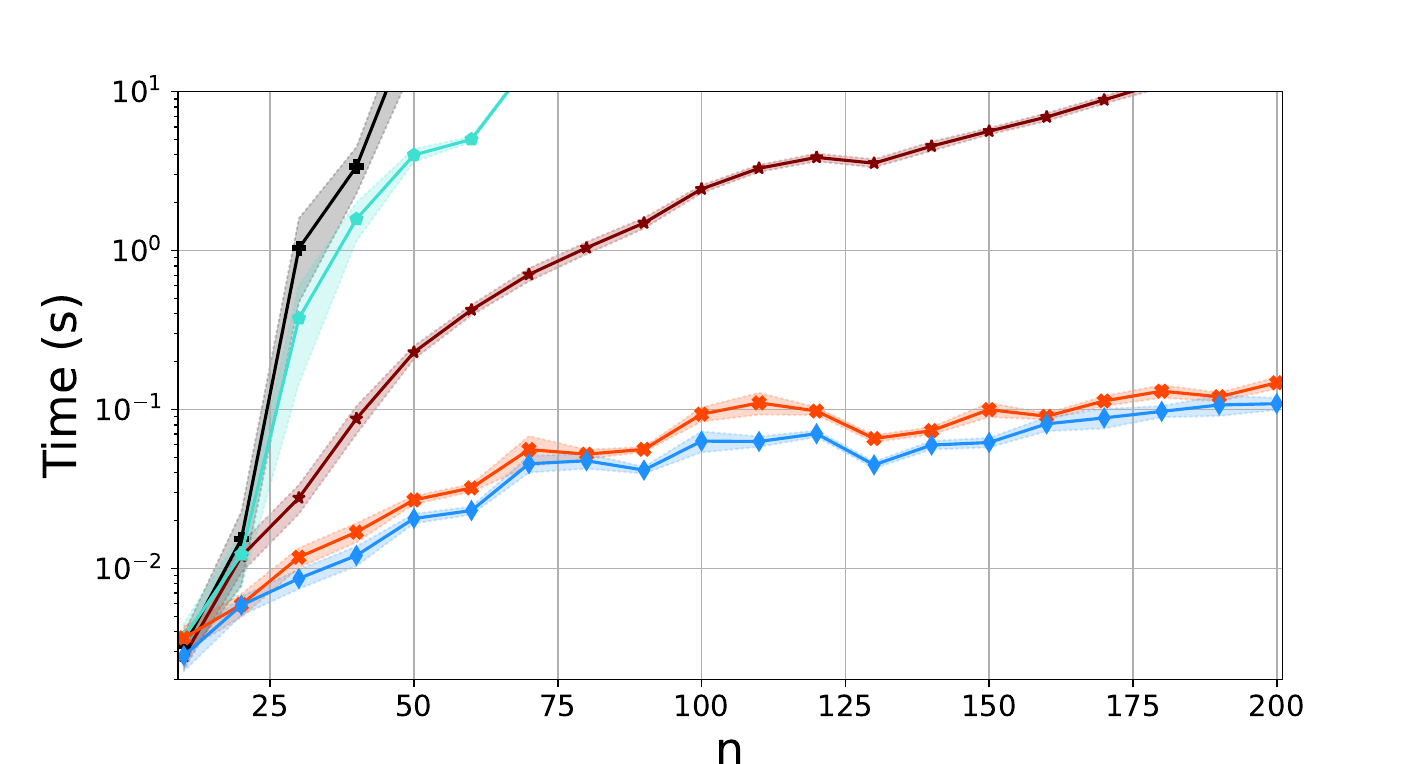}
        \caption{$p=0.25,q=0.15$}
        % \label{fig:exp runtime}
    \end{subfigure}
    \begin{subfigure}[b]{0.99\textwidth}
        \centering
        \includegraphics[width=0.43\textwidth]{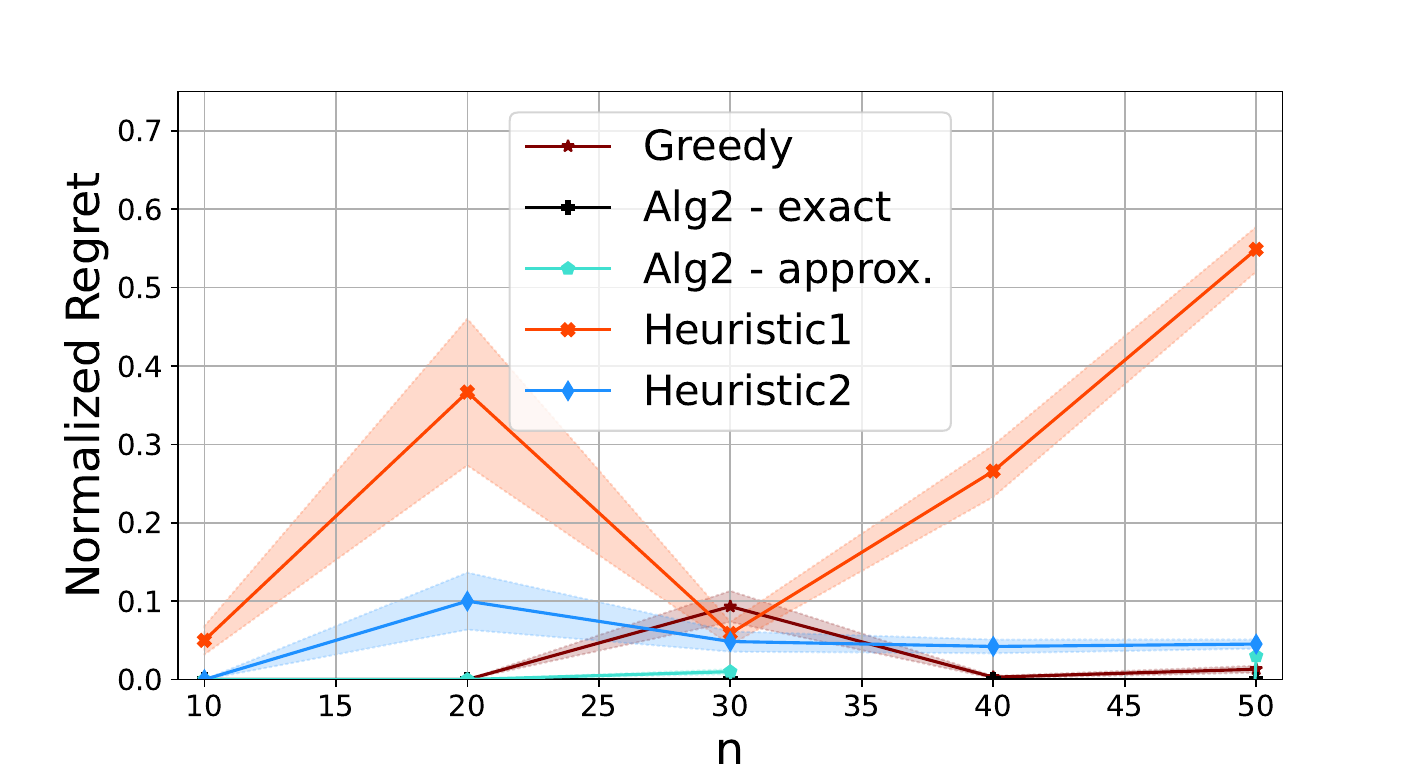}
        \includegraphics[width=0.43\textwidth]{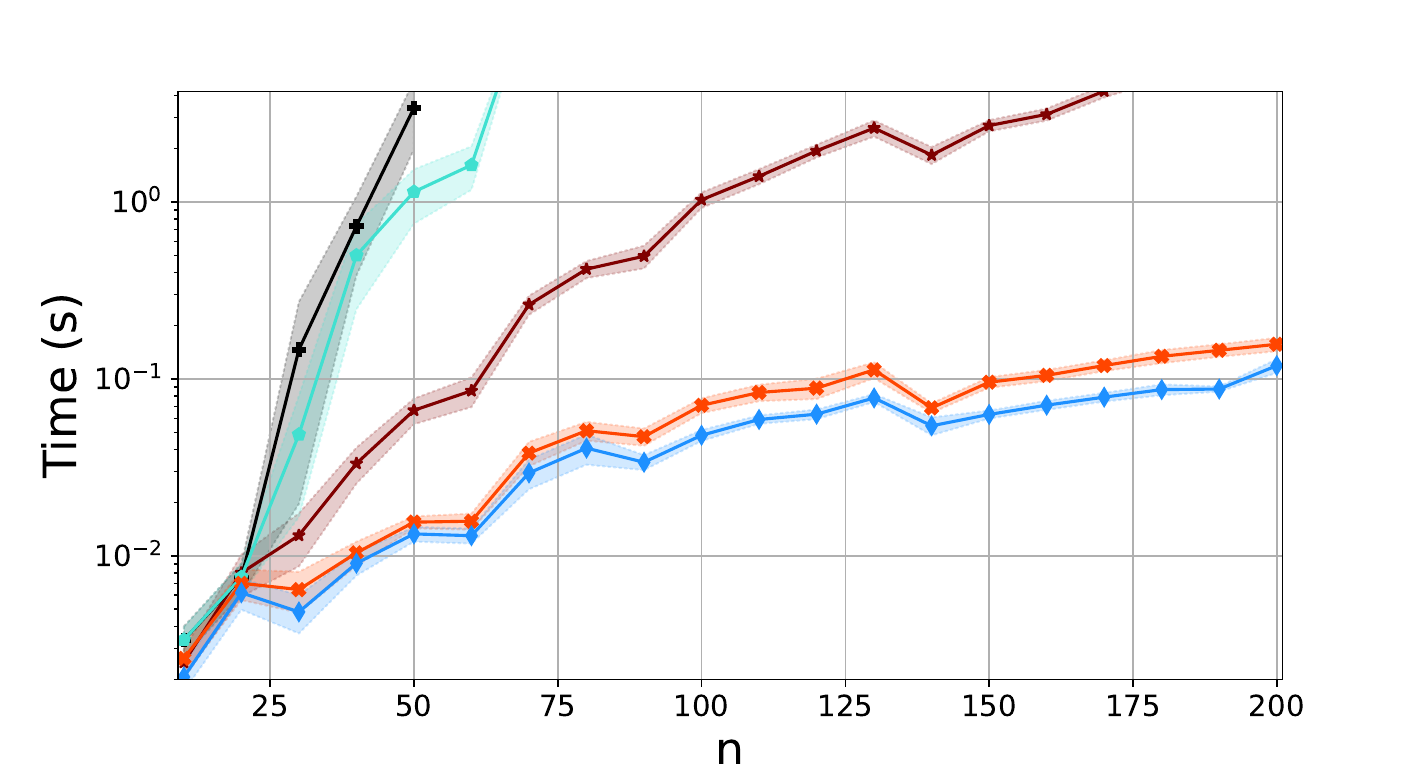}
        \caption{$p=0.15,q=0.25$}
        % \label{fig:exp runtime}
    \end{subfigure}
    \begin{subfigure}[b]{0.99\textwidth}
        \centering
        \includegraphics[width=0.43\textwidth]{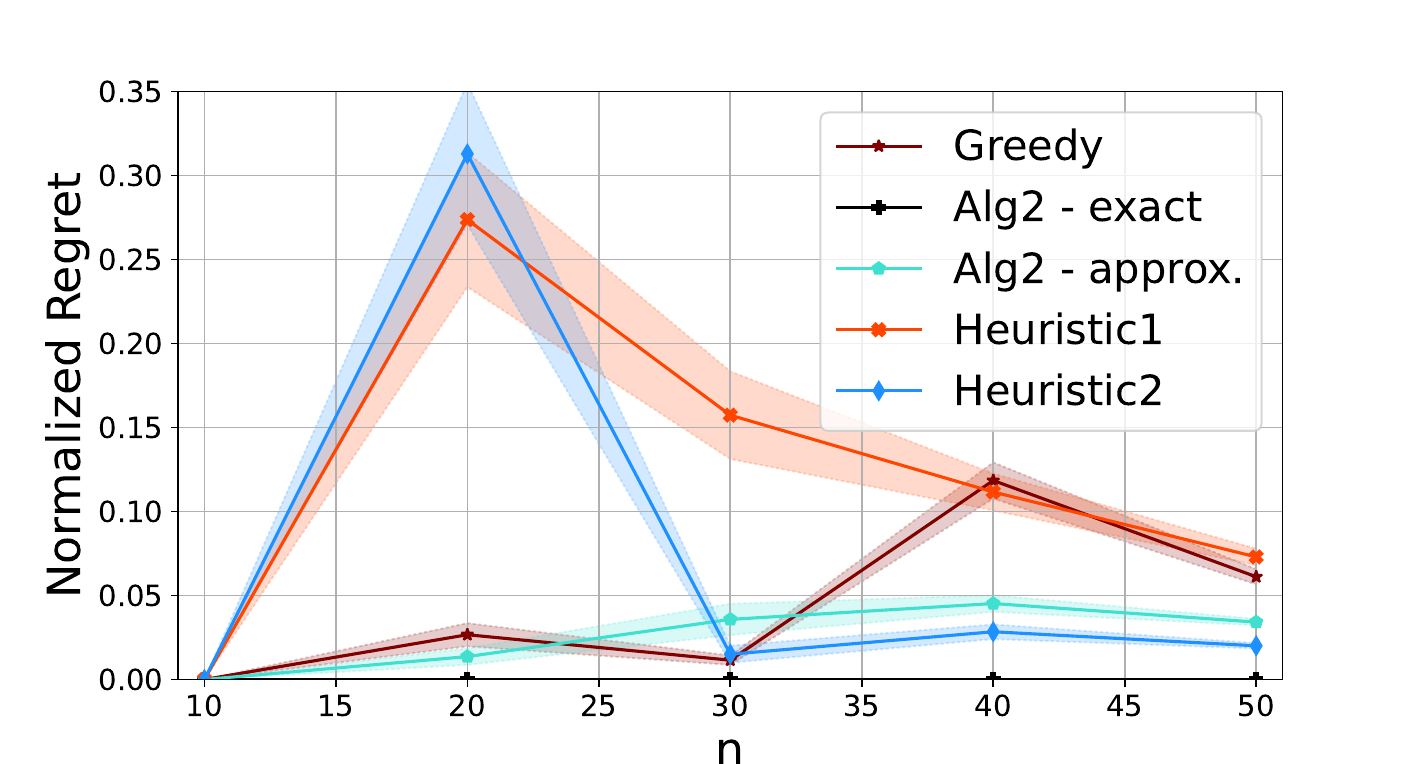}
        \includegraphics[width=0.43\textwidth]{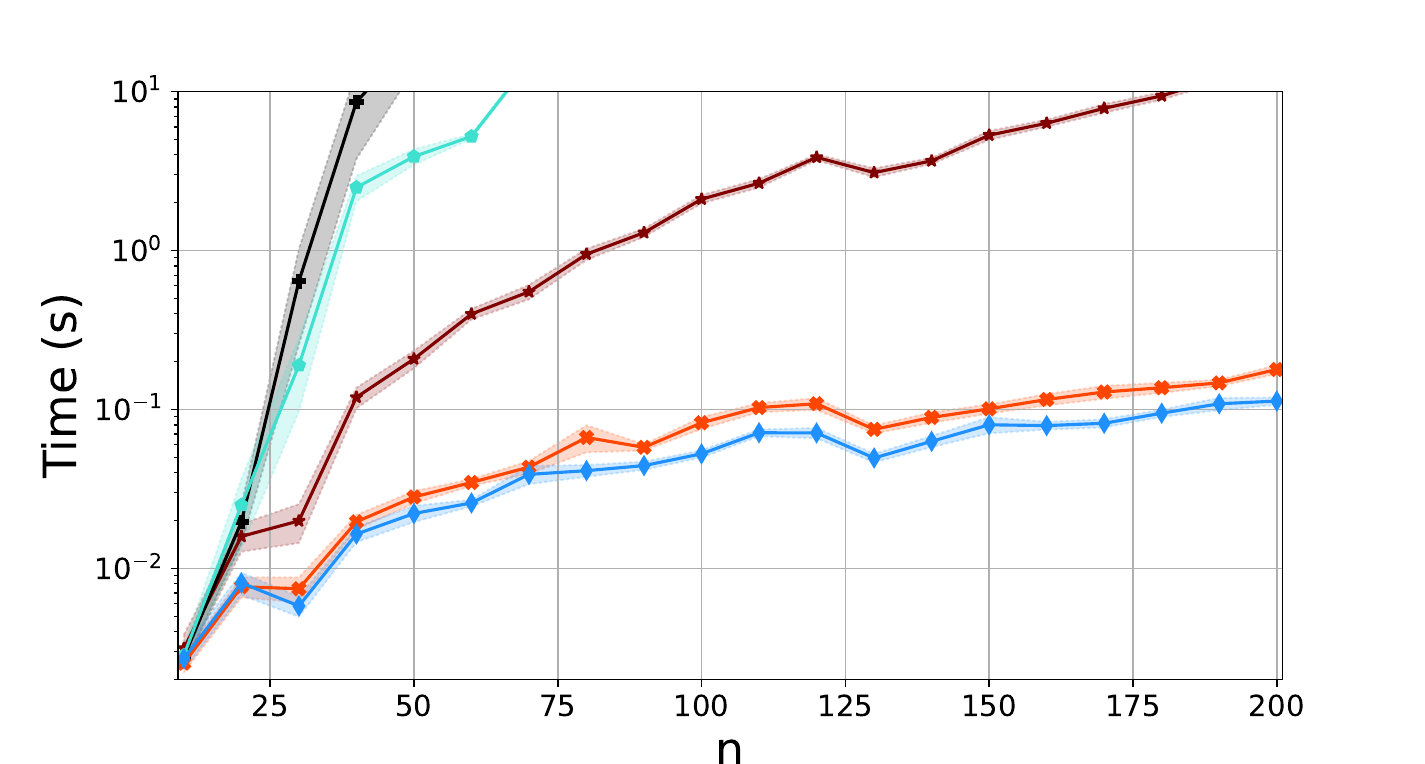}
        \caption{$p=0.25,q=0.25$}
        % \label{fig:exp runtime}
    \end{subfigure}
    \caption{Evaluation of the proposed algorithms on random graphs with various parameters.}
    \label{fig:fixedpq}
\end{figure}

\begin{figure}
    \centering
    \begin{subfigure}[b]{0.99\textwidth}
        \centering
        \includegraphics[width=0.43\textwidth]{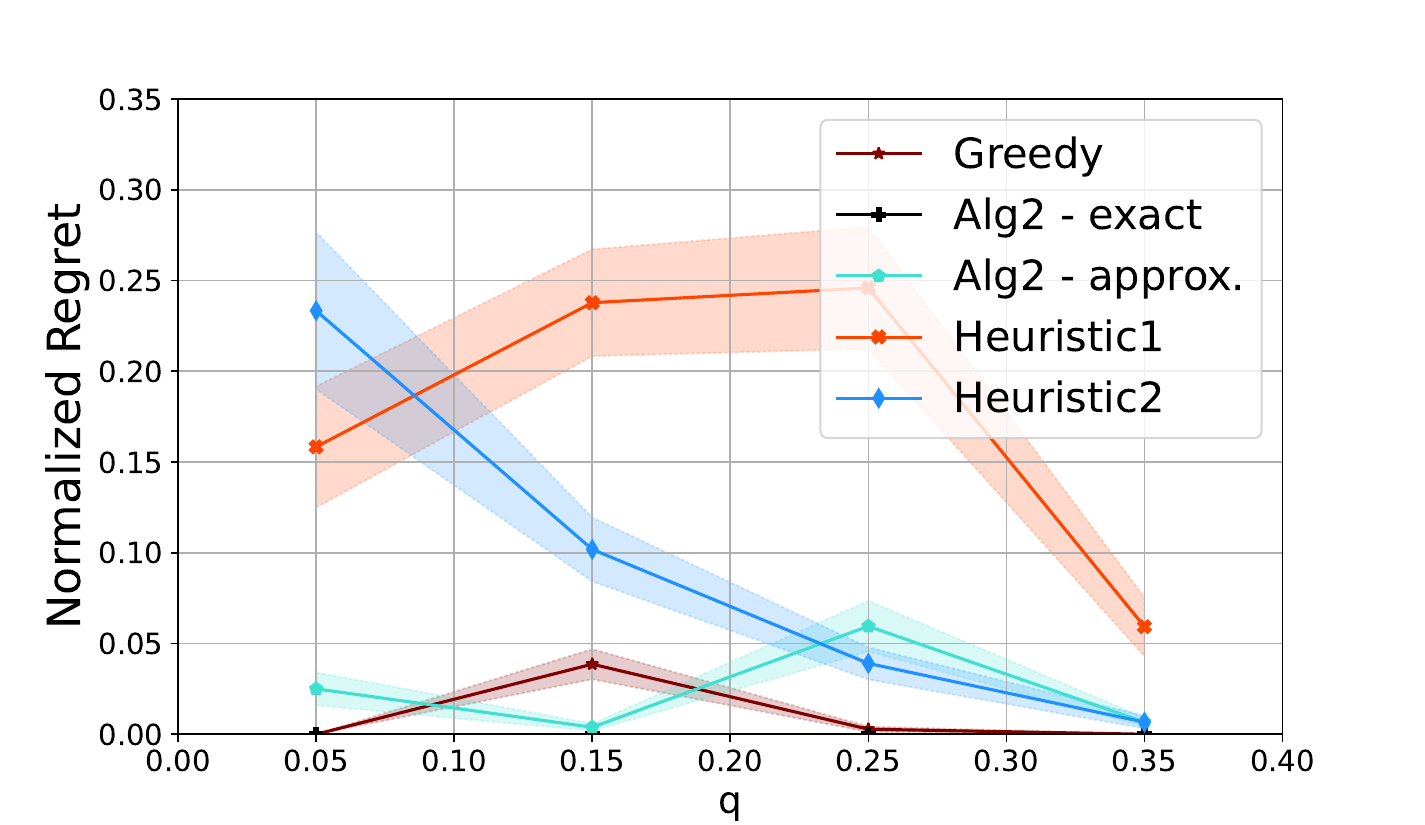}
        \includegraphics[width=0.43\textwidth]{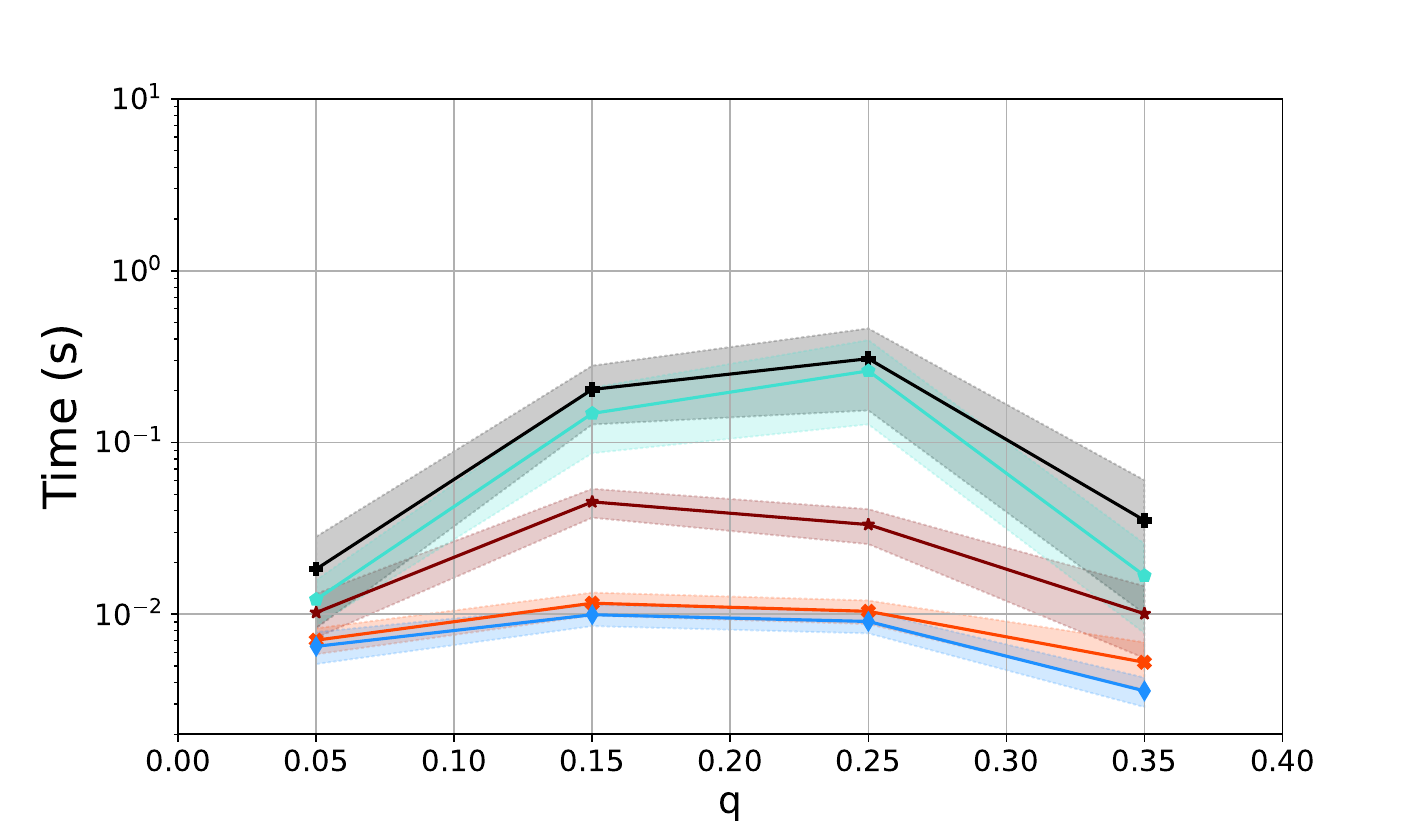}
        \caption{$p=0.15$}
        % \label{fig:exp runtime}
    \end{subfigure}
    \begin{subfigure}[b]{0.99\textwidth}
        \centering
        \includegraphics[width=0.43\textwidth]{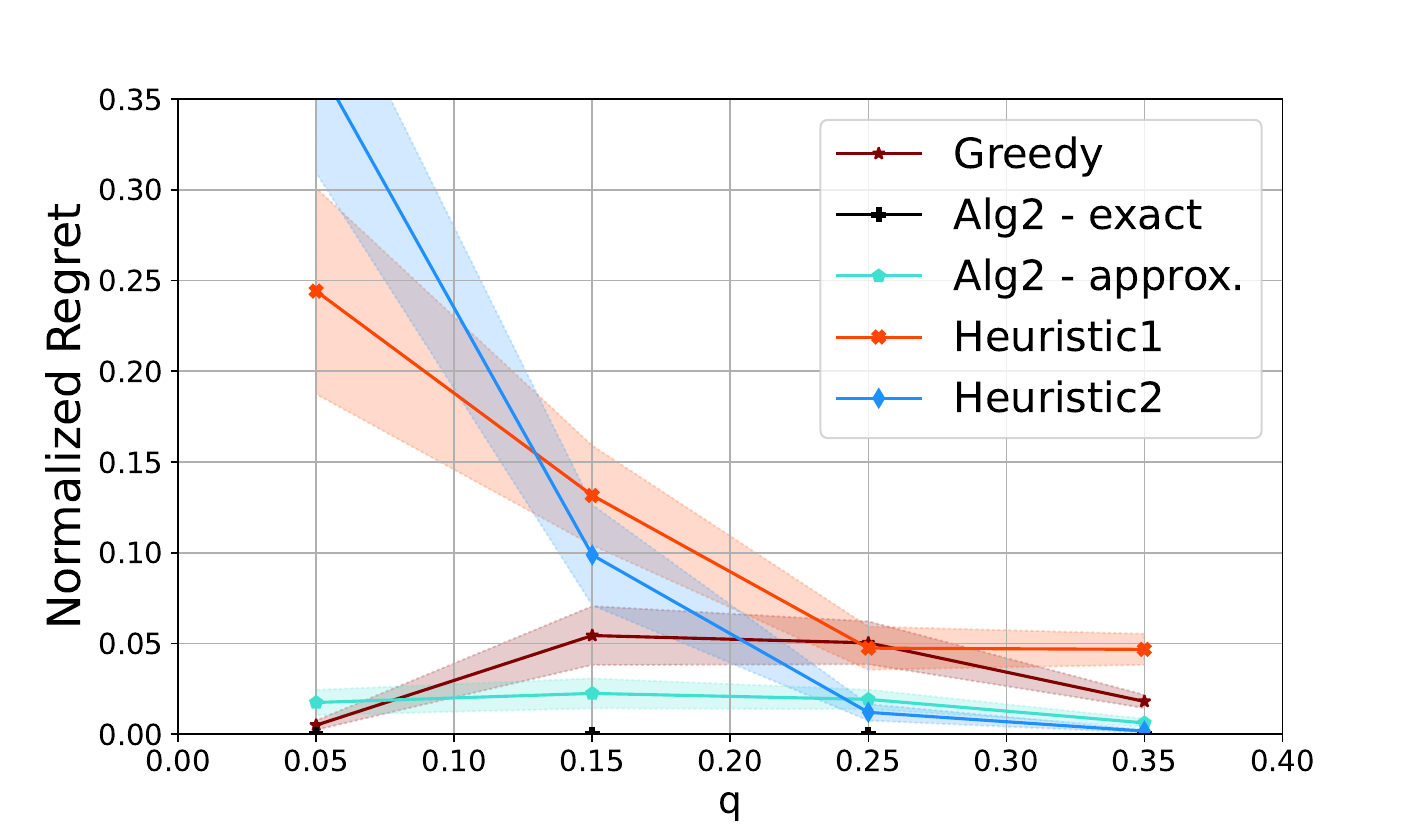}
        \includegraphics[width=0.43\textwidth]{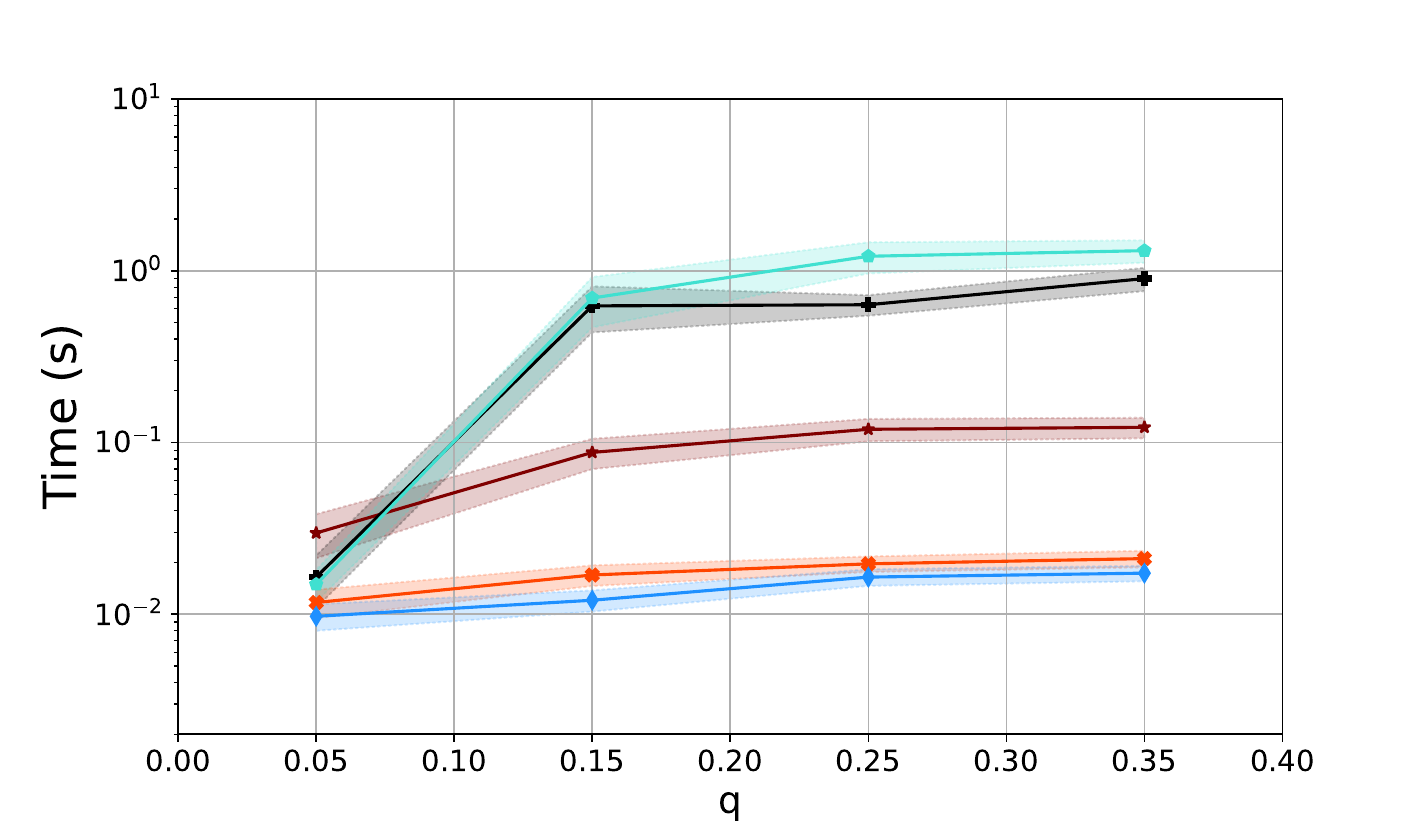}
        \caption{$p=0.25$}
        % \label{fig:exp runtime}
    \end{subfigure}
    \begin{subfigure}[b]{0.99\textwidth}
        \centering
        \includegraphics[width=0.43\textwidth]{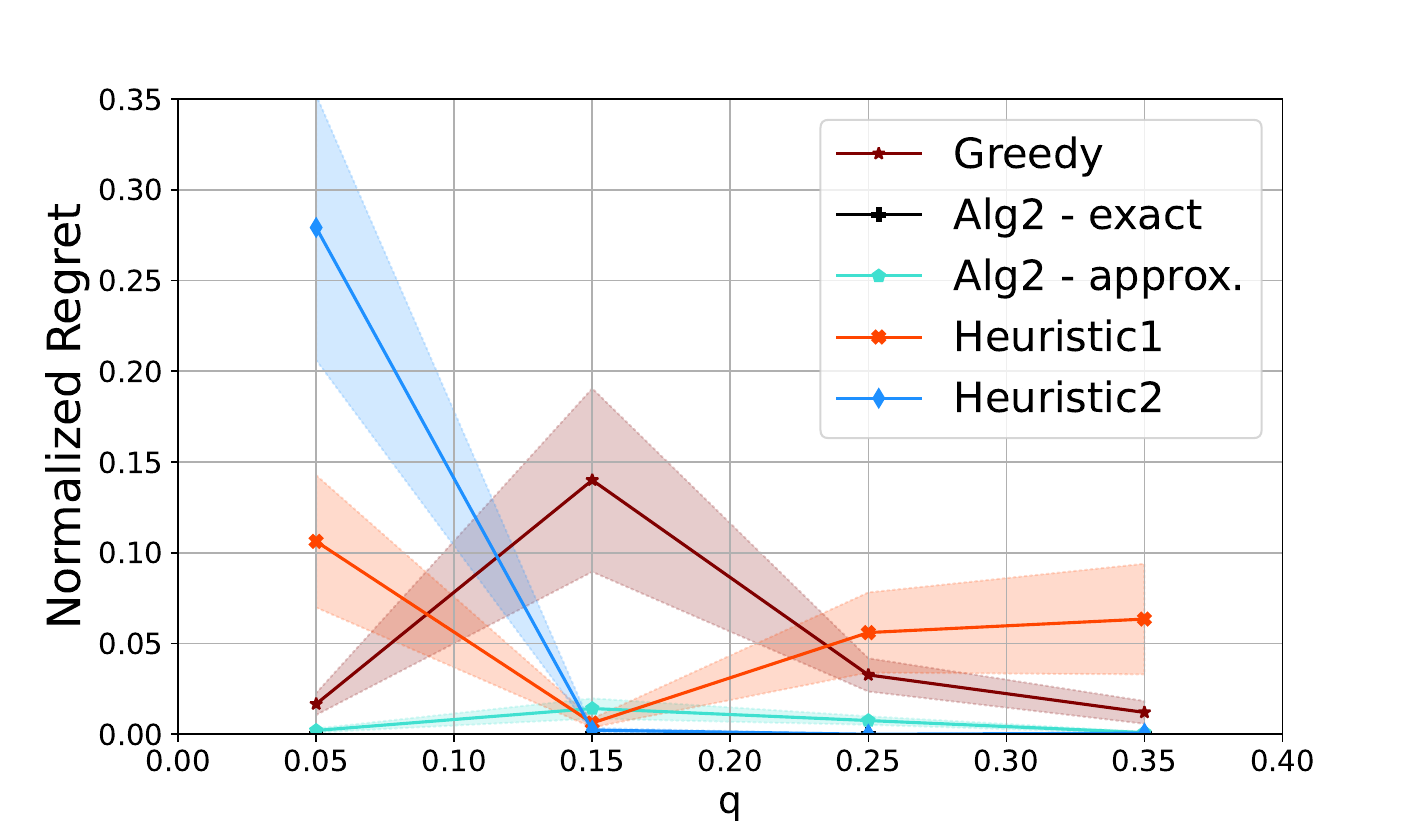}
        \includegraphics[width=0.43\textwidth]{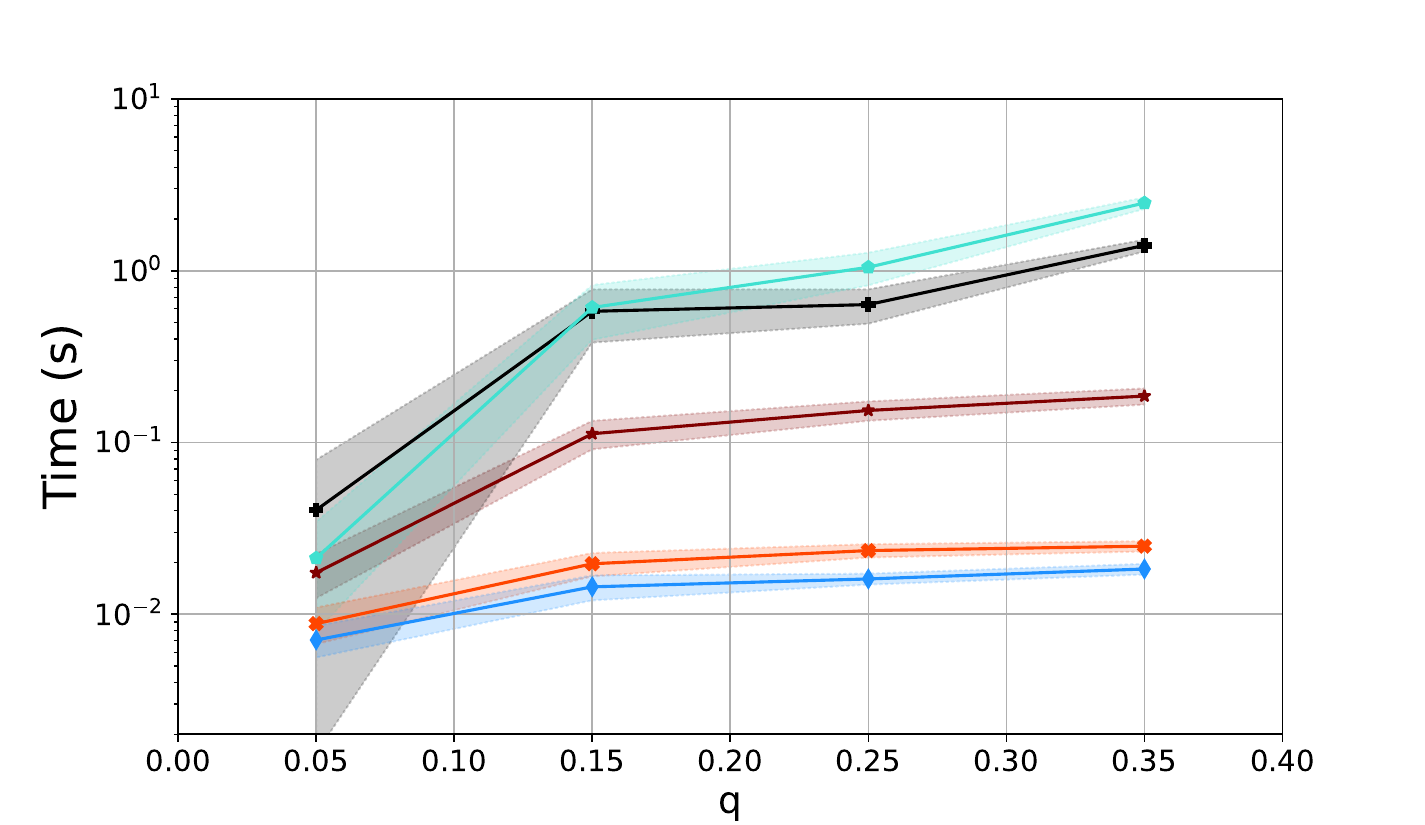}
        \caption{$p=0.35$}
    \end{subfigure}
    
      \caption{The effect of the density of bidirected edges. Random graphs of size $n=30$ are generated with different densities of directed edges.}
    \label{fig:fixednp}
\end{figure}

\begin{figure}
    \centering
    \begin{subfigure}[b]{0.99\textwidth}
        \centering
        \includegraphics[width=0.43\textwidth]{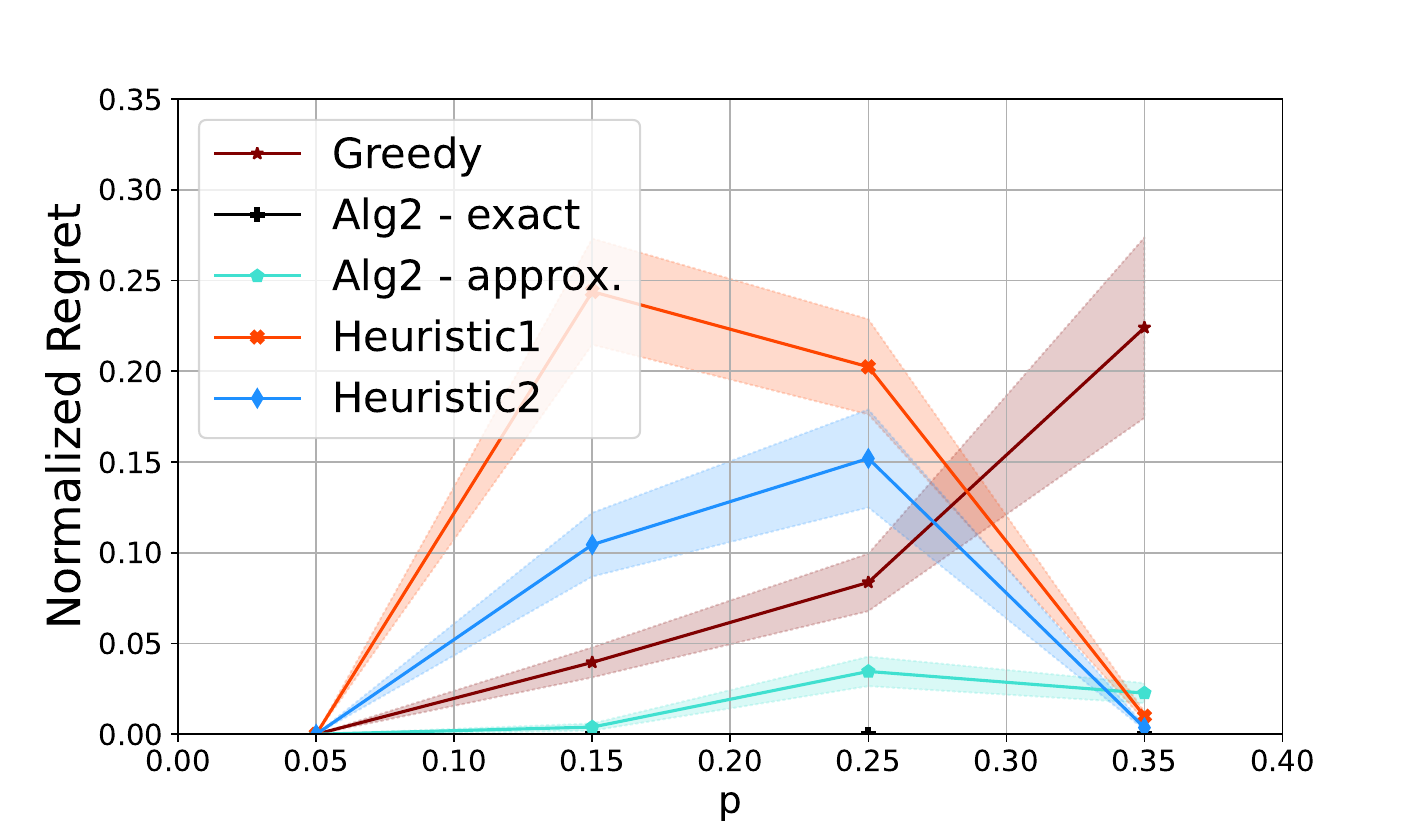}
        \includegraphics[width=0.43\textwidth]{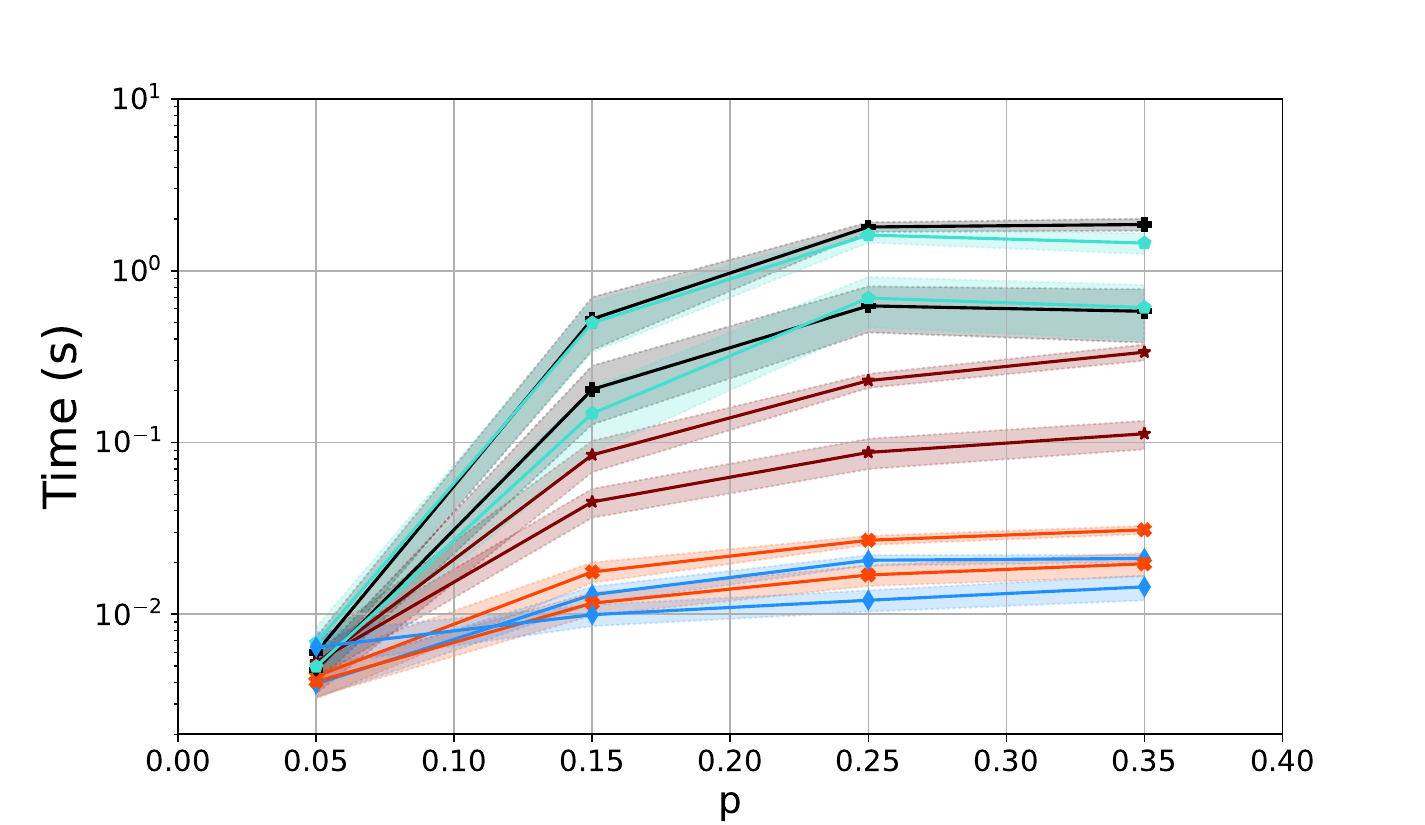}
        \caption{$q=0.15$}
        % \label{fig:exp runtime}
    \end{subfigure}
    \begin{subfigure}[b]{0.99\textwidth}
        \centering
        \includegraphics[width=0.43\textwidth]{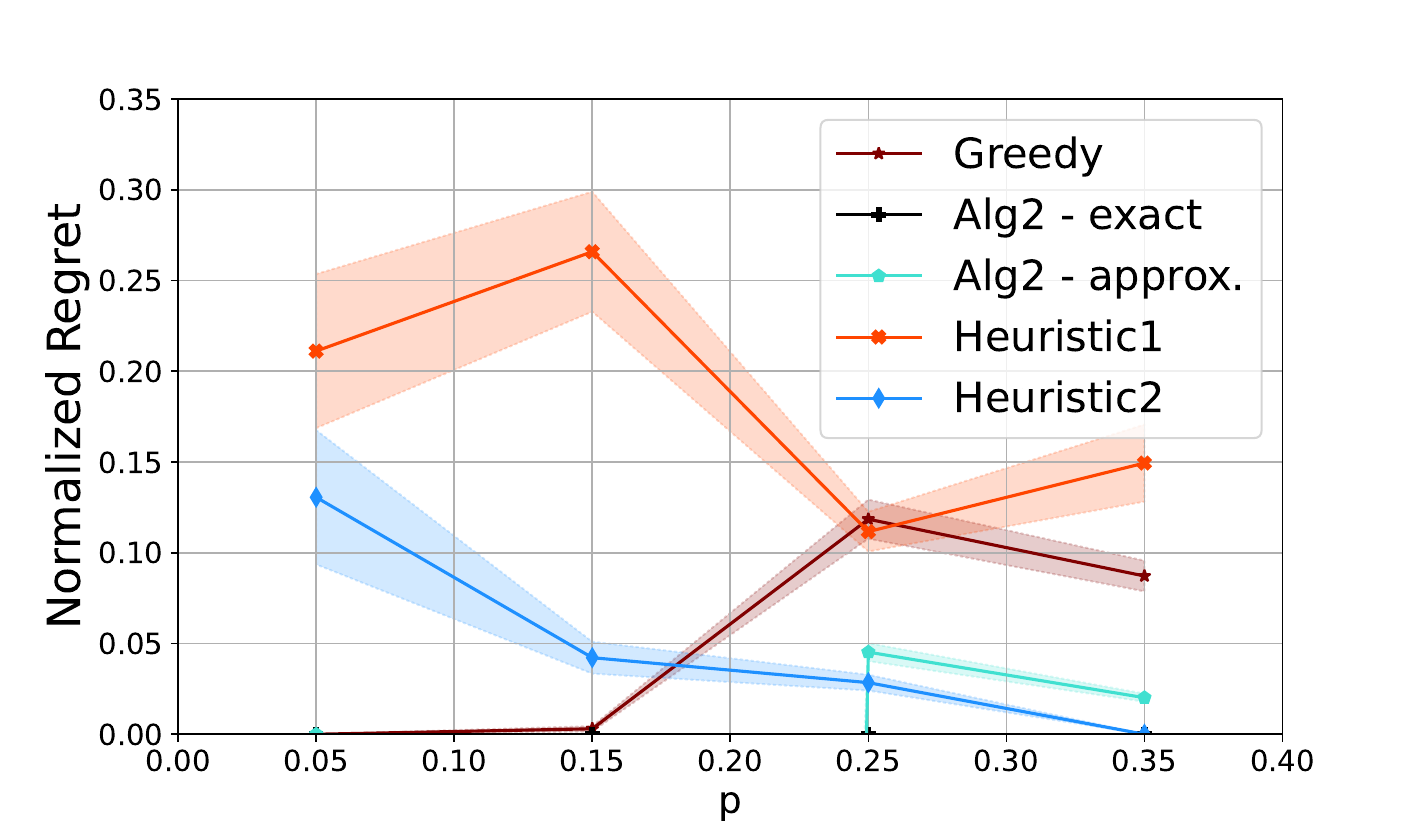}
        \includegraphics[width=0.43\textwidth]{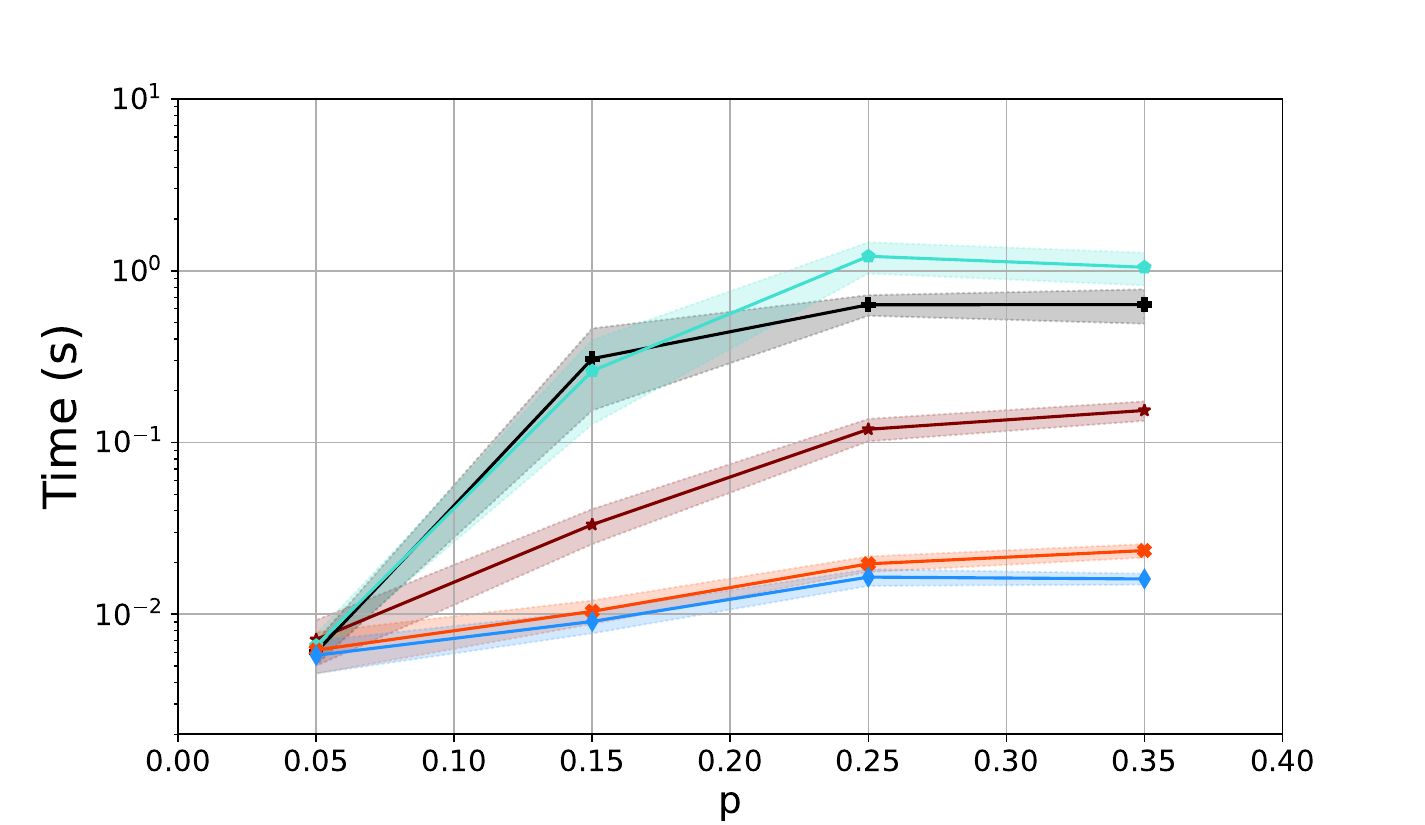}
        \caption{$q=0.25$}
        % \label{fig:exp runtime}
    \end{subfigure}
    \begin{subfigure}[b]{0.99\textwidth}
        \centering
        \includegraphics[width=0.43\textwidth]{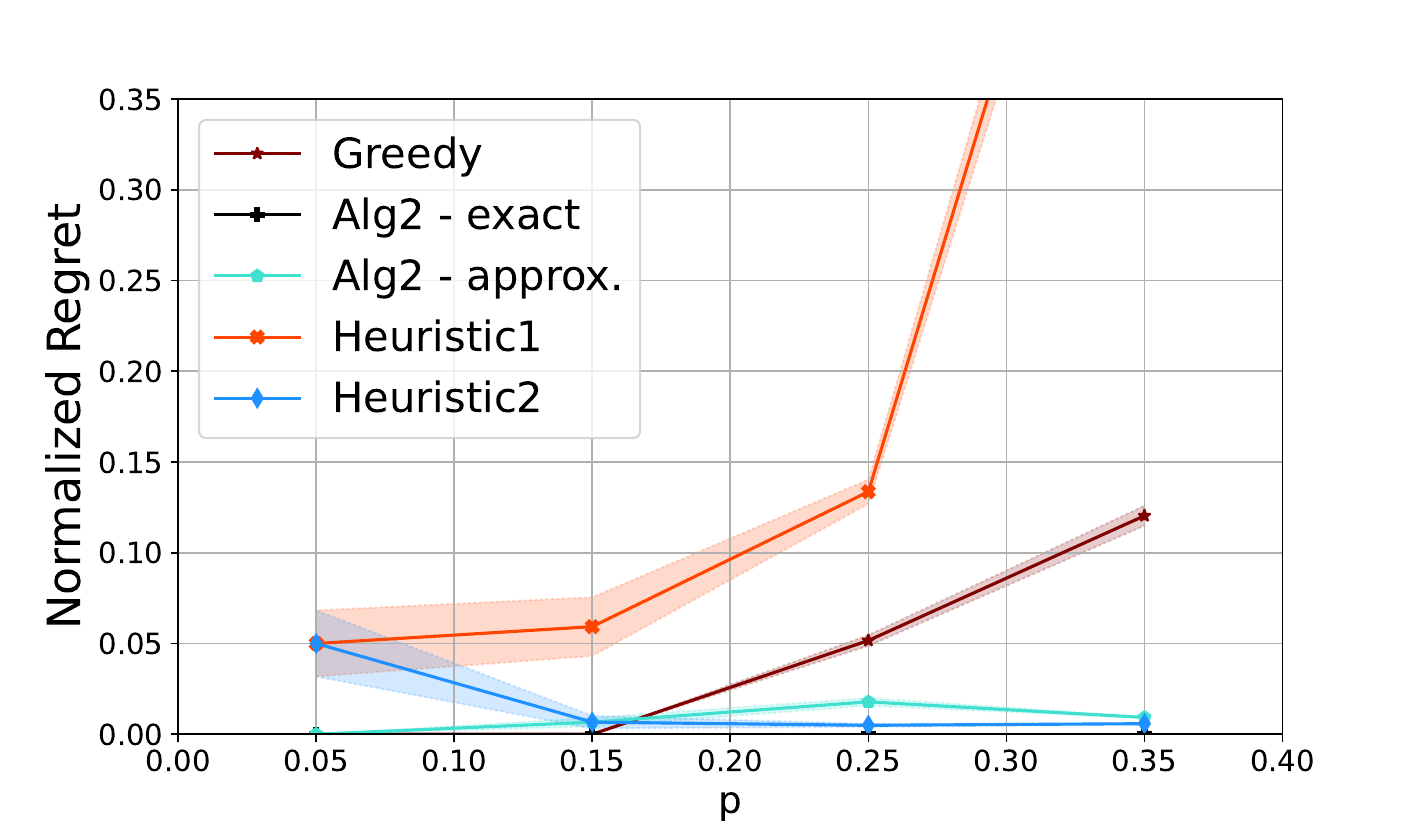}
        \includegraphics[width=0.43\textwidth]{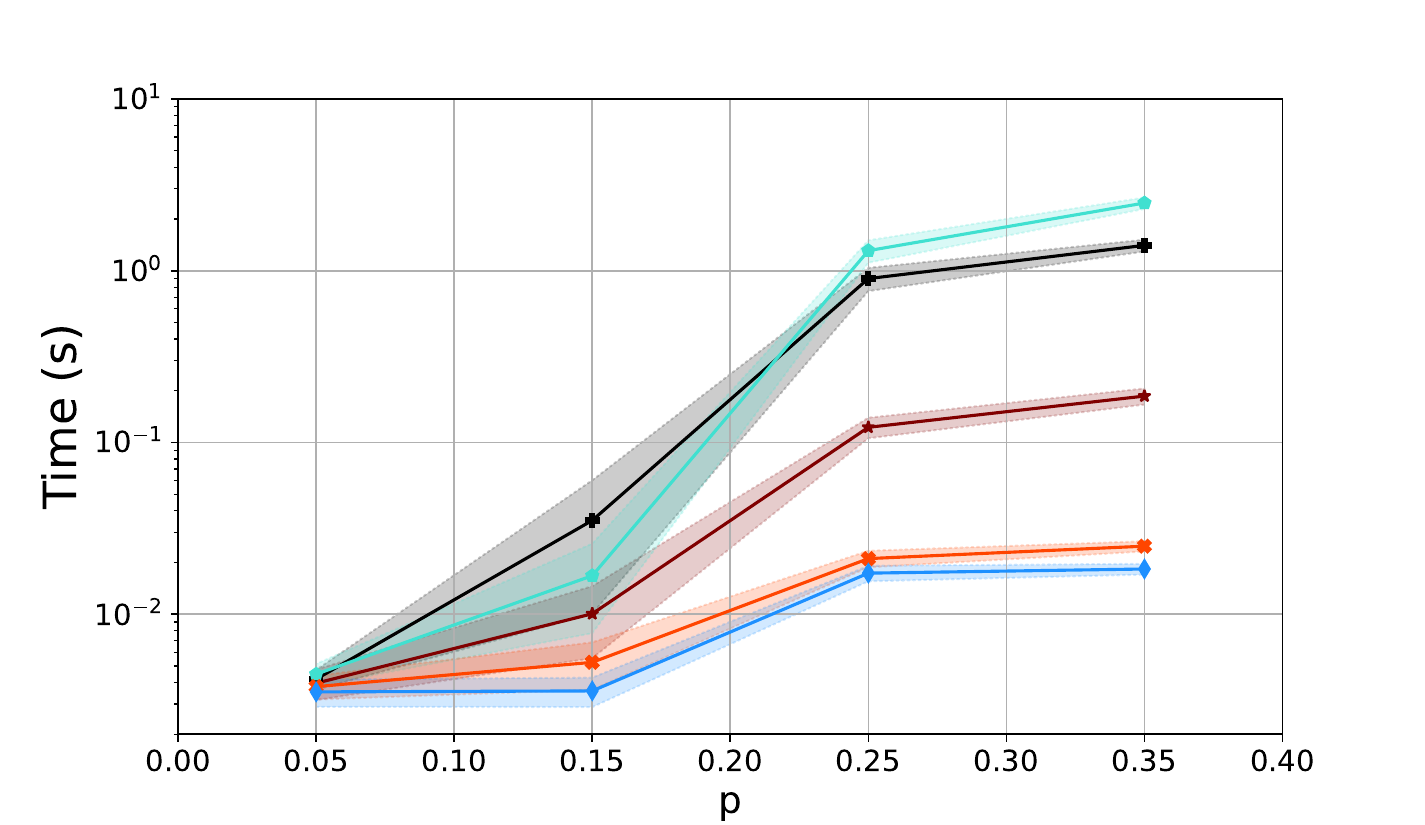}
        \caption{$q=0.35$}
        % \label{fig:exp runtime}
    \end{subfigure}
    
      \caption{The effect of the density of directed edges. Random graphs of size $n=30$ are generated with different densities of bidirected edges.}
    \label{fig:fixednq}
\end{figure}

% \begin{figure}
%     \centering
%     \includegraphics[width=0.43\textwidth]{figures/NNregret0_0.pdf}
%     \caption{Runtime and regret of the proposed algorithms, when $S$ is an arbitrary subset of variables. $p=0.35, q=0.25$.}
%     \label{fig:general}
% \end{figure}

\end{document}